\crefname{equation}{}{}
\Crefname{equation}{}{}
\crefname{definition}{\textbf{definition}}{definitions}
\Crefname{definition}{Definition}{Definitions}
\crefname{assumption}{\textbf{assumption}}{assumptions}
\Crefname{assumption}{Assumption}{Assumptions}
\definecolor{maroon}{RGB}{192,80,77}
\definecolor{mypink3}{cmyk}{0, 0.7808, 0.4429, 0.1412}
\newcommand{\explain}[2]{\underset{\mathclap{\overset{\uparrow}{#2}}}{#1}}
\newtheorem{theorem}{Theorem}[section]
\newtheorem{lemma}[theorem]{Lemma}
\newtheorem{corollary}[theorem]{Corollary}
\newtheorem{definition}[theorem]{Definition}
\newtheorem{remark}[theorem]{Remark}
\newtheorem{assumption}[theorem]{Assumption}
\newcommand\norm[1]{\left\lVert#1\right\rVert}
\newcommand{\argmax}{\mathop{\mathrm{argmax}}}
\newcommand*\circled[1]{\scriptsize\tikz[baseline=(char.base)]{
		\node[shape=circle,draw,inner sep=0.2pt] (char) {#1};}}
\def\hoV{\widehat{V}^\star}
\def\hP{\widehat{P}}
\def\hpi{\widehat{\pi}}
\def\E{\mathbb{E}}
\def\P{\mathbb{P}}
\def\Var{\mathrm{Var}}
\def\R{\mathbb{R}}
\begin{document}

\title{{Optimal Uniform OPE and Model-based Offline Reinforcement Learning in Time-Homogeneous, Reward-Free and Task-Agnostic Settings }}

\author[1,2]{Ming Yin }
\author[2]{Yu-Xiang Wang}
\affil[1]{Department of Statistics and Applied Probability, UC Santa Barbara}
\affil[2]{Department of Computer Science, UC Santa Barbara}
\affil[ ]{\texttt{ming\_yin@ucsb.edu}   \quad
	\texttt{yuxiangw@cs.ucsb.edu}}

\date{}

\maketitle

\begin{abstract}	
This work studies the statistical limits of uniform convergence for offline policy evaluation (OPE) problems with model-based methods (for episodic MDP) and provides a unified framework towards optimal learning for several well-motivated offline tasks. Uniform OPE $\sup_\Pi|Q^\pi-\hat{Q}^\pi|<\epsilon$ is a stronger measure than the point-wise OPE and ensures offline learning when $\Pi$ contains all policies (the global class). In this paper, we establish an $\Omega(H^2 S/d_m\epsilon^2)$ lower bound (over model-based family) for the global uniform OPE and our main result establishes an upper bound of $\tilde{O}(H^2/d_m\epsilon^2)$ for the \emph{local} uniform convergence that applies to all \emph{near-empirically optimal} policies for the MDPs with \emph{stationary} transition. Here $d_m$ is the minimal marginal state-action probability. Critically, the highlight in achieving the optimal rate $\tilde{O}(H^2/d_m\epsilon^2)$ is our design of \emph{singleton absorbing MDP}, which is a new sharp analysis tool that works with the model-based approach. We generalize such a model-based framework to the new settings: offline task-agnostic and the offline reward-free with optimal complexity $\tilde{O}(H^2\log(K)/d_m\epsilon^2)$ ($K$ is the number of tasks) and $\tilde{O}(H^2S/d_m\epsilon^2)$ respectively. These results provide a unified solution for simultaneously solving different offline RL problems.




\end{abstract}



\section{Introduction}\label{sec:introduction}

Offline reinforcement learning (offline RL) targets at learning a reward-maximizing policy in an unknown \emph{Markov Decision Process} (MDP) using a static data generated by running a behavior policy \citep{lange2012batch,levine2020offline}. This framework is widely applicable in applications where online exploration is demanding but historical data are plentiful. Examples include medicine \citep{liu2017deep} (safety concerns limit the applicability of unproven treatments but electronic records are abundant) and autonomous driving \citep{codevilla2018offline} (building infrastructure for testing new policy is expensive while collecting data from current setting is almost free).

Parallel to its practical significance, recently there is a surge of theoretical investigations towards offline RL via two threads: \emph{offline policy evaluation} (OPE), where the goal is to estimate the value of a target (fixed) policy $V^\pi$ \citep{jiang2016doubly,liu2018breaking,kallus2019double,kallus2019intrinsically,uehara2019minimax,nachum2019dualdice,xie2019towards,yin2020asymptotically,duan2020minimax,wang2020statistical,zhang2021average} and \emph{offline (policy) learning} which intends to output a near-optimal policy  \citep{chen2019information,le2019batch,xie2020batch,xie2020q,liu2020provably,hao2020sparse,zanette2020exponential,jin2020pessimism,hu2021fast,yin2021nearoptimal,rashidinejad2021bridging}. 

\citet{yin2021near} initiates the studies for offline RL from the new perspective of \emph{uniform convergence} in OPE (uniform OPE for short) which unifies OPE and offline learning tasks. 
Generally speaking, given a policy class $\Pi$ and offline data with $n$ episodes, uniform OPE seeks to coming up with OPE estimators $\widehat{V}_1^\pi$ and $\widehat{Q}_1^\pi$ satisfy
$
\sup_{\pi\in\Pi}||\widehat{Q}_1^\pi-{Q}_1^\pi||_\infty<\epsilon.
$
The task is to achieve this with the optimal episode complexity: the ``minimal'' number of episodes $n$ needed as a function of $\epsilon$, failure probability $\delta$, the parameters of the MDP as well as the behavior policy $\mu$ in the minimax sense.

To further motivate the readers why uniform OPE should be considered, we state its relation to offline learning. Indeed, uniform OPE to RL is analogous of uniform convergence of empirical risk in statistical learning \citep{vapnik2013nature}. In supervised learning, it has been proven that almost all learnable problems are learned by an (asymptotic) \emph{empirical risk minimizer} (ERM) \citep{shalev2010learnability}.
In offline RL, the natural counterpart is the \emph{empirical optimal policy} $\widehat{\pi}^{\star}:=\operatorname{argmax}_{\pi} \widehat{V}_1^{\pi}$ and with uniform OPE it further ensures $\widehat{\pi}^\star$ is a near-optimal policy for the offline learning via:
{\small
	\begin{equation}\label{eqn:uniform_optimal}
	\begin{aligned}
	0 & \leq Q_1^{\pi^{\star}}-Q_1^{\widehat{\pi}^{\star}}=Q_1^{\pi^{\star}}-\widehat{Q}_1^{{\pi}^{\star}}+\widehat{Q}_1^{{\pi}^{\star}}-\widehat{Q}_1^{\widehat{\pi}^{\star}}+\widehat{Q}_1^{\widehat{\pi}^{\star}}-Q_1^{\widehat{\pi}^{\star}}  \leq 2 \sup _{\pi}|Q_1^{\pi}-\widehat{Q}_1^{\pi}|.
	\end{aligned}
	\end{equation}
}On the \emph{policy evaluation} side, there is often a need to evaluate the performance of a \emph{data-dependent} policy. Uniform OPE suffices for this purpose since it will allow us to evaluate policies selected by safe-policy improvements, proximal policy optimization, UCB-style exploration-bonus as well as any heuristic exploration criteria (please refer to \cite{yin2021near} and the references therein for further discussions). In this paper, we study the uniform OPE problem under the \emph{finite horizon stationary MDPs} and focus on the model-based approaches. Specifically, we consider two representative class: global policy class $\Pi_g$ (contains all (deterministic) policies) and local policy class $\Pi_l$ (contains policies near the empirical optimal one, see Section~\ref{sec:problems}). We ask the following question:
\begin{align*}
\emph{What is the statistical limit for uniform OPE} \;\emph{and what is its connection to optimal offline learning? }
\end{align*}
We answer the first part by showing the global uniform OPE requires a lower bound of $\Omega(H^2S/d_m\epsilon^2)$\footnote{Here $d_m$ is the minimal marginal state-action occupancy, see Assumption~\ref{assume2}.} for the family of model-based approach and the local uniform OPE can achieve $\tilde{O}(H^2/d_m\epsilon^2)$ minimax rate by the model-based plug-in estimator and this implies optimal offline learning. Importantly, the procedure of the model-based approach via learning $\widehat{\pi}^\star$ through planning over the empirical MDP has a wider range of use in offline RL as it naturally adapts to the challenging tasks like \emph{offline task-agnostic learning} and \emph{offline reward-free learning}. See Section~\ref{sec:contri}.

\subsection{Related works}\label{sec:related}

\noindent\textbf{Offline reinforcement learning.}\footnote{We only provide a short discussion of the most related works due to the space constraint. A detailed discussion can be found in Appendix~\ref{sec:dis_related}.}
Information-theoretical considerations for offline RL are first proposed for \emph{infinite horizon discounted setting} via Fitted Q-Iteration (FQI) type function approximation algorithms \citep{chen2019information,le2019batch,xie2020batch,xie2020q} which can be traced back to \citep{munos2003error,szepesvari2005finite,antos2008fitted,antos2008learning}. 

For the finite horizon case, \cite{yin2021near} first achieves $\tilde{O}(H^3/d_m\epsilon^2)$ complexity under non-stationary transition but their results cannot further improve in the stationary setting. Recently, \cite{yin2021nearoptimal} designs the offline variance reduction algorithm for achieving the optimal $\tilde{O}(H^2/d_m\epsilon^2)$ rate.  Their result is for a specific algorithm that uses data splitting while our results work for any algorithms that returns a nearly empirically optimal policy via uniform convergence. Our results on the offline task-agnostic and the reward-free settings are entirely new. Concurrently, \cite{ren2021nearly} considers the horizon-free setting but does not provide uniform convergence guarantee. 

\noindent\textbf{Model-based approaches with minimaxity.}
It is known model-based methods are minimax-optimal for online RL with regret $\tilde{O}(\sqrt{HSAT})$ (\emph{e.g.} \cite{azar2017minimax,efroni2019tight}). In the generative model setting, \cite{agarwal2020model} shows model-based approach is still minimax optimal $\tilde{O}((1-\gamma)^{-3}SA/\epsilon^2)$ by using a $s$-absorbing MDP construction and this model-based technique is later reused for other more general settings (\emph{e.g.} Markov games \citep{zhang2020model} and linear MDPs \citep{cui2020plug}) and also for overcoming the sample size barrier \citep{li2020breaking}. In offline RL, \cite{yin2021near} uses the model-based methods to achieve $\tilde{O}(H^3/d_m\epsilon^2)$ complexity.

\noindent\textbf{Task-agnostic and Reward-free problems.}
The reward-free problem is initiated in the online RL \citep{jin2020reward} where the agent needs to efficiently explore an MDP environment \emph{without} using any reward information. It requires high probability guarantee for learning optimal policy for \emph{any} reward function. Later, \cite{kaufmann2020adaptive,menard2020fast} establish the $\tilde{O}(H^3S^2A/\epsilon^2)$ complexity and \cite{zhang2020nearly} further tightens the dependence to $\tilde{O}(H^2S^2A/\epsilon^2)$. Recently, \cite{zhang2020task} proposes the task-agnostic setting where one needs to use exploration data to simultaneously learn $K$ tasks and proves an upper bound $\tilde{O}(H^5SA\log(K)/\epsilon^2)$. However, although these settings remain critical in the offline regime, no statistical result has been derived so far.

\subsection{Our contribution}\label{sec:contri}

\textbf{Optimal local uniform OPE}.
	First and foremost, we derive the $\tilde{O}(H^2/d_m\epsilon^2)$ optimal episode complexity for local uniform OPE (Theorem~\ref{thm:optimal_upper_bound}) via the model-based method and this implies optimal offline learning with the same rate (Corollary~\ref{cor:opt_offline}); this result strictly improves upon \cite{yin2021near} ($\tilde{O}(H^3/d_m\epsilon^2)$) non-trivially through our new \emph{singleton-absorbing MDP} technique.
	
\textbf{Information-theoretical characterization of the global uniform OPE.}	  
	 We characterize the statistical limit for the global uniform convergence by proving a minimax lower bound $\Omega(H^2S/d_m\epsilon^2)$ (over all model-based approaches) (Theorem~\ref{thm:tight_lower_bound}). This result answers the question left by \citet{yin2021near} that the global uniform OPE is generically harder than the local uniform OPE / offline learning by a factor of $S$, such a difference will dominate when the state space is exponentially large.
	
\textbf{Generalize to the new offline settings.}	
	 Critically, our model-based frameworks naturally generalize to the more challenging settings like task-agnostic and reward-free settings. In particular, we establish the $\tilde{O}(H^2\log(K)/d_m\epsilon^2)$ (Theorem~\ref{thm:offline_ta}) and $\tilde{O}(H^2S/d_m\epsilon^2)$ (Theorem~\ref{thm:offline_rf}) complexities for \emph{offline task-agnostic learning} and \emph{offline reward-free learning}. Both results are new and optimal.
	
\textbf{Singleton-absorbing MDP: a sharp analysis tool for episodic stationary transition case.}	
	 On the technical end, our major contribution is the novel design of \emph{singleton-absorbing MDP} which handles the data-dependence hurdle encountered in the stationary MDPs. To decouple the data-dependence between $\widehat{P}_{s,a}$ and $\widehat{V}$, \cite{agarwal2020model} uses a $s$-absorbing MDP $\widehat{V}_s$ (in lieu of $\widehat{V}$) of each state for the independence. To control the error propagation between $\widehat{V}_s$ and $\widehat{V}$, they use the $\epsilon$-net covering such that the value of $\widehat{V}_s$ traverse the evenly-spaced grids in $[0,(1-\gamma)^{-1}]$. However, when applied to finite horizon case, the complexity increases as there are $H$ different quantities ($V_1,...,V_H$) and the $\epsilon$-nets need to cover the $H$-dimensional space $[0,H]^H$. This result in a exponential-$H$ covering number and the metric entropy blows up by a factor $H$, which yields suboptimal result. In contrast, the \emph{singleton-absorbing MDP} technique designs a single absorbing MDP that can also control the error propagation sufficiently well. This sharp analysis tool negates the conjecture of \cite{cui2020plug} that absorbing MDP is not well suitable for finite horizon stationary MDP.

\textbf{Significance: Unifying different offline settings}\label{sec:unify}
Beyond the study of statistical limit in uniform OPE, this work solves the sample optimality problems for the local uniform OPE, offline task-agnostic and offline reward-free problems. If we take a deeper look, the algorithmic frameworks utilized are all based on the model-based empirical MDP construction and planning. Therefore, as long as we can analyze such framework sharply (\emph{e.g.} via novel absorbing-MDP technique), then it is hopeful that our techniques can be generalized to tackle more sophisticated settings. On the other hand, things could be more tricky for online RL since the exploration phases need to be specifically designed for each settings and there may not be one general algorithmic pattern that dominates. Our findings reveal the model-based framework is fundamental for offline RL as it subsumes settings like local uniform OPE, offline task-agnostic and offline reward-free learning into the identical learning pattern. Considering these tasks were originally proposed in the online regime under different contexts, such a unified view from the model-based perspective offers a new angle for understanding offline RL.




\section{Problem setup }\label{sec:formulation}
\textbf{Episodic stationary reinforcement learning.} A finite-horizon \emph{Markov Decision Process} (MDP) is denoted by a tuple $M=(\mathcal{S}, \mathcal{A}, P, r, H, d_1)$, where $\mathcal{S}$ and $\mathcal{A}$ are finite state action spaces with $S:=|\mathcal{S}|,A:=|\mathcal{A}|$. A stationary (time-invariant) transition kernel has the form $P:\mathcal{S}\times\mathcal{A}\times\mathcal{S} \mapsto [0, 1]$  with $P(s'|s,a)$ representing the probability transition from state $s$, action $a$ to next state $s'$. Besides, $r : \mathcal{S} \times{A} \mapsto \mathbb{R}$ is the expected reward function and given $(s,a)$ which satisfies $0\leq r\leq1$ and assumed known. $d_1$ is the initial state distribution and $H$ is the horizon. At time $t$, a policy $\pi=(\pi_1,...,\pi_H)$ assigns each state $s \in \mathcal{S}$ a probability distribution $\pi_t(s)$ over actions. For a policy $\pi$, a random trajectory $ s_1, a_1, r_1, \ldots, s_H,a_H,r_H,s_{H+1}$ is generated as follows: $s_1 \sim d_1, a_t \sim \pi(\cdot|s_t), r_t = r(s_t, a_t), s_{t+1} \sim P (\cdot|s_t, a_t), \forall t \in [H]$. 

For any policy $\pi$ and any $h\in[H]$, value function $V^\pi_h(\cdot)\in \R^S$ and Q-value function $Q^\pi_h(\cdot,\cdot)\in \R^{S\times A}$ are defined as:
{\small
$
V^\pi_h(s)=\E_\pi[\sum_{t=h}^H r_{t}|s_h=s] ,\;\;Q^\pi_h(s,a)=\E_\pi[\sum_{t=h}^H  r_{t}|s_h,a_h=s,a],\,\forall s,a\in\mathcal{S},\mathcal{A}.
$} The goal of RL is to find a policy $\pi^\star$ such that {\small$v^\pi:=\E_\pi\left[\sum_{t=1}^H  r_t\right]$} is maximized, which is equivalent to simultaneously maximize $V^\pi_1(s)$ (or $Q^\pi_1(s,a)$) for all $s$ (or $s,a$) \citep{sutton2018reinforcement}. Therefore, for a targeted accuracy $\epsilon>0$ it suffices to find a policy $\pi_\text{alg}$ such that $\norm{Q_1^\star-Q_1^{\pi_\text{alg}}}_\infty\leq\epsilon$. We denote $V_h^\pi,Q_h^\pi$ as column vectors and $P_{s,a}$ as the row vector. In particular, we denote the average marginal state-action occupancy $d^\pi(s,a)$ as:
{$
d^\pi(s,a):=\frac{1}{H}\sum_{t=1}^H\P[s_t=s|s_1\sim d_1,\pi]\cdot\pi_t(a|s).
$
}

\textbf{Offline setting.} The offline RL assumes that episodes {\scriptsize$\mathcal{D}=\left\{\left(s_{t}^{(i)}, a_{t}^{(i)}, r_{t}^{(i)}, s_{t+1}^{(i)}\right)\right\}_{i \in[n]}^{t \in[H]}$} are rolling from some behavior policy $\mu$ a priori. In particular, we do not assume the knowledge of $\mu$.

\textbf{Model-based RL.} We focus our attention on the model-based methods, which has witnessed numerous successes and is one of the most critical components of theoretical RL as a whole (as reviewed in Section~\ref{sec:related}). To make the presentation precise, we define the following: 
\vspace{0.3em}
\begin{definition}\label{def:model_based}
	Model-based RL: Solving RL problems (either learning or evaluation) through learning / modeling transition dynamic $P$.
\end{definition}
We emphasize that the model-based approaches in general (\emph{e.g.} \citet{jaksch2010near,ayoub2020model,kidambi2020morel}) follow the procedure of modeling the full MDP $M=(\mathcal{S}, \mathcal{A}, P, r, H, d_1)$ instead of only the transition $P$. Nevertheless, we (by convention) assume the mean reward function is known and the initial state distribution $d_1$ will not affect the choice of optimal policy $\pi^\star$. Thus, Definition~\ref{def:model_based} suffices for our purposes.

\begin{figure*}[t]
	\caption{Related comparisons of sample complexities for offline RL } \label{fig:table}
	\resizebox{\linewidth}{!}{%
		\begin{tabular}{c|c|c|c|c|c}
			\hline
			Result/Method & Setting&Type
			& Complexity&Uniform guarantee? \\
			\hline
			\cite{le2019batch}     & $\infty$-horizon &FQI variants& $\tilde{O}((1-\gamma)^{-6}\beta_\mu/\epsilon^2)$&No\\
			FQI \citep{chen2019information}   & $\infty$-horizon &FQI variants& $\tilde{O}((1-\gamma)^{-6}C/\epsilon^2)$& No\\
			MSBO/MABO \citep{xie2020q} & $\infty$-horizon &FQI variants & $\widetilde{O}((1-\gamma)^{-4}C_\mu/ \epsilon^2)$ & No \\
			OPEMA \citep{yin2021near} &$H$-horizon  &Non-splitting &$\widetilde{O}(H^3/d_m \epsilon^2)$  &$\sqrt{H}/S$-local uniform \\
			OPDVR \cite{yin2021nearoptimal} &$H$-horizon  &Data splitting &$\widetilde{O}(H^2/d_m \epsilon^2)$  &No \\
			Model-based Plug-in (Corollary~\ref{cor:opt_offline}) &$H$-horizon  &Non-splitting &$\widetilde{O}(H^2/d_m \epsilon^2)$ & $\sqrt{H/S}$-local uniform \\
			Task-Agnostic (Theorem~\ref{thm:offline_ta})  &$H$-horizon &Non-splitting &$\widetilde{O}(H^2\log(K)/d_m \epsilon^2)$  & ---\\
			Reward-Free (Theorem~\ref{thm:offline_rf})  &$H$-horizon &Non-splitting &$\widetilde{O}(H^2S/d_m \epsilon^2)$  & ---\\
			\hline
		\end{tabular}
	}
	\footnotesize{$^*$ $K$ is the number of tasks for Task-agnostic setting and $\beta_\mu$, $C$ and $1/d_m$ are data coverage parameters that measure the state-action dependence and are qualitative similar under their respective assumptions. }
\end{figure*}

\vspace{-0.3em}
\subsection{Uniform convergence in offline RL}\label{sec:problems}
\vspace{-0.3em}

We study offline RL from the uniform OPE perspective. Concretely, uniform OPE extends the point-wise (fixed target policy) OPE to a family of policies $\Pi$. The goal is to construct estimator $\widehat{Q}_1^\pi$ such that {\small$\sup_{\pi\in\Pi}\norm{Q_1^\pi-\widehat{Q}^\pi_1}<\epsilon$}, which automatically ensures point-wise OPE for any $\pi\in\Pi$. More importantly, uniform OPE directly implies offline learning when $\Pi$ contains optimal policies. As explained in Section~\ref{sec:introduction}, let $\widehat{\pi}^{\star}:=\operatorname{argmax}_{\pi} \widehat{V}_1^{\pi}$ be the \emph{empirical optimal policy} for some OPE estimator $\widehat{v}^\pi$, then by \eqref{eqn:uniform_optimal} $\widehat{\pi}^{\star}$ is a near-optimal policy given uniform OPE guarantee. We consider the following two policy classes that are of the interests.

\vspace{0.4em}

\begin{definition}[The global (deterministic) policy class.] \label{def:global_policy}
	The global policy class $\Pi_g$ consists of all the non-stationary (deterministic) policies.
\end{definition}

 It is well-known \citep{sutton2018reinforcement} there exists at least one (deterministic) optimal policy, therefore $\Pi_g$ is sufficiently rich for evaluating algorithms that aim at learning the optimal policy.
 
\begin{definition}[The local policy class]\label{def:local_policy}
	 Given empirical MDP $\widehat{M}$ and $\widehat{V}_h^\pi$ is the value under $\widehat{M}$.  Let $\widehat{\pi}^{\star}:=\operatorname{argmax}_{\pi} \widehat{V}_1^{\pi}$ be the empirical optimal policy, then the local policy class $\Pi_l$ is defined as:
	 {\small
	 \[
	 \Pi_{l}:=\left\{\pi: \text { s.t. }\left\|\widehat{V}_{h}^{\pi}-\widehat{V}_{h}^{\widehat{\pi}^{\star}}\right\|_{\infty} \leq \epsilon_{\text {opt }}, \forall h\in[H]\right\}
	 \]
	}where $\epsilon_{\text {opt }}\geq0$ is a parameter. 
\end{definition}

In above $\widehat{M}$ uses $\widehat{P}$ in lieu of $P$ where {\scriptsize$\widehat{P}(s'|s,a)=\frac{n_{s',s,a}}{n_{s,a}}$} if $n_{s,a}>0$ and $1/S$ otherwise.\footnote{Here $n_{s,a}$ is the number of pair $(s,a)$ being visited among $n$ episodes. $n_{s',s,a}$ is  defined similarly.} This class characterizes policies in the neighborhood of empirical optimal policy. Given $\widehat{P}$, it is efficient to obtain $\widehat{\pi}^\star$ using Value / Policy Iteration, therefore it is more practical to consider the neighborhood of $\widehat{\pi}^\star$ (instead of $\pi^\star$) since practitioners can use data $\mathcal{D}$ to really check $\Pi_l$ whenever needed. Next we present the regularity assumption required for uniform convergence OPE problem.

\vspace{0.3em}
\begin{assumption}[Exploration requirement]\label{assume2}
Logging policy $\mu$ obeys that $\min_{s} d^\mu(s)>0$, for any state $s$ that is ``accessible''. Moreover, we define the quantity $d_m:=\min_{s,a}\{d^\mu(s ,a): d^\mu (s , a )>0\}$ (recall $d^\mu(s,a)$ in Section~\ref{sec:formulation}) to be the minimal average marginal state-action probability.
\end{assumption}

 State $s$ is ``accessible'' means there exists a policy $\pi$ so that $d^\pi(s)>0$. If for any policy $\pi$ we always have $d^\pi(s)=0$, then state $s$ can never be visited in the given MDP. Note this is weaker than \cite{yin2021near} since $d^\mu(s)$ is the average version of $d^\mu_t(s)$. Assumption~\ref{assume2} is the minimal assumption needed for the consistency of uniform OPE task and is qualitatively similar to the \emph{concentrability} assumption \citep{munos2003error}. This assumption can be potentially relaxed for pure offline learning problems, \emph{e.g.} \citet{liu2019off,rashidinejad2021bridging}, where they only require $d^\mu(s) (d^\mu(s,a) )> 0$ for any state $s$ ($s,a$) satisfies $d^{{\pi}^\star}(s)(d^{{\pi}^\star}(s,a))> 0$.

\section{Statistical Hardness for Model-based Global Uniform OPE}\label{sec:info_lower}
From \eqref{eqn:uniform_optimal} and Definition~\ref{def:global_policy}, it is clear the global uniform OPE implies offline RL, therefore it is natural to wonder whether they just are \emph{``the same task"} (their sample complexities have the same minimax rates). If this conjecture is true, then deriving sample efficient global OPE method is just as important as deriving efficient offline learning algorithm (plus the additional benefit of evaluating data-dependent algorithms)! \cite{yin2021near} proves the $\tilde{O}(H^3S/d_m\epsilon^2)$ upper bound and ${\Omega}(H^3/d_m\epsilon^2)$ lower bound for global uniform OPE, but it is unclear whether the additional $S$ is essential. We answer the question affirmatively by providing a tight lower bound result with a concise proof to show no model-based algorithm can surpass $\Omega(S/d_m\epsilon^2)$ information-theoretical limit. 

\begin{theorem}[Minimax lower bound for global uniform OPE]\label{thm:tight_lower_bound}
	Let $d_m$ be a parameter such that $0<d_m\leq\frac{1}{SA}$. Let the problem class be $\mathcal{M}_{d_{m}}:=\left\{(\mu, M) \mid \min _{t, s_{t}, a_{t}} d_{t}^{\mu}\left(s_{t}, a_{t}\right) \geq d_{m}\right\}$. Then there exists universal constants $c,C,p>0$ such that: for any $n\geq c S/d_m\cdot \log(SAp)$,
	\[
	\inf _{\widehat{Q}_{1,\text{mb}}} \sup _{ \mathcal{M}_{d_{m}}} \mathbb{P}_{\mu, M}\left(\sup _{\pi \in \Pi_g}
	{\norm{\widehat{Q}_{1,\text{mb}}^{\pi}-Q_1^{\pi}}_\infty }
	\geq C\sqrt{\frac{H^2S}{nd_m}}\right) \geq p,
	\]
	where $\widehat{Q}_{1,\text{mb}}$ is the output of any model-based algorithm and $\Pi_g$ is defined in Definition~\ref{def:global_policy}.
\end{theorem}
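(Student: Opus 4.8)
The plan is to show that, for model-based estimators, the global supremum over $\Pi_g$ reads off the \emph{total-variation} error of the estimated transition at a single, deliberately under-explored state--action pair, and that this $\ell_1$-type quantity is a factor $\sqrt{S}$ larger than the $\ell_2$-type quantity that governs fixed-policy (local) OPE. First I would invoke Definition~\ref{def:model_based}: any model-based algorithm commits to one kernel $\widehat{P}$ and outputs $\widehat{Q}_1^\pi=Q_1^\pi(\widehat{M})$ by planning in the learned model. The telescoping (simulation) identity then gives, for every $\pi$, $\widehat{Q}_1^\pi-Q_1^\pi=\sum_{h}\E_\pi[(\widehat{P}-P)(\cdot|s_h,a_h)\widehat{V}_{h+1}^\pi]$, so the entire error is a functional of $\widehat{P}-P$ tested against value vectors whose entries the policy can steer. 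This reduction is exactly where the ``model-based'' restriction is used: it forces the algorithm to produce a single kernel that must be accurate against \emph{all} value directions simultaneously.

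Next I would build a single ``probe'' pair $(s^\star,a^\star)$ whose transition is (near-)uniform over $S$ successor states $z_1,\dots,z_S$, each $z_j$ offering a deterministic continuation of value either $0$ or $H$ that is selectable by the action taken at $z_j$. The crucial design point is to make $(s^\star,a^\star)$ the \emph{minimally} visited pair, so it receives only $m\asymp n d_m$ samples, while routing enough probability mass into the $z_j$ from an auxiliary high-occupancy state that every pair still satisfies $d^\mu_t\ge d_m$ and the instance stays in $\cM_{d_m}$; this decoupling is what prevents occupancy normalization from diluting the successor contributions. Since $\Pi_g$ contains every deterministic policy, it contains the one setting $\widehat{V}(z_j)=H\cdot\mathbf{1}[\widehat{P}_j>P_j]$, and evaluating the identity at $s^\star$ together with $\sum_j(\widehat{P}_j-P_j)=0$ yields $\sup_{\pi\in\Pi_g}\|\widehat{Q}_1^\pi-Q_1^\pi\|_\infty\ge \tfrac{H}{2}\,\|\widehat{P}(\cdot|s^\star,a^\star)-P(\cdot|s^\star,a^\star)\|_1$, i.e. $H$ times the TV error of an $S$-cell distribution estimated from $m\asymp nd_m$ samples.

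To lower bound this TV error against \emph{every} $\widehat{P}$ (the $\inf_{\widehat{Q}_{1,\mathrm{mb}}}$ quantifier), I would run Assouad over a family $\{M_\tau\}_{\tau\in\{\pm1\}^{S/2}}$ that splits the probe mass within $S/2$ cell-pairs as $(\tfrac1S\pm\tau_i\beta)$; a one-bit flip moves $\asymp\beta$ of mass, so the loss decomposes into $S/2$ per-pair TV contributions, each costing $\asymp H\beta$ when the corresponding bit is misread. Taking $\beta\asymp 1/\sqrt{Sm}=1/\sqrt{Snd_m}$ keeps paired instances statistically indistinguishable (per-pair $\mathrm{KL}\asymp m\beta^2 S\lesssim1$), so the summed loss is $\gtrsim (S/2)\cdot H\beta\asymp H\sqrt{S/(nd_m)}=\sqrt{H^2 S/(nd_m)}$, which is the claimed rate; the threshold $n\ge cS/d_m\cdot\log(SAp)$ is precisely what makes $\beta\lesssim 1/S$ a legal perturbation and supplies the slack for the high-probability step. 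The main obstacle is twofold and is exactly where the $S$ factor (rather than $\log S$ or a constant) enters. First, I must engineer the construction so the $S$-dimensional transition sits at a pair visited only $\asymp nd_m$ times \emph{while} all successors stay $d_m$-covered; any coupling of the successor occupancies to $(s^\star,a^\star)$ reinstates the tension $m\gtrsim Snd_m$ and collapses $\sqrt S$ back to the local rate. Second, Assouad delivers an expected-risk bound, so I must upgrade it to the stated ``with probability $\ge p$'' form via an anti-concentration (reverse-Markov / Paley--Zygmund) argument, which is available because the loss is bounded by $O(H)$ and the $S/2$ coordinate tests are essentially independent.
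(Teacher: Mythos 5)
Your proposal shares its core architecture with the paper's proof: use the model-based restriction to express the uniform error as $\widehat{P}-P$ tested against policy-steerable value vectors, use the richness of $\Pi_g$ to pick the sign pattern of $\widehat{P}(\cdot|s^\star,a^\star)-P(\cdot|s^\star,a^\star)$, and thereby reduce to $\sup_{\pi\in\Pi_g}\|\widehat{Q}^\pi_1-Q^\pi_1\|_\infty\gtrsim H\,\|\widehat{P}(\cdot|s^\star,a^\star)-P(\cdot|s^\star,a^\star)\|_1$ at a pair receiving $m\asymp nd_m$ samples. (The paper encodes the high/low dichotomy through a recursively scaled \emph{known} reward, with $\sup_{r\in\{0,1\}^S}|\langle v,r\rangle|\geq\tfrac12\|v\|_1$; you encode it through $\{0,H\}$ deterministic continuations — same idea.) Where you genuinely depart is the second half: the paper invokes the minimax $\ell_1$ lower bound for estimating an $S$-cell distribution \citep{han2015minimax} as a black box, plus a multiplicative Chernoff bound to control $n_{s^\star,a^\star}$, whereas you re-derive that bound via Assouad with $\pm\beta$ perturbations of the uniform distribution. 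Your calculation is right: per-pair KL $\asymp mS\beta^2\asymp 1$ at $\beta\asymp 1/\sqrt{Sm}$, total loss $\asymp SH\beta\asymp\sqrt{H^2S/(nd_m)}$, and legality $\beta\lesssim 1/S$ is exactly the threshold $n\gtrsim S/d_m$. This buys self-containedness at the cost of re-proving a known result.

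There is one concrete gap: the upgrade from expected risk to the stated "with probability $\geq p$" form. Your justification — reverse Markov / Paley–Zygmund "because the loss is bounded by $O(H)$" — does not yield a universal constant $p$: if $X\leq CH$ almost surely and $\E X\geq cH\sqrt{S/m}$, reverse Markov gives only $\P[X\geq\tfrac12\E X]\gtrsim\sqrt{S/m}$, which vanishes as $n$ grows. The standard repair is to run the anti-concentration on the hypercube rather than on the loss: set $\hat\tau:=\argmin_\tau\|\widehat{P}-P_\tau\|_1$, so that by the triangle inequality the loss is $\gtrsim H\beta\,d(\hat\tau,\tau)$ where $d$ is Hamming distance; Assouad's per-bit testing bound gives $\E[d(\hat\tau,\tau)]\gtrsim S$, and since $d\leq S/2$ its ceiling is of the same order as its mean, so reverse Markov applied to $d$ gives $\P[d\gtrsim S]\geq p$ for a universal $p$. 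Your appeal to "essentially independent coordinate tests" is the intuition this projection argument formalizes, but as written it is not a proof — the estimator can correlate the per-pair errors arbitrarily. (The paper faces the same expectation-versus-probability issue and resolves it by deducing an in-probability form of the cited $\ell_1$ theorem.) A secondary caveat: for $H\geq 3$ your telescoping identity tests $\widehat{P}-P$ against $\widehat{V}^\pi_{h+1}$, the value in the \emph{learned} model, so you need an additional case analysis (either the algorithm models your deterministic $\{0,H\}$ continuations essentially correctly, or it already incurs $\Omega(H)$ error at some state, and either way the supremum is large); the paper sidesteps this by using the symmetric form of the decomposition, $\sum_h\hat{\Gamma}^\pi_{t+1:h}(\widehat{P}-P)V^\pi_{h+1}$, in which only \emph{true} values appear.
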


By setting $\epsilon:=\sqrt{\frac{H^2S}{nd_m}}$, Theorem~\ref{thm:tight_lower_bound} establishes the global uniform convergence lower bound of $\Omega(H^2S/d_m\epsilon^2)$ over model-based methods, which builds the hard statistical threshold between the global uniform OPE and the local uniform OPE tasks by a factor of $S$ since the local case has achievable $\tilde{O}(1/d_m\epsilon^2)$ rate on the dependence for state-actions. This result also reveals the global uniform convergence bound in \cite{yin2021near} ($\tilde{O}(H^3S/d_m\epsilon^2)$) is essentially minimax rate-optimal for their \emph{non-stationary setting}\footnote{To be rigorous, we ramark that it is rate-optimal since for the non-stationary setting the dependence for horizon is higher by a factor $H$.} and complements the story on the optimality behavior for global uniform OPE. Moreover, from the generative model view the lower bound degenerates to $S/d_m\epsilon^2\approx\Theta(S^2A/\epsilon^2)$ which is linear in the model size $S^2A$. This means in order to achieve global uniform convergence any algorithm needs to estimate each coordinate of transition kernel $P(s'|s,a)$ accurately. We now provide the proof sketch and full proof is deferred to Appendix~\ref{sec:lower_proof}.

\begin{proof}[Proof Sketch] We only explain the case where $H=2$ in this proof sketch. Our proof relies on the following novel reduction to $l_1$ density estimation\[
	\sup_{\pi\in\Pi_g}\norm{\widehat{Q}^\pi_{1}-{Q}^\pi_{1}}_\infty\geq \sup_{s,a}\frac{1}{2}\norm{\widehat{P}(\cdot|s,a)-{P}(\cdot|s,a)}_1
	\]
and leverages the Minimax rate for estimating discrete distribution under $l_1$ loss is $O(\sqrt{S/n_{s,a}})$ \citep{han2015minimax}. Concretely, by Definition~\ref{def:model_based}, let $\widehat{P}$ be the learned transition by any arbitrary model-based method. Since we assume $r$ is known and by convention $Q^\pi_{H+1}=0$ for any $\pi$, then by Bellman equation 
 \[
 \widehat{Q}^\pi_h = r_h + \widehat{P}^{\pi_{h+1}} \widehat{Q}^\pi_{h+1},\;\forall h\in[H].
 \]
 In particular, $ \widehat{Q}^\pi_{H+1}= {Q}^\pi_{H+1}=0$, and this implies $\widehat{Q}^\pi_{H}= {Q}^\pi_{H}=r_H$. Now, again by definition of Bellman equation $\widehat{Q}^\pi_{H-1} = r_{H-1}+ \widehat{P}^{\pi_{H}} \widehat{Q}^\pi_{H}=r_{H-1}+ \widehat{P}^{\pi_{H}} r_{H}$ and ${Q}^\pi_{H-1} = r_{H-1}+ {P}^{\pi_{H}} r_{H}$, therefore (recall $H=2$ and note $r_H\in \R^{S\cdot A}, r_H^{\pi_H}\in \R^{S}$ ) 
 \begin{align*}
 &\sup_{\pi\in\Pi_g}\norm{\widehat{Q}^\pi_{H-1}-{Q}^\pi_{H-1}}_\infty=\sup_{\pi\in\Pi_g}\norm{\left(\widehat{P}^{\pi_H}-{P}^{\pi_H}\right)r_H}_\infty\\
 =&\sup_{\pi\in\Pi_g}\norm{\left(\widehat{P}-{P}\right)r_H^{\pi_H}}_\infty\approx\sup_{r\in\{0,1\}^S}\norm{\left(\widehat{P}-{P}\right)r}_\infty\\
 \geq&\sup_{s,a}\frac{1}{2}\norm{\widehat{P}(\cdot|s,a)-{P}(\cdot|s,a)}_1\geq O(\sqrt{S/n_{s,a}});
 \end{align*}
 Lastly, using exponential tail bound to obtain $O(\sqrt{S/n_{s,a}})\gtrsim O(\sqrt{S/nd_m})$ with high probability. See Appendix~\ref{sec:lower_proof} for how to prove the result for the general $H$.
\end{proof}

\section{Optimal local uniform OPE via model-based plug-in method}\label{sec:local_optimal}
Global uniform OPE is intrinsically harder than the offline learning problem due to the additional state-space dependence and such a gap will amplify when $S$ is (exponentially) large. This motivates us to switch to the local uniform convergence regime that enables optimal learning but also has sub-linear state-action size $\tilde{O}(1/d_m)$ in the policy evaluation. \cite{yin2021near} Theorem~3.7 first obtains the $\tilde{O}(H^3/d_m\epsilon^2)$ local uniform convergence for $\Pi_l$ (recall Definition~\ref{def:local_policy}) and also obtains the same rate for the learning task. Unfortunately, their technique cannot further reduces the dependence of $H$ for stationary transition case. In this section we show the model-based plug-in approach ensures optimal local uniform OPE and further implies optimal offline learning with episode complexity $\tilde{O}(H^2/d_m\epsilon^2)$. To this end, we design the new \emph{singleton-absorbing MDP} to handle the challenge in the stationary transition setting, which uses the absorbing MDP with one single $H$-dimensional reference point and is our major technical contribution. The \emph{singleton-absorbing MDP} technique avoids the exponential $H$ cover used in \cite{cui2020plug} and answers their conjecture that absorbing MDP is not well suitable for finite horizon stationary MDP.\footnote{See their Section~7, first bullet point for a discussion. }

\subsection{Model-based Offline Plug-in Estimator}\label{sec:plug-in}

Recall $n_{s,a}:=\sum_{i=1}^n\sum_{h=1}^H\mathbf{1}[s_h^{(i)},a_h^{(i)}=s,a]$ be the total counts that visit $(s,a)$ pair, then the model-based offline plug-in estimator constructs estimator $\widehat{P}$ as:
\begin{align*}
\widehat{P}(s'|s,a)&=\frac{\sum_{i=1}^n\sum_{h=1}^H\mathbf{1}[(s^{(i)}_{h+1},a^{(i)}_h,s^{(i)}_h)=(s^\prime,s,a)]}{n_{s,a}},
\end{align*}
if $n_{s,a}>0$ and $\widehat{P}(s'|s,a)=\frac{1}{S}$ if $n_{s,a}=0$. As a consequence, the estimators $\widehat{Q}_h^\pi,\widehat{V}_h^\pi$ are computed as:
\[
\widehat{Q}_h^\pi = r+ \widehat{P}^{\pi_{h+1}} \widehat{Q}^\pi_{h+1}=r+ \widehat{P}\widehat{V}^\pi_{h+1},
\]
with the initial distribution $\widehat{d}_1(s)=n_{s}/n$. Under the above setting, we can define the empirical Bellman optimality equations (as well as the population version for completeness) as $\forall s\in\mathcal{S},h\in[H]$:
\begin{align*}
V^\star_h(s)&=\max_a \left\{r(s,a)+P(\cdot|s,a)V^\star_{h+1}\right\},\\
\widehat{V}^\star_h(s)&=\max_a \left\{r(s,a)+\widehat{P}(\cdot|s,a)\widehat{V}^\star_{h+1}\right\}.
\end{align*}
Now we can state our local uniform OPE result with this construction. 

\subsection{Main results for local uniform OPE and offline learning}

Recall $\widehat{\pi}^{\star}:=\operatorname{argmax}_{\pi} \widehat{V}_1^{\pi}$ is the empirical optimal policy and the local policy class {$\Pi_{l}:=\{\pi: \text { s.t. }\left\|\widehat{V}_{h}^{\pi}-\widehat{V}_{h}^{\widehat{\pi}^{\star}}\right\|_{\infty} \leq \epsilon_{\text {opt }}, \forall h\in[H]\}$}. 

\begin{theorem}[optimal local uniform OPE]\label{thm:optimal_upper_bound}
	Let $\epsilon_{\text {opt }}\leq \sqrt{H/S}$ and denote $\iota=\log(HSA/\delta)$. For any $\delta\in[0,1]$, there exists universal constants $c,C$ such that when $n>cH\cdot\log(HSA/\delta)/d_m$, with probability $1-\delta$,
	\[
	\sup_{\pi\in\Pi_l}\norm{\widehat{Q}^\pi_1-Q^\pi_1}_\infty\leq C\left[\sqrt{\frac{H^2 \iota}{n d_m}}+\frac{H^{2.5}S^{0.5}\iota}{nd_m}\right].
	\]
\end{theorem}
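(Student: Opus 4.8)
The plan is to control $\sup_{\pi\in\Pi_l}\norm{\widehat Q^\pi_1-Q^\pi_1}_\infty$ by a Bellman-error expansion, reduce the supremum over the rich class $\Pi_l$ to a single policy-independent object, and only then apply concentration through the singleton-absorbing device. I would begin from the telescoping identity obtained by iterating $\widehat V^\pi_h-V^\pi_h=P^{\pi_h}(\widehat V^\pi_{h+1}-V^\pi_{h+1})+(\widehat P-P)^{\pi_h}\widehat V^\pi_{h+1}$, which yields $\widehat Q^\pi_1-Q^\pi_1=\sum_{h=1}^H\E_{d^\pi_h}[(\widehat P-P)(\cdot\mid s_h,a_h)\,\widehat V^\pi_{h+1}]$, where the occupancy weights are taken in the \emph{true} MDP (so that the law of total variance applies cleanly later) and all the statistical dependence is isolated inside the single factor $(\widehat P-P)\widehat V^\pi_{h+1}$.

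The first substantive step is to eliminate the dependence on $\pi$. For every $\pi\in\Pi_l$, Definition~\ref{def:local_policy} gives $\norm{\widehat V^\pi_h-\widehat V^{\widehat\pi^\star}_h}_\infty\le\epsilon_{\mathrm{opt}}$ and $\widehat V^{\widehat\pi^\star}_h=\widehat V^\star_h$, so I would replace $\widehat V^\pi_{h+1}$ by the policy-independent empirical optimal value $\widehat V^\star_{h+1}$ throughout and fold the residual $\sum_h\E_{d^\pi_h}[(\widehat P-P)(\widehat V^\pi_{h+1}-\widehat V^\star_{h+1})]$ into the same variance-based recursion. Since $\widehat V^\pi_{h+1}-\widehat V^\star_{h+1}$ has sup-norm at most $\epsilon_{\mathrm{opt}}$, the threshold $\epsilon_{\mathrm{opt}}\le\sqrt{H/S}$ is exactly calibrated so that this perturbation does not dominate the stated accuracy. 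After the reduction the supremum over $\Pi_l$ disappears and it remains to prove a high-probability bound on $\sum_h\E_{d^\pi_h}[(\widehat P-P)\widehat V^\star_{h+1}]$ for the single random vector $\widehat V^\star$.

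The central difficulty is that $\widehat V^\star_{h+1}$ is built from the \emph{same} pooled estimate $\widehat P$ that appears in $(\widehat P-P)(\cdot\mid s,a)$, so Bernstein cannot be applied directly; moreover, in the stationary setting the one kernel $\widehat P$ is reused across all $H$ layers, which is precisely what the \emph{singleton-absorbing MDP} of Section~\ref{sec:singleton} is meant to resolve. For each $(s,a)$ I would introduce the absorbing modification whose row out of $(s,a)$ is frozen and whose associated value $\widehat V^{s,a}$ is independent of $\widehat P(\cdot\mid s,a)$, and split $(\widehat P-P)\widehat V^\star_{h+1}=(\widehat P-P)\widehat V^{s,a}_{h+1}+(\widehat P-P)(\widehat V^\star_{h+1}-\widehat V^{s,a}_{h+1})$. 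The first piece has genuine independence, so Bernstein gives $\lesssim\sqrt{\Var_P(\widehat V^{s,a}_{h+1})\,\iota/n_{s,a}}+H\iota/n_{s,a}$; the second piece is controlled by a \emph{single} $H$-dimensional reference point instead of an $\epsilon$-net over $[0,H]^H$, which is the key that avoids the exponential-in-$H$ metric entropy that forced the extra factor in \cite{cui2020plug,yin2021near}. Summing the leading Bernstein terms, invoking $\sum_h\E_{d^\pi_h}[\Var_P(\widehat V^\star_{h+1})]=\Var(\text{return})=O(H^2)$, and using the count concentration $n_{s,a}\gtrsim nd_m$ (valid once $n\gtrsim H\iota/d_m$) should produce the leading term $\sqrt{H^2\iota/(nd_m)}$, while the residual Bernstein tails and the absorbing-substitution error aggregate into the lower-order $H^{2.5}S^{0.5}\iota/(nd_m)$.

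I expect the main obstacle to be the sharp horizon accounting in the last step. A naive Bernstein-plus-total-variance argument (as in \cite{yin2021near}) loses an extra $\sqrt H$ through the Cauchy--Schwarz that turns $\sum_h\sqrt{\Var_h}$ into $\sqrt{H\sum_h\Var_h}$, and shaving this factor is exactly what separates the stationary $H^2$ rate from the non-stationary $H^3$ rate. Making the singleton-absorbing construction yield a self-bounding recursion in which the \emph{empirical} variances $\Var_{\widehat P}(\widehat V^\star)$ replace the population ones, so that the total variance is paid only once rather than per layer, is the delicate part; it must be carried out while simultaneously keeping the single-reference substitution error, the count-concentration slack, and the $\Pi_l$-reduction perturbation all below the stated bound.
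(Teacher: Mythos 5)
Your overall architecture matches the paper's: the Bellman-error expansion with true-$P$ weights, the reduction from $\Pi_l$ to the single vector $\widehat{V}^\star$ (paying $\epsilon_{\text{opt}}$ against an $\ell_1$ bound on $\widehat{P}-P$, which is why $\epsilon_{\text{opt}}\le\sqrt{H/S}$ suffices), and the singleton-absorbing split $(\widehat P-P)\widehat V^\star=(\widehat P-P)\widehat V^{\mathrm{abs}}+(\widehat P-P)(\widehat V^\star-\widehat V^{\mathrm{abs}})$ with Bernstein on the independent piece. But there is a genuine gap in your horizon accounting, and it sits exactly where you claim the leading term comes out. You invoke the count bound $n_{s,a}\gtrsim nd_m$; under stationarity the correct statement is $\mathbb{E}[n_{s,a}]=n\sum_{t=1}^H d^\mu_t(s,a)\geq nHd_m$, so Chernoff gives $N:=\min_{s,a}n_{s,a}\gtrsim nHd_m$ — an extra factor of $H$ from pooling counts across the horizon. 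With your weaker bound, your own arithmetic (law of total variance $\sum_h\mathbb{E}[\Var_h]\le H^2$ plus the unavoidable Cauchy--Schwarz $\sum_h\sqrt{\Var_h}\le\sqrt{H\sum_h\Var_h}$) yields $\sqrt{H^3\iota/(nd_m)}$, not the claimed $\sqrt{H^2\iota/(nd_m)}$. Your final paragraph then misdiagnoses the fix: the paper does \emph{not} avoid the per-layer Cauchy--Schwarz (its Lemma on summed conditional variances gives precisely $\sum_{t=h}^H\Gamma^{\pi}_{h+1:t}\sqrt{\Var_P(V^\pi_{t+1})}\le\sqrt{(H-h)^3}$), and it does not need empirical variances to "pay the total variance only once" — a step that cannot work in general since Cauchy--Schwarz is tight when the per-layer variances are comparable. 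The $H^3\to H^2$ improvement over the non-stationary rate comes entirely from $N\gtrsim nHd_m$, i.e., $\sqrt{H^3\iota/N}\lesssim\sqrt{H^2\iota/(nd_m)}$; the absorbing MDP's only role is to make Bernstein legitimate despite the pooling that produces those larger counts.

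Two further points you gloss over. First, the paper's recursion is self-bounding in a different direction than you suggest: it converts $\Var_P(\widehat V^\star_h)$ into $\Var_P(V^{\widehat\pi}_h)$ plus the error norm $\|\widehat Q^{\widehat\pi}_h-Q^{\widehat\pi}_h\|_\infty$ and $\epsilon_{\text{opt}}$, so the law of total variance is applied to a \emph{true} value function of $\widehat\pi$, and the resulting recursion is closed by backward induction once $N\gtrsim H^2\iota$. Second, the stated higher-order term $H^{2.5}S^{0.5}\iota/(nd_m)$ is not obtained in one pass: a single application of the crude simulation-lemma bound $\Delta_s\lesssim H^2\sqrt{S\iota/N}$ on the absorbing-substitution error leaves a higher-order term of order $H^4S\iota/N\approx H^3S\iota/(nd_m)$, and the paper needs a bootstrap ("recursion-back") step — feeding the first-pass bound $\|\widehat V^\star_t-V^\star_t\|_\infty\lesssim\sqrt{H^3\iota/N}$ back into the decomposition — to reach $H^{3.5}S^{0.5}\iota/N$. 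Your plan as written would prove the theorem with a weaker second-order term and, more importantly, with an $H^3$ leading term unless you repair the count bound.
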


Theorem~\ref{thm:optimal_upper_bound} establishes the $\tilde{O}(H^2/d_m\epsilon^2)$ complexity bound and directly implies the upper bound for $\sup_{\pi\in\Pi_l}||\widehat{V}^\pi_1-V^\pi_1||_\infty$ with the same rate. This result improves the local uniform convergence rate $\tilde{O}(H^3/d_m\epsilon^2)$ in \citet{yin2021near} (Theorem~3.7) by a factor of $H$ and is near-minimax optimal (up to the logarithmic factor). Such result is first achieved by our novel \emph{singleton absorbing MDP} technique. We explain this technique in detail in the next section. 

On the other hand, characterizing policy class through the distance in value (like $\Pi_l$) is more flexible than characterizing the distance between policies themselves (\emph{e.g.} via total variation). This is because: if two policies are ``close'', then their values are also similar; but the reverse may not be true since two very different policies could possibly generate similar values. Therefore the consideration of $\Pi_l$ is generic and conceptually reflects the fundamental principle of RL: as long as two policies yield the same value, they are considered ``equally good'', no matter how different they are.\footnote{We recognize that in the specific settings (\emph{e.g.} safe policy improvement) some of the policies that yield high values are not feasible. These considerations are beyond the scope of this paper.}     

Most importantly, Theorem~\ref{thm:optimal_upper_bound} guarantees near-minimax optimal offline learning:

\begin{corollary}[optimal offline learning]\label{cor:opt_offline}
	If $\epsilon_{\text {opt }}\leq\sqrt{H/S}$ and that $\sup_t||\widehat{V}_t^{\widehat{\pi}}-\widehat{V}_t^{\widehat{\pi}^\star}||_\infty \leq \epsilon_{\text {opt }}$, when $n>O(H\cdot\iota/d_m)$, then with probability $1-\delta$, element-wisely,
	\[
	V_1^\star-V_1^{\widehat{\pi}}\leq C\bigg[\sqrt{\frac{H^2 \iota}{n d_m}}+\frac{H^{2.5}S^{0.5}\iota}{nd_m}\bigg]\mathbf{1}+\epsilon_{\text {opt }}\mathbf{1}.
	\]
\end{corollary}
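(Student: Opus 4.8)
The plan is to derive Corollary~\ref{cor:opt_offline} as a direct consequence of the local uniform OPE bound in Theorem~\ref{thm:optimal_upper_bound}, combined with the standard decomposition in \eqref{eqn:uniform_optimal}. First I would observe that the hypothesis $\sup_t\|\widehat{V}_t^{\widehat{\pi}}-\widehat{V}_t^{\widehat{\pi}^\star}\|_\infty \leq \epsilon_{\text{opt}}$ is exactly the membership condition for the local policy class from Definition~\ref{def:local_policy}, so the policy $\widehat{\pi}$ returned by the algorithm satisfies $\widehat{\pi}\in\Pi_l$. Since $\epsilon_{\text{opt}}\leq\sqrt{H/S}$ matches the precondition of Theorem~\ref{thm:optimal_upper_bound}, I can immediately invoke that theorem to conclude that with probability $1-\delta$ we have $\|\widehat{Q}_1^{\widehat{\pi}}-Q_1^{\widehat{\pi}}\|_\infty$ controlled by $C[\sqrt{H^2\iota/(nd_m)}+H^{2.5}S^{0.5}\iota/(nd_m)]$, and likewise for the value-function version $\|\widehat{V}_1^{\widehat{\pi}}-V_1^{\widehat{\pi}}\|_\infty$.

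Next I would carry out the suboptimality decomposition element-wise, analogous to \eqref{eqn:uniform_optimal} but now at the level of $V_1$ and adapted to the fact that optimality is measured against the \emph{true} optimal $\pi^\star$ while the algorithm optimizes over the \emph{empirical} MDP. The key chain is to write
\[
V_1^\star - V_1^{\widehat{\pi}} = \big(V_1^{\pi^\star}-\widehat{V}_1^{\pi^\star}\big) + \big(\widehat{V}_1^{\pi^\star}-\widehat{V}_1^{\widehat{\pi}^\star}\big) + \big(\widehat{V}_1^{\widehat{\pi}^\star}-\widehat{V}_1^{\widehat{\pi}}\big) + \big(\widehat{V}_1^{\widehat{\pi}}-V_1^{\widehat{\pi}}\big).
\]
The first and fourth bracketed terms are each bounded (element-wise) by the local uniform OPE guarantee applied to $\pi^\star$ and $\widehat{\pi}$, both of which I must verify lie in $\Pi_l$ or can be covered by the uniform bound. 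The third term is at most $\epsilon_{\text{opt}}\mathbf{1}$ by the theorem's hypothesis. The second term $\widehat{V}_1^{\pi^\star}-\widehat{V}_1^{\widehat{\pi}^\star}\leq 0$ element-wise because $\widehat{\pi}^\star$ is by construction the maximizer of $\widehat{V}_1^{\pi}$ over all policies, so this term only helps and can be dropped. Summing the surviving pieces yields the stated bound with the factor-of-two absorbed into the universal constant $C$.

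The main subtlety I anticipate is justifying that the uniform bound of Theorem~\ref{thm:optimal_upper_bound} covers $\pi^\star$ (the true optimal policy) as well, since $\Pi_l$ is defined as a neighborhood of $\widehat{\pi}^\star$ in \emph{empirical} value and $\pi^\star$ need not a priori belong to it. The clean way around this is to note that the first bracketed term $V_1^{\pi^\star}-\widehat{V}_1^{\pi^\star}$ can be controlled by combining the optimality of $\widehat{\pi}^\star$ on $\widehat{M}$ with a one-sided comparison: I would instead bound $V_1^\star - V_1^{\widehat\pi}$ by first replacing $V_1^{\pi^\star}$ with $\widehat V_1^{\widehat\pi^\star}$ up to the OPE error of $\pi^\star$, and lean on the fact that the local uniform guarantee of Theorem~\ref{thm:optimal_upper_bound} is stated for the value-function difference $\|\widehat{V}^\pi_1 - V^\pi_1\|_\infty$ uniformly over $\Pi_l$, which contains $\widehat{\pi}$ and $\widehat{\pi}^\star$. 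The potential gap at $\pi^\star$ is handled because $\widehat V_1^{\widehat\pi^\star}\ge \widehat V_1^{\pi^\star}$, so only the single-policy OPE error at $\pi^\star$ enters, and this is dominated by the same rate. Once this membership/coverage bookkeeping is pinned down, the remaining manipulations are purely algebraic and routine.
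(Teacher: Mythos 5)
Your route is the same as the paper's: the identical four-term decomposition of $V_1^\star-V_1^{\widehat{\pi}}$, the sign observation $\widehat{V}_1^{\pi^\star}\leq\widehat{V}_1^{\widehat{\pi}^\star}$ from empirical optimality, the hypothesis $\sup_t\|\widehat{V}_t^{\widehat{\pi}}-\widehat{V}_t^{\widehat{\pi}^\star}\|_\infty\leq\epsilon_{\mathrm{opt}}$ for the middle term, and Theorem~\ref{thm:optimal_upper_bound} applied to $\widehat{\pi}\in\Pi_l$ (and $\widehat{\pi}^\star\in\Pi_l$) for the OPE terms at data-dependent policies. You also correctly isolate the crux that the paper itself must address: $\pi^\star$ need not belong to $\Pi_l$, so the local uniform guarantee does not cover it.

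The gap is that your resolution of this crux is an assertion rather than an argument. You conclude that ``only the single-policy OPE error at $\pi^\star$ enters, and this is dominated by the same rate,'' but nothing you have invoked supplies that bound: Theorem~\ref{thm:optimal_upper_bound} as stated covers only $\Pi_l$, and in the stationary-transition setting a point-wise OPE bound of order $\sqrt{H^2\iota/(nd_m)}$ for the fixed policy $\pi^\star$ is not automatic --- $\widehat{V}^{\pi^\star}$ is computed from the same $\widehat{P}$ that appears in the error term $(\widehat{P}-P)\widehat{V}^{\pi^\star}_{t+1}$, so the very data-dependence problem that forced the singleton-absorbing-MDP construction is still present even for this single, non-random policy. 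The paper fills exactly this hole with a companion result proved alongside the main theorem (the second inequality of Lemma~\ref{lem:inter_final} and Lemma~\ref{lem:final}, restated in the corollary's proof as $\|\widehat{V}_1^{\pi^\star}-V_1^{\pi^\star}\|_\infty\leq C[\sqrt{H^2\iota/(nd_m)}+H^{2.5}S^{0.5}\iota/(nd_m)]$), which is established by rerunning the same decoupling pipeline --- simpler because $\pi^\star$ is not data-dependent, but still a separate proof, not a consequence of the statement of Theorem~\ref{thm:optimal_upper_bound}. Your plan becomes a complete proof once this one auxiliary point-wise bound is added; as written, the phrase ``dominated by the same rate'' is precisely the unproven ingredient.
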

Corollary~\ref{cor:opt_offline} first establishes the minimax rate for offline learning for any policy $\widehat{\pi}$ with the measurable gap $\epsilon_{\text {opt }}\leq\sqrt{H/S}$. This extends the standard concept of offline learning by allowing any empirical planning algorithm (\emph{e.g.} VI/PI) to find an \emph{inexact} $\widehat{\pi}$ as an $(\tilde{O}\sqrt{H^2/nd_m}+\epsilon_{\text {opt }})$-optimal policy (instead of finding exact $\widehat{\pi}^\star$). The use of \emph{inexact} $\widehat{\pi}$ could encourage early stopping (\emph{e.g.} for VI/PI) therefore saves computational iterations. Besides, we leverage full data to construct empirical MDP for planning and, on the contrary, \citet{yin2021nearoptimal} uses data-splitting (split data into mini-batches and only apply each mini-batch at each specific iteration) to enable Variance Reduction technique, which could cause inefficient data use for the practical purpose. By the following lower bound result from \citet{yin2021nearoptimal}, our Corollary~\ref{cor:opt_offline} is near minimax optimal.

\begin{theorem}[Theorem~4.2. \citet{yin2021nearoptimal}]
	Let $\mathcal{M}_{d_m}$ be the same as Theorem~\ref{thm:tight_lower_bound}. There exists universal constants $c_1, c_2, c, p$ (with $H, S, A \geq c_1$ and $0 <\epsilon <c_2 $) such that when $n\leq cH^2/d_m\epsilon^2$,\footnote{The original Theorem uses $v^\star$ but we use $V^\star_1$ here. It does not matter since we can manually add a default state at the beginning of the MDP and obtain the result for our version. } 
	\[
	\inf_{{V}_1^{\pi_{alg}}}\sup_{(\mu,M)\in\mathcal{M}_{d_m}}\P_{\mu,M}\left(||V_1^\star-V_1^{\pi_{alg}}||_\infty\geq \epsilon\right)\geq p.
	\]
\end{theorem}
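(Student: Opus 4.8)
The plan is to establish this minimax lower bound through Le Cam's two-point method, reducing offline learning to a binary hypothesis test between two stationary MDP instances that are statistically indistinguishable from $n$ offline episodes yet force any algorithm into an $\Omega(\epsilon)$ value error on at least one of them.

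\textbf{Construction of the hard pair.} First I would build a small stationary MDP with a single ``decision'' bottleneck $(s^\star,a^\star)$ feeding into two absorbing states: a rewarding state $g$ (self-loop, per-step reward $1$) and a dead state $b$ (self-loop, reward $0$). Under instance $M_\nu$ ($\nu\in\{0,1\}$) the action $a_\nu$ reaches $g$ with probability $\half+\delta$ while the competing action reaches $g$ with probability $\half-\delta$, so the identity of the optimal action at $s^\star$ flips between the two instances; the two data-generating laws $\P_{\mu,M_0}$ and $\P_{\mu,M_1}$ differ only in this one Bernoulli transition. To realize an arbitrary target $d_m\le 1/(SA)$ and meet $H,S,A\ge c_1$, I would pad the state--action space with dummy states so that $\mu$ visits $(s^\star,a^\star)$ at exactly one time $t^\star=\Theta(1)$ with marginal probability $d^\mu_{t^\star}(s^\star,a^\star)=d_m$, placing the pair in $\cM_{d_m}$.

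\textbf{Value separation and indistinguishability.} Since reaching $g$ at time $t^\star$ yields future reward $\Theta(H-t^\star)=\Theta(H)$, the $Q$-gap between the two actions at $s^\star$ is $\Theta(\delta H)$, so committing to the optimal action of the \emph{other} instance costs $\Omega(\delta H)$ in $\|V_1^\star-V_1^{\pi_{alg}}\|_\infty$ on the instance actually in force; I would therefore set $\delta\asymp \epsilon/H$ to guarantee an $\Omega(\epsilon)$ forced suboptimality on whichever instance the algorithm errs. For the information side, the bottleneck transition is observed only when $\mu$ visits $(s^\star,a^\star)$, which happens $\approx n d_m$ times in expectation across the $n$ episodes; tensorizing KL over episodes together with $\mathrm{KL}\!\left(\mathrm{Ber}(\half+\delta)\,\|\,\mathrm{Ber}(\half-\delta)\right)=O(\delta^2)$ gives $\mathrm{KL}(\P_{\mu,M_0}\,\|\,\P_{\mu,M_1})=O(n d_m\delta^2)=O(n d_m\epsilon^2/H^2)$. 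This is bounded by a small universal constant exactly when $n\le c\,H^2/(d_m\epsilon^2)$, at which point Le Cam's inequality lower-bounds the testing error, and hence the worst-case probability in the statement, by a universal constant $p$.

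\textbf{Main obstacle.} The delicate part is making the $H$-amplification of the value gap sharp enough to yield $H^2$ rather than $H^3$. This relies on the stationary absorbing construction producing a value gap that is \emph{exactly linear} in $H$ while the smallest detectable perturbation is $\delta\asymp 1/\sqrt{n d_m}$; any looseness degrades the exponent—for instance, spreading the perturbation across $\Theta(H)$ time steps would inflate the effective sample count at the bottleneck to $\approx nHd_m$ and shift the rate, and a sub-linear-in-$H$ reward accumulation would lose the amplification. A secondary, routine point is arranging the dummy padding so that the prescribed $d_m$ is simultaneously the minimum visitation and consistent with $d_m\le 1/(SA)$; the probabilistic core remains the two-point KL computation above.
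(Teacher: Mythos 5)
The paper you are comparing against does not actually prove this statement: it is imported verbatim (up to replacing $v^\star$ by $V^\star_1$, as the footnote notes) from Theorem~4.2 of \citet{yin2021nearoptimal}, and the only trace of an argument in this paper is the sketch in Appendix~\ref{sec:app_application}, which says the source proof proceeds by a reduction to best-arm identification in multi-armed bandits (Lemma~G.1 of that paper, originating from \citet{mannor2004sample}) instantiated on near-uniform hard behavior-policy instances. Your proposal is therefore necessarily a different, more elementary route: Le Cam's two-point method in place of the $A$-armed identification lemma. It is correct, and it isolates exactly the point that makes the exponent $H^2$ rather than $H$: because the bottleneck $(s^\star,a^\star)$ feeds directly into absorbing states, it is visited at most once per episode, so the effective sample count at the bottleneck is $\approx n d_m$ (not $\approx nHd_m$, which naive aggregation over a stationary kernel would suggest), while the reward-accumulation amplifies the gap to $\Theta(\delta H)$; combining $\delta \asymp \epsilon/H$ with $\mathrm{KL}\approx n d_m \delta^2$ gives precisely the threshold $n \asymp H^2/d_m\epsilon^2$. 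Your ``main obstacle'' paragraph names the exact failure mode (visits spread over $\Theta(H)$ steps) that the absorbing construction is designed to avoid. As for what each approach buys: your two-point argument suffices here because the statement only demands a constant failure probability $p$ and the state-action cardinality is folded into $d_m \le 1/SA$; the source's $A$-armed reduction is what one needs when the arm count must appear explicitly, e.g.\ to produce the $\log(K)$ factor in the task-agnostic lower bound discussed in that same appendix.

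One caveat you should make explicit. As written in Theorem~\ref{thm:tight_lower_bound}, the class $\mathcal{M}_{d_m}$ requires $\min_{t,s_t,a_t} d_t^\mu(s_t,a_t) \ge d_m$ over \emph{all} triples, whereas your construction (like the one in the cited source) has $d^\mu_t(s^\star,a^\star)=0$ for $t\neq t^\star$. Membership in the class therefore relies on the convention of Assumption~\ref{assume2} that the minimum ranges only over pairs with positive occupancy (equivalently, over ``accessible'' states at each time); under the literal reading, no absorbing-chain construction lies in the class at all. Under the intended reading your padding argument goes through, but you should also verify that the occupancies of the absorbing states $g,b$ themselves stay above $d_m$ at every later time, which holds since they accumulate constant probability mass while $d_m \le 1/SA$.
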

For the rest of the section, we explain the proving ideas by introducing the \emph{singleton-absorbing MDP} technique and the full proofs of Theorem~\ref{thm:optimal_upper_bound}, Corollary~\ref{cor:opt_offline} can be found in Appendix~\ref{sec:proof_local_optimal}, \ref{sec:proof_offline}. 

\subsection{Singleton absorbing MDP for finite horizon MDP  }\label{sec:singleton}

For the ease of illustration, we explain our idea via bounding $||\widehat{Q}_h^{\widehat{\pi}^\star}- {Q}_h^{\widehat{\pi}^\star}||_\infty$ (instead of $\sup_{\pi\in\Pi_l}||\widehat{Q}^\pi_1-Q^\pi_1||_\infty$) and choose related quantity $\widehat{\pi}^\star$ (instead of $\widehat{\pi}$) and $\widehat{V}^\star_h$ (instead of $\widehat{V}^{\widehat{\pi}}_h$) to discuss. Essentially, the key challenge in obtaining the optimal dependence in stationary setting is the need to decouple the dependence between $P-\widehat{P}$ and $\widehat{V}^\star_h$ as we aggregate all data for constructing both $\widehat{P}$ and $\widehat{V}^\star_h$. This issue is not encountered in the non-stationary setting in general due to the flexibility to estimate different transition $P_t$ at each time \citep{yin2021near} and $\widehat{P}_t$ and $\widehat{V}^\star_{t+1}$ preserve conditional independence. However, when confined to stationary case,  their complex $\tilde{O}(H^3/d_m\epsilon^2)$ becomes suboptimal. Moreover, the direct use of $s$-absorbing MDP in \cite{agarwal2020model} does not yield tight bounds for the finite horizon stationary setting, as it requires $s$-absorbing MDPs with $H$-dimensional fine-grid cover to make sure $\widehat{V}^\star_h$ is close to one of the elements in the cover (which has size $\approx H^H$ and it is not optimal \cite{cui2020plug}). We overcome this hurdle by choosing \emph{only one} delicate absorbing MDP to approximate $\widehat{V}^\star_h$ which will not incur additional dependence on horizon $H$ caused by the union bound. We begin with the general definition of absorbing MDP initialized in \citet{agarwal2020model} and then introduce the \emph{singleton absorbing MDP}.

\paragraph{Standard $s$-absorbing MDP in the finite horizon setting.} The general $s$-absorbing MDP is defined as follows: for a fixed state $s$ and a sequence $\{u_t\}_{t=1}^H$, MDP $M_{s,\{u_t\}_{t=1}^H}$ is identical to $M$ for all states except $s$, and state $s$ is absorbing in the sense $P_{M_{s,\{u_t\}_{t=1}^H}}(s|s,a)=1$ for all $a$, and the instantaneous reward at time $t$ is $r_t(s,a)=u_t$ for all $a\in\mathcal{A},t\in[H]$. For convenience, we use the shorthand notation $V^\pi_{\{s,u_t\}}$ to denote $V^\pi_{s,{M_{s,\{u_t\}_{t=1}^H}}}$ and similarly for $Q_t,r$ and transition $P$. Also, $V^\star_{\{s,u_t\}}$ ($Q^\star_{\{s,u_t\}}$) is the optimal value under $M_{s,\{u_t\}_{t=1}^H}$.

Before defining singleton absorbing MDP, we first present the following Lemma~\ref{lem:diff_m} and Lemma~\ref{lem:smdp_prop_m} which support the our design.

\begin{lemma}\label{lem:diff_m}
$ V^\star_t(s)-V^\star_{t+1}(s)\geq 0$,  $\forall s\in\mathcal{S},t\in[H]$.
\end{lemma}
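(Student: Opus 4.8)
The plan is to proceed by backward induction on $t$, exploiting the fact that in the \emph{stationary} setting the same Bellman optimality operator is applied at every time step. I would define the operator $(\mathcal{T}V)(s) := \max_a\{r(s,a) + P(\cdot|s,a)V\}$, so that the Bellman optimality equation stated in the excerpt reads $V^\star_t = \mathcal{T}V^\star_{t+1}$ for every $t\in[H]$, with the convention $V^\star_{H+1} = 0$ (inherited from $Q^\star_{H+1}=0$). Two elementary properties of $\mathcal{T}$ drive the argument: first, \emph{monotonicity}, i.e.\ $V \geq V'$ element-wise implies $\mathcal{T}V \geq \mathcal{T}V'$, which holds because each row $P(\cdot|s,a)$ is a probability vector with non-negative entries and the pointwise maximum preserves inequalities; second, the reward is non-negative, $r \geq 0$.

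For the base case $t = H$, I would use $V^\star_H(s) = \max_a r(s,a) \geq 0 = V^\star_{H+1}(s)$, which holds precisely because $0 \leq r \leq 1$. For the inductive step, suppose $V^\star_{t+1} \geq V^\star_{t+2}$ holds element-wise for some $t$. Applying the monotone operator $\mathcal{T}$ to both sides and invoking stationarity, so that the \emph{same} $\mathcal{T}$ computes both $V^\star_t = \mathcal{T}V^\star_{t+1}$ and $V^\star_{t+1} = \mathcal{T}V^\star_{t+2}$, yields $V^\star_t = \mathcal{T}V^\star_{t+1} \geq \mathcal{T}V^\star_{t+2} = V^\star_{t+1}$, which completes the induction and hence the lemma.

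I do not anticipate a genuine obstacle here; the statement is a structural monotonicity fact rather than a quantitative estimate. The one point that deserves care is that stationarity is truly essential: the identity $V^\star_{t+1} = \mathcal{T}V^\star_{t+2}$ requires the operator at time $t+1$ to coincide with the operator at time $t$, which fails under non-stationary transitions or rewards. Under stationarity, the inequality simply records the intuition that an agent starting at time $t$ has one additional, non-negatively rewarded, step available compared to one starting at time $t+1$, so its optimal value can only be larger.
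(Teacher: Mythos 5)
Your proof is correct, but it takes a genuinely different route from the paper's. The paper argues by direct policy construction: it takes the optimal policy $\pi^\star_{t+1:H}$ achieving $V^\star_{t+1}$, shifts it back one step to define a policy $\pi_{t:H}$ with $\pi_{t:H-1}=\pi^\star_{t+1:H}$ and an arbitrary last-step policy $\pi_H$, and then compares expectations directly: stationarity of $P$ makes the shifted trajectory's first $H-t$ steps distributionally identical to the original trajectory started at $t+1$, so $V^\star_t(s)\geq V^{\pi_{t:H}}_t(s)= V^\star_{t+1}(s)+\E^{\pi_{t:H}}[r(s_H,a_H)\mid s_t=s]\geq V^\star_{t+1}(s)$, using $r\geq 0$ at the end. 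This is a single one-shot comparison with no induction, and it makes explicit the intuition (which you also state) that the agent at time $t$ simply has one extra non-negatively rewarded step. Your argument instead runs a backward induction through the Bellman optimality operator $\mathcal{T}$, using that stationarity makes $\mathcal{T}$ the \emph{same} operator at every step together with its monotonicity; the base case $V^\star_H=\max_a r(\cdot,a)\geq 0=V^\star_{H+1}$ uses $r\geq 0$. Both proofs consume exactly the same two hypotheses (stationarity and non-negative rewards), and both are complete; yours is arguably the more standard and compact operator-theoretic formulation, while the paper's makes the trajectory-level mechanism visible, which is the same mechanism it later reuses when reasoning about absorbing MDPs.
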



\begin{lemma}\label{lem:smdp_prop_m}
	Fix a state $s$. If we choose $u_t^\star:= V^\star_t(s)-V^\star_{t+1}(s)$, then we have the following vector form equation 
	\[
	V^\star_{h,\{s,u_t^\star\}}=V^\star_{h,M}\quad \forall h\in[H].
	\]
	Similarly, if we choose $\hat{u}_t^\star:= \widehat{V}^\star_t(s)-\widehat{V}^\star_{t+1}(s)$, then $\widehat{V}^\star_{h,\{s,\hat{u}_t^\star\}}=\widehat{V}^\star_{h,M}$, $\forall h\in[H]$.
\end{lemma}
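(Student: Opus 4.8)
The plan is to prove the vector identity by backward induction on the time index $h$, running from $h=H+1$ down to $h=1$, and establishing $V^\star_{h,\{s,u_t^\star\}}=V^\star_{h,M}$ simultaneously at every coordinate (state). The observation that motivates the particular choice $u_t^\star=V^\star_t(s)-V^\star_{t+1}(s)$ is the telescoping identity $\sum_{k=t}^H u_k^\star=V^\star_t(s)-V^\star_{H+1}(s)=V^\star_t(s)$, using the convention $V^\star_{H+1}=0$. In words, the total reward accumulated by sitting in the absorbing copy of $s$ from time $t$ onward is engineered to equal exactly the original optimal value $V^\star_t(s)$; this is the whole point of the construction. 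Note that Lemma~\ref{lem:diff_m} guarantees $u_t^\star\geq 0$, so the absorbing rewards are legitimate and nonnegative, but the value identity itself will follow purely from telescoping together with the Bellman optimality recursion.

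For the induction, the base case $h=H+1$ is immediate since both value functions are $0$ by convention. For the inductive step I assume $V^\star_{h+1,\{s,u_t^\star\}}=V^\star_{h+1,M}$ as vectors and split by state. At the absorbing state $s$, every action yields reward $u_h^\star$ and transitions to $s$ with probability one, so the Bellman optimality backup collapses to $V^\star_{h,\{s,u_t^\star\}}(s)=u_h^\star+V^\star_{h+1,\{s,u_t^\star\}}(s)$; substituting the induction hypothesis together with $u_h^\star=V^\star_h(s)-V^\star_{h+1}(s)$ yields $V^\star_h(s)$. At any other state $s'\neq s$, the reward $r(s',a)$ and transition row $P(\cdot|s',a)$ of the absorbing MDP coincide with those of $M$, so the backup $\max_a\{r(s',a)+P(\cdot|s',a)V^\star_{h+1,\{s,u_t^\star\}}\}$ equals the backup of $M$ once we plug in $V^\star_{h+1,\{s,u_t^\star\}}=V^\star_{h+1,M}$, giving $V^\star_h(s')$. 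This closes the induction and proves the first claim; the empirical statement follows verbatim after replacing $P,V^\star,u_t^\star$ by $\widehat{P},\widehat{V}^\star,\hat{u}_t^\star$, since the derivation only invokes the Bellman optimality recursion and the telescoping definition of the absorbing rewards.

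I do not expect a genuine obstacle here, as the argument is essentially bookkeeping. The one point that requires care is recognizing why the induction must be carried over the \emph{entire} value vector (all states at once) rather than state-by-state: the non-absorbing backups involve $P(\cdot|s',a)V^\star_{h+1,\{s,u_t^\star\}}$, which reads off the absorbing coordinate $s$, so equality of the two value functions at step $h+1$ must already be known at \emph{every} state — including the absorbing one — before one can conclude equality at step $h$. Handling the absorbing coordinate inside the induction (via the collapsed backup and the telescoping choice of $u_h^\star$) is precisely what makes this coupled induction go through.
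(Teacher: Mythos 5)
Your proof is correct and takes essentially the same route as the paper's: backward induction on $h$ over the entire value vector, splitting into the absorbing coordinate $s$ (where the backup collapses and the telescoping choice of $u_h^\star$ closes the step) and all other states (where the two MDPs share rewards and transitions, so the Bellman backups coincide once the induction hypothesis is plugged in). The only cosmetic difference is that the paper evaluates the absorbing coordinate via the closed form $V^\star_{h,\{s,u_t\}}(s)=\sum_{t=h}^H u_t$ from Lemma~\ref{lem:absorb_value} plus telescoping, whereas you get the same value by applying the induction hypothesis at the absorbing coordinate inside the collapsed backup; the two computations are equivalent.
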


The proofs are deferred to Appendix~\ref{sec:proof_local_optimal}. Note by Lemma~\ref{lem:diff_m} the assignment of $u^\star_t(:=r_{t,\{s,u^\star_t\}})$ is well-defined. Lemma~\ref{lem:smdp_prop_m} is crucial since, under the specification of $u^\star_t$, the optimal value in $M_{s,\{u_t^\star\}_{t=1}^H}$ is identical to the optimal value in original $M$. Based on these, we can define the following:
	

	\begin{definition}[\textbf{Singleton-absorbing MDP}]\label{def:singleton_mdp}
		For each state $s$, the singleton-absorbing MDP is chosen to be $M_{s,\{u_t^\star\}_{t=1}^H}$, where $u_t^\star:= V^\star_t(s)-V^\star_{t+1}(s)$ for all $t\in[H]$.
	\end{definition}

Using Definition~\ref{def:singleton_mdp}, for each $(s,a)$ row the term $( \hP_{s,a}-P_{s,a} )\hoV_h$ can be substituted by $( \hP_{s,a}-P_{s,a} )\hoV_{h,\{s,u^\star_t\}}$, where $\hP_{s,a}$ and $\hoV_{h,\{s,u^\star_t\}}$ are independent by construction and Bernstein concentration applies. Furthermore, by the selection of $u^\star_t$, we can control the error of $||\hoV_h-\hoV_{h,\{s,u^\star_t\}}||_\infty$ to have rate $O(\sqrt{\frac{1}{n}})$ which forces the term $( \hP_{s,a}-P_{s,a} )(\hoV_h-\hoV_{h,\{s,u^\star_t\}})$ to have higher order error. These are the critical building blocks for bounding $||\widehat{Q}_h^{\widehat{\pi}^\star}- {Q}_h^{\widehat{\pi}^\star}||_\infty$. 

Indeed, by Bellman equations we have the decomposition:
{
\begin{align*}
&\widehat{Q}_h^{\widehat{\pi}^\star}- {Q}_h^{\widehat{\pi}^\star}
=\ldots=\sum_{t=h}^H\Gamma_{h+1:t}^{\widehat{\pi}^\star}\left(\widehat{P}-{P}\right)\widehat{V}_{t+1}^{\star},
\end{align*}
}where $\Gamma_{h+1:t}^\pi=\prod_{i=h+1}^t P^{\pi_i}$ is multi-step state-action transition and $\Gamma_{h+1:h}:=I$. Then for each $(s,a)$ row
{
\begin{equation}
\begin{aligned}
&( \hP_{s,a}-P_{s,a} )\hoV_h\\
=&  ( \hP_{s,a}-P_{s,a} )(\hoV_h-\hoV_{h,\{s,u^\star_t\}})+ ( \hP_{s,a}-P_{s,a} )\hoV_{h,\{s,u^\star_t\}}\\
\lesssim& ||\hP_{s,a}-P_{s,a}||_1||\hoV_h-\hoV_{h,\{s,u^\star_t\}}||_\infty+\sqrt{\frac{\Var_{s,a}(\hoV_{h,\{s,u^\star_t\}})}{n_{s,a}}}\\
\lesssim& \sqrt{\frac{S}{n_{s,a}}}\norm{\hoV_h-\hoV_{h,\{s,u^\star_t\}}}_\infty+\sqrt{\frac{\Var_{s,a}(\hoV_{h})}{n_{s,a}}}\quad (\star)\\
\end{aligned}
\end{equation}	
}where $(\star)$ is the place where the traditional technique uses the union bound over their \emph{exponential large} $\epsilon$-net and we do not have it! Next, by Lemma~\ref{lem:smdp_prop_m} and Lemma~\ref{lem:q_diff} in Appendix 
\begin{align*}
&||\hoV_h-\hoV_{h,\{s,u^\star_t\}}||_\infty= ||\hoV_{h,\{s,\hat{u}^\star_t\}}-\hoV_{h,\{s,{u}^\star_t\}}||_\infty\\
\leq& H\max_t\left|\hat{u}^\star_t-u^\star_t\right|\leq 2H\max_t|\widehat{V}^\star_t-V^\star_t|,
\end{align*}
by a crude bound (Lemma~\ref{lem:crude_u_b}), $\max_t|\widehat{V}^\star_t-V^\star_t|\lesssim H^2\sqrt{\frac{S}{n_{s,a}}}$ which makes $\sqrt{\frac{1}{n_{s,a}}}||\hoV_h-\hoV_{h,\{s,u^\star_t\}}||_\infty$ have order $1/n_{s,a}$. Finally, to reduce the horizon dependence we apply \newline $\sum_{t=h}^H\Gamma_{h+1:t}^{{\pi}}\sqrt{\Var_{s,a}\left(V^{{\pi}}_{t+1}\right)}\leq \sqrt{(H-h)^3}$ for any $\pi$. This (informally) bounds $\widehat{Q}_h^{\widehat{\pi}^\star}- {Q}_h^{\widehat{\pi}^\star}$ by
\[
||\widehat{Q}_h^{\widehat{\pi}^\star}- {Q}_h^{\widehat{\pi}^\star}||_\infty\lesssim \sqrt{\frac{H^3}{n_{s,a}}}+\frac{Poly(H,S)}{n_{s,a}}.
\]
Lastly, use $\min_{s,a} n_{s,a}\gtrsim H\cdot d_m$ to finish the proof.

\begin{remark}
	We emphasize the appropriate selection of $M_{s,\{u_t^\star\}_{t=1}^H}$ ($\widehat{M}_{s,\{u_t^\star\}_{t=1}^H}$) is the key for achieving optimality. It guarantees two things: 1. $\hoV_{h,\{s,u^\star_t\}}$ approximates $\hoV_h$ with sufficient accuracy (has rate $\sqrt{1/n_{s,a}}$); 2. it avoids the fine-grid design with exponential union bound in the dominate term ($\sqrt{\frac{\Var_{s,a}(\hoV_{h})\log(|U_{s,a}|/\delta)}{N}}$ with $|U_{s,a}|$ to be at least $H^H$ \cite{cui2020plug}.)
\end{remark}


\section{New settings: offline Task-agnostic and offline Reward-free learning}\label{sec:application}

From Corollary~\ref{cor:opt_offline}, our model-based offline learning algorithm has two steps: 1. constructing offline empirical MDP $\widehat{M}$ using the offline dataset {$\mathcal{D}=\{(s_{t}^{(i)}, a_{t}^{(i)}, r(s_{t}^{(i)}, a_{t}^{(i)}), s_{t+1}^{(i)})\}_{i \in[n]}^{t \in[H]}$}; 2. performing any accurate black-box \emph{planning} algorithm and returning $\widehat{\pi}^\star$(or $\widehat{\pi}$) as the final output. However, the only \emph{effective} data (data that contains stochasticity) is $\mathcal{D}'=\{(s_{t}^{(i)}, a_{t}^{(i)})\}_{i \in[n]}^{t \in[H]}$. This indicates we are essentially using the state-action space exploration data $\mathcal{D}'$ to solve the task-specific problem with reward $r$. With this perspective in mind, it is natural to ask: given only the offline exploration data $\mathcal{D}'$, can we efficiently learn a set of potentially conflicting $K$ tasks ($K$ rewards) simultaneously? Even more, can we efficiently learn all tasks simultaneously? This brings up the following definitions.
\vspace{0.1em}
\begin{definition}[Offline Task-agnostic Learning]\label{def:offline_ta}
	Given a offline exploration datatset $\mathcal{D}'=\{(s_{t}^{(i)}, a_{t}^{(i)})\}_{i \in[n]}^{t \in[H]}$ by $\mu$ with $n$ episodes. Given $K$ tasks with reward $\{r_k\}_{k=1}^K$ and the corresponding $K$ MDPs $M_k=(\mathcal{S}, \mathcal{A}, P, r_k, H, d_1)$. Can we use $\mathcal{D}'$ to output $\hat{\pi}_1,\ldots,\hat{\pi}_K$ such that 
	{\small
	$
	\P\left[\forall r_k,k\in[K], \norm{V^\star_{1,M_k}-V^{\hat{\pi}_k}_{1,M_k}}_\infty\leq\epsilon\right]\geq 1-\delta?
	$}
\end{definition}
\vspace{0.3em}
\begin{definition}[Offline Reward-free Learning]\label{def:offline_rf}
	Given a offline exploration datatset $\mathcal{D}'=\{(s_{t}^{(i)}, a_{t}^{(i)})\}_{i \in[n]}^{t \in[H]}$ by $\mu$ with $n$ episodes. For any reward $r$ and the corresponding MDP $M=(\mathcal{S}, \mathcal{A}, P, r, H, d_1)$. Can we use $\mathcal{D}'$ to output $\hat{\pi}$ such that 
	{\small
	$
	\P\left[\forall r, \norm{V^\star_{1,M}-V^{\hat{\pi}}_{1,M}}_\infty\leq\epsilon\right]\geq 1-\delta?
	$}
\end{definition}
Definition~\ref{def:offline_ta} and Definition~\ref{def:offline_rf} are the offline counterparts of \citet{zhang2020task} and \citet{jin2020reward} in online RL. Those settings are of practical interests in the offline regime as well since in practice reward functions are often iteratively engineered to encourage desired behavior via trial and error and using one shot of offline exploration data $\mathcal{D}'$ to tackle problems with different reward functions (different tasks) could help improve sample efficiency significantly. 

Our singleton absorbing MDP technique adapts to those settings and we have the following two theorems. The proofs of Theorem~\ref{thm:offline_ta}, \ref{thm:offline_rf} can be found in Appendix~\ref{sec:proof_ta}, \ref{sec:proof_rf}.

 \begin{theorem}[optimal offline task-agnostic learning]\label{thm:offline_ta}
 	Given $\mathcal{D}'=\{(s_{t}^{(i)}, a_{t}^{(i)})\}_{i \in[n]}^{t \in[H]}$ by $\mu$. Given $K$ tasks with reward $\{r_k\}_{k=1}^K$ and the corresponding $K$ MDPs $M_k=(\mathcal{S}, \mathcal{A}, P, r_k, H, d_1)$. Denote $\iota=\log(HSA/\delta)$. Let $\widehat{\pi}_k^{\star}:=\operatorname{argmax}_{\pi} \widehat{V}_{1,M_k}^{\pi}$ $\forall k\in[K]$, when $n>O(H\cdot[\iota+\log(K)]/d_m)$, then with probability $1-\delta$, 
 	$
 	\norm{V_{1,M_k}^\star-V_{1,M_k}^{\widehat{\pi}_k^{\star}}}_\infty\leq 
 	O\left[\sqrt{\frac{H^2 (\iota+\log(K))}{n d_m}}+\frac{H^{2.5}S^{0.5}(\iota+\log(K))}{nd_m}\right]
 	\;\;\forall k\in[K].
 	$
 \end{theorem}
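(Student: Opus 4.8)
The plan is to reduce the $K$-task guarantee to $K$ applications of the single-task offline learning bound in Corollary~\ref{cor:opt_offline}, exploiting that the empirical transition $\widehat{P}$ is estimated \emph{once} from the shared exploration data $\mathcal{D}'$ and is common to all tasks. First I would build $\widehat{P}$ from $\mathcal{D}'$ exactly as in Section~\ref{sec:plug-in} and, for each $k\in[K]$, form the empirical MDP $\widehat{M}_k=(\mathcal{S},\mathcal{A},\widehat{P},r_k,H,d_1)$ whose exact optimizer is the returned policy $\widehat{\pi}_k^\star$. Since each reward $r_k$ is known and deterministic, the only stochasticity shared across all $K$ problems lives in the single object $\widehat{P}$; in particular the visitation counts $n_{s,a}$ and the row-wise independence structure of $\widehat{P}$ are identical for every task.

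For a fixed task $k$, I would invoke Corollary~\ref{cor:opt_offline} directly on $M_k$ with $\epsilon_{\mathrm{opt}}=0$ (as $\widehat{\pi}_k^\star$ is the exact empirical optimizer). Its proof runs the singleton-absorbing argument of Section~\ref{sec:singleton} with reference points $u_t^\star:=V_{t,M_k}^\star(s)-V_{t+1,M_k}^\star(s)$ tailored to $M_k$; these are deterministic, so by Lemma~\ref{lem:smdp_prop_m} the absorbing value coincides with the true optimal value and, since $s$ is absorbing, $\widehat{P}_{s,a}$ is independent of $\widehat{V}_{h,\{s,u_t^\star\}}$. Through the decomposition $\widehat{Q}_{h,M_k}^{\widehat{\pi}_k^\star}-Q_{h,M_k}^{\widehat{\pi}_k^\star}=\sum_{t=h}^H\Gamma_{h+1:t}^{\widehat{\pi}_k^\star}(\widehat{P}-P)\widehat{V}_{t+1,M_k}^\star$ and the telescoping inequality~\eqref{eqn:uniform_optimal}, this gives the single-task bound $\|V_{1,M_k}^\star-V_{1,M_k}^{\widehat{\pi}_k^\star}\|_\infty\lesssim\sqrt{H^2\iota/(nd_m)}+H^{2.5}S^{0.5}\iota/(nd_m)$.

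The only new ingredient is making this hold for all $K$ tasks simultaneously. The argument of Section~\ref{sec:singleton} is driven by a family of atomic concentration events — the Bernstein bounds on $(\widehat{P}_{s,a}-P_{s,a})\widehat{V}_{h,\{s,u_t^\star\}}$ over $(s,a,h)$ and the crude bound $\max_t|\widehat{V}_{t,M_k}^\star-V_{t,M_k}^\star|\lesssim H^2\sqrt{S/n_{s,a}}$ — and for $K$ tasks these are additionally indexed by $k$, numbering $O(KSAH)$ in total. Controlling each at level $\delta/(KSAH)$ and taking one union bound makes the whole collection hold with probability $1-\delta$, while replacing every $\iota=\log(HSA/\delta)$ by $\log(KHSA/\delta)=\iota+\log(K)$. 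This substitution propagates through the Bernstein deviations to give the stated rate $\sqrt{H^2(\iota+\log K)/nd_m}+H^{2.5}S^{0.5}(\iota+\log K)/nd_m$, and through the count requirement needed to keep the task-dependent events valid to give the threshold $n>O(H(\iota+\log K)/d_m)$.

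The step I expect to require the most care is verifying that the union over tasks costs only an \emph{additive} $\log(K)$ rather than a multiplicative $K$. This hinges on the singleton-absorbing construction remaining valid task by task: although all $K$ absorbing value functions are built from the \emph{same} $\widehat{P}$, each row $\widehat{P}_{s,a}$ stays independent of the paired value $\widehat{V}_{h,\{s,u_t^\star\}}$ regardless of which reward $r_k$ is plugged in, so each task contributes a clean family of Bernstein events whose tail probabilities simply add. I would also confirm that choosing $\widehat{\pi}_k^\star$ from the common $\widehat{P}$ induces no coupling beyond this union bound, since the decoupling of $\widehat{P}_{s,a}$ from the value it multiplies is insensitive to the reward and hence to the task index.
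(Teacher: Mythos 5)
Your proposal is correct and matches the paper's own proof: the paper likewise observes that the rewards $r_k$ contribute no randomness, applies Corollary~\ref{cor:opt_offline} to each task $k$ with $\widehat{\pi}_k=\widehat{\pi}_k^\star$ (so $\epsilon_{\text{opt}}=0$), and takes a union bound over the $K$ tasks, which converts $\log(HSA/\delta)$ into $\log(HSAK/\delta)=\iota+\log(K)$. Your additional care about the per-task validity of the singleton-absorbing construction and the independence of $\widehat{P}_{s,a}$ from the task-indexed absorbing values is a correct elaboration of why that union bound is legitimate, not a departure from the paper's argument.
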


\begin{theorem}[optimal offline reward-free learning]\label{thm:offline_rf}
	Given $\mathcal{D}'=\{(s_{t}^{(i)}, a_{t}^{(i)})\}_{i \in[n]}^{t \in[H]}$ by $\mu$. For any reward $r$ denote the corresponding MDP $M=(\mathcal{S}, \mathcal{A}, P, r, H, d_1)$. Denote $\iota=\log(HSA/\delta)$. Let $\widehat{\pi}^{\star}_M:=\operatorname{argmax}_{\pi} \widehat{V}_{1,M}^{\pi}$ $\forall r$, when $n>O(HS\cdot\iota/d_m)$, then with probability $1-\delta$, 
	$
	\norm{V_{1,M}^\star-V_{1,M}^{\widehat{\pi}^{\star}_M}}_\infty\leq O\left[\sqrt{\frac{H^2S \cdot\iota}{n d_m}}+\frac{H^{2}S\cdot\iota}{nd_m}\right],
	\;\;\forall r,M.
	$
\end{theorem}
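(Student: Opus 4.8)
The plan is to establish the stated reward-free learning bound by first reducing it to a single \emph{uniform-over-rewards} concentration inequality for the empirical model, and then discharging that inequality with a net over the value-function codomain. The starting point is that the empirical kernel $\widehat{P}$ is built purely from the exploration data $\mathcal{D}'$ and therefore does \emph{not} depend on the reward; the same $\widehat{M}$ serves every task $M=(\mathcal{S},\mathcal{A},P,r,H,d_1)$ at once. Fixing a reward $r$ and writing $\widehat{\pi}^\star_M$ for the empirical optimizer, the empirical optimality identity $\widehat{V}^{\widehat{\pi}^\star_M}_{1,M}=\widehat{V}^\star_{1,M}$ gives
\[
0\le V^\star_{1,M}-V^{\widehat{\pi}^\star_M}_{1,M}=\big(V^\star_{1,M}-\widehat{V}^\star_{1,M}\big)+\big(\widehat{V}^{\widehat{\pi}^\star_M}_{1,M}-V^{\widehat{\pi}^\star_M}_{1,M}\big),
\]
so it suffices to control the optimal-value gap and the evaluation error of $\widehat{\pi}^\star_M$. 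By the same Bellman expansion used in Section~\ref{sec:singleton}, each term unrolls into $\sum_{t=h}^H\Gamma_{h+1:t}^{\pi}(\widehat{P}-P)\widehat{V}_{t+1}$ for a suitable (reward-dependent) policy $\pi$ and value vector $\widehat{V}_{t+1}\in[0,H]^S$. Hence the whole theorem reduces to bounding $(\widehat{P}_{s,a}-P_{s,a})V$ \emph{simultaneously} over all $(s,a)$ and all value vectors $V\in[0,H]^S$ that could arise from any reward.

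The core step is a net over the $S$-dimensional value hypercube rather than the singleton-absorbing construction. First I would fix a $\xi$-net $\mathcal{N}_\xi$ of $[0,H]^S$ with $\log|\mathcal{N}_\xi|\lesssim S\log(H/\xi)$, and take $\xi$ polynomially small in $n$ so that $\log(1/\xi)=\tilde{O}(\iota)$; this is exactly where the factor $S$ enters. For each \emph{fixed} net point $V^{(j)}$ (independent of the data) I apply Bernstein to $(\widehat{P}_{s,a}-P_{s,a})V^{(j)}$ and union-bound over $\mathcal{N}_\xi$, all $(s,a)$, and all $h$, obtaining with probability $1-\delta$
\[
\big|(\widehat{P}_{s,a}-P_{s,a})V^{(j)}\big|\lesssim\sqrt{\frac{\Var_{s,a}(V^{(j)})\,S\iota}{n_{s,a}}}+\frac{HS\iota}{n_{s,a}}.
\]
For a genuine, data-dependent $\widehat{V}_{t+1}$ I pass to its nearest net point and split $(\widehat{P}_{s,a}-P_{s,a})\widehat{V}_{t+1}=(\widehat{P}_{s,a}-P_{s,a})V^{(j)}+(\widehat{P}_{s,a}-P_{s,a})(\widehat{V}_{t+1}-V^{(j)})$; the residual is at most $\norm{\widehat{P}_{s,a}-P_{s,a}}_1\,\xi\lesssim\sqrt{S\iota/n_{s,a}}\,\xi$, which is higher-order for the chosen $\xi$. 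Crucially, because Bernstein is invoked only at \emph{fixed} net points the data-dependence between $\widehat{P}_{s,a}$ and $\widehat{V}_{t+1}$ is broken automatically, and because the net covers \emph{every} value vector the bound is valid for all rewards and all induced policies $\widehat{\pi}^\star_M$ at once.

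With the per-step estimate in hand I would substitute it into the unrolled decomposition and collapse the horizon. Replacing $\Var_{s,a}(V^{(j)})$ by $\Var_{s,a}(\widehat{V}_{t+1})$ up to an $O(\xi H)$ perturbation, the leading contribution is $\sqrt{S\iota/\min_{s,a}n_{s,a}}\sum_{t=h}^H\Gamma_{h+1:t}^{\pi}\sqrt{\Var_{s,a}(\widehat{V}_{t+1})}$; applying the total-variance inequality $\sum_{t=h}^H\Gamma_{h+1:t}^{\pi}\sqrt{\Var_{s,a}(V_{t+1})}\le\sqrt{(H-h)^3}$ (transferred from $\widehat{V}$ to the true-dynamics variance with the usual lower-order correction) turns the horizon sum into $\sqrt{H^3}$. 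Finally, the count concentration $\min_{s,a}n_{s,a}\gtrsim n H d_m$, valid once $n>O(HS\iota/d_m)$, yields the advertised leading term $\sqrt{H^2S\iota/(nd_m)}$, while the Bernstein bias and the net residual assemble into the second-order term $H^2S\iota/(nd_m)$. Taking a union bound over the $O(1)$ value gaps completes the uniform-over-$r$ guarantee.

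The hard part is reconciling two conflicting demands in a single argument: decoupling the data-dependence between $\widehat{P}_{s,a}$ and the optimal value $\widehat{V}^\star_{\cdot,M}$, and obtaining a bound uniform over the \emph{continuum} of rewards. The singleton-absorbing MDP of Section~\ref{sec:singleton} solves the first demand for one reward, but its reference $u^\star_t=V^\star_t(s)-V^\star_{t+1}(s)$ is itself reward-dependent and only $H$-dimensional, so netting it cannot produce the required $S$ dependence and cannot cover all tasks. The resolution is that the $[0,H]^S$ net discharges \emph{both} demands at once, at the unavoidable cost of the $S$ factor that the global lower bound of Theorem~\ref{thm:tight_lower_bound} shows to be intrinsic to reward-free evaluation. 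The remaining delicate points are quantitative: one must verify that passing to net points perturbs each $\Var_{s,a}(\widehat{V}_{t+1})$ by only $O(\xi H)$ so that the total-variance telescoping survives, choose $\xi=\mathrm{poly}(1/n)$ small enough that the $\ell_1$ residuals are genuinely lower-order while keeping $\log(1/\xi)=\tilde{O}(\iota)$, and ensure the reward-dependent policies $\widehat{\pi}^\star_M$ are handled uniformly---which they are, since the Bernstein bound ranges over \emph{all} $(s,a)$ and the transition operators $\Gamma_{h+1:t}^{\pi}$ are $\ell_\infty$-contractions for every policy.
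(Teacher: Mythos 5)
Your proposal is correct and matches the paper's own proof in all essentials: the paper likewise abandons the singleton-absorbing MDP for this setting and instead proves a ``maximal Bernstein inequality'' (its Lemmas~F.1--F.2) via Bernstein plus a union bound over a grid covering $[0,H]^S$ (incurring the factor $S$), handles the data-dependent $\widehat{V}_{t+1}$ by projecting to the nearest grid point and absorbing the $\|\widehat{P}_{s,a}-P_{s,a}\|_1\cdot\xi$ residual as lower order, and then feeds this into the same Bellman expansion, the total-variance bound $\sqrt{(H-h)^3}$, and the count concentration $N\gtrsim nHd_m$. The only (immaterial) difference is the grid resolution: the paper fixes $\epsilon_1=\sqrt{1/(HS^2)}$ independent of $n$, so its covering-number logarithm is of size $S\log(HS)$ and no $\log n$ enters the bound, whereas your $\xi=\mathrm{poly}(1/n)$ choice introduces a harmless $\log n$ factor inside the $\tilde{O}$.
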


By a direct translation of both theorems, we have sample complexity of order $\widetilde{O}(H^2\log(K)/d_m\epsilon^2)$ and $\widetilde{O}(H^2S/d_m\epsilon^2)$. All the parameters have the optimal rates, see the lower bounds in \citet{zhang2020task} and \citet{jin2020reward}.\footnote{We add a discussion in Appendix~\ref{sec:app_application} to explain more clearly why our rates are optimal for these problems. } The higher order dependence in Theorem~\ref{thm:offline_rf} is also tight comparing to Theorem~\ref{thm:offline_ta}. Such statistically optimal results reveal the model-based methods generalize well to those seemingly challenging problems in the offline regime. Changing to these harder problems would not affect the optimal statistical efficiency of the model-based approach.


\section{Further discussion: Extension to linear MDP with anchor representations} \label{sec:discussion}

The principle of our \emph{Singleton absorbing MDP} technique (with model-based construction) in decoupling the dependence between $\widehat{P}_{s,a}$ and $\widehat{V}^\star$ is not confined to tabular MDPs and therefore it is natural to generalize such idea for the episodic stationary transition setting for other problems. As an example, we further present a sharp result for the setting of finite horizon linear MDP with anchor points. We narrate by assuming a generative oracle (that allows sampling from $s'\sim P(\cdot|s,a)$) for the ease of exposition.

\begin{definition}[Linear MDP with anchor points \citep{yang2019sample,cui2020plug}]\label{def:anchor}
Let $\mathcal{S}$ be the exponential large space and $\mathcal{A}$ be the infinite (or even continuous) spaces. Assume there is feature map $\phi:\mathcal{S}\times\mathcal{A}\rightarrow \R^K$ (where $K\ll |\mathcal{S}|$), i.e.	$\phi(s,a)=[\phi_1(s,a),\ldots,\phi_K(s,a)]$. Transition $P$ admits a linear representation:
\[
P(s'|s,a)=\sum_{k\in[K]}\phi_k(s,a)\psi_k(s')
\]
where $\psi_1(\cdot),\ldots,\psi_K(\cdot)$ are unknowns. We further assume there exists a set of anchor state-action pairs $\mathcal{K}$ such that any $(s,a)$ can be represented as a convex combination of the anchors $\{(s_k,a_k)|k\in\mathcal{K}\}$:
\[
\exists\left\{\lambda_{k}^{s, a}\right\}: \phi(s, a)=\sum_{k \in \mathcal{K}} \lambda_{k}^{s, a} \phi\left(s_{k} , a_{k}\right), \sum_{k \in \mathcal{K}} \lambda_{k}^{s, a}=1, \lambda_{k} \geq 0, \forall k \in \mathcal{K},(s, a) \in(\mathcal{S}, \mathcal{A}).
\]
\end{definition} 
Under the definition, denote $N$ be the number of samples at each anchor pairs. Then we have the following ({see Appendix~\ref{sec:app_anchor} for the proof}): 
\begin{theorem}[Optimal sample complexity]\label{thm:anchor}
	Under Definition~\ref{def:anchor}, let $\widehat{\pi}^\star=\argmax_\pi \widehat{V}^\pi_1$. Then if $N\geq c{H^2}|\mathcal{S}|\log(KH/\delta)$, we have with probability $1-\delta$, $||Q^\star_1-Q^{\widehat{\pi}^\star}_1||_\infty\leq\widetilde{O}(\sqrt{H^3/N})$. 
\end{theorem}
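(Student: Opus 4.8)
The plan is to transfer the singleton-absorbing MDP analysis developed for the tabular stationary setting (Theorem~\ref{thm:optimal_upper_bound}) to the linear MDP with anchor representation. The key structural observation is that in Definition~\ref{def:anchor} the transition at any $(s,a)$ is a convex combination of transitions at the anchor pairs, so once we estimate $\widehat{\psi}_k(\cdot)$ (equivalently $\widehat{P}(\cdot|s_k,a_k)$) accurately at the $|\mathcal{K}|=K$ anchors, the plug-in transition $\widehat{P}(\cdot|s,a)=\sum_{k}\lambda^{s,a}_k\widehat{P}(\cdot|s_k,a_k)$ is determined for all $(s,a)$ by the \emph{same} coefficients $\lambda^{s,a}_k$ that define the true $P$. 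Concretely, first I would set up the empirical model: draw $N$ generative samples at each anchor pair, form $\widehat{P}(\cdot|s_k,a_k)$ as the empirical next-state distribution, extend to $\widehat{P}(\cdot|s,a)$ by convex combination, and let $\widehat{\pi}^\star=\argmax_\pi\widehat{V}^\pi_1$ be the empirical optimal policy obtained by planning in $\widehat{M}$. Because $Q^\star_1-Q^{\widehat{\pi}^\star}_1\leq 2\sup_\pi\|\widehat{Q}^\pi_1-Q^\pi_1\|_\infty$ via the same decomposition as \eqref{eqn:uniform_optimal}, it suffices to control the uniform evaluation error.

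Second, I would run the Bellman-difference telescoping exactly as in Section~\ref{sec:singleton}:
\begin{equation*}
\widehat{Q}_h^{\widehat{\pi}^\star}-Q_h^{\widehat{\pi}^\star}=\sum_{t=h}^H\Gamma^{\widehat{\pi}^\star}_{h+1:t}\left(\widehat{P}-P\right)\widehat{V}^\star_{t+1},
\end{equation*}
and then expand each row $(\widehat{P}_{s,a}-P_{s,a})\widehat{V}^\star_{t+1}=\sum_{k}\lambda^{s,a}_k(\widehat{P}_{s_k,a_k}-P_{s_k,a_k})\widehat{V}^\star_{t+1}$ using the anchor convexity. This reduces the whole analysis to controlling the $K$ anchor-row errors $(\widehat{P}_{s_k,a_k}-P_{s_k,a_k})\widehat{V}^\star_{t+1}$, since $\sum_k\lambda^{s,a}_k=1$ with $\lambda^{s,a}_k\geq 0$ means the convex combination cannot amplify the per-anchor bound. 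For each anchor I would invoke the singleton-absorbing MDP at the fixed reference state, replacing $\widehat{V}^\star_{t+1}$ by the absorbing value $\widehat{V}^\star_{t+1,\{s,u^\star_t\}}$ (Definition~\ref{def:singleton_mdp}) so that $\widehat{P}_{s_k,a_k}$ and the surrogate value become independent and Bernstein applies, giving the leading term $\sqrt{\Var_{s_k,a_k}(\widehat{V}^\star_{t+1})/N}$ plus a higher-order remainder from $\|\widehat{V}^\star_{t+1}-\widehat{V}^\star_{t+1,\{s,u^\star_t\}}\|_\infty\lesssim \sqrt{|\mathcal{S}|/N}$ (crude bound, Lemma~\ref{lem:crude_u_b} adapted). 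Finally, summing over $t$ and applying the total-variance bound $\sum_{t=h}^H\Gamma^{\widehat{\pi}^\star}_{h+1:t}\sqrt{\Var_{s,a}(\widehat{V}^\star_{t+1})}\lesssim\sqrt{H^3}$ collapses the horizon dependence and yields $\|Q^\star_1-Q^{\widehat{\pi}^\star}_1\|_\infty\lesssim\sqrt{H^3/N}$ once $N\geq cH^2|\mathcal{S}|\log(KH/\delta)$ makes the remainder lower-order.

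The main obstacle I anticipate is twofold. First, the concentration and the singleton-absorbing argument are stated at the anchor pairs, but the surrogate absorbing MDP is defined relative to a reference \emph{state} $s$, and in the linear setting the value functions $\widehat{V}^\star$ live on the exponentially large $\mathcal{S}$; I must verify that the absorbing construction and the independence decoupling still go through when $\widehat{P}$ is a convex combination rather than a raw empirical row, and that the error $\|\widehat{V}^\star_h-\widehat{V}^\star_{h,\{s,u^\star_t\}}\|_\infty$ is still governed by $\max_t|\widehat{V}^\star_t-V^\star_t|$ with the same $H$-Lipschitz dependence (Lemma~\ref{lem:smdp_prop_m}, Lemma~\ref{lem:q_diff}). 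Second, the union bound over anchors contributes only $\log K$ (hence the $\log(KH/\delta)$ in the sample requirement), but the crude per-anchor value-error bound carries a $\sqrt{|\mathcal{S}|/N}$ factor because the Bernstein step over an $|\mathcal{S}|$-dimensional distribution needs the $\ell_1$ deviation $\|\widehat{P}_{s_k,a_k}-P_{s_k,a_k}\|_1\lesssim\sqrt{|\mathcal{S}|/N}$; I would need the stated threshold $N\geq cH^2|\mathcal{S}|\log(KH/\delta)$ precisely to push this remainder into the higher-order term so that the leading rate remains $\sqrt{H^3/N}$ free of any explicit $|\mathcal{S}|$ or $K$ dependence, matching the generative-model lower bound.
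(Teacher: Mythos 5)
Your construction and outline coincide with the paper's own proof in Appendix~\ref{sec:app_anchor}: the same plug-in estimator at the $K$ anchors extended by the coefficients $\lambda^{s,a}_k$, the same Bellman telescoping, the same row expansion $(\widehat{P}_{s,a}-P_{s,a})\widehat{V}^\star_{t+1}=\sum_k\lambda^{s,a}_k(\widehat{P}_{s_k,a_k}-P_{s_k,a_k})\widehat{V}^\star_{t+1}$, the same singleton-absorbing decoupling applied at each anchor (the paper takes the absorbing state to be the anchor state $s_k$ itself) with a union bound costing only $\log K$, the same crude $\sqrt{|\mathcal{S}|/N}$ remainder absorbed by the threshold $N\gtrsim H^2|\mathcal{S}|\log(KH/\delta)$, and the same final suboptimality decomposition: note the paper bounds the two terms $|Q^\star_1-\widehat{Q}^{\pi^\star}_1|$ and $|\widehat{Q}^{\widehat{\pi}^\star}_1-Q^{\widehat{\pi}^\star}_1|$ separately rather than a literal $\sup_\pi$ over all policies (which, by Theorem~\ref{thm:tight_lower_bound}, would be too expensive), and this is also what your analysis actually does.

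There is, however, one genuine gap, located in the sentence ``$\sum_k\lambda^{s,a}_k=1$ with $\lambda^{s,a}_k\geq 0$ means the convex combination cannot amplify the per-anchor bound.'' That reasoning suffices for the sup-norm remainder terms, but not for the leading variance term, which is exactly where the claimed $H^3$ rate is won or lost. After Bernstein at the anchors, row $(s,a)$ of the error carries $\sum_k\lambda^{s,a}_k\sqrt{\Var_{P_{s_k,a_k}}(\widehat{V}^\star_{t+1})/N}$, whereas the total-variance telescoping you invoke (Lemma~\ref{lem:H3toH2}), $\sum_{t=h}^H\Gamma^{\widehat{\pi}^\star}_{h+1:t}\sqrt{\Var_{P}\left(V^{\widehat{\pi}^\star}_{t+1}\right)}\lesssim\sqrt{H^3}$, requires each row to carry the conditional variance under $P_{s,a}$ itself: the law of total variance only telescopes when the variance is that of the distribution actually visited at that step. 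Your proposal silently switches from $\Var_{s_k,a_k}$ to $\Var_{s,a}$ between these two displays. Bounding the convex combination by its maximum over $k$ does not repair this, and falling back on $\sqrt{\Var}\leq H$ pointwise gives only $H\cdot H/\sqrt{N}$, i.e.\ an $H^4/\epsilon^2$ total complexity, which is precisely the suboptimal rate of \cite{cui2020plug} that the theorem is meant to beat. The missing ingredient is the paper's ``recover lemma'' (Lemma~\ref{lem:recover}): $\sum_{k\in\mathcal{K}}\lambda^{s,a}_k\sqrt{\Var_{P_{s_k,a_k}}(V)}\leq\sqrt{\Var_{P_{s,a}}(V)}$, proved by two applications of Jensen's inequality (concavity of $\sqrt{\cdot}$, and $\sum_k\lambda_k(P_{s_k,a_k}V)^2\geq(\sum_k\lambda_kP_{s_k,a_k}V)^2$) together with $P_{s,a}=\sum_k\lambda^{s,a}_kP_{s_k,a_k}$. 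With that one inequality inserted after the anchor expansion (plus the routine conversion from $\Var(\widehat{V}^\star_{t+1})$ to $\Var(V^{\widehat{\pi}^\star}_{t+1})$, as in \eqref{eqn:var_anchor}), your argument closes and matches the paper's.
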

Comparing to Theorem~4 of \cite{cui2020plug}, Theorem~\ref{thm:anchor} removes the additional dependence $\min\{|S|,K,H\}$. In term of the total sample complexity, Theorem~\ref{thm:anchor} gives $\tilde{O}(KH^3/\epsilon^2)$ while \cite{cui2020plug} has $\widetilde{O}(KH^4/\epsilon^2)$ (see their Section~7, first bullet point). Our result again reveals the model-based method is statistically optimal for the current setting.

\begin{remark}
	The rate $\tilde{O}(KH^3/\epsilon^2)$ with anchor point assumption has the linear dependence on $K$ and for the standard linear bandit \citep{lattimore2020bandit} $\Omega(\sqrt{d^2T})$ or the linear (mixture) MDP \citep{jin2020provably,zhou2020nearly} $\Omega(\sqrt{d^2H^2T})$ the lower bound dependence on the feature dimension $d$ is quadratic. We believe one reason for this to happen is that anchor representations assumption is somewhat strong as it abstracts the whole state action space by only finite points (via convex combination).  	
\end{remark}

\section{Conclusion and Future Works}\label{sec:conclusion}

This work studies the uniform convergence problems for offline policy evaluation (OPE) and provides complete answers for their optimality behaviors. We achieve the optimal sample complexity for stationary-transition case using a novel adaptation of the absorbing MDP trick, which is more generally applicable to the new offline task-agnostic and reward-free settings combining with the model-based approach and we hope it can be applied to a broader range of future problems. We end the section by two future directions.

 \noindent\textbf{On the higher order error term.}
 Our main result (Theorem~\ref{thm:optimal_upper_bound}) has an additional $\sqrt{HS}$ dependence in the higher order error term and we cannot further remove it based on our current technique. Nevertheless, this is already among the best higher order results to our knowledge. In fact, most state-of-the-art works (\emph{e.g.} \citet{azar2017minimax,dann2019policy,zhang2020reinforcement}) have additional $S$ dependence in the higher order and \cite{jin2018q} has only extra $\sqrt{S}$ in the higher order term but it also has additional $\sqrt{A}$ (see Table~1 of \cite{zhang2020reinforcement} for a clear reference). How to obtain optimality not only for the main term but also for the higher order error terms remains elusive for the community.

 \noindent\textbf{Uniform OPE and beyond.}
 The current study of uniform OPE derives results with expression using parameter dependence and deriving instance-dependent uniform convergence result will draw a clearer picture on the individual behaviors for each policy. Besides, this work concentrates on Tabular MDPs and generalizing uniform convergence to more practical settings like linear MDPs, game environments and multi-agent settings are promising future directions.  Specifically, general complexity measure (mirroring VC-dimensions and Rademacher complexities for statistical learning problems) that precisely captures local and global uniform convergence would be of great interest.

\subsection*{Acknowledgment}
The research is partially supported by NSF Awards \#2007117 and \#1934641. MY would like to thank Simon S. Du, Zihan Zhang for explaining a question in \cite{zhang2020nearly}; and Tongzheng Ren for the helpful discussions.

\bibliographystyle{plainnat}
\bibliography{sections/stat_rl}

\begin{thebibliography}{68}
\providecommand{\natexlab}[1]{#1}
\providecommand{\url}[1]{\texttt{#1}}
\expandafter\ifx\csname urlstyle\endcsname\relax
  \providecommand{\doi}[1]{doi: #1}\else
  \providecommand{\doi}{doi: \begingroup \urlstyle{rm}\Url}\fi

\bibitem[Agarwal et~al.(2019)Agarwal, Jiang, and
  Kakade]{agarwal2019reinforcement}
Alekh Agarwal, Nan Jiang, and Sham~M Kakade.
\newblock Reinforcement learning: Theory and algorithms.
\newblock \emph{CS Dept., UW Seattle, Seattle, WA, USA, Tech. Rep}, 2019.

\bibitem[Agarwal et~al.(2020)Agarwal, Kakade, and Yang]{agarwal2020model}
Alekh Agarwal, Sham Kakade, and Lin~F Yang.
\newblock Model-based reinforcement learning with a generative model is minimax
  optimal.
\newblock In \emph{Conference on Learning Theory}, pages 67--83, 2020.

\bibitem[Antos et~al.(2008{\natexlab{a}})Antos, Munos, and
  Szepesvari]{antos2008fitted}
Andras Antos, Remi Munos, and Csaba Szepesvari.
\newblock Fitted q-iteration in continuous action-space mdps.
\newblock In \emph{Advances in Neural Information Processing Systems}, pages
  9--16, 2008{\natexlab{a}}.

\bibitem[Antos et~al.(2008{\natexlab{b}})Antos, Szepesv{\'a}ri, and
  Munos]{antos2008learning}
Andr{\'a}s Antos, Csaba Szepesv{\'a}ri, and R{\'e}mi Munos.
\newblock Learning near-optimal policies with bellman-residual minimization
  based fitted policy iteration and a single sample path.
\newblock \emph{Machine Learning}, 71\penalty0 (1):\penalty0 89--129,
  2008{\natexlab{b}}.

\bibitem[Ayoub et~al.(2020)Ayoub, Jia, Szepesvari, Wang, and
  Yang]{ayoub2020model}
Alex Ayoub, Zeyu Jia, Csaba Szepesvari, Mengdi Wang, and Lin Yang.
\newblock Model-based reinforcement learning with value-targeted regression.
\newblock In \emph{International Conference on Machine Learning}, pages
  463--474. PMLR, 2020.

\bibitem[Azar et~al.(2017)Azar, Osband, and Munos]{azar2017minimax}
Mohammad~Gheshlaghi Azar, Ian Osband, and R{\'e}mi Munos.
\newblock Minimax regret bounds for reinforcement learning.
\newblock In \emph{Proceedings of the 34th International Conference on Machine
  Learning-Volume 70}, pages 263--272. JMLR. org, 2017.

\bibitem[Chen and Jiang(2019)]{chen2019information}
Jinglin Chen and Nan Jiang.
\newblock Information-theoretic considerations in batch reinforcement learning.
\newblock In \emph{International Conference on Machine Learning}, pages
  1042--1051, 2019.

\bibitem[Chernoff et~al.(1952)]{chernoff1952measure}
Herman Chernoff et~al.
\newblock A measure of asymptotic efficiency for tests of a hypothesis based on
  the sum of observations.
\newblock \emph{The Annals of Mathematical Statistics}, 23\penalty0
  (4):\penalty0 493--507, 1952.

\bibitem[Codevilla et~al.(2018)Codevilla, Lopez, Koltun, and
  Dosovitskiy]{codevilla2018offline}
Felipe Codevilla, Antonio~M Lopez, Vladlen Koltun, and Alexey Dosovitskiy.
\newblock On offline evaluation of vision-based driving models.
\newblock In \emph{Proceedings of the European Conference on Computer Vision
  (ECCV)}, pages 236--251, 2018.

\bibitem[Cui and Yang(2020)]{cui2020plug}
Qiwen Cui and Lin~F Yang.
\newblock Is plug-in solver sample-efficient for feature-based reinforcement
  learning?
\newblock In \emph{Advances in neural information processing systems}, 2020.

\bibitem[Dann et~al.(2019)Dann, Li, Wei, and Brunskill]{dann2019policy}
Christoph Dann, Lihong Li, Wei Wei, and Emma Brunskill.
\newblock Policy certificates: Towards accountable reinforcement learning.
\newblock In \emph{International Conference on Machine Learning}, pages
  1507--1516. PMLR, 2019.

\bibitem[Ding et~al.(2021)Ding, Wei, Yang, Wang, and
  Jovanovic]{ding2021provably}
Dongsheng Ding, Xiaohan Wei, Zhuoran Yang, Zhaoran Wang, and Mihailo Jovanovic.
\newblock Provably efficient safe exploration via primal-dual policy
  optimization.
\newblock In \emph{International Conference on Artificial Intelligence and
  Statistics}, pages 3304--3312. PMLR, 2021.

\bibitem[Duan et~al.(2020)Duan, Jia, and Wang]{duan2020minimax}
Yaqi Duan, Zeyu Jia, and Mengdi Wang.
\newblock Minimax-optimal off-policy evaluation with linear function
  approximation.
\newblock In \emph{International Conference on Machine Learning}, pages
  8334--8342, 2020.

\bibitem[Efroni et~al.(2019)Efroni, Merlis, Ghavamzadeh, and
  Mannor]{efroni2019tight}
Yonathan Efroni, Nadav Merlis, Mohammad Ghavamzadeh, and Shie Mannor.
\newblock Tight regret bounds for model-based reinforcement learning with
  greedy policies.
\newblock In \emph{Advances in Neural Information Processing Systems}, 2019.

\bibitem[Han et~al.(2015)Han, Jiao, and Weissman]{han2015minimax}
Yanjun Han, Jiantao Jiao, and Tsachy Weissman.
\newblock Minimax estimation of discrete distributions under $l_1$ loss.
\newblock \emph{IEEE Transactions on Information Theory}, 61\penalty0
  (11):\penalty0 6343--6354, 2015.

\bibitem[Hao et~al.(2020)Hao, Duan, Lattimore, Szepesv{\'a}ri, and
  Wang]{hao2020sparse}
Botao Hao, Yaqi Duan, Tor Lattimore, Csaba Szepesv{\'a}ri, and Mengdi Wang.
\newblock Sparse feature selection makes batch reinforcement learning more
  sample efficient.
\newblock \emph{arXiv preprint arXiv:2011.04019}, 2020.

\bibitem[Hu et~al.(2021)Hu, Kallus, and Uehara]{hu2021fast}
Yichun Hu, Nathan Kallus, and Masatoshi Uehara.
\newblock Fast rates for the regret of offline reinforcement learning.
\newblock \emph{arXiv preprint arXiv:2102.00479}, 2021.

\bibitem[Jaksch et~al.(2010)Jaksch, Ortner, and Auer]{jaksch2010near}
Thomas Jaksch, Ronald Ortner, and Peter Auer.
\newblock Near-optimal regret bounds for reinforcement learning.
\newblock \emph{Journal of Machine Learning Research}, 11\penalty0 (4), 2010.

\bibitem[Jiang(2018)]{jiang2018notes}
Nan Jiang.
\newblock Notes on tabular methods.
\newblock 2018.

\bibitem[Jiang and Li(2016)]{jiang2016doubly}
Nan Jiang and Lihong Li.
\newblock Doubly robust off-policy value evaluation for reinforcement learning.
\newblock In \emph{Proceedings of the 33rd International Conference on
  International Conference on Machine Learning-Volume 48}, pages 652--661.
  JMLR. org, 2016.

\bibitem[Jin et~al.(2018)Jin, Allen-Zhu, Bubeck, and Jordan]{jin2018q}
Chi Jin, Zeyuan Allen-Zhu, Sebastien Bubeck, and Michael~I Jordan.
\newblock Is q-learning provably efficient?
\newblock In \emph{Advances in Neural Information Processing Systems}, pages
  4863--4873, 2018.

\bibitem[Jin et~al.(2020{\natexlab{a}})Jin, Krishnamurthy, Simchowitz, and
  Yu]{jin2020reward}
Chi Jin, Akshay Krishnamurthy, Max Simchowitz, and Tiancheng Yu.
\newblock Reward-free exploration for reinforcement learning.
\newblock In \emph{International Conference on Machine Learning}, pages
  4870--4879. PMLR, 2020{\natexlab{a}}.

\bibitem[Jin et~al.(2020{\natexlab{b}})Jin, Yang, Wang, and
  Jordan]{jin2020provably}
Chi Jin, Zhuoran Yang, Zhaoran Wang, and Michael~I Jordan.
\newblock Provably efficient reinforcement learning with linear function
  approximation.
\newblock In \emph{Conference on Learning Theory}, pages 2137--2143. PMLR,
  2020{\natexlab{b}}.

\bibitem[Jin et~al.(2020{\natexlab{c}})Jin, Yang, and Wang]{jin2020pessimism}
Ying Jin, Zhuoran Yang, and Zhaoran Wang.
\newblock Is pessimism provably efficient for offline rl?
\newblock \emph{arXiv preprint arXiv:2012.15085}, 2020{\natexlab{c}}.

\bibitem[Kallus and Uehara(2019)]{kallus2019intrinsically}
Nathan Kallus and Masatoshi Uehara.
\newblock Intrinsically efficient, stable, and bounded off-policy evaluation
  for reinforcement learning.
\newblock In \emph{Advances in Neural Information Processing Systems}, pages
  3325--3334, 2019.

\bibitem[Kallus and Uehara(2020)]{kallus2019double}
Nathan Kallus and Masatoshi Uehara.
\newblock Double reinforcement learning for efficient off-policy evaluation in
  markov decision processes.
\newblock In \emph{International Conference on Machine Learning}, pages
  1922--1931, 2020.

\bibitem[Kaufmann et~al.(2020)Kaufmann, M{\'e}nard, Domingues, Jonsson,
  Leurent, and Valko]{kaufmann2020adaptive}
Emilie Kaufmann, Pierre M{\'e}nard, Omar~Darwiche Domingues, Anders Jonsson,
  Edouard Leurent, and Michal Valko.
\newblock Adaptive reward-free exploration.
\newblock \emph{arXiv preprint arXiv:2006.06294}, 2020.

\bibitem[Kidambi et~al.(2020)Kidambi, Rajeswaran, Netrapalli, and
  Joachims]{kidambi2020morel}
Rahul Kidambi, Aravind Rajeswaran, Praneeth Netrapalli, and Thorsten Joachims.
\newblock Morel: Model-based offline reinforcement learning.
\newblock \emph{arXiv preprint arXiv:2005.05951}, 2020.

\bibitem[Lange et~al.(2012)Lange, Gabel, and Riedmiller]{lange2012batch}
Sascha Lange, Thomas Gabel, and Martin Riedmiller.
\newblock Batch reinforcement learning.
\newblock In \emph{Reinforcement learning}, pages 45--73. Springer, 2012.

\bibitem[Lattimore and Szepesv{\'a}ri(2020)]{lattimore2020bandit}
Tor Lattimore and Csaba Szepesv{\'a}ri.
\newblock \emph{Bandit algorithms}.
\newblock Cambridge University Press, 2020.

\bibitem[Le et~al.(2019)Le, Voloshin, and Yue]{le2019batch}
Hoang Le, Cameron Voloshin, and Yisong Yue.
\newblock Batch policy learning under constraints.
\newblock In \emph{International Conference on Machine Learning}, pages
  3703--3712, 2019.

\bibitem[Levine et~al.(2020)Levine, Kumar, Tucker, and Fu]{levine2020offline}
Sergey Levine, Aviral Kumar, George Tucker, and Justin Fu.
\newblock Offline reinforcement learning: Tutorial, review, and perspectives on
  open problems.
\newblock \emph{arXiv preprint arXiv:2005.01643}, 2020.

\bibitem[Li et~al.(2020)Li, Wei, Chi, Gu, and Chen]{li2020breaking}
Gen Li, Yuting Wei, Yuejie Chi, Yuantao Gu, and Yuxin Chen.
\newblock Breaking the sample size barrier in model-based reinforcement
  learning with a generative model.
\newblock \emph{Advances in Neural Information Processing Systems}, 33, 2020.

\bibitem[Liu et~al.(2018)Liu, Li, Tang, and Zhou]{liu2018breaking}
Qiang Liu, Lihong Li, Ziyang Tang, and Dengyong Zhou.
\newblock Breaking the curse of horizon: Infinite-horizon off-policy
  estimation.
\newblock In \emph{Advances in Neural Information Processing Systems}, pages
  5361--5371, 2018.

\bibitem[Liu et~al.(2020{\natexlab{a}})Liu, Yu, Bai, and Jin]{liu2020sharp}
Qinghua Liu, Tiancheng Yu, Yu~Bai, and Chi Jin.
\newblock A sharp analysis of model-based reinforcement learning with
  self-play.
\newblock \emph{arXiv preprint arXiv:2010.01604}, 2020{\natexlab{a}}.

\bibitem[Liu et~al.(2019)Liu, Swaminathan, Agarwal, and Brunskill]{liu2019off}
Yao Liu, Adith Swaminathan, Alekh Agarwal, and Emma Brunskill.
\newblock Off-policy policy gradient with state distribution correction.
\newblock In \emph{Uncertainty in Artificial Intelligence}, 2019.

\bibitem[Liu et~al.(2020{\natexlab{b}})Liu, Swaminathan, Agarwal, and
  Brunskill]{liu2020provably}
Yao Liu, Adith Swaminathan, Alekh Agarwal, and Emma Brunskill.
\newblock Provably good batch reinforcement learning without great exploration.
\newblock \emph{arXiv preprint arXiv:2007.08202}, 2020{\natexlab{b}}.

\bibitem[Liu et~al.(2017)Liu, Logan, Liu, Xu, Tang, and Wang]{liu2017deep}
Ying Liu, Brent Logan, Ning Liu, Zhiyuan Xu, Jian Tang, and Yangzhi Wang.
\newblock Deep reinforcement learning for dynamic treatment regimes on medical
  registry data.
\newblock In \emph{2017 IEEE International Conference on Healthcare Informatics
  (ICHI)}, pages 380--385. IEEE, 2017.

\bibitem[Mannor and Tsitsiklis(2004)]{mannor2004sample}
Shie Mannor and John~N Tsitsiklis.
\newblock The sample complexity of exploration in the multi-armed bandit
  problem.
\newblock \emph{Journal of Machine Learning Research}, 5\penalty0
  (Jun):\penalty0 623--648, 2004.

\bibitem[Menard et~al.(2020)Menard, Domingues, Jonsson, Kaufmann, Leurent, and
  Valko]{menard2020fast}
Pierre Menard, Omar~Darwiche Domingues, Anders Jonsson, Emilie Kaufmann,
  Edouard Leurent, and Michal Valko.
\newblock Fast active learning for pure exploration in reinforcement learning.
\newblock \emph{arXiv preprint arXiv:2007.13442}, 2020.

\bibitem[Munos(2003)]{munos2003error}
R{\'e}mi Munos.
\newblock Error bounds for approximate policy iteration.
\newblock In \emph{International Conference on Machine Learning}, pages
  560--567, 2003.

\bibitem[Nachum et~al.(2019)Nachum, Chow, Dai, and Li]{nachum2019dualdice}
Ofir Nachum, Yinlam Chow, Bo~Dai, and Lihong Li.
\newblock Dualdice: Behavior-agnostic estimation of discounted stationary
  distribution corrections.
\newblock \emph{arXiv preprint arXiv:1906.04733}, 2019.

\bibitem[Rashidinejad et~al.(2021)Rashidinejad, Zhu, Ma, Jiao, and
  Russell]{rashidinejad2021bridging}
Paria Rashidinejad, Banghua Zhu, Cong Ma, Jiantao Jiao, and Stuart Russell.
\newblock Bridging offline reinforcement learning and imitation learning: A
  tale of pessimism.
\newblock \emph{arXiv preprint arXiv:2103.12021}, 2021.

\bibitem[Ren et~al.(2021)Ren, Li, Dai, Du, and Sanghavi]{ren2021nearly}
Tongzheng Ren, Jialian Li, Bo~Dai, Simon~S Du, and Sujay Sanghavi.
\newblock Nearly horizon-free offline reinforcement learning.
\newblock \emph{arXiv preprint arXiv:2103.14077}, 2021.

\bibitem[Shalev-Shwartz et~al.(2010)Shalev-Shwartz, Shamir, Srebro, and
  Sridharan]{shalev2010learnability}
Shai Shalev-Shwartz, Ohad Shamir, Nathan Srebro, and Karthik Sridharan.
\newblock Learnability, stability and uniform convergence.
\newblock \emph{The Journal of Machine Learning Research}, 11:\penalty0
  2635--2670, 2010.

\bibitem[Sridharan(2002)]{sridharan2002gentle}
Karthik Sridharan.
\newblock A gentle introduction to concentration inequalities.
\newblock \emph{Dept. Comput. Sci., Cornell Univ., Tech. Rep}, 2002.

\bibitem[Sutton and Barto(2018)]{sutton2018reinforcement}
Richard~S Sutton and Andrew~G Barto.
\newblock \emph{Reinforcement learning: An introduction}.
\newblock MIT press, 2018.

\bibitem[Szepesv{\'a}ri and Munos(2005)]{szepesvari2005finite}
Csaba Szepesv{\'a}ri and R{\'e}mi Munos.
\newblock Finite time bounds for sampling based fitted value iteration.
\newblock In \emph{Proceedings of the 22nd international conference on Machine
  learning}, pages 880--887, 2005.

\bibitem[Tropp et~al.(2011)]{tropp2011freedman}
Joel Tropp et~al.
\newblock Freedman's inequality for matrix martingales.
\newblock \emph{Electronic Communications in Probability}, 16:\penalty0
  262--270, 2011.

\bibitem[Uehara and Jiang(2019)]{uehara2019minimax}
Masatoshi Uehara and Nan Jiang.
\newblock Minimax weight and q-function learning for off-policy evaluation.
\newblock \emph{arXiv preprint arXiv:1910.12809}, 2019.

\bibitem[Vapnik(2013)]{vapnik2013nature}
Vladimir Vapnik.
\newblock \emph{The nature of statistical learning theory}.
\newblock Springer science \& business media, 2013.

\bibitem[Wang et~al.(2020{\natexlab{a}})Wang, Du, Yang, and
  Salakhutdinov]{wang2020reward}
Ruosong Wang, Simon~S Du, Lin~F Yang, and Ruslan Salakhutdinov.
\newblock On reward-free reinforcement learning with linear function
  approximation.
\newblock \emph{arXiv preprint arXiv:2006.11274}, 2020{\natexlab{a}}.

\bibitem[Wang et~al.(2020{\natexlab{b}})Wang, Foster, and
  Kakade]{wang2020statistical}
Ruosong Wang, Dean~P Foster, and Sham~M Kakade.
\newblock What are the statistical limits of offline rl with linear function
  approximation?
\newblock \emph{arXiv preprint arXiv:2010.11895}, 2020{\natexlab{b}}.

\bibitem[Xie and Jiang(2020{\natexlab{a}})]{xie2020batch}
Tengyang Xie and Nan Jiang.
\newblock Batch value-function approximation with only realizability.
\newblock \emph{arXiv preprint arXiv:2008.04990}, 2020{\natexlab{a}}.

\bibitem[Xie and Jiang(2020{\natexlab{b}})]{xie2020q}
Tengyang Xie and Nan Jiang.
\newblock Q* approximation schemes for batch reinforcement learning: A
  theoretical comparison.
\newblock In \emph{Uncertainty in Artificial Intelligence}, pages 550--559,
  2020{\natexlab{b}}.

\bibitem[Xie et~al.(2019)Xie, Ma, and Wang]{xie2019towards}
Tengyang Xie, Yifei Ma, and Yu-Xiang Wang.
\newblock Towards optimal off-policy evaluation for reinforcement learning with
  marginalized importance sampling.
\newblock In \emph{Advances in Neural Information Processing Systems}, pages
  9668--9678, 2019.

\bibitem[Yang and Wang(2019)]{yang2019sample}
Lin Yang and Mengdi Wang.
\newblock Sample-optimal parametric q-learning using linearly additive
  features.
\newblock In \emph{International Conference on Machine Learning}, pages
  6995--7004. PMLR, 2019.

\bibitem[Yin and Wang(2020)]{yin2020asymptotically}
Ming Yin and Yu-Xiang Wang.
\newblock Asymptotically efficient off-policy evaluation for tabular
  reinforcement learning.
\newblock In \emph{International Conference on Artificial Intelligence and
  Statistics}, pages 3948--3958. PMLR, 2020.

\bibitem[Yin et~al.(2021{\natexlab{a}})Yin, Bai, and Wang]{yin2021near}
Ming Yin, Yu~Bai, and Yu-Xiang Wang.
\newblock Near-optimal provable uniform convergence in offline policy
  evaluation for reinforcement learning.
\newblock In \emph{International Conference on Artificial Intelligence and
  Statistics}, pages 1567--1575. PMLR, 2021{\natexlab{a}}.

\bibitem[Yin et~al.(2021{\natexlab{b}})Yin, Bai, and Wang]{yin2021nearoptimal}
Ming Yin, Yu~Bai, and Yu-Xiang Wang.
\newblock Near-optimal offline reinforcement learning via double variance
  reduction.
\newblock \emph{arXiv preprint arXiv:2102.01748}, 2021{\natexlab{b}}.

\bibitem[Yu et~al.(2020)Yu, Thomas, Yu, Ermon, Zou, Levine, Finn, and
  Ma]{yu2020mopo}
Tianhe Yu, Garrett Thomas, Lantao Yu, Stefano Ermon, James Zou, Sergey Levine,
  Chelsea Finn, and Tengyu Ma.
\newblock Mopo: Model-based offline policy optimization.
\newblock \emph{arXiv preprint arXiv:2005.13239}, 2020.

\bibitem[Zanette(2020)]{zanette2020exponential}
Andrea Zanette.
\newblock Exponential lower bounds for batch reinforcement learning: Batch rl
  can be exponentially harder than online rl.
\newblock \emph{arXiv preprint arXiv:2012.08005}, 2020.

\bibitem[Zhang et~al.(2020{\natexlab{a}})Zhang, Kakade, Ba{\c{s}}ar, and
  Yang]{zhang2020model}
Kaiqing Zhang, Sham~M Kakade, Tamer Ba{\c{s}}ar, and Lin~F Yang.
\newblock Model-based multi-agent rl in zero-sum markov games with near-optimal
  sample complexity.
\newblock \emph{arXiv preprint arXiv:2007.07461}, 2020{\natexlab{a}}.

\bibitem[Zhang et~al.(2021)Zhang, Wan, Sutton, and Whiteson]{zhang2021average}
Shangtong Zhang, Yi~Wan, Richard~S Sutton, and Shimon Whiteson.
\newblock Average-reward off-policy policy evaluation with function
  approximation.
\newblock \emph{arXiv preprint arXiv:2101.02808}, 2021.

\bibitem[Zhang et~al.(2020{\natexlab{b}})Zhang, Singla, et~al.]{zhang2020task}
Xuezhou Zhang, Adish Singla, et~al.
\newblock Task-agnostic exploration in reinforcement learning.
\newblock \emph{Advances in Neural Information Processing Systems},
  2020{\natexlab{b}}.

\bibitem[Zhang et~al.(2020{\natexlab{c}})Zhang, Du, and Ji]{zhang2020nearly}
Zihan Zhang, Simon~S Du, and Xiangyang Ji.
\newblock Nearly minimax optimal reward-free reinforcement learning.
\newblock \emph{arXiv preprint arXiv:2010.05901}, 2020{\natexlab{c}}.

\bibitem[Zhang et~al.(2020{\natexlab{d}})Zhang, Ji, and
  Du]{zhang2020reinforcement}
Zihan Zhang, Xiangyang Ji, and Simon~S Du.
\newblock Is reinforcement learning more difficult than bandits? a near-optimal
  algorithm escaping the curse of horizon.
\newblock \emph{arXiv preprint arXiv:2009.13503}, 2020{\natexlab{d}}.

\bibitem[Zhou et~al.(2020)Zhou, Gu, and Szepesvari]{zhou2020nearly}
Dongruo Zhou, Quanquan Gu, and Csaba Szepesvari.
\newblock Nearly minimax optimal reinforcement learning for linear mixture
  markov decision processes.
\newblock \emph{arXiv preprint arXiv:2012.08507}, 2020.

\end{thebibliography}

\appendix

\clearpage
\begin{center}
	 {\LARGE \textbf{Appendix}}
\end{center}


\begin{figure}[H]
	\centering     
	\subfigure[Covering-based Absorbing: $1$-D case]{\label{fig:1}\includegraphics[width=65mm]{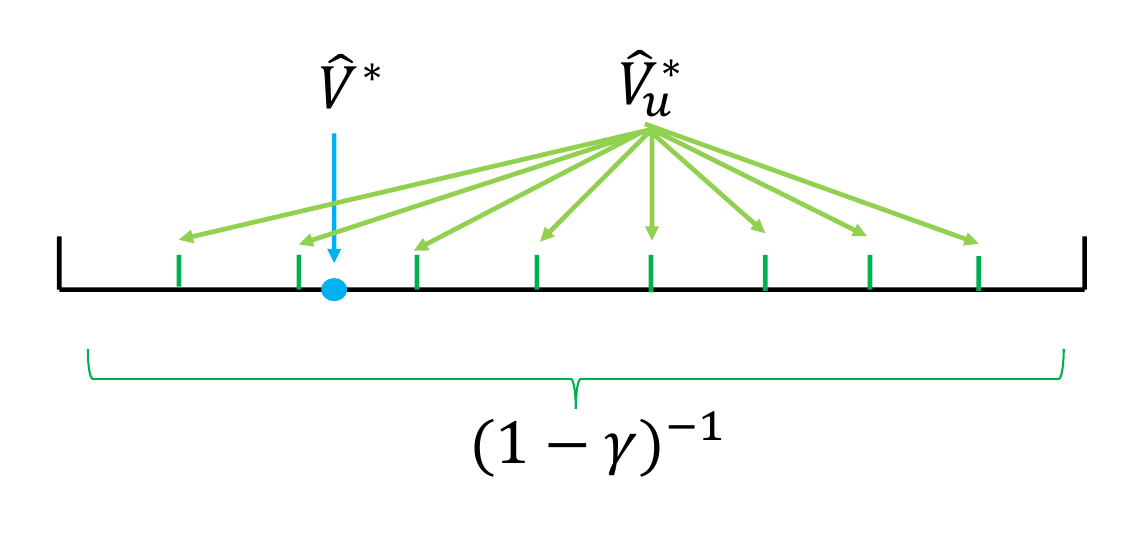}}
	\subfigure[Covering-based Absorbing: $2$-D case]{\label{fig:2}\includegraphics[width=65mm]{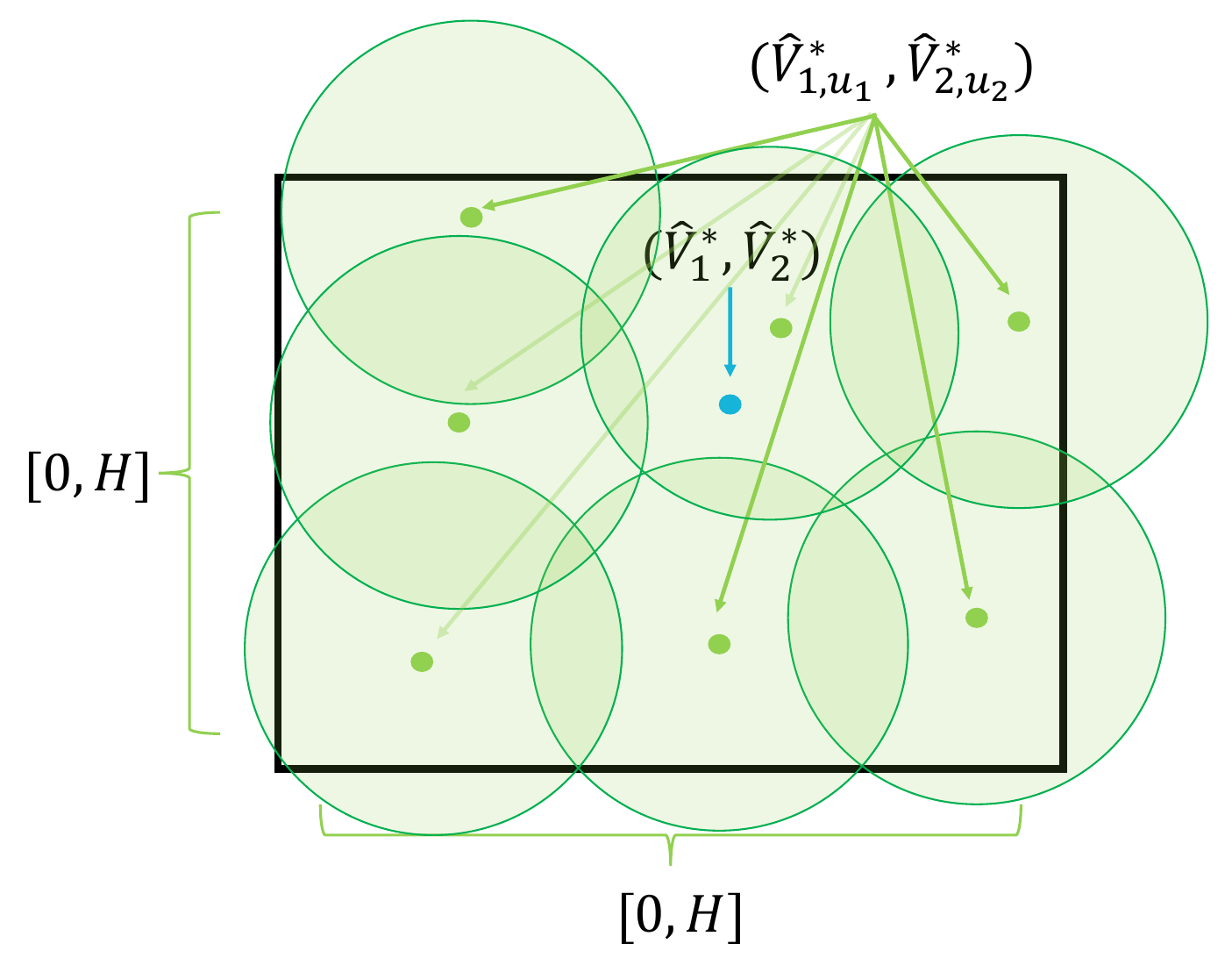}}
	\subfigure[Singleton Absorbing: $1$-D case]{\label{fig:3}\includegraphics[width=65mm]{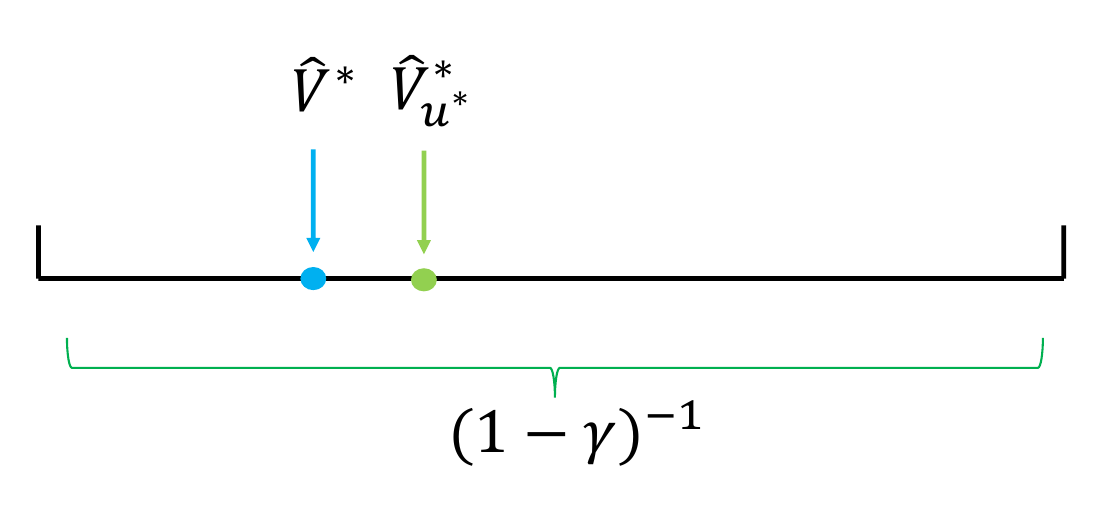}}
	\subfigure[Singleton Absorbing: $2$-D case]{\label{fig:4}\includegraphics[width=65mm]{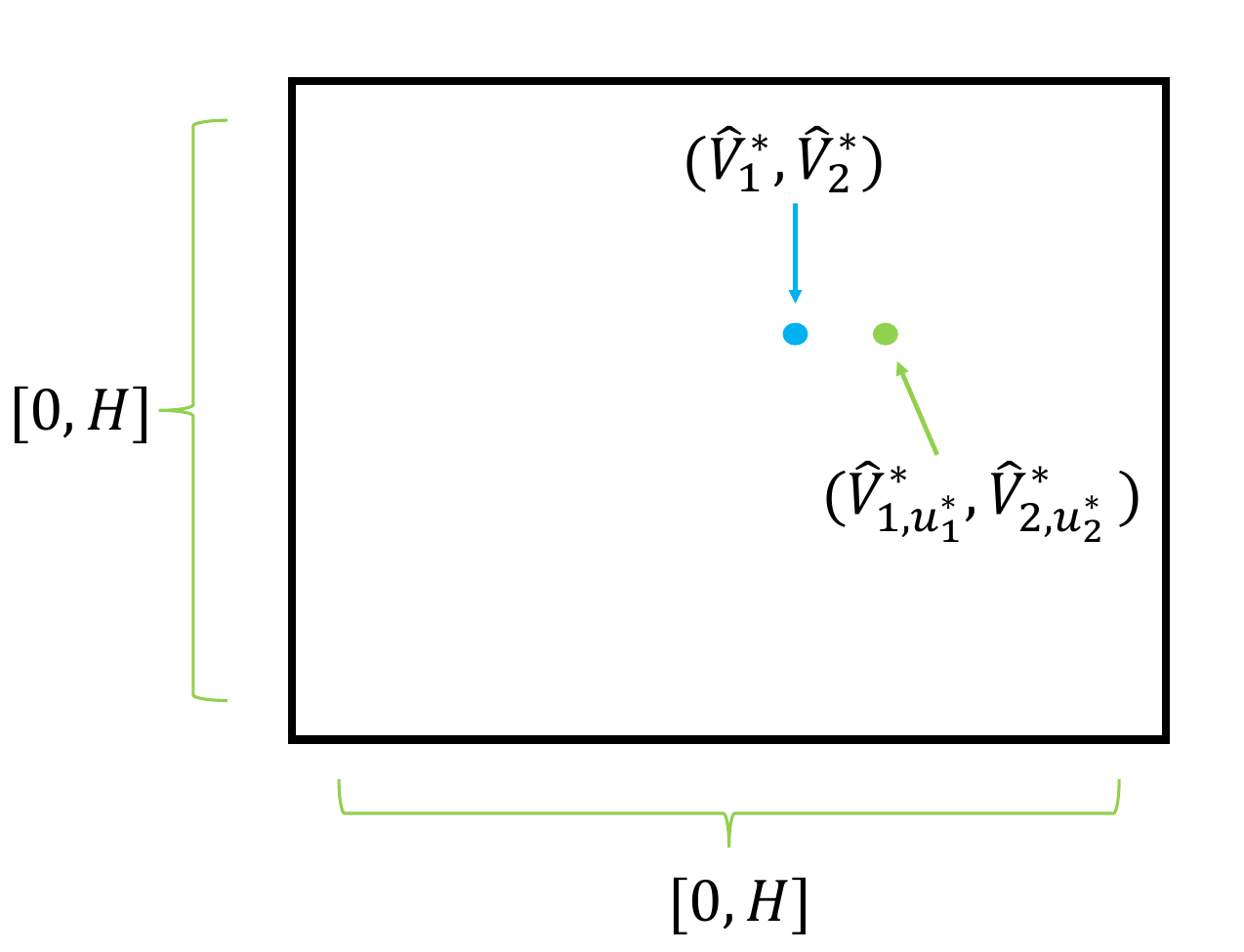}}
	\caption{Visualization of singleton absorbing MDP technique}
	\label{fig:main}
\end{figure}

We provide a visualization (Figure~\ref{fig:main}) for understanding the \emph{singleton absorbing MDP} technique at the beginning of Appendix. \ref{fig:1}, \ref{fig:3} demonstrate the infinite horizon case and \ref{fig:2}, \ref{fig:4} demonstrate the finite horizon case. In particular, it should be a $H$-dimensional hypercube $[0,H]^H$ (that contains $\widehat{V}^\star_1,\ldots,\widehat{V}^\star_h$) instead of only the square $[0,H]\times[0,H]$ ($\widehat{V}^\star_1,\widehat{V}^\star_2$). This is only for the ease of visualization.

The standard absorbing MDP technique \cite{agarwal2020model,cui2020plug} leverages a set of absorbing MDPs to cover the range of value functions (following the standard covering principle) to make sure $\widehat{V}^\star$ is close to one of the element (absorbing MDP) in the set (Figure~\ref{fig:1},\ref{fig:2}). The size of the covering set (\emph{i.e.} the covering number) grows exponentially in $H$ \ref{fig:2} in the finite horizon setting and this is due to the fact that there are $\widehat{V}^\star_1,\widehat{V}^\star_2,\ldots,\widehat{V}^\star_H$ quantities to cover. This results in the metric entropy (the $\log$ of the covering number) to blow up by a factor of $H$ and incurs suboptimality. On the other hand, by the nifty chosen singleton absorbing MDP $\widehat{V}^\star_{h,u^\star}$ (Figure~\ref{fig:3},\ref{fig:4}), we completely get rid of the covering issue (covering the $H$-dimensional space requires exponential in $H$ size), maintain the independence and  control the error propagation ($\norm{\widehat{V}^\star-\widehat{V}^\star_{u^\star}}_\infty$ is sufficiently small). See Section~\ref{sec:proof_local_optimal} for all the technical details.    

\section{Discussion on Related works}\label{sec:dis_related}

\noindent\textbf{Offline reinforcement learning.} Information-theoretical considerations for offline RL are first proposed for \emph{infinite horizon discounted setting} via Fitted Q-Iteration (FQI) type function approximation algorithms \citep{chen2019information,le2019batch,xie2020batch,xie2020q} which can be traced back to \citep{munos2003error,szepesvari2005finite,antos2008fitted,antos2008learning}. Later, \cite{xie2020batch} considers the offline RL under only the \emph{realizability} assumption and \cite{liu2020provably} considers the offline RL \emph{without good exploration}. Those are all challenging problems but with they only provide suboptimal polynomial complexity in terms of $(1-\gamma)^{-1}$.

For the finite horizon case, \cite{yin2021near} first achieves $\tilde{O}(H^3/d_m\epsilon^2)$ complexity under non-stationary transition but their results cannot be further improved in the stationary setting. Concurrent to our work, a recently released work \cite{yin2021nearoptimal} designs the offline variance reduction algorithm for achieving the optimal $\tilde{O}(H^2/d_m\epsilon^2)$ rate.  Their result is for a specific algorithm that uses data splitting while our results work for any algorithms that returns a nearly empirically optimal policy via a uniform convergence guarantee. Our results on the offline task-agnostic and the reward-free settings are entirely new. Another concurrent work \cite{ren2021nearly} considers the horizon-free setting but does not provide uniform convergence guarantee. Even more recently, \cite{rashidinejad2021bridging} considers the single concentrability coefficient $C^\star:=\max_{s,a}\frac{d^{\pi^\star}(s,a)}{d^\mu(s,a)}$ and obtains the sample complexity $\tilde{O}[(1-\gamma)^{-5}SC^\star/\epsilon^2]$.

 In the linear MDP case, \cite{jin2020pessimism} studies the pessimism-based algorithms for offline policy optimization under the weak compliance assumption and \cite{wang2020statistical,zanette2020exponential} provide some negative results (exponential lower bound) for offline RL with linear MDP structure.

\noindent\textbf{Model-based approaches with minimaxity.}
It is known model-based methods are minimax-optimal for online RL with regret $\tilde{O}(\sqrt{HSAT})$ (\emph{e.g.} \cite{azar2017minimax,efroni2019tight}). For linear MDP, In the generative model setting, \cite{agarwal2020model} shows model-based approach is still minimax optimal $\tilde{O}((1-\gamma)^{-3}SA/\epsilon^2)$ by using a $s$-absorbing MDP construction and this model-based technique is later reused for other more general settings (\emph{e.g.} Markov games \citep{zhang2020model} and linear MDPs \citep{cui2020plug}) and also for improving the sample size barrier \citep{li2020breaking}. In offline RL, \cite{yu2020mopo,kidambi2020morel} use model-based approaches for continuous policy optimization and \cite{yin2021near} uses the model-based methods to achieve $\tilde{O}(H^3/d_m\epsilon^2)$ complexity.

\noindent\textbf{Task-agnostic and Reward-free problems.}
The reward-free problem is initiated in the online RL \citep{jin2020reward} where the agent needs to efficiently explore an MDP environment \emph{without} using any reward information. It requires high probability guarantee for learning optimal policy for \emph{any} reward function, which is strictly stronger than the standard learning task that one only needs to learn to optimal policy for a fixed reward. Later, \cite{kaufmann2020adaptive,menard2020fast} establish the $\tilde{O}(H^3S^2A/\epsilon^2)$ complexity and \cite{zhang2020nearly} further tightens the dependence to $\tilde{O}(H^2S^2A/\epsilon^2)$.\footnote{We translate \citep{zhang2020nearly} their dimension-free result to $\tilde{O}(H^2S^2A/\epsilon^2)$ under the standard assumption $r\in[0,1]$.} Recently, \cite{zhang2020task} proposes the task-agnostic setting where one needs to use exploration data to simultaneously learn $K$ tasks and provides a upper bound with complexity $\tilde{O}(H^5SA\log(K)/\epsilon^2)$. For linear MDP setting, \cite{wang2020reward} achieves the sample complexity $\tilde{O}(d^3H^6/\epsilon^2)$ and \cite{liu2020sharp} considers such problem in the online two-player Markov game. However, although these settings remain critical in the offline regime, no statistical result has been formally derived so far.

\section{Proof of optimal local uniform convergence}\label{sec:proof_local_optimal}

\subsection{Model-based Offline Plug-in Estimator}\label{subsec:method}

Recall the model-based estimator uses empirical estimator $\widehat{P}$ for estimating $P$ and the estimator is calculated accordingly:
\[
\widehat{Q}_h^\pi = r+ \widehat{P}^{\pi_{h+1}} Q^\pi_{h+1}=r+ \widehat{P}V^\pi_{H+1}, 
\]

where $\widehat{P}(s'|s,a)$ can be expressed as: 

\begin{align*}
\widehat{P}(s'|s,a)&=\frac{\sum_{i=1}^n\sum_{h=1}^H\mathbf{1}[(s^{(i)}_{h+1},a^{(i)}_h,s^{(i)}_h)=(s^\prime,s,a)]}{n_{s,a}},\quad n_{s,a} = \sum_{h=1}^H\sum_{i=1}^n \mathbf{1}[(s^{(i)}_h,a^{(i)}_h)=(s,a)].
\end{align*}

and $\widehat{P}(s^\prime|s,a)=\frac{1}{S}$, if $n_{s,a}=0$. The initial distribution is also constructed as $\widehat{d}_1^\pi(s)=n_{s}/n$.

First of all, we have by definition the Bellman optimality equation

\begin{equation}\label{eqn:Bellman_opt}
V^\star_t(s)=\max_a \left\{r(s,a)+\sum_{s'}P(s'|s,a)V^\star_{t+1}(s')\right\},\quad \forall s\in\mathcal{S}.
\end{equation}
and similarly the empirical version 
\[
\widehat{V}^\star_t(s)=\max_a \left\{r(s,a)+\sum_{s'}\widehat{P}(s'|s,a)\widehat{V}^\star_{t+1}(s')\right\},\quad \forall s\in\mathcal{S}.
\]
The key difficulty in obtaining the optimal dependence in stationary setting is decoupling the dependence of $P-\widehat{P}$ and $\widehat{V}^\star$. This issue is not encountered in the non-stationary setting due to the possibility to estimate different transition at each time \citep{yin2021near}, but it cannot further reduce the sample complexity on $H$. Moreover, the direct use of $s$-absorbing MDP in \cite{agarwal2020model} is not sharp for finite horizon stationary setting, as it requires $s$-absorbing MDPs with $H$-dimensional cover (which has size $\approx e^H$ and it is not optimal). We design the \emph{singleton-absorbing MDP} to get rid of the issue.  

\subsection{General absorbing MDP} 

The general absorbing MDP is defined as follows: for a fixed state $s$ and a sequence $\{u_t\}_{t=1}^H$, MDP $M_{s,\{u_t\}_{t=1}^H}$ is identical to $M$ for all states except $s$, and state $s$ is absorbing in the sense $P_{M_{s,\{u_t\}_{t=1}^H}}(s|s,a)=1$ for all $a$, and the instantaneous reward at time $t$ is $r_t(s,a)=u_t$ for all $a\in\mathcal{A}$. Also, we use the shorthand notation $V^\pi_{\{s,u_t\}}$ for $V^\pi_{s,{M_{s,\{u_t\}_{t=1}^H}}}$ and similarly for $Q_{\{s,u_t\}}$ and transition $P_{\{s,u_t\}}$. Then the following properties hold:
\vspace{1em}
\begin{lemma}\label{lem:absorb_value}
	\[
	V^\star_{h,\{s,u_t\}}(s)=\sum_{t=h}^H u_t.
	\]
\end{lemma}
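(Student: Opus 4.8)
The plan is to prove the identity by backward induction on the time index $h$, exploiting the two defining features of the absorbing MDP $M_{s,\{u_t\}_{t=1}^H}$: from state $s$ every action returns to $s$ with probability one (i.e. $P_{\{s,u_t\}}(s|s,a)=1$ for all $a$), and the instantaneous reward at $s$ is the action-independent constant $u_t$ at each time $t$. Intuitively, any trajectory initialized at state $s$ at time $h$ is pinned to $s$ for all remaining steps $h,h+1,\ldots,H$ and therefore accrues exactly $\sum_{t=h}^H u_t$, no matter which policy is used. Since every policy yields this same return, the optimal value must equal it.

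To carry this out formally, I would first record the base case. Using the convention $V^\star_{H+1}=0$ together with the Bellman optimality equation, $V^\star_{H,\{s,u_t\}}(s)=\max_a\{u_H+P_{\{s,u_t\}}(\cdot|s,a)V^\star_{H+1}\}=u_H=\sum_{t=H}^H u_t$. For the inductive step I would assume $V^\star_{h+1,\{s,u_t\}}(s)=\sum_{t=h+1}^H u_t$ and expand the Bellman optimality recursion at $(s,h)$:
\[
V^\star_{h,\{s,u_t\}}(s)=\max_a\Big\{u_h+\sum_{s'}P_{\{s,u_t\}}(s'|s,a)\,V^\star_{h+1,\{s,u_t\}}(s')\Big\}.
\]
The absorbing property collapses the inner sum to $V^\star_{h+1,\{s,u_t\}}(s)$, and since neither $u_h$ nor the resulting term depends on $a$, the maximization is vacuous and yields $u_h+\sum_{t=h+1}^H u_t=\sum_{t=h}^H u_t$, closing the induction.

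There is essentially no deep obstacle here: the identity is a direct consequence of the construction, and the only point requiring care is the bookkeeping of the reward/horizon convention (whether the terminal value is indexed at $H+1$ and set to zero) so that the summation limits align. The lemma's role is purely structural, pinning down the value at the absorbing state, and it will feed directly into Lemma~\ref{lem:smdp_prop_m}: there the specific choice $u_t^\star=V^\star_t(s)-V^\star_{t+1}(s)$ makes the sum telescope, giving $\sum_{t=h}^H u_t^\star=V^\star_h(s)-V^\star_{H+1}(s)=V^\star_h(s)$, which matches the value of the original MDP at $(s,h)$ and is precisely what makes the singleton-absorbing construction faithful.
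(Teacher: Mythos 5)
Your proof is correct and follows essentially the same route as the paper's: backward induction on $h$ via the Bellman optimality equation, using the absorbing property to collapse the transition sum to $V^\star_{h+1,\{s,u_t\}}(s)$ and the action-independence of the reward $u_t$ to make the maximization vacuous. The base case handling (with the $V^\star_{H+1}=0$ convention) also matches the paper's argument.
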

\begin{proof}
	We prove by backward induction. For $h=H$, under $M_{s,\{u_t\}_{t=1}^H}$ state $s$ is absorbing (and by convention $V^\star_{H+1,\{s,u_t\}}=0$) therefore 
	\[
	V^\star_{H,\{s,u_t\}}(s)=\max_a\left\{r_{H,\{s,u_t\}}(s,a)+\sum_{s'}P_{\{s,u_t\}}(s'|s,a)V^\star_{H+1,\{s,u_t\}}(s')\right\}=\max_a\left\{r_{H,\{s,u_t\}}(s,a)\right\}=u_H
	\]
	for general $h$, note $\sum_{s'}P_{\{s,u_t\}}(s'|s,a)V^\star_{h+1,\{s,u_t\}}(s')=1\cdot V^\star_{h+1,\{s,u_t\}}(s)$, therefore using induction property $V^\star_{h+1,\{s,u_t\}}(s)=\sum_{t=h+1}^H u_t$ we can similarly obtain $V^\star_{h,\{s,u_t\}}(s)=\sum_{t=h}^H u_t$.
	
\end{proof}

\begin{lemma}\label{lem:q_diff}
	Fix state $s$. For two different sequences $\{u_t\}_{t=1}^H$ and $\{u_t^\prime\}_{t=1}^H$, we have 
	\[
	\max_h\norm{Q^\star_{h,\{s,u_t\}}-Q^\star_{h,\{s,u_t^\prime\}}}_\infty\leq H\cdot\max_{t\in[H]}\left|u_t-u^\prime_t\right|.
	\]
	
\end{lemma}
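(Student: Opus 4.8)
The plan is to prove the bound by backward induction on $h$, controlling the quantity $\Delta_h := \norm{Q^\star_{h,\{s,u_t\}} - Q^\star_{h,\{s,u_t'\}}}_\infty$, with the supremum taken over all state--action pairs (including those whose first coordinate is the absorbing state $s$). Since $Q^\star_{H+1,\{s,u_t\}} = Q^\star_{H+1,\{s,u_t'\}} = 0$ by convention, the base case $\Delta_{H+1}=0$ is immediate, and the entire task reduces to establishing the one-step recursion $\Delta_h \le \max\{\Delta_{h+1},\, H\max_t|u_t - u_t'|\}$.

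To establish the recursion I would split into two cases according to whether the state argument equals $s$. The two absorbing MDPs $M_{s,\{u_t\}}$ and $M_{s,\{u_t'\}}$ share identical rewards and transitions at every state other than $s$ and both make $s$ absorbing; they differ only in the instantaneous reward assigned at $s$. For a pair $(s',a)$ with $s'\neq s$, the Bellman optimality equation gives $Q^\star_{h,\{s,u_t\}}(s',a) - Q^\star_{h,\{s,u_t'\}}(s',a) = \sum_{s''}P(s''|s',a)\big(V^\star_{h+1,\{s,u_t\}}(s'') - V^\star_{h+1,\{s,u_t'\}}(s'')\big)$, since the reward $r(s',a)$ and the kernel $P(\cdot|s',a)$ are common to both MDPs and the reward term cancels. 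As this is a convex combination of coordinate-wise value differences, its magnitude is at most $\norm{V^\star_{h+1,\{s,u_t\}} - V^\star_{h+1,\{s,u_t'\}}}_\infty$, which is in turn at most $\Delta_{h+1}$ by the elementary contraction $|\max_a f(a) - \max_a g(a)| \le \max_a |f(a)-g(a)|$ applied to $V^\star_{h+1} = \max_a Q^\star_{h+1}(\cdot,a)$.

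For the absorbing state itself I would invoke Lemma~\ref{lem:absorb_value}, which evaluates the optimal value at $s$ in closed form as $V^\star_{h,\{s,u_t\}}(s) = \sum_{t=h}^H u_t$. Because $s$ transitions to itself with probability one and earns reward $u_h$ (respectively $u_h'$) there, for every action $a$ we obtain $Q^\star_{h,\{s,u_t\}}(s,a) - Q^\star_{h,\{s,u_t'\}}(s,a) = \sum_{t=h}^H (u_t - u_t')$, whose absolute value is bounded by $(H-h+1)\max_t|u_t-u_t'| \le H\max_t|u_t-u_t'|$. Combining the two cases yields the claimed recursion, and unrolling it from $\Delta_{H+1}=0$ gives $\Delta_h \le H\max_t|u_t-u_t'|$ for all $h\in[H]$, hence $\max_h \Delta_h \le H\max_t|u_t-u_t'|$ as desired.

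The argument is essentially mechanical once the right quantity $\Delta_h$ and the closed form at the absorbing state are in place, so I do not anticipate a serious obstacle. The only points requiring care are keeping the bookkeeping clean between $V$- and $Q$-differences (the $\max$-contraction step) and using the cancellation of rewards and the shared transitions at non-absorbing states correctly, so that only the transmitted value gap $\Delta_{h+1}$ survives there while the linear-in-$H$ accumulation is confined to the absorbing coordinate.
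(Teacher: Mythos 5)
Your proposal is correct, but it proves the lemma by a genuinely different route than the paper. The paper's proof is a policy-substitution (simulation-style) argument: since $M_{s,\{u_t\}}$ and $M_{s,\{u'_t\}}$ share the \emph{same} transition kernel and differ only in the reward at $s$, one can expand $Q^\star_{h,\{s,u'_t\}}$ as a maximum over policies of the multi-step rollout under the shared kernel, plug in the optimal policy $\pi^\star_{\{s,u_t\}}$ of the other MDP as a (suboptimal) competitor, and observe that the resulting difference telescopes into $\sum_{i=h}^H \bigl(\prod P^{\pi^\star}\bigr)\bigl(r_{i,\{s,u_t\}}-r_{i,\{s,u'_t\}}\bigr)$, each term bounded by $\max_t|u_t-u'_t|$ because the rewards agree off $s$ and the transition products are row-stochastic; symmetry then gives the two-sided bound. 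You instead run a backward induction on the Bellman optimality equations with $\Delta_h:=\norm{Q^\star_{h,\{s,u_t\}}-Q^\star_{h,\{s,u'_t\}}}_\infty$, splitting states into non-absorbing ones (where rewards and transitions cancel and the max-contraction $|\max_a f-\max_a g|\leq\max_a|f-g|$ passes $\Delta_{h+1}$ through unchanged) and the absorbing coordinate (where Lemma~\ref{lem:absorb_value} gives the exact signed difference $\sum_{t=h}^H(u_t-u'_t)$). Both arguments are sound and yield the same per-level bound $(H-h+1)\max_t|u_t-u'_t|$. The paper's version is shorter, needs no case analysis, and does not invoke Lemma~\ref{lem:absorb_value} at all; yours is more mechanical, avoids the one-sided-plus-symmetry step and the policy-class expansion of $Q^\star$, and makes structurally explicit that the linear-in-$H$ accumulation is confined entirely to the absorbing state (off $s$ the gap is merely inherited, not accrued), which is a slightly more informative picture of where the factor $H$ comes from.
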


\begin{proof}
	Let $\pi^\star_{\{s,u_t\}}$ be the optimal policy in $M_{\{s,u_t\}}$. Then (by convention $\prod_{a=h+1}^{h}P^{\pi_a}=I$)
	\begin{align*}
	&Q^\star_{h,\{s,u_t\}}-Q^\star_{h,\{s,u_t^\prime\}}=Q^\star_{h,\{s,u_t\}}-\max_\pi \sum_{i=h}^H \left(\prod_{a=h+1}^{i} P^{\pi_{a}}_{\{s,u'_t\}}\right)r_{i,\{s,u'_t\}}\\
	\leq& Q^\star_{h,\{s,u_t\}}- \sum_{i=h}^H \left(\prod_{a=h+1}^{i} P^{\pi^\star_{a,\{s,u_t\}}}_{\{s,u'_t\}}\right)r_{i,\{s,u'_t\}}=\sum_{i=h}^H \left(\prod_{a=h+1}^{i} P^{\pi^\star_{a,\{s,u_t\}}}_{\{s,u'_t\}} \right)\left(r_{i,\{s,u_t\}}-r_{i,\{s,u'_t\}}\right)\\
	&\leq \sum_{i=h}^H\max_{s,a}\norm{\left(\prod_{a=h+1}^{i} P^{\pi^\star_{a,\{s,u_t\}}}_{\{s,u'_t\}} \right)^{i-h}(\cdot|s,a)}_1\cdot \norm{r_{i,\{s,u_t\}}-r_{i,\{s,u'_t\}}}_\infty\cdot \mathbf{1}=(H-h+1)\cdot \max_t\left|u_t-u'_t\right|\cdot\mathbf{1}\\
	\end{align*}
	where the first equal sign uses the definition of $Q^\star$, the second equal sign uses $P_{\{s,u_t\}}$ only depends $s$ but not the specification of $u_t$'s and the last equal sign comes from $r_{i,\{s,u_t\}}(s,a)=u_i$ for any $a\in\mathcal{A}$ and $r_{i,\{s,u_t\}}(\tilde{s},a)=r_{i,\{s,u'_t\}}(\tilde{s},a)$ for any $\tilde{s}\neq s$. Lastly by symmetry we finish the proof.
\end{proof}

\subsection{Singleton-absorbing MDP}

The direct of transfer of absorbing technique created in \cite{agarwal2020model} will require each $u_t$ to fill in the range of $[0,H]$ using evenly spaced elements. For finite horizon MDP there are $H$ layers, therefore the total number of $H$-tuples $(u_1,\ldots,u_H)$ has order $|U_{s}|=Poly(H)^H$, therefore when apply the union bound, it will incur the additional $H$ factor. We get rid of this issue by choosing one single point in $H$-dimensional space $[0,H]^H$. We first give the following two lemmas.  
\vspace{1em}
\begin{lemma}\label{lem:pos_diff}
	$ V^\star_t(s)-V^\star_{t+1}(s)\geq 0$, for all state $s\in\mathcal{S}$ and all $t\in[H]$.	
\end{lemma}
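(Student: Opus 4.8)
The plan is to prove the statement by backward induction on $t$, relying on the Bellman optimality equation and, crucially, on the time-homogeneity of the transition $P$ and the reward $r$. The single structural fact I will exploit is that, since $P$ and $r$ carry no time index, the same Bellman operator $(\mathcal{T}V)(s)=\max_a\{r(s,a)+P(\cdot|s,a)V\}$ governs the recursion at \emph{every} step; thus $V^\star_t=\mathcal{T}V^\star_{t+1}$ and $V^\star_{t+1}=\mathcal{T}V^\star_{t+2}$ are produced by one and the same operator, and the desired monotonicity of $V^\star_t$ in $t$ reduces to the monotonicity of $\mathcal{T}$ applied to an ordered pair of value vectors. Intuitively, $V^\star_t$ optimizes over $H-t+1$ remaining steps while $V^\star_{t+1}$ optimizes over only $H-t$ steps, so the extra nonnegative reward available to the longer horizon can only help.

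First I would establish the base case $t=H$. By the convention $V^\star_{H+1}\equiv 0$, the Bellman equation gives $V^\star_H(s)=\max_a r(s,a)$, and since $0\le r\le 1$ this is nonnegative, so $V^\star_H(s)-V^\star_{H+1}(s)=\max_a r(s,a)\ge 0$, which is the claim at $t=H$.

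For the inductive step, assume the claim at index $t+1$, i.e.\ $V^\star_{t+1}\ge V^\star_{t+2}$ element-wise over all states. Fix $s$ and let $a^\star\in\argmax_a\{r(s,a)+P(\cdot|s,a)V^\star_{t+2}\}$ be an action attaining $V^\star_{t+1}(s)$. Because $a^\star$ is a feasible (not necessarily optimal) choice in the maximization defining $V^\star_t(s)$,
\begin{align*}
V^\star_t(s) &\ge r(s,a^\star)+P(\cdot|s,a^\star)V^\star_{t+1} \\
&\ge r(s,a^\star)+P(\cdot|s,a^\star)V^\star_{t+2}=V^\star_{t+1}(s),
\end{align*}
where the second inequality holds because $P(\cdot|s,a^\star)$ is a probability vector with nonnegative entries and $V^\star_{t+1}-V^\star_{t+2}\ge 0$ by the inductive hypothesis. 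This closes the induction and yields $V^\star_t(s)-V^\star_{t+1}(s)\ge 0$ for all $s$ and all $t\in[H]$.

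The step I expect to require the most care is keeping the induction on the \emph{element-wise} (vector) inequality rather than on the scalar value at a single state: the operator $\mathcal{T}$ mixes all coordinates through $P(\cdot|s,a)$, so preserving the ordering under this averaging is exactly where nonnegativity of the transition entries (and of $r$ in the base case) enters. I would also remark that time-homogeneity is essential here --- with nonstationary rewards $r_t$ the operator would change with $t$ and the argument would break down, consistent with this monotonicity being a genuine feature of the stationary setting.
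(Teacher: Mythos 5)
Your proof is correct, but it takes a genuinely different route from the paper's. The paper argues directly, without induction: it takes the optimal policy $\pi^\star_{t+1:H}$ attaining $V^\star_{t+1}$, shifts it one step earlier to define a feasible policy $\pi_{t:H}$ with $\pi_{t:H-1}=\pi^\star_{t+1:H}$ (and $\pi_H$ arbitrary), and then uses stationarity of $P$ to equate the expected return of the first $H-t$ steps of this shifted policy started at time $t$ with $V^\star_{t+1}(s)$; the leftover final-step reward is nonnegative, giving $V^\star_t(s)\geq V^\star_{t+1}(s)$. You instead run a backward induction on $t$, using that the \emph{same} Bellman optimality operator $\mathcal{T}$ produces $V^\star_t=\mathcal{T}V^\star_{t+1}$ at every step, together with monotonicity of $\mathcal{T}$ (nonnegativity of the transition entries). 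Both arguments exploit time-homogeneity, but in different guises: the paper uses it to shift a policy in time without changing the law of its trajectory, while you use it to make the recursion operator time-invariant. Your argument is more elementary and purely algebraic --- it never leaves the Bellman equations the paper has already written down, and it generalizes immediately to any setting where a monotone, time-invariant operator generates the value recursion. The paper's argument, by contrast, makes the underlying intuition (``a longer horizon can only add nonnegative reward'') visible at the level of trajectories, and identifies exactly the extra term $\E^{\pi_{t:H}}[r(s_H,a_H)\mid s_t=s]\geq 0$ that is discarded. Your closing remark that stationarity is essential is also consistent with the paper's own remark following its proof.
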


\begin{proof}
	Let the optimal policy for $V^\star_{t+1}$ be $\pi^\star_{t+1:H}$, \emph{i.e.} $V^\star_{t+1}=V^{\pi^\star_{t+1:H}}_{t+1}$, then artificially construct a policy $\pi_{t:H}$ such that $\pi_{t:H-1}=\pi^\star_{t+1:H}$ and $\pi_H$ is arbitrary, then by the definition of optimal value 
	\begin{align*}
	V^\star_{t}(s)&\geq V^{\pi_{t:H}}_t(s)=\E^{\pi_{t:H}}\left[\sum_{i=t}^H r(s_i,a_i)\middle|s_t=s\right]\\
	&=\E^{\pi_{t:H-1}}\left[\sum_{i=t}^{H-1} r(s_i,a_i)\middle|s_t=s\right]+\E^{\pi_{t:H}}\left[ r(s_H,a_H)\middle|s_t=s\right]\\
	&=\E^{\pi^\star_{t+1:H}}\left[\sum_{i=t+1}^{H} r(s_i,a_i)\middle|s_{t+1}=s\right]+\E^{\pi_{t:H}}\left[ r(s_H,a_H)\middle|s_t=s\right]\\
	&\geq \E^{\pi^\star_{t+1:H}}\left[\sum_{i=t+1}^{H} r(s_i,a_i)\middle|s_{t+1}=s\right]+0=V^\star_{t+1}(s),
	\end{align*}
	where the third equal sign uses exactly that \emph{$P$ is a STATIONARY transition} and definition $\pi_{t:H-1}=\pi^\star_{t+1:H}$. The last inequality uses assumption that reward is always non-negative.
\end{proof}

\begin{remark}
	Lemma~\ref{lem:pos_diff} leverages $P$ is stationary and above may not be true in the non-stationary setting. This enables us to establish the following lemma, which is the key for singleton-absorbing MDP.
\end{remark}

\begin{lemma}\label{lem:smdp_prop}
	Fix a state $s$. If we choose $u_t^\star:= V^\star_t(s)-V^\star_{t+1}(s)$ $\forall t\in[H]$, then we have the following vector form equation 
	\[
	V^\star_{h,\{s,u_t^\star\}}=V^\star_{h,M}\quad \forall h\in[H].
	\]
	Similarly, if we choose $\widehat{u}_t^\star:= \widehat{V}^\star_t(s)-\widehat{V}^\star_{t+1}(s)$, then $\widehat{V}^\star_{h,\{s,\hat{u}_t^\star\}}=\widehat{V}^\star_{h,M}$, $\forall h\in[H]$.
\end{lemma}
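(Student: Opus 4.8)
The plan is to establish the vector identity $V^\star_{h,\{s,u_t^\star\}} = V^\star_{h,M}$ by backward induction on $h$, treating the absorbing state $s$ and the remaining states $\tilde{s}\neq s$ separately at each level. The base case $h=H+1$ is immediate, since both sides equal $0$ by the convention $V^\star_{H+1}=0$. For the inductive step I would assume the full vector identity $V^\star_{h+1,\{s,u_t^\star\}}=V^\star_{h+1,M}$ holds at level $h+1$ and propagate it to level $h$.

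For a state $\tilde{s}\neq s$, I would invoke the defining property of the absorbing MDP: $M_{s,\{u_t^\star\}}$ coincides with $M$ in both transition kernel and reward at every state other than $s$, i.e.\ $P_{\{s,u_t^\star\}}(\cdot|\tilde{s},a)=P(\cdot|\tilde{s},a)$ and $r_{\{s,u_t^\star\}}(\tilde{s},a)=r(\tilde{s},a)$. Writing the Bellman optimality equation at $\tilde{s}$ and substituting the inductive hypothesis into the one-step lookahead, the maximization over actions becomes identical to that of $M$, giving $V^\star_{h,\{s,u_t^\star\}}(\tilde{s})=V^\star_{h,M}(\tilde{s})$. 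For the absorbing state itself, I would use that $s$ transitions to itself with probability one and earns reward $u_h^\star$ at time $h$, so the Bellman equation collapses to $V^\star_{h,\{s,u_t^\star\}}(s)=u_h^\star+V^\star_{h+1,\{s,u_t^\star\}}(s)$. Applying the inductive hypothesis at state $s$ together with $u_h^\star=V^\star_h(s)-V^\star_{h+1}(s)$, the $V^\star_{h+1}(s)$ terms cancel and we recover $V^\star_{h,\{s,u_t^\star\}}(s)=V^\star_h(s)=V^\star_{h,M}(s)$; equivalently this is the telescoping identity $\sum_{t=h}^H u_t^\star = V^\star_h(s)$ already packaged in Lemma~\ref{lem:absorb_value}. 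Combining both cases closes the induction.

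The empirical statement for $\hat{u}_t^\star$ then follows verbatim by replacing $P, V^\star$ with $\widehat{P}, \widehat{V}^\star$ throughout and using the empirical Bellman optimality equation, with Lemma~\ref{lem:pos_diff} (whose proof only exploits stationarity of the transition, hence applies equally to $\widehat{P}$) guaranteeing $\hat{u}_t^\star\geq 0$ so that the construction is well-defined.

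The step I expect to be the crux -- conceptually clean rather than technically hard -- is the cancellation at state $s$: the choice $u_t^\star=V^\star_t(s)-V^\star_{t+1}(s)$ is engineered precisely so that the reward accumulated while trapped in the absorbing state reproduces the original optimal value, which is exactly what leaves the one-step lookahead at every other state unchanged. The one point to be careful about is that the induction must be carried on the \emph{full} value vectors simultaneously rather than state-by-state, since the Bellman backup at $\tilde{s}\neq s$ references the next-step value at $s$, which is supplied by the level-$(h+1)$ identity.
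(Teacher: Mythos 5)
Your proposal is correct and follows essentially the same route as the paper: backward induction on $h$, splitting each step into the absorbing state $s$ (where the telescoping choice of $u_t^\star$ forces the value to equal $V^\star_h(s)$) and the states $\tilde{s}\neq s$ (where the absorbing MDP coincides with $M$ and the inductive hypothesis passes through the Bellman backup). The only cosmetic differences are that you anchor the induction at $h=H+1$ rather than $h=H$ and derive the absorbing-state identity from the one-step recursion $V^\star_{h,\{s,u_t^\star\}}(s)=u_h^\star+V^\star_{h+1,\{s,u_t^\star\}}(s)$ instead of quoting Lemma~\ref{lem:absorb_value} directly, which you yourself note is equivalent.
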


\begin{proof} 
	We focus on the first claim. Note by Lemma~\ref{lem:pos_diff} the assignment of $u^\star_t(:=r_{t,\{s,u^\star_t\}})$ is well-defined. Next recall $V^\star_{h,M}$ is the optimal value under true MDP $M$ and $V^\star_{h,\{s,u_t^\star\}}$ is the optimal value under the assimilating MDP ${M_{s,\{u_t^\star\}_{t=1}^H}}$. We prove by backward induction. 
	
	For $h=H$, note by convention $V^\star_{H+1}=0$, therefore $u^\star_H=V^\star_{H}(s)-V^\star_{H+1}(s)=V^\star_{H}(s)-0=V^\star_{H}(s)$ and Bellman optimality equation becomes 
	\[
	V^\star_H(\tilde{s})=\max_a \left\{r(\tilde{s},a)\right\},\quad \forall \tilde{s}\in\mathcal{S}.
	\]
	Under ${M_{s,\{u_t^\star\}_{t=1}^H}}$, for state $s$ by Lemma~\ref{lem:absorb_value} we have
	$
	V^\star_{H,\{s,u_t^\star\}}(s)=u^\star_H=V^\star_{H}(s),
	$
	for other states $\tilde{s}\neq s$, reward in ${M_{s,\{u_t^\star\}_{t=1}^H}}=M$ so we also have $V^\star_{H,\{s,u_t^\star\}}(\tilde{s})=V^\star_{H}(\tilde{s})$ for all $\tilde{s}\neq s$.
	
	Now for general $h$, for state $s$ by Lemma~\ref{lem:absorb_value}
	\[
	V^\star_{h,\{s,u_t^\star\}}(s)=\sum_{t=h}^H u_t^\star=\sum_{t=h}^H \left(V^\star_t(s)-V^\star_{t+1}(s)\right)=V^\star_h(s),
	\]
	for state $\tilde{s}\neq s$, by Bellman optimality equation
	\begin{align*}
	V^\star_{h,\{s,u_t^\star\}}(\tilde{s})&=\max_a \left\{r_{\{s,u_t^\star\}}(\tilde{s},a)+\sum_{s'}P_{\{s,u_t^\star\}}(s'|\tilde{s},a)V^\star_{h+1,\{s,u_t^\star\}}(s')\right\}\\
	&=\max_a \left\{r(\tilde{s},a)+\sum_{s'}P(s'|\tilde{s},a)V^\star_{h+1,\{s,u_t^\star\}}(s')\right\}\\
	&=\max_a \left\{r(\tilde{s},a)+\sum_{s'}P(s'|\tilde{s},a)V^\star_{h+1}(s')\right\}=V^\star_h(\tilde{s}),\\
	\end{align*}
	where the second equal sign uses when $\tilde{s}\neq s$, ${M_{s,\{u_t^\star\}_{t=1}^H}}$ is identical to $M$ and the third equal sign uses induction assumption that element-wisely $V^\star_{h+1,\{s,u_t^\star\}}=V^\star_{h+1}$. Similar result can be derived for $\hat{u}^\star$ version and this completes the proof.
	
\end{proof}

The singleton MDP we used is exactly $M_{s,\{u^\star_t\}_{t=1}^H}$ (or $\widehat{M}_{s,\{u^\star_t\}_{t=1}^H}$). 

\subsection{Proof for local uniform convergence}\label{sec:dec}

Recall the local policy class
\[
\Pi_{l}:=\left\{\pi: \text { s.t. }\left\|\widehat{V}_{h}^{\pi}-\widehat{V}_{h}^{\widehat{\pi}^{\star}}\right\|_{\infty} \leq \epsilon_{\text {opt }}, \forall h\in[H]\right\}.
\] 
For ease of exposition, we denote $N:=\min_{s,a}n_{s,a}$. Note $N$ itself is a random variable, therefore for the rest of proof we first conditional on $N$. Later we shall remove the conditional on $N$ ({see Section~\ref{sec:final_pf}}).

For any $\widehat{\pi}\in\Pi_l$, by (empirical) Bellman equation we have element-wisely:
\begin{align*}
\widehat{Q}_h^{\widehat{\pi}}- {Q}_h^{\widehat{\pi}}&=r_h+\widehat{P}^{\widehat{\pi}_{h+1}} \widehat{Q}_{h+1}^{\widehat{\pi}}-r_h-{P}^{\widehat{\pi}_{h+1}} {Q}_{h+1}^{\widehat{\pi}}\\
&=\left(\widehat{P}^{\widehat{\pi}_{h+1}}-{P}^{\widehat{\pi}_{h+1}}\right)\widehat{Q}_{h+1}^{\widehat{\pi}} + {P}^{\widehat{\pi}_{h+1}} \left(\widehat{Q}_{h+1}^{\widehat{\pi}}-{Q}_{h+1}^{\widehat{\pi}}\right)\\
&=\left(\widehat{P}-{P}\right)\widehat{V}_{h+1}^{\widehat{\pi}}+ {P}^{\widehat{\pi}_{h+1}} \left(\widehat{Q}_{h+1}^{\widehat{\pi}}-{Q}_{h+1}^{\widehat{\pi}}\right)\\
&=\ldots=\sum_{t=h}^H\Gamma_{h+1:t}^{\widehat{\pi}}\left(\widehat{P}-{P}\right)\widehat{V}_{t+1}^{\widehat{\pi}}\\
&\leq \underbrace{\sum_{t=h}^H\Gamma_{h+1:t}^{\widehat{\pi}}\left|\left(\widehat{P}-{P}\right)\widehat{V}_{t+1}^{\widehat{\pi}^\star}\right|}_{(\text{$\star$})}+\underbrace{\sum_{t=h}^H\Gamma_{h+1:t}^{\widehat{\pi}}\left|\left(\widehat{P}-{P}\right)\left(\widehat{V}_{t+1}^{\widehat{\pi}}-\widehat{V}_{t+1}^{\widehat{\pi}^\star}\right)\right|}_{(\text{$\star \star$})}\\
\end{align*}

where $\Gamma_{h+1:t}^\pi=\prod_{i=h+1}^t P^{\pi_i}$ is multi-step state-action transition and $\Gamma_{h+1:h}:=I$.

\subsection{Analyzing (\text{$\star \star$})} 

Term (\text{$\star \star$}) can be readily bounded using the following lemma.

\begin{lemma}\label{lem:epi_opt_bound}
	Fix $N>0$, we have with probability $1-\delta$, for all $t=1,...,H-1$
	\[
	\sum_{t=h}^H \Gamma_{h+1:t}^{\hpi}\left|(\widehat{P}-{P}) (\widehat{V}^{\widehat{\pi}^\star}_{h+1}-\widehat{V}^{{\widehat{\pi}}}_{h+1})\right|\leq C\epsilon_\mathrm{opt}\cdot \sqrt{\frac{H^2S\log(SA/\delta)}{N}}\cdot\mathbf{1}
	\]
	where $C$ absorb the higher order term and absolute constants.
\end{lemma}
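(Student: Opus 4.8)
The plan is to treat $(\star\star)$ as a genuinely lower-order term, exploiting the defining property of the local class $\Pi_l$ that the value gap $\widehat{V}^{\widehat{\pi}}_{t+1}-\widehat{V}^{\widehat{\pi}^\star}_{t+1}$ is uniformly $\epsilon_\mathrm{opt}$-small. Unlike the main term $(\star)$, here I can afford the crude $\ell_1$--$\ell_\infty$ Hölder bound, which is precisely why the $\sqrt{S}$ factor that appears is harmless: it is multiplied by $\epsilon_\mathrm{opt}$. In particular, the data dependence between $\widehat{P}$ and the data-dependent value functions $\widehat{V}^{\widehat{\pi}},\widehat{V}^{\widehat{\pi}^\star}$ causes no difficulty and the singleton-absorbing construction is \emph{not} needed here, since on the good event the gap is bounded by $\epsilon_\mathrm{opt}$ for every $\widehat{\pi}\in\Pi_l$ simultaneously, no matter how these policies depend on the sample.

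First I would establish a uniform $\ell_1$ deviation bound for the empirical transition. Conditioning on the visitation counts $\{n_{s,a}\}$ (so that, given the counts, each $\widehat{P}(\cdot|s,a)$ is the empirical distribution of $n_{s,a}\ge N$ draws from $P(\cdot|s,a)$), the standard concentration for the $\ell_1$ error of an empirical distribution over $S$ categories gives $\mathbb{P}(\norm{\widehat{P}(\cdot|s,a)-P(\cdot|s,a)}_1\ge t)\le 2^S e^{-n_{s,a}t^2/2}$. A union bound over the $SA$ pairs and the choice $t\asymp\sqrt{(S+\log(SA/\delta))/N}$ then yields, with probability $1-\delta$,
\[
\max_{s,a}\norm{\widehat{P}(\cdot|s,a)-P(\cdot|s,a)}_1\le C\sqrt{\frac{S\log(SA/\delta)}{N}}.
\]

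Next I would bound each row. For a fixed $(s,a)$ and any $t$, Hölder's inequality together with the membership $\widehat{\pi}\in\Pi_l$ bounds the $(s,a)$-entry:
\[
\left|(\widehat{P}-P)(\widehat{V}^{\widehat{\pi}}_{t+1}-\widehat{V}^{\widehat{\pi}^\star}_{t+1})\right|_{s,a}\le \norm{\widehat{P}(\cdot|s,a)-P(\cdot|s,a)}_1\,\norm{\widehat{V}^{\widehat{\pi}}_{t+1}-\widehat{V}^{\widehat{\pi}^\star}_{t+1}}_\infty\le C\,\epsilon_\mathrm{opt}\sqrt{\frac{S\log(SA/\delta)}{N}},
\]
so the vector $\left|(\widehat{P}-P)(\widehat{V}^{\widehat{\pi}}_{t+1}-\widehat{V}^{\widehat{\pi}^\star}_{t+1})\right|$ is dominated entrywise by this scalar times $\mathbf{1}$. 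Since each $\Gamma^{\widehat{\pi}}_{h+1:t}$ is a product of row-stochastic transition matrices, it satisfies $\Gamma^{\widehat{\pi}}_{h+1:t}\mathbf{1}=\mathbf{1}$, hence applying $\Gamma^{\widehat{\pi}}_{h+1:t}$ preserves the bound. Finally, summing the at most $H$ terms over $t=h,\ldots,H$ produces exactly one extra factor of $H$, giving for every $h\in[H]$
\[
\sum_{t=h}^H\Gamma^{\widehat{\pi}}_{h+1:t}\left|(\widehat{P}-P)(\widehat{V}^{\widehat{\pi}}_{t+1}-\widehat{V}^{\widehat{\pi}^\star}_{t+1})\right|\le C\,\epsilon_\mathrm{opt}\,H\sqrt{\frac{S\log(SA/\delta)}{N}}\,\mathbf{1}=C\,\epsilon_\mathrm{opt}\sqrt{\frac{H^2S\log(SA/\delta)}{N}}\,\mathbf{1},
\]
which is the claim; the factor $\sqrt{H^2}=H$ is precisely the count of summands. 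The bound holds uniformly over $\widehat{\pi}\in\Pi_l$ because neither the concentration event nor the $\epsilon_\mathrm{opt}$ gap depends on the particular choice of $\widehat{\pi}$.

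The only genuinely delicate point is the $\ell_1$ concentration step, because the offline data are generated along trajectories rather than as independent transitions; I would handle this exactly as in the conditional-on-counts argument of \citet{yin2021near}, where conditioning on $\{n_{s,a}\}$ restores the multinomial/i.i.d.\ structure of the next-state samples at each $(s,a)$ and lets me replace $n_{s,a}$ by the lower bound $N$. Everything else is mechanical bookkeeping with row-stochasticity and the triangle inequality.
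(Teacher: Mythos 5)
Your proposal is correct and follows essentially the same route as the paper: bound the sum of at most $H$ row-stochastic terms by $H$ times the worst entry, apply the $\ell_1$--$\ell_\infty$ H\"older bound per row, invoke a uniform $\ell_1$ concentration for $\widehat{P}(\cdot|s,a)$ (conditional on counts, with a union bound over $(s,a)$), and use the definition of $\Pi_l$ to supply the $\epsilon_{\mathrm{opt}}$ factor. The only cosmetic difference is that you use the classical $2^S e^{-nt^2/2}$ bound for the $\ell_1$ deviation where the paper uses its per-coordinate Bernstein lemma (Lemma~\ref{lem:l1_upper}); both yield the same $C\sqrt{S\log(SA/\delta)/N}$ rate.
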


\begin{proof}

First, by vector induced matrix norm\footnote{For $A$ a matrix and $x$ a vector we have $\norm{Ax}_\infty\leq\norm{A}_\infty\norm{x}_\infty$.} we have 
\begin{align*}
\norm{\sum_{t=h}^H \Gamma_{h+1:t}^{\widehat{\pi}}\cdot\left|(\widehat{P}-{P}) (\widehat{V}^{\widehat{\pi}^\star}_{t+1}-\widehat{V}^{\widehat{\pi}}_{t+1})\right|}_\infty&\leq H\cdot \sup_t\norm{\Gamma_{h+1:t}^{\widehat{\pi}}}_\infty\norm{|(\widehat{P}-{P}) (\widehat{V}^{\widehat{\pi}^\star}_{t+1}-\widehat{V}^{{\widehat{\pi}}}_{t+1})|}_\infty\\
&\leq H\cdot \sup_t\norm{|(\widehat{P}-{P}) (\widehat{V}^{\widehat{\pi}^\star}_{t+1}-\widehat{V}^{{\widehat{\pi}}}_{t+1})|}_\infty\\
&= H\cdot \sup_{t,s,a}\left|(\widehat{P}-{P})(\cdot|s,a) (\widehat{V}^{\widehat{\pi}^\star}_{t+1}-\widehat{V}^{{\widehat{\pi}}}_{t+1})\right|\\
&\leq H\cdot \sup_{t,s,a} \norm{(\widehat{P}-{P})(\cdot|s,a)}_1\cdot \norm{\widehat{V}^{\widehat{\pi}^\star}_{t+1}-\widehat{V}^{{\widehat{\pi}}}_{t+1}}_\infty\cdot\mathbf{1}
\end{align*}
where the second inequality uses multi-step transition $\Gamma_{t+1:h-1}^\pi$ is row-stochastic. Note given $N$, therefore by Lemma~\ref{lem:l1_upper} and a union bound we have with probability $1-\delta$,
\[
\sup_{s,a} \norm{(\widehat{P}-{P})(\cdot|s,a)}_1\leq {C}(\sqrt{\frac{S\log(SA/\delta)}{N}}),
\]
(where $C$ absorb the higher order term and absolute constants) and using definition of $\Pi_l$ we have $\sup_t\norm{\widehat{V}^{\widehat{\pi}^\star}_{t}-\widehat{V}^{{\widehat{\pi}}}_{t}}_\infty\leq \epsilon_{\text {opt }}$.
This indicates
\[
\sup_{t,s,a} \norm{(\widehat{P}-{P})(\cdot|s,a)}_1\cdot \norm{\widehat{V}^{\widehat{\pi}^\star}_{t+1}-\widehat{V}^{{\widehat{\pi}}}_{t+1}}_\infty\cdot\mathbf{1}\leq C(\epsilon_{\text {opt }}\sqrt{\frac{S\log(SA/\delta)}{N}}\cdot\mathbf{1}),
\]
where $\mathbf{1}\in\R^{S}$ is all-one vector. Then multiple by $H$ to get the stated result.

\end{proof}

\subsection{Analyzing (\text{$\star$})} 
\textbf{Concentration on $\left(\widehat{P}-P\right)\hoV_h$.}\footnote{Here we use $\hoV_h$ instead of $\hoV_t$ since we later have $\hoV_{h,\{s,{u}^\star_t\}}$. We avoid the same $t$ twice in the expression to prevent confusion.} Since $\hP$ aggregates all data from different step so that $\hP$ and $\hoV_h$ are on longer independent, Bernstein inequality cannot be directly applied. We use the singleton-absorbing MDP ${M_{s,\{u_t^\star\}_{t=1}^H}}$ to handle the case (recall $u_t^\star:= V^\star_t(s)-V^\star_{t+1}(s)$ $\forall t\in[H]$).  Again, let us fix a state $s$ and $a\in\mathcal{A}$ be any action. Also, we use $P_{s,a}$ to denote row vector to avoid long expression. Then we have:
\begin{equation}\label{eqn:bern_absorb}
\begin{aligned}
&\left( \hP_{s,a}-P_{s,a} \right)\hoV_h=\left( \hP_{s,a}-P_{s,a} \right)\left(\hoV_h-\hoV_{h,\{s,u^\star_t\}}+\hoV_{h,\{s,u^\star_t\}}\right)\\
 =& \left( \hP_{s,a}-P_{s,a} \right)\left(\hoV_h-\hoV_{h,\{s,u^\star_t\}}\right)+ \left( \hP_{s,a}-P_{s,a} \right)\hoV_{h,\{s,u^\star_t\}}\\
\leq& \norm{\hP_{s,a}-P_{s,a}}_1\norm{\hoV_h-\hoV_{h,\{s,u^\star_t\}}}_\infty+\sqrt{\frac{2\log(4/\delta)}{N}}\sqrt{\Var_{s,a}(\hoV_{h,\{s,u^\star_t\}})}+\frac{2H\log(1/\delta)}{3N}\\
\leq& \norm{\hP_{s,a}-P_{s,a}}_1\norm{\hoV_h-\hoV_{h,\{s,u^\star_t\}}}_\infty+\sqrt{\frac{2\log(4/\delta)}{N}}\left(\sqrt{\Var_{s,a}(\hoV_{h})}+\sqrt{\Var_{s,a}(\hoV_{h,\{s,u^\star_t\}}-\hoV_{h})}\right)+\frac{2H\log(1/\delta)}{3N}\\
\leq& \norm{\hP_{s,a}-P_{s,a}}_1\norm{\hoV_h-\hoV_{h,\{s,u^\star_t\}}}_\infty+\sqrt{\frac{2\log(4/\delta)}{N}}\left(\sqrt{\Var_{s,a}(\hoV_{h})}+\sqrt{\norm{\hoV_{h,\{s,u^\star_t\}}-\hoV_{h}}^2_\infty}\right)+\frac{2H\log(1/\delta)}{3N}\\
=& \left(\norm{\hP_{s,a}-P_{s,a}}_1+\sqrt{\frac{2\log(4/\delta)}{N}}\right)\norm{\hoV_h-\hoV_{h,\{s,u^\star_t\}}}_\infty+\sqrt{\frac{2\log(4/\delta)}{N}}\sqrt{\Var_{s,a}(\hoV_{h})}+\frac{2H\log(1/\delta)}{3N}\\
\end{aligned}
\end{equation}

where the first inequality uses Bernstein inequality (Lemma~\ref{lem:bernstein_ineq}), the second inequality uses $\sqrt{\Var(\cdot)}$ is norm (norm triangle inequality). Now we treat $\norm{\hP_{s,a}-P_{s,a}}_1$ and $\norm{\hoV_h-\hoV_{h,\{s,u^\star_t\}}}_\infty$ separately.

\textbf{For $\norm{\hP_{s,a}-P_{s,a}}_1$.} Indeed, by Lemma~\ref{lem:l1_upper} again $\norm{\hP_{s,a}-P_{s,a}}_1\leq\tilde{O}(\sqrt{\frac{S\log(S/\delta)}{N}})$ and by a union bound we obtain w.p., $1-\delta$
\begin{equation}\label{eqn:l1}
\sup_{s,a}\norm{\hP_{s,a}-P_{s,a}}_1\leq C\sqrt{\frac{S\log(SA/\delta)}{N}}.
\end{equation}
where $C$ absorbs the higher order term and constants.

\textbf{For $\norm{\hoV_h-\hoV_{h,\{s,u^\star_t\}}}_\infty$.} Note if we set $\widehat{u}^\star_t=\widehat{V}^\star_t(s)-\widehat{V}^\star_{t+1}(s)$, then by Lemma~\ref{lem:smdp_prop}
\[
\hoV_h=\hoV_{h,\{s,\hat{u}^\star_t\}}
\]
Next since $\hoV_{h,\{s,\hat{u}^\star_t\}}(\tilde{s})=\max_a \widehat{Q}^\star_{h,\{s,\hat{u}^\star_t\}}(\tilde{s},a)$ $\forall \tilde{s}\in\mathcal{S}$, by generic inequality $|\max f-\max g|\leq \max|f-g|$, we have $|\hoV_{h,\{s,\hat{u}^\star_t\}}(\tilde{s})-\hoV_{h,\{s,{u}^\star_t\}}(\tilde{s})|\leq \max_a |\widehat{Q}^\star_{h,\{s,\hat{u}^\star_t\}}(\tilde{s},a)-\widehat{Q}^\star_{h,\{s,{u}^\star_t\}}(\tilde{s},a)|$, taking $\max_{\tilde{s}}$ on both sides, we obtain exactly
\[
\norm{\hoV_{h,\{s,\hat{u}^\star_t\}}-\hoV_{h,\{s,{u}^\star_t\}}}_\infty\leq \norm {\widehat{Q}^\star_{h,\{s,\hat{u}^\star_t\}}-\widehat{Q}^\star_{h,\{s,{u}^\star_t\}}}_\infty
\]
then by Lemma~\ref{lem:q_diff}, 
\begin{equation}\label{eqn:higher_diff}
\norm{\hoV_h-\hoV_{h,\{s,u^\star_t\}}}_\infty\leq \norm {\widehat{Q}^\star_{h,\{s,\hat{u}^\star_t\}}-\widehat{Q}^\star_{h,\{s,{u}^\star_t\}}}_\infty\leq H\max_t\left|\hat{u}^\star_t-u^\star_t\right|,
\end{equation}
 Recall 
\[
\hat{u}^\star_t-u^\star_t=\widehat{V}^\star_t(s)-\widehat{V}^\star_{t+1}(s)-\left(V^\star_t(s)-V^\star_{t+1}(s)\right).
\]
Now we denote 
\[
\Delta_s:= \max_t|\hat{u}^\star_t-u^\star_t|=\max_t\left|\widehat{V}^\star_t(s)-\widehat{V}^\star_{t+1}(s)-\left(V^\star_t(s)-V^\star_{t+1}(s)\right)\right|,
\]
then $\Delta_s$ itself is a scalar and a random variable.

To sum up, by \eqref{eqn:bern_absorb}, \eqref{eqn:l1} and \eqref{eqn:higher_diff} and a union bound we have 
\begin{lemma}\label{lem:inter_bern}
	Fix $N>0$. With probability $1-\delta$, element-wisely, for all $h\in[H]$,
	\begin{align*}
	\left|\left( \hP-P \right)\hoV_h\right|\leq C\sqrt{\frac{S\log(HSA/\delta)}{N}}\cdot H\max_s\Delta_s\cdot \mathbf{1}+\sqrt{\frac{2\log(4HSA/\delta)}{N}}\sqrt{\Var_{P}(\hoV_{h})}+\frac{2H\log(HSA/\delta)}{3N}\cdot\mathbf{1}
	\end{align*}
	
\end{lemma}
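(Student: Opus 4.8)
The plan is to assemble the claim directly from the three facts already in hand: the Bernstein-plus-absorbing decomposition \eqref{eqn:bern_absorb}, the uniform $\ell_1$ transition estimate \eqref{eqn:l1}, and the higher-order value-gap bound \eqref{eqn:higher_diff}. I would fix a pair $(s,a)$ together with its own $s$-singleton-absorbing MDP and begin from the last line of \eqref{eqn:bern_absorb}, namely
\[
\left( \hP_{s,a}-P_{s,a} \right)\hoV_h \le \left(\norm{\hP_{s,a}-P_{s,a}}_1+\sqrt{\tfrac{2\log(4/\delta)}{N}}\right)\norm{\hoV_h-\hoV_{h,\{s,u^\star_t\}}}_\infty+\sqrt{\tfrac{2\log(4/\delta)}{N}}\sqrt{\Var_{s,a}(\hoV_{h})}+\tfrac{2H\log(1/\delta)}{3N},
\]
which is valid because the construction of $u^\star_t$ makes $\hP_{s,a}$ and $\hoV_{h,\{s,u^\star_t\}}$ independent and hence Bernstein applicable.

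Next I would insert the two uniform bounds. For the parenthetical coefficient, \eqref{eqn:l1} gives $\norm{\hP_{s,a}-P_{s,a}}_1 \le C\sqrt{S\log(SA/\delta)/N}$; since $\sqrt{2\log(4/\delta)/N}$ carries no $S$ factor it is dominated by this term, so the whole parenthesis is $O(\sqrt{S\log(SA/\delta)/N})$ and folds into the constant. For the $\ell_\infty$ factor, \eqref{eqn:higher_diff} and the definition of $\Delta_s$ give $\norm{\hoV_h-\hoV_{h,\{s,u^\star_t\}}}_\infty \le H\Delta_s \le H\max_s\Delta_s$. Their product is exactly the first term $C\sqrt{S\log(HSA/\delta)/N}\cdot H\max_s\Delta_s$, while the variance term and the $1/N$ remainder are carried over verbatim.

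What remains is probabilistic bookkeeping. The displayed inequality holds for fixed $(s,a)$ and fixed $h$ with failure probability a constant multiple of $\delta$, so I would take a union bound over all $SA$ pairs and all $H$ steps, turning each $\log(1/\delta)$ into $\log(HSA/\delta)$ and $4/\delta$ into $4HSA/\delta$ in the variance term; the same $(s,a)$-union already underlies \eqref{eqn:l1}. To upgrade the one-sided estimate to the absolute value $|(\hP-P)\hoV_h|$ in the statement, I would rerun the identical argument with $-\hoV_h$ in place of $\hoV_h$: the variance and the $\ell_\infty$ gap are invariant under this sign flip, so only an extra factor of $2$ enters the union, and the stated rates are untouched. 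Reading the resulting bound coordinate-by-coordinate yields the element-wise vector inequality.

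The only delicate point — and the one I would state explicitly — is that the independence justifying Bernstein must hold uniformly across the union. For each row $(s,a)$ the absorbing MDP is the one that makes that same $s$ absorbing, so $\hoV_{h,\{s,u^\star_t\}}$ depends only on the empirical transition rows out of states other than $s$ and is therefore independent of $\hP_{s,a}$. Because the reference rewards $u^\star_t = V^\star_t(s)-V^\star_{t+1}(s)$ are deterministic (built from the true $V^\star$), this independence is genuine for every $(s,a)$ simultaneously; the randomness of $\Delta_s$ enters only through the deterministic inequality \eqref{eqn:higher_diff} and is quarantined in the first term, so it never contaminates the concentration step.
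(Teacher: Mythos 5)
Your proposal is correct and follows essentially the same route as the paper, which obtains this lemma precisely by combining the Bernstein-plus-absorbing decomposition \eqref{eqn:bern_absorb}, the uniform $\ell_1$ bound \eqref{eqn:l1}, and the value-gap bound \eqref{eqn:higher_diff} with a union bound over all $(s,a,h)$. Your added care about the two-sided (absolute value) version and about why $\hP_{s,a}$ is independent of $\hoV_{h,\{s,u^\star_t\}}$ (the reference rewards $u^\star_t$ being deterministic functions of the true $V^\star$) only makes explicit what the paper leaves implicit.
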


Now plug Lemma~\ref{lem:inter_bern} back into ($\star$) and combine Lemma~\ref{lem:epi_opt_bound}, we receive:

\begin{align*}
&\left|\widehat{Q}^{\widehat{\pi}}_h-Q^{\widehat{\pi}}_h\right|\\
\leq& 
\sum_{t=h}^H\Gamma_{h+1:t}^{\widehat{\pi}}\left(C\sqrt{\frac{S\log(HSA/\delta)}{N}}\cdot H\max_s\Delta_s\cdot \mathbf{1}+\sqrt{\frac{2\log(4HSA/\delta)}{N}}\sqrt{\Var_{P}(\hoV_{t+1})}+\frac{2H\log(HSA/\delta)}{3N}\cdot\mathbf{1}\right)\\
+&C\epsilon_\mathrm{opt}\cdot \sqrt{\frac{H^2S\log(SA/\delta)}{N}}\cdot\mathbf{1}\\
\leq& \sum_{t=h}^H\Gamma_{h+1:t}^{\widehat{\pi}}\sqrt{\frac{2\log(4HSA/\delta)}{N}}\sqrt{\Var_{P}(\hoV_{t+1})} +CH^2\sqrt{\frac{S\log(HSA/\delta)}{N}}\cdot \max_s\Delta_s\cdot \mathbf{1}+\frac{2H^2\log(HSA/\delta)}{3N}\cdot\mathbf{1}\\
+&C\epsilon_\mathrm{opt}\cdot \sqrt{\frac{H^2S\log(SA/\delta)}{N}}\cdot\mathbf{1}\\
\end{align*}

Next note
\begin{equation}\label{eqn:var}
\begin{aligned}
&\sqrt{\Var_{P}(\hoV_{h})}:=\sqrt{\Var_{P}\left(\widehat{V}^{\widehat{\pi}^\star}_{h}\right)}=\sqrt{\Var_{P}\left(\widehat{V}^{\widehat{\pi}^\star}_{h}-\widehat{V}^{\widehat{\pi}}_h+\widehat{V}^{\widehat{\pi}}_h\right)}\\
\leq& \sqrt{\Var_{P}\left(\widehat{V}^{\widehat{\pi}}_h\right)}+\sqrt{\Var_{P}\left(\widehat{V}^{\widehat{\pi}^\star}_{h}-\widehat{V}^{\widehat{\pi}}_h\right)}\leq \sqrt{\Var_{P}\left(\widehat{V}^{\widehat{\pi}}_h\right)}+\norm{\widehat{V}^{\widehat{\pi}^\star}_{h}-\widehat{V}^{\widehat{\pi}}_h}_\infty\\
\leq& \sqrt{\Var_{P}\left(\widehat{V}^{\widehat{\pi}}_h\right)}+\epsilon_{\text {opt }}\cdot\mathbf{1}\leq \sqrt{\Var_{P}\left(V^{\widehat{\pi}}_h\right)}+\sqrt{\Var_{P}\left(\widehat{V}^{\widehat{\pi}}_{h}-{V}^{\widehat{\pi}}_h\right)}+\epsilon_{\text {opt }}\cdot\mathbf{1}\\
\leq& \sqrt{\Var_{P}\left(V^{\widehat{\pi}}_h\right)}+\norm{\widehat{V}^{\widehat{\pi}}_{h}-{V}^{\widehat{\pi}}_h}_\infty+\epsilon_{\text {opt }}\cdot\mathbf{1}\leq \sqrt{\Var_{P}\left(V^{\widehat{\pi}}_h\right)}+\norm{\widehat{Q}^{\widehat{\pi}}_{h}-{Q}^{\widehat{\pi}}_h}_\infty+\epsilon_{\text {opt }}\cdot\mathbf{1}\\
\end{aligned}
\end{equation}

Plug \eqref{eqn:var} back to above we obtain $\forall h\in[H]$,
\begin{equation}\label{eqn:decomp}
\begin{aligned}
&\left|\widehat{Q}^{\widehat{\pi}}_h-Q^{\widehat{\pi}}_h\right|
\leq \sqrt{\frac{2\log(4HSA/\delta)}{N}}\sum_{t=h}^H\Gamma_{h+1:t}^{\widehat{\pi}}\left(\sqrt{\Var_{P}\left(V^{\widehat{\pi}}_{t+1}\right)}+\norm{\widehat{Q}^{\widehat{\pi}}_{t+1}-{Q}^{\widehat{\pi}}_{t+1}}_\infty+\epsilon_{\text {opt }}\cdot\mathbf{1}\right)\\
& +CH^2\sqrt{\frac{S\log(HSA/\delta)}{N}}\cdot \max_s\Delta_s\cdot \mathbf{1}+\frac{2H^2\log(HSA/\delta)}{3N}\cdot\mathbf{1}+C\epsilon_\mathrm{opt}\cdot \sqrt{\frac{H^2S\log(SA/\delta)}{N}}\cdot\mathbf{1}\\
&\leq \sqrt{\frac{2\log(4HSA/\delta)}{N}}\sum_{t=h}^H\Gamma_{h+1:t}^{\widehat{\pi}}\sqrt{\Var_{P}\left(V^{\widehat{\pi}}_{t+1}\right)}+\sqrt{\frac{2\log(4HSA/\delta)}{N}}\sum_{t=h}^H\norm{\widehat{Q}^{\widehat{\pi}}_{t+1}-{Q}^{\widehat{\pi}}_{t+1}}_\infty\\
& +CH^2\sqrt{\frac{S\log(HSA/\delta)}{N}}\cdot \max_s\Delta_s\cdot \mathbf{1}+\frac{2H^2\log(HSA/\delta)}{3N}\cdot\mathbf{1}+C_1\epsilon_\mathrm{opt}\cdot \sqrt{\frac{H^2S\log(SA/\delta)}{N}}\cdot\mathbf{1}\\
\end{aligned}
\end{equation}
Apply Lemma~\ref{lem:H3toH2} and the coarse uniform bound (Lemma~\ref{lem:crude_u_b}) we obtain the following lemma:

\begin{lemma}\label{lem:recursion}
	Given $N>0$ and $\epsilon_{\text {opt }}\leq \sqrt{H/S}$.  With probability $1-\delta$, for all $h\in[H]$, 
	\begin{align*}
	\norm{\widehat{Q}^{\widehat{\pi}}_h-Q^{\widehat{\pi}}_h}_\infty
	\leq \sqrt{\frac{C_0 H^3\log(4HSA/\delta)}{N}}+\sqrt{\frac{2\log(4HSA/\delta)}{N}}\sum_{t=h}^H\norm{\widehat{Q}^{\widehat{\pi}}_{t+1}-{Q}^{\widehat{\pi}}_{t+1}}_\infty +C'H^4\frac{S\log(HSA/\delta)}{N}
	\end{align*}
\end{lemma}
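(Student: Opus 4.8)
The plan is to begin directly from the element-wise bound \eqref{eqn:decomp}, which has already done the hard work of separating the error into four groups: a weighted sum of standard deviations $\sum_{t=h}^H\Gamma_{h+1:t}^{\widehat{\pi}}\sqrt{\Var_P(V^{\widehat{\pi}}_{t+1})}$, a self-referential recursion term $\sum_{t=h}^H\norm{\widehat{Q}^{\widehat{\pi}}_{t+1}-Q^{\widehat{\pi}}_{t+1}}_\infty$, a term carrying $\max_s\Delta_s$, and two residual pieces (the Bernstein $1/N$ term and the $\epsilon_{\mathrm{opt}}$ term). I would bound each group separately and then take $\norm{\cdot}_\infty$ of the whole display, using throughout that every $\Gamma_{h+1:t}^{\widehat{\pi}}$ is row-stochastic, so that $\norm{\Gamma x}_\infty\le\norm{x}_\infty$ and $\Gamma(c\,\mathbf{1})=c\,\mathbf{1}$ for any scalar $c$; this lets the $\Gamma$ factors disappear from the constant-vector contributions.

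The key step is to control the standard-deviation sum via Lemma~\ref{lem:H3toH2}, the sharp total-variance estimate $\sum_{t=h}^H\Gamma_{h+1:t}^{\widehat{\pi}}\sqrt{\Var_P(V^{\widehat{\pi}}_{t+1})}\le\sqrt{C_0H^3}\,\mathbf{1}$. Multiplying by the prefactor $\sqrt{2\log(4HSA/\delta)/N}$ converts this group into the leading term $\sqrt{C_0H^3\log(4HSA/\delta)/N}$. This is exactly the place where the improvement over a naive per-step bound is realized: replacing each $\sqrt{\Var}$ by its trivial upper bound $H$ would leave $\sum_{t=h}^H\Gamma\,H\le H^2$ and hence an extra factor $\sqrt{H}$, so getting the total-variance accounting right is the crux of the whole result. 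Since Lemma~\ref{lem:H3toH2} is already available, here I only need to invoke it with the correct constant absorbed into $C_0$.

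Next I would push the remaining pieces into the higher-order term. By the coarse uniform bound Lemma~\ref{lem:crude_u_b} we have $\max_s\Delta_s\le 2\max_t\norm{\widehat{V}^\star_t-V^\star_t}_\infty\lesssim H^2\sqrt{S/N}$, so the $\Delta_s$-contribution is at most $CH^2\sqrt{S\log(HSA/\delta)/N}\cdot H^2\sqrt{S/N}\le C'H^4S\log(HSA/\delta)/N$, where I use $\log(HSA/\delta)\ge 1$ to upgrade $\sqrt{\log}$ to $\log$; this is an $O(1/N)$ (not $O(1/\sqrt{N})$) term and therefore genuinely higher order. The Bernstein residual $\tfrac{2H^2\log(HSA/\delta)}{3N}$ is of strictly lower order still and folds into the same $C'H^4S\log/N$ bucket. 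For the $\epsilon_{\mathrm{opt}}$ term I would plug in the hypothesis $\epsilon_{\mathrm{opt}}\le\sqrt{H/S}$, giving $C_1\epsilon_{\mathrm{opt}}\sqrt{H^2S\log(SA/\delta)/N}\le C_1\sqrt{H^3\log(SA/\delta)/N}$, which is of the same order as the leading term and hence merged into $\sqrt{C_0H^3\log/N}$.

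Taking $\norm{\cdot}_\infty$ of \eqref{eqn:decomp} and collecting the three surviving groups—the leading $\sqrt{H^3\log/N}$ term, the recursion term $\sqrt{2\log(4HSA/\delta)/N}\sum_{t=h}^H\norm{\widehat{Q}^{\widehat{\pi}}_{t+1}-Q^{\widehat{\pi}}_{t+1}}_\infty$, and the higher-order $C'H^4S\log/N$ term—yields the claimed inequality for all $h\in[H]$. The only mild subtlety I would flag is that these bounds must all hold simultaneously on one $1-\delta$ event; this is fine because the constituent estimates (the $\ell_1$/Bernstein concentration from \eqref{eqn:l1} and \eqref{eqn:bern_absorb}, Lemma~\ref{lem:H3toH2}, and Lemma~\ref{lem:crude_u_b}) were each established with their own failure probabilities, and a union bound over the $O(HSA)$ relevant $(t,s,a)$ events keeps the total failure probability within $\delta$ after the logarithmic factors are rescaled to $\log(HSA/\delta)$—precisely the $\iota$ appearing in the final rate.
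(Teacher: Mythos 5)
Your proposal is correct and follows essentially the same route as the paper's own proof: start from \eqref{eqn:decomp}, apply Lemma~\ref{lem:H3toH2} to the variance sum to get the $\sqrt{C_0H^3\log(4HSA/\delta)/N}$ leading term, bound $\max_s\Delta_s$ via the crude uniform bound (Lemma~\ref{lem:crude_u_b}) to obtain the $C'H^4S\log(HSA/\delta)/N$ higher-order term, absorb the $\epsilon_{\text{opt}}$ piece using $\epsilon_{\text{opt}}\leq\sqrt{H/S}$, and close with a union bound and $\norm{\cdot}_\infty$. Your bookkeeping is in fact slightly cleaner than the paper's (merging the $\epsilon_{\text{opt}}$ contribution, which is $O(\sqrt{H^3\log(SA/\delta)/N})$, into the leading constant rather than the paper's imprecise ``renames $C'=C'+C_1$''), but this is a cosmetic difference, not a different argument.
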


\begin{proof}
Since
\begin{equation}\label{eqn:delta_cb}
\begin{aligned}
\Delta_s&:= \max_t|\hat{u}^\star_t-u^\star_t|=\max_t\left|\widehat{V}^\star_t(s)-\widehat{V}^\star_{t+1}(s)-\left(V^\star_t(s)-V^\star_{t+1}(s)\right)\right|\\
&\leq 2\cdot \max_t \left|\widehat{V}^\star_t(s)-V^\star_t(s)\right|\\
&= 2\cdot \max_t \left|\max_\pi \widehat{V}^\pi_t(s)-\max_\pi V^\pi_t(s)\right|\\
&\leq 2\cdot \max_{\pi\in\Pi_g,t\in[H]}\norm{\widehat{V}_t^\pi-V_t^\pi}_\infty\leq C\cdot H^2 \sqrt{\frac{S\log(HSA/\delta)}{N}}
\end{aligned}
\end{equation}
where the last inequality uses Lemma~\ref{lem:crude_u_b}. Then apply union bound w.p. $1-\delta/2$, we obtain $\max_s \Delta_s\leq C\cdot H^2 \sqrt{\frac{S\log(HSA/\delta)}{N}}$. Note \eqref{eqn:decomp} holds with probability $1-\delta/2$, therefore plug above into \eqref{eqn:decomp} we obtain w.p. $1-\delta$,
\begin{align*}
	&\left|\widehat{Q}^{\widehat{\pi}}_h-Q^{\widehat{\pi}}_h\right|
	\leq \sqrt{\frac{2\log(4HSA/\delta)}{N}}\sum_{t=h}^H\Gamma_{h+1:t}^{\widehat{\pi}}\sqrt{\Var_{P}\left(V^{\widehat{\pi}}_{t+1}\right)}+\sqrt{\frac{2\log(4HSA/\delta)}{N}}\sum_{t=h}^H\norm{\widehat{Q}^{\widehat{\pi}}_{t+1}-{Q}^{\widehat{\pi}}_{t+1}}_\infty\\
	& +C'H^4\frac{S\log(HSA/\delta)}{N}\cdot \mathbf{1}+C_1\epsilon_\mathrm{opt}\cdot \sqrt{\frac{H^2S\log(SA/\delta)}{N}}\cdot\mathbf{1}\\
	&\leq \left[\sqrt{\frac{C_0H^3\log(4HSA/\delta)}{N}}+\sqrt{\frac{2\log(4HSA/\delta)}{N}}\sum_{t=h}^H\norm{\widehat{Q}^{\widehat{\pi}}_{t+1}-{Q}^{\widehat{\pi}}_{t+1}}_\infty +C'H^4\frac{S\log(HSA/\delta)}{N}\right]\cdot \mathbf{1},\\
\end{align*}
where the last inequality uses Lemma~\ref{lem:H3toH2} and $\epsilon_{\text {opt }}\leq \sqrt{H/S}$ and renames $C'=C'+C_1$. Take $\norm{\cdot}_\infty$ then obtain the result.
\end{proof}

\begin{lemma}\label{lem:inter_final}
	Given $N>0$. Define $C^{\prime\prime}:=2\cdot\max(\sqrt{C_0},C')$ where $C'$ is the universal constant in Lemma~\ref{lem:recursion}. When $N\geq 8H^2\log(4HSA/\delta)$, then with probability $1-\delta$, $\forall h\in[H]$,
		\begin{equation}\label{eqn:inter_final}
		\begin{aligned}
	\norm{\widehat{Q}^{\widehat{\pi}}_h-Q^{\widehat{\pi}}_h}_\infty
	\leq C^{\prime\prime}\sqrt{\frac{H^3\log(4HSA/\delta)}{N}}+C^{\prime\prime}\frac{H^4S\log(HSA/\delta)}{N}.\\
	\norm{\widehat{Q}^{{\pi}^\star}_h-Q^{{\pi}^\star}_h}_\infty
	\leq C^{\prime\prime}\sqrt{\frac{H^3\log(4HSA/\delta)}{N}}+C^{\prime\prime}\frac{H^4S\log(HSA/\delta)}{N}.\\
	\end{aligned}
	\end{equation}
	
\end{lemma}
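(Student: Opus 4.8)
The plan is to treat the bound in Lemma~\ref{lem:recursion} purely as an abstract backward recursion and solve it. Writing $e_h := \norm{\widehat{Q}^{\widehat{\pi}}_h - Q^{\widehat{\pi}}_h}_\infty$, $\beta := \sqrt{2\log(4HSA/\delta)/N}$, and
\[
A := \sqrt{\frac{C_0 H^3 \log(4HSA/\delta)}{N}} + C' \frac{H^4 S \log(HSA/\delta)}{N},
\]
Lemma~\ref{lem:recursion} reads $e_h \leq A + \beta\sum_{t=h}^H e_{t+1}$ for every $h\in[H]$, with the convention $e_{H+1}=0$. The target is the quantity $B := C''\sqrt{H^3\log(4HSA/\delta)/N} + C'' H^4 S\log(HSA/\delta)/N$, and since $C'' = 2\max(\sqrt{C_0},C')$ we have $2\sqrt{C_0}\le C''$ and $2C'\le C''$, hence $2A \le B$. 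So it suffices to show $e_h \le 2A$ for all $h$.

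The decisive observation is that the sample-size hypothesis $N\ge 8H^2\log(4HSA/\delta)$ forces the per-step amplification factor to be at most one half:
\[
\beta H = H\sqrt{\frac{2\log(4HSA/\delta)}{N}} \le H\sqrt{\frac{2\log(4HSA/\delta)}{8H^2\log(4HSA/\delta)}} = \frac12 .
\]
Given this, I would close the recursion by a self-bounding (max) argument, which avoids the term-counting bookkeeping of direct unrolling. Let $E := \max_{h\in[H]} e_h$ and let $h^\star$ attain the maximum; each of the at most $H$ summands $e_{t+1}$ is bounded by $E$, so $e_{h^\star} \le A + \beta H E \le A + \tfrac12 E$, i.e.\ $E \le A + \tfrac12 E$, which rearranges to $E \le 2A \le B$. (A backward induction on $h$ with hypothesis $e_t\le B$ for $t>h$ gives the same conclusion via $A \le B/2$ and $\beta H \le 1/2$; either route is routine once $\beta H \le 1/2$ is secured.) This yields the first displayed inequality.

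For the $\pi^\star$ version I would observe that every step of Section~\ref{sec:dec} through Lemma~\ref{lem:recursion} is policy-agnostic in the relevant sense: the telescoping identity $\widehat{Q}^{\pi^\star}_h - Q^{\pi^\star}_h = \sum_{t=h}^H \Gamma^{\pi^\star}_{h+1:t}(\widehat{P}-P)\widehat{V}^{\pi^\star}_{t+1}$ holds for the fixed policy $\pi^\star$, the singleton-absorbing concentration of Lemma~\ref{lem:inter_bern} applies to $(\widehat{P}-P)\widehat{V}_h$ for any empirical value function, and the total-variance estimate $\sum_t \Gamma^{\pi^\star}_{h+1:t}\sqrt{\Var_P(V^{\pi^\star}_{t+1})}\le\sqrt{H^3}$ holds for $\pi^\star$ as well; moreover the $(\star\star)$ and $\epsilon_{\mathrm{opt}}$ terms are simply absent here, so the same recursion holds with constants no larger, and the identical max-argument gives the second bound. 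I do not expect a genuine obstacle in this lemma, since all the hard work (decoupling $\widehat{P}$ from $\widehat{V}^\star$ via the singleton-absorbing MDP, and the $H^3$ total-variance bound) is already packaged inside Lemma~\ref{lem:recursion}; the only points demanding care are the constant bookkeeping that makes $2A\le B$ hold with the stated $C''$, and the use of the max-argument (rather than naive term-by-term unrolling) so that no spurious factor of $H$ creeps into the leading $\sqrt{H^3/N}$ term.
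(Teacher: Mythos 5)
Your proof of the first inequality is correct and is essentially the paper's own argument: the paper runs a backward induction on $h$ with hypothesis $\norm{\widehat{Q}^{\widehat{\pi}}_{t}-Q^{\widehat{\pi}}_{t}}_\infty\leq B$ for all $t>h$, bounds $\sum_{t=h}^H e_{t+1}\leq H\max_{t>h}e_t$, and closes using exactly your two observations, namely $\beta H\leq 1/2$ (from $N\geq 8H^2\log(4HSA/\delta)$) and $2A\leq B$ (from $C''=2\max(\sqrt{C_0},C')$); your fixed-point/max formulation is an equivalent repackaging of that induction.

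The gap is in your justification of the second inequality. Lemma~\ref{lem:inter_bern} is \emph{not} policy-agnostic: it is a statement about $\hoV_h=\widehat{V}^{\widehat{\pi}^\star}_h$, the \emph{optimal} value of the empirical MDP, and its proof cannot be run for $\widehat{V}^{\pi^\star}_h$. The singleton-absorbing construction needs two facts that are special to optimal values: (i) nonnegativity of the absorbing rewards $\hat{u}^\star_t=\widehat{V}^\star_t(s)-\widehat{V}^\star_{t+1}(s)$ (Lemma~\ref{lem:pos_diff}, whose proof exploits the freedom to re-choose the policy, i.e.\ optimality), and (ii) the identity $\hoV_h=\hoV_{h,\{s,\hat{u}^\star_t\}}$ (Lemma~\ref{lem:smdp_prop}, proved through the Bellman \emph{optimality} equation). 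Neither holds for $\widehat{V}^{\pi^\star}_h$: this function is still random (it depends on $\widehat{P}$, so the decoupling problem is still present), but $\pi^\star$ is generally not optimal for $\widehat{M}$, and $\widehat{V}^{\pi^\star}_t(s)-\widehat{V}^{\pi^\star}_{t+1}(s)$ can be negative for a fixed non-stationary policy. Consequently your claim that ``the same recursion holds with constants no larger, with the $(\star\star)$ and $\epsilon_{\mathrm{opt}}$ terms simply absent'' does not follow: running the pipeline for $\pi^\star$ forces you to concentrate against $\widehat{V}^{\widehat{\pi}^\star}_{t+1}$ and hence to re-introduce a $(\star\star)$-type term $(\widehat{P}-P)(\widehat{V}^{\pi^\star}_{t+1}-\widehat{V}^{\widehat{\pi}^\star}_{t+1})$, which you cannot bound by $\epsilon_{\mathrm{opt}}$ since $\pi^\star\in\Pi_l$ is not guaranteed. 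The repair is routine but must be stated: bound $\norm{\widehat{V}^{\pi^\star}_{t+1}-\widehat{V}^{\widehat{\pi}^\star}_{t+1}}_\infty\leq \norm{\widehat{V}^{\pi^\star}_{t+1}-V^{\pi^\star}_{t+1}}_\infty+\norm{V^{\star}_{t+1}-\widehat{V}^{\star}_{t+1}}_\infty\lesssim H^2\sqrt{S\log(HSA/\delta)/N}$ via Lemma~\ref{lem:crude_u_b} (using $V^{\pi^\star}=V^\star$), so that after multiplication by $\norm{\hP_{s,a}-P_{s,a}}_1\lesssim\sqrt{S\log(SA/\delta)/N}$ this correction lands inside the $H^4S\log(HSA/\delta)/N$ budget; alternatively, use the reversed decomposition $\widehat{Q}^{\pi^\star}_h-Q^{\pi^\star}_h=\sum_{t=h}^H\widehat{\Gamma}^{\pi^\star}_{h+1:t}(\widehat{P}-P)V^{\pi^\star}_{t+1}$, in which the deterministic $V^{\pi^\star}_{t+1}$ is independent of $\widehat{P}$, so Bernstein applies with no absorbing-MDP machinery at all --- this is what the paper means when it says the second claim is ``really just a non-uniform point-wise OPE'' in the style of Lemma~3.4 of \cite{yin2021near}.
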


\begin{proof}
	We prove by backward induction. For $h=H$, by Lemma~\ref{lem:recursion}
	\begin{align*}
		\norm{\widehat{Q}^{\widehat{\pi}}_H-Q^{\widehat{\pi}}_H}_\infty
	&\leq \sqrt{\frac{C_0H^3\log(4HSA/\delta)}{N}}+\sqrt{\frac{2\log(4HSA/\delta)}{N}}\norm{\widehat{Q}^{\widehat{\pi}}_{H+1}-{Q}^{\widehat{\pi}}_{H+1}}_\infty +C'H^4\frac{S\log(HSA/\delta)}{N}\\
	&=\sqrt{\frac{C_0H^3\log(4HSA/\delta)}{N}}+0 +C'H^4\frac{S\log(HSA/\delta)}{N}\\
	&\leq C^{\prime\prime}\sqrt{\frac{H^3\log(4HSA/\delta)}{N}}+C''H^4\frac{S\log(HSA/\delta)}{N},\\
	\end{align*}
	for general $h$, by condition we have $H\sqrt{\frac{2\log(4HSA/\delta)}{N}}\leq 1/2$, therefore by Lemma~\ref{lem:recursion}
	\begin{align*}
	&\norm{\widehat{Q}^{\widehat{\pi}}_h-Q^{\widehat{\pi}}_h}_\infty
	\leq \sqrt{\frac{C_0H^3\log(4HSA/\delta)}{N}}+\sqrt{\frac{2\log(4HSA/\delta)}{N}}\sum_{t=h}^H\norm{\widehat{Q}^{\widehat{\pi}}_{t+1}-{Q}^{\widehat{\pi}}_{t+1}}_\infty +C'H^4\frac{S\log(HSA/\delta)}{N}\\
	&\leq \sqrt{\frac{C_0H^3\log(4HSA/\delta)}{N}}+H\sqrt{\frac{2\log(4HSA/\delta)}{N}}\max_{t+1}\norm{\widehat{Q}^{\widehat{\pi}}_{t+1}-{Q}^{\widehat{\pi}}_{t+1}}_\infty +C'H^4\frac{S\log(HSA/\delta)}{N}\\
	&\leq \sqrt{\frac{C_0H^3\log(4HSA/\delta)}{N}} +C'H^4\frac{S\log(HSA/\delta)}{N}\\
	&+\frac{1}{2}\left(C^{\prime\prime}\sqrt{\frac{H^3\log(4HSA/\delta)}{N}}+C^{\prime\prime}\frac{H^4S\log(HSA/\delta)}{N}\right)\\
	&\leq C^{\prime\prime}\sqrt{\frac{H^3\log(4HSA/\delta)}{N}}+C^{\prime\prime}\frac{H^4S\log(HSA/\delta)}{N}
	\end{align*}
	
	The proof of the second claim is even easier since $\pi^\star$ is no longer a random policy and it is really just a non-uniform point-wise OPE. There are multiple ways to prove it and we leave it as an exercise to avoid redundancy: 1. Follow the same proving pipeline as $\norm{\widehat{Q}^{\widehat{\pi}}_h-Q^{\widehat{\pi}}_h}_\infty$ used; 2. Mimic the procedure of point-wise OPE result in Lemma~3.4. in \cite{yin2021near}. 
\end{proof}

\begin{remark}
Note the higher order term has dependence $H^4S$, which is somewhat unsatisfactory. We use the \emph{recursion-back} trick to further reduce it to $H^{3.5}S^{0.5}$. 
\end{remark}

\begin{lemma}\label{lem:final}
	Given $N>0$. There exists universal constants $C_1,C_2$ such that when $N\geq C_1H^2\log(HSA/\delta)$, then with probability $1-\delta$, $\forall h\in[H]$,
	\begin{equation}\label{eqn:final}
	\norm{\widehat{Q}^{\widehat{\pi}}_h-Q^{\widehat{\pi}}_h}_\infty
	\leq C_2\sqrt{\frac{H^3\log(HSA/\delta)}{N}}+C_2\frac{H^3\sqrt{HS}\log(HSA/\delta)}{N}.
	\end{equation}
	and 
	\begin{align*}
	\norm{\widehat{Q}^{{\pi}^\star}_h-Q^{{\pi}^\star}_h}_\infty
	\leq C_2\sqrt{\frac{H^3\log(HSA/\delta)}{N}}+C_2\frac{H^3\sqrt{HS}\log(HSA/\delta)}{N}.
	\end{align*}
\end{lemma}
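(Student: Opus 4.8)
The plan is to run a \emph{bootstrapping} (recursion-back) argument on top of Lemma~\ref{lem:inter_final}. That lemma already carries the sharp leading term $\sqrt{H^3\iota/N}$, but pays a wasteful $H^4S\iota/N$ in the higher-order term, and this waste is born entirely from the $\max_s\Delta_s$ contribution in the decomposition \eqref{eqn:decomp}: there $\Delta_s$ was controlled in \eqref{eqn:delta_cb} by the lossy global-uniform estimate $\max_{\pi,t}\norm{\widehat{V}^\pi_t-V^\pi_t}_\infty\lesssim H^2\sqrt{S\iota/N}$ of Lemma~\ref{lem:crude_u_b}. The idea is to feed the \emph{already proven} bound of Lemma~\ref{lem:inter_final} \emph{back} into the estimate of $\max_s\Delta_s$ and then re-solve the very same recursion. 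Since $\Delta_s$ involves only the optimal-value error $\widehat{V}^\star_t-V^\star_t$ — which is governed by Lemma~\ref{lem:inter_final} independently of the policy $\widehat{\pi}\in\Pi_l$ we ultimately evaluate — this substitution is not circular.

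Concretely, first I would keep $\Delta_s\leq 2\max_t\norm{\widehat{V}^\star_t-V^\star_t}_\infty$ from \eqref{eqn:delta_cb}, but instead of invoking Lemma~\ref{lem:crude_u_b}, I would split the optimal-value error through the two optimal policies, writing $\widehat{V}^\star_t-V^\star_t=\widehat{V}^{\widehat{\pi}^\star}_t-V^{\pi^\star}_t$ and using $V^\star_t\geq V^{\widehat{\pi}^\star}_t$ together with $\widehat{V}^\star_t\geq \widehat{V}^{\pi^\star}_t$ to get $\norm{\widehat{V}^\star_t-V^\star_t}_\infty\leq\max(\norm{\widehat{V}^{\widehat{\pi}^\star}_t-V^{\widehat{\pi}^\star}_t}_\infty,\norm{\widehat{V}^{\pi^\star}_t-V^{\pi^\star}_t}_\infty)$. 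Both terms are controlled by Lemma~\ref{lem:inter_final} (note $\widehat{\pi}^\star\in\Pi_l$, and the $\pi^\star$-version is exactly its second assertion, which one may alternatively re-derive by a plain point-wise OPE argument since $\pi^\star$ is non-random), giving the improved estimate $\max_s\Delta_s\lesssim\sqrt{H^3\iota/N}+H^4S\iota/N$, whose \emph{leading} order $\sqrt{H^3\iota/N}$ is far smaller than the $H^2\sqrt{S\iota/N}$ used before. Substituting into the $\Delta_s$-term of \eqref{eqn:decomp}, whose prefactor is $CH^2\sqrt{S\iota/N}$, the dominant contribution becomes $CH^2\sqrt{S\iota/N}\cdot\sqrt{H^3\iota/N}=CH^{3.5}\sqrt{S}\,\iota/N$, precisely the target higher-order term. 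I would then re-run the backward induction of Lemma~\ref{lem:recursion}/Lemma~\ref{lem:inter_final} verbatim — the hypothesis $N\geq C_1H^2\iota$ again makes $H\sqrt{2\iota/N}\leq\tfrac12$, so the $\sum_t$ recursion closes — to obtain \eqref{eqn:final} for $\widehat{\pi}$, and repeat the argument with $\pi^\star$ in place of $\widehat{\pi}$ for the second claim.

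The hard part will be the careful bookkeeping of the \emph{residual} higher-order terms. Substituting the improved $\Delta_s$ leaves, besides the desired $H^{3.5}\sqrt{S}\,\iota/N$, a cross term $CH^2\sqrt{S\iota/N}\cdot H^4S\iota/N$ of order $H^{6}S^{1.5}\iota^{1.5}/N^{1.5}$, together with the $\epsilon_{\text{opt}}$-contribution ($\lesssim\sqrt{H^3\iota/N}$ once $\epsilon_{\text{opt}}\leq\sqrt{H/S}$) and the self-variance term $H^2\iota/N$. I must verify that under $N\geq C_1H^2\iota$ — with $C_1$ a sufficiently large universal constant, and falling back on the trivial $\norm{\widehat{Q}_h-Q_h}_\infty\leq H$ estimate in the small-$N$ regime where the claimed bound already exceeds $H$ — every such term is dominated by $\sqrt{H^3\iota/N}+H^{3.5}\sqrt{S}\,\iota/N$, so that the bootstrapped recursion closes without re-introducing the full $S$-dependence of the crude bound. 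Finally, the entire argument is conditional on the random quantity $N=\min_{s,a}n_{s,a}$; removing this conditioning and converting $N$ into $nd_m$ (deferred to the concluding step of the appendix) then turns \eqref{eqn:final} into Theorem~\ref{thm:optimal_upper_bound}.
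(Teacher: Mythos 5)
Your proposal is correct and takes essentially the same route as the paper's own proof: the paper likewise improves $\max_s\Delta_s$ by sandwiching $\widehat{V}^\star_t-V^\star_t$ through the two policies $\widehat{\pi}^\star$ and $\pi^\star$ (its display \eqref{eqn:as}), invokes both assertions of Lemma~\ref{lem:inter_final} to get $\Delta_s\lesssim\sqrt{H^3\iota/N}+H^4S\iota/N$, and then re-runs the recursion of \eqref{eqn:decomp} to land on \eqref{eqn:final}. If anything, your explicit bookkeeping of the residual cross term $H^2\sqrt{S\iota/N}\cdot H^4S\iota/N$ is more careful than the paper, which absorbs it silently under the phrase ``repeat the similar analysis.''
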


\begin{proof}

	Note 
	\begin{equation}\label{eqn:as}
	\begin{aligned}
	\widehat{V}^\star_t(s)-V^\star_t(s)&:=\widehat{V}^{\widehat{\pi}^\star}_t(s)-V^{\pi^\star}_t(s)\\
	&=\widehat{V}^{\widehat{\pi}^\star}_t(s)-{V}^{\widehat{\pi}^\star}_t(s)+{V}^{\widehat{\pi}^\star}_t(s)-V^{\pi^\star}_t(s)\\
	&\leq \widehat{V}^{\widehat{\pi}^\star}_t(s)-{V}^{\widehat{\pi}^\star}_t(s)\leq \left|\widehat{V}^{\widehat{\pi}^\star}_t(s)-{V}^{\widehat{\pi}^\star}_t(s)\right|
	\end{aligned}
	\end{equation}
	and similarly $V^\star_t(s)-\widehat{V}^\star_t(s)\leq \left|\widehat{V}^{{\pi}^\star}_t(s)-{V}^{{\pi}^\star}_t(s)\right|$, therefore by Lemma~\ref{lem:inter_final} (and use $||\widehat{V}^\pi_t-V^\pi_t||_\infty\leq ||\widehat{Q}^\pi_t-Q^\pi_t||_\infty $), with probability $1-\delta$,
	\begin{align*}
	\Delta_s\leq 2\cdot\sup_{t}\norm{V^\star_t-\widehat{V}^\star_t}\leq 2\max_{\widehat{\pi}^\star,\pi^\star} \sup_t\norm{\widehat{V}^\pi_t-{V}^\pi_t}_\infty\leq C_2\sqrt{\frac{H^3\log(HSA/\delta)}{N}}+C_2\frac{H^4S\log(HSA/\delta)}{N},
	\end{align*}
	where the second inequality uses \eqref{eqn:as}. This replaces the crude bound of $O(\sqrt{H^4S\log(HSA/\delta)/N})$ for $\max_s\Delta_s$ (recall \eqref{eqn:delta_cb}) by $O(\sqrt{H^3\log(HSA/\delta)/N})$.
	
	Plug this back to \eqref{eqn:decomp} and repeat the similar analysis we end up with \eqref{eqn:final}. The second result is similarly proved.

\end{proof}

\subsection{Proof of Theorem~\ref{thm:optimal_upper_bound}}\label{sec:final_pf}
\begin{proof}[Proof of Theorem~\ref{thm:optimal_upper_bound}]
Note $n_{s,a}=\sum_{i=1}^n\sum_{t=1}^H \mathbf{1}[s^{(i)}_t=s,a^{(i)}_t=a]$, which implies
\[
\E[n_{s,a}]=\E\left[\sum_{i=1}^n\sum_{t=1}^H \mathbf{1}[s^{(i)}_t=s,a^{(i)}_t=a]\right]=n\cdot\sum_{t=1}^H d^\mu_t(s,a).
\]
Or equivalently, $n_{s,a}$ follows Binomial$(n,\sum_{t=1}^H d^\mu_t(s,a))$. Then apply the first result of Lemma~\ref{lem:chernoff_multiplicative} by taking $\theta=1/2$, we have when $n>1/d_m\cdot\log(HSA/\delta)$\footnote{The exact sufficient condition for applying Lemma~\ref{lem:chernoff_multiplicative} is $n>1/\sum_{t=1}^H d_t(s,a)\cdot\log(HSA/\delta)$ for all $s,a$. However, since $\sum_{t=1}^H d_t(s,a)\geq Hd_m\geq d_m$, our condition $n>1/d_m\cdot\log(HSA/\delta)$ used here is a much stronger version thus Lemma~\ref{lem:chernoff_multiplicative} apply. }, then with probability $1-\delta$, 
\[
n_{s,a}\geq \frac{1}{2}n\cdot\sum_{t=1}^H d^\mu_t(s,a),\quad \forall s\in\mathcal{S}, a\in\mathcal{A}.
\]
This further implies w.p. $1-\delta$, $n_{s,a}\geq \frac{1}{2}n\cdot\sum_{t=1}^H d^\mu_t(s,a)=\frac{1}{2}n\cdot H\cdot d^\mu(s,a)\geq \frac{1}{2}nH\cdot d_m$ and further ensures
\[
N:=\min_{s,a}n_{s,a}\geq \frac{1}{2}nH\cdot d_m.
\]
Finally, apply above to Lemma~\ref{lem:final}, we can get over with the condition on $N$ and obtain the stated result. 
\end{proof}

\section{Proof of minimax lower bound for model-based global uniform OPE}\label{sec:lower_proof}

\begin{proof}[Proof of Theorem~\ref{thm:tight_lower_bound}]
	In particular, we first focus on the case where $H=2$ and extend the result of $H=2$ to the general $H\geq 3$ at the end. 
	
	First of all, by Definition~\ref{def:model_based} let $\widehat{P}$ be the learned transition by certain model-based method. Since we assume $r_h$ is known and by convention $Q^\pi_{H+1}=0$ for any $\pi$, then by Bellman equation 
	\[
	\widehat{Q}^\pi_h = r_h + \widehat{P}^{\pi_{h+1}} \widehat{Q}^\pi_{h+1},\;\forall h\in[H].
	\]
	In particular, $ \widehat{Q}^\pi_{H+1}= {Q}^\pi_{H+1}=0$, and this implies 
	\[
	\widehat{Q}^\pi_{H}=r_H + \widehat{P}^{\pi_{H+1}} \widehat{Q}^\pi_{H+1}=r_H; \quad {Q}^\pi_{H}=r_H + {P}^{\pi_{H+1}} {Q}^\pi_{H+1}=r_H +0=r_H
	\]
	Now, again by definition of Bellman equation 
	\begin{align*}
	\widehat{Q}^\pi_{H-1} = r_{H-1}+ \widehat{P}^{\pi_{H}} \widehat{Q}^\pi_{H}= r_{H-1}+ \widehat{P}^{\pi_{H}} r_{H}\\
	{Q}^\pi_{H-1} = r_{H-1}+ {P}^{\pi_{H}} {Q}^\pi_{H}= r_{H-1}+ {P}^{\pi_{H}} r_{H}
	\end{align*}
	Therefore 
	\begin{align*}
	&\sup_{\pi\in\Pi_g}\norm{\widehat{Q}^\pi_{H-1}-{Q}^\pi_{H-1}}_\infty=\sup_{\pi\in\Pi_g}\norm{\left(\widehat{P}^{\pi_H}-{P}^{\pi_H}\right)r_H}_\infty\\
	=&\sup_{\pi\in\Pi_g}\norm{\left(\widehat{P}-{P}\right)r_H^{\pi_H}}_\infty=\sup_{\pi\in\Pi_g}\sup_{s,a}\left|\left(\widehat{P}(\cdot|s,a)-{P}(\cdot|s,a)\right)r_H^{\pi_H}\right|\\
	=&\sup_{s,a}\sup_{\pi\in\Pi_g}\left|\left(\widehat{P}(\cdot|s,a)-{P}(\cdot|s,a)\right)r_H^{\pi_H}\right|,\\
	\end{align*}
	where ${P}^{\pi_H}\in\R^{S\cdot A\times S\cdot A}, r_{H}\in\R^{S\cdot A}, {P}\in\R^{S\cdot A\times S}$ and $r_H^{\pi_H}\in\R^{S}$.
	Note $A\geq 2$, so we can choose an instance of $r_H$ as (there are at least two actions since $A\geq 2$)
	\[
	(r_H(s,a_1),r_H(s,a_2),...):=(1,0,...)\qquad \forall s\in\mathcal{S}.
	\]
	Above implies: if $\pi_H(s)=a_1$, then $r_H^{\pi_H}(s)=1$; if $\pi_H(s)=a_2$, then $r_H^{\pi_H}(s)=0$; ... 
	
	Hence, if $\Pi_g$ is the global deterministic policy class, then $r_H^{\pi_H}$ can traverse all the $S$-dimensional vectors with either $0$ or $1$ in each coordinate, which is exactly 
	\[
	\left\{r_H^{\pi_H}\in\R^S: \pi_H\in\Pi_g \right\}\supset\{0,1\}^S.
	\]
	
	Now let us first consider fixed $s,a$. Then with this choice of $r$, above implies
	\begin{align*}
	&\sup_{\pi\in\Pi_g}\left|\left(\widehat{P}(\cdot|s,a)-{P}(\cdot|s,a)\right)r_H^{\pi_H}\right|\geq\sup_{r\in\{0,1\}^S}\left|\left(\widehat{P}(\cdot|s,a)-{P}(\cdot|s,a)\right)\cdot r\right|\\
	=&\sup_{r\in\{0,1\}^S}\left|\sum_{i:r_i=1}\left(\widehat{P}(s_i|s,a)-{P}(s_i|s,a)\right)\right|
	\end{align*}
	Let $I_+:=\{i\in[S]:s.t.\;\;\widehat{P}(s_i|s,a)-{P}(s_i|s,a)>0\}$ be the set of indices where $\widehat{P}(s_i|s,a)-{P}(s_i|s,a)$ are positive and $I_-:=\{i\in[S]:s.t.\;\;\widehat{P}(s_i|s,a)-{P}(s_i|s,a)<0\}$ be the set of indices where $\widehat{P}(s_i|s,a)-{P}(s_i|s,a)$ are negative, then we further have 
	\begin{align*}
	&\sup_{r\in\{0,1\}^S}\left|\sum_{i:r_i=1}\left(\widehat{P}(s_i|s,a)-{P}(s_i|s,a)\right)\right|\\
	\geq &\max\bigg\{\left|\sum_{i\in I+}[\widehat{P}(s_i|s,a)-{P}(s_i|s,a)]\right|,\left|\sum_{i\in I-}[\widehat{P}(s_i|s,a)-{P}(s_i|s,a)]\right|\bigg\}\\
	=&\max\bigg\{\sum_{i\in I+}\left|\widehat{P}(s_i|s,a)-{P}(s_i|s,a)\right|,\sum_{i\in I-}\left|\widehat{P}(s_i|s,a)-{P}(s_i|s,a)\right|\bigg\}\\
	\end{align*}
	On the other hand, we have
	\[
	\sum_{i\in I+}\left|\widehat{P}(s_i|s,a)-{P}(s_i|s,a)\right|+\sum_{i\in I-}\left|\widehat{P}(s_i|s,a)-{P}(s_i|s,a)\right|=\norm{\widehat{P}(\cdot|s,a)-{P}(\cdot|s,a)}_1
	\]
	since $\widehat{P}(s_i|s,a)-{P}(s_i|s,a)=0$ contributes nothing to the $l_1$ norm.	Combine all the steps together, we obtain
	\begin{equation}\label{eqn:lower_b}
	\sup_{\pi\in\Pi_g}\norm{\widehat{Q}^\pi_{H-1}-{Q}^\pi_{H-1}}_\infty\geq \sup_{s,a}\frac{1}{2}\norm{\widehat{P}(\cdot|s,a)-{P}(\cdot|s,a)}_1{\explain{\geq}{{\circled{1}}} c\cdot \sup_{s,a}\sqrt{\frac{S}{n_{s,a}}}\explain{\geq}{\circled{2}} c'\sqrt{\frac{S}{nd_m}}}
	\end{equation}
	holds with constant probability $p$. Here $n_{s,a}=\sum_{h=1}^H\sum_{i=1}^n\mathbf{1}[s^{(i)}_h=s,a^{(i)}_h=a]$ is the number of data pieces visited $(s,a)$ in $n$ episodes. Now we explain how to obtain $\circled{1}$ and $\circled{2}$. In particular, we first explain $\circled{2}$.
	
	\textbf{Explain $\circled{2}$.} Recall we consider the case $H=2$. Then 
	\[
	\E\left[n_{s,a}\right]=\E\left[\sum_{h=1}^H\sum_{i=1}^n\mathbf{1}[s^{(i)}_h=s,a^{(i)}_h=a]\right]=n\sum_{i=1}^2\E\left[\mathbf{1}[s^{(1)}_h=s,a^{(1)}_h=a]\right]=n\sum_{h=1}^2 d^\mu_h(s,a)
	\]
	\emph{i.e.} $n_{s,a}$ is a Binomial random variable with parameter $n$ and $\sum_{h=1}^2 d^\mu_h(s,a)$. Then by Lemma~\ref{lem:chernoff_multiplicative}, choose $\theta=\frac{1}{2}$, apply the second result, we obtain when $n>(1/2d_m) \cdot\log(SA/\delta)$\footnote{By Lemma~\ref{lem:chernoff_multiplicative},the inequality holds as long as $n\geq 1/\sum_{h=1}^2 d^\mu_h(s,a) \log(SA/\delta)$, here $n>(1/2d_m) \cdot\log(SA/\delta)$ is a stronger sufficient condition. }, with probability $1-\delta$
	\[
	n_{s,a}\leq \frac{3}{2}n\cdot\sum_{h=1}^2 d^\mu_h(s,a), \quad \forall s,a
	\]
	Next, similar to the lower bound proof (Theorem~G.2.) of \cite{yin2021near}, we can choose $\mu$ and $M$ {(\textbf{near uniform} but not exact uniform)} such that $d^\mu_h(s,a)\leq C\cdot d_m$, which further implies $n_{s,a}\leq C\cdot n\cdot d_m,\;\forall s,a$. Summarize above we end up with the following Lemma:
	\begin{lemma}\label{lem:a_1}
		Suppose $n\geq (1/2d_m)\cdot \log(SA/\delta)$, then 
		\[
		\sup_{\mu,M}\P\left[\sqrt{\frac{1}{n_{s,a}}}\geq C\cdot \sqrt{\frac{1}{n\cdot d_m}},\;\forall s,a\right]\geq 1-\delta
		\]
	\end{lemma}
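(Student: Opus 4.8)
The plan is to exhibit a single hard instance $(\mu,M)\in\mathcal{M}_{d_m}$ whose marginal occupancies are simultaneously bounded \emph{below} by $d_m$ (so that it lies in the problem class) and \emph{above} by a constant multiple of $d_m$ (so that the counts $n_{s,a}$ are provably small), and then to apply a multiplicative Chernoff upper tail together with a union bound. Since the statement is a supremum over $(\mu,M)$, it suffices to construct one favorable instance and lower bound its probability.

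First I would borrow the near-uniform construction underlying Theorem~G.2 of \cite{yin2021near}: choose $\mu$ and $M$ so that every reachable pair satisfies $d_m\leq d^\mu_h(s,a)\leq C_0 d_m$ for a universal constant $C_0$. The lower bound places the instance in $\mathcal{M}_{d_m}$; the upper bound is exactly where the ``near uniform but not exact uniform'' feature matters. One cannot take the occupancies exactly uniform, because that would pin $d_m$ to $\approx 1/(SA)$, whereas $d_m$ must remain a free parameter with $d_m\leq 1/(SA)$. Routing a controlled amount of probability mass through a dummy/junk state lets $d_m$ be tuned below $1/(SA)$ while keeping the occupancies of the informative pairs within a constant factor of one another.

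Next, for $H=2$ the count $n_{s,a}$ is $\mathrm{Binomial}\bigl(n,\sum_{h=1}^2 d^\mu_h(s,a)\bigr)$, so $\E[n_{s,a}]=n\sum_{h=1}^2 d^\mu_h(s,a)\leq 2C_0\, n\, d_m$. I would then apply the upper-tail form of Lemma~\ref{lem:chernoff_multiplicative} with $\theta=1/2$ to obtain $n_{s,a}\leq \tfrac{3}{2}\E[n_{s,a}]$ with failure probability at most $\exp(-c\,\E[n_{s,a}])$ for an absolute constant $c$; the hypothesis $n\geq (1/2d_m)\log(SA/\delta)$ guarantees that this failure probability is at most $\delta/(SA)$. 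A union bound over the $SA$ pairs then yields $n_{s,a}\leq 3C_0\, n\, d_m$ simultaneously for all $(s,a)$ with probability at least $1-\delta$. Inverting this bound and taking square roots gives $\sqrt{1/n_{s,a}}\geq C\sqrt{1/(n\,d_m)}$ for every $(s,a)$, which is the claim, with $C=1/\sqrt{3C_0}$.

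The main obstacle is the first step: producing the explicit near-uniform instance and verifying that both the lower occupancy bound $d^\mu_h(s,a)\geq d_m$ and the upper bound $d^\mu_h(s,a)\leq C_0 d_m$ hold simultaneously while $d_m$ stays an adjustable parameter. The Chernoff estimate and union bound are routine once the occupancy upper bound is in hand; the care lies entirely in the instance design and in reconciling the various absolute constants so that the final $C$ is universal.
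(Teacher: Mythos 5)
Your proposal is correct and follows essentially the same route as the paper: note that $n_{s,a}$ is Binomial with parameter $\sum_{h=1}^2 d^\mu_h(s,a)$, invoke the near-uniform instance of Theorem~G.2 of \cite{yin2021near} to get $d^\mu_h(s,a)\leq C_0 d_m$ while staying in $\mathcal{M}_{d_m}$, apply the upper tail of Lemma~\ref{lem:chernoff_multiplicative} with $\theta=1/2$ plus a union bound over the $SA$ pairs, and invert. Your added explanation of why the instance must be near-uniform rather than exactly uniform (keeping $d_m$ a free parameter below $1/(SA)$) is a point the paper leaves implicit, and the only blemish---shared with the paper---is that the stated sample-size condition really certifies a per-pair failure probability of $(\delta/SA)^{c}$ for a small absolute constant $c$ rather than $\delta/(SA)$ exactly, which is absorbed by adjusting universal constants.
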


	\textbf{Explain $\circled{1}$.} To make the explanation rigorous, we first fix a pair $(s,a)$ and conditional on $n_{s,a}$. Then by a direct translation of Lemma~\ref{lem:l1_lower}, we have 
	\[
	\inf _{\widehat{P}} \sup _{P(\cdot|s,a) \in \mathcal{M}_{S}} \mathbb{P}\left[\|\widehat{P}(\cdot|s,a)-P(\cdot|s,a)\|_{1} \geq \frac{1}{8} \sqrt{\frac{e S}{2 n_{s,a}}}-o\left(\cdot\right) \middle| n_{s,a}\geq \frac{e}{32}S\right] \geq p,
	\]
	where $o(\cdot)$ is some exponentially small term in $S,n$. Now we consider everything under the condition $n\geq \frac{e}{32}\cdot S/d_m\log(SA/\delta)$. Next again take $\theta=1/2$, then by the first result of Lemma~\ref{lem:chernoff_multiplicative}, with probability $1-\delta$, 
	
	\[
	n_{s,a}\geq\frac{1}{2}n\cdot \sum_{h=1}^2 d^\mu_h(s,a)\geq n\cdot d_m\geq \frac{e}{32} S\log(SA/\delta).
	\]
	where the last inequality uses the condition $n\geq \frac{e}{32}\cdot S/d_m\log(SA/\delta)$. Therefore this implies
	\begin{align*}
	&\inf _{\widehat{P}} \sup _{P(\cdot|s,a) \in \mathcal{M}_{S}} \mathbb{P}\left[\|\widehat{P}(\cdot|s,a)-P(\cdot|s,a)\|_{1} \geq \frac{1}{8} \sqrt{\frac{e S}{2 n_{s,a}}}-o\left(\cdot\right) \right] \\
	=&\inf _{\widehat{P}} \sup _{P(\cdot|s,a) \in \mathcal{M}_{S}} \bigg(\mathbb{P}\left[\|\widehat{P}(\cdot|s,a)-P(\cdot|s,a)\|_{1} \geq \frac{1}{8} \sqrt{\frac{e S}{2 n_{s,a}}}-o\left(\cdot\right) \middle| n_{s,a}\geq \frac{e}{32}S\right]\cdot\P\left[n_{s,a}\geq \frac{e}{32}S\right] \\
	+&\mathbb{P}\left[\|\widehat{P}(\cdot|s,a)-P(\cdot|s,a)\|_{1} \geq \frac{1}{8} \sqrt{\frac{e S}{2 n_{s,a}}}-o\left(\cdot\right) \middle| n_{s,a}\leq \frac{e}{32}S\right]\cdot\P\left[n_{s,a}\leq \frac{e}{32}S\right]\bigg)\\
	\geq&\inf _{\widehat{P}} \sup _{P(\cdot|s,a) \in \mathcal{M}_{S}}\mathbb{P}\left[\|\widehat{P}(\cdot|s,a)-P(\cdot|s,a)\|_{1} \geq \frac{1}{8} \sqrt{\frac{e S}{2 n_{s,a}}}-o\left(\cdot\right) \middle| n_{s,a}\geq \frac{e}{32}S\right]\cdot\P\left[n_{s,a}\geq \frac{e}{32}S\right]\\
	\geq&p\cdot (1-\delta),
	\end{align*}
	To sum up, we have the following lemma:
	\begin{lemma}\label{lem:a_2}
		Let $n\geq \frac{e}{32} S/d_m\cdot \log(SA/\delta)$, then there exists a $0<p<1$,
		\[
		\inf _{\widehat{P}} \sup _{P(\cdot|s,a) \in \mathcal{M}_{S}} \mathbb{P}\left[\|\widehat{P}(\cdot|s,a)-P(\cdot|s,a)\|_{1} \geq \frac{1}{8} \sqrt{\frac{e S}{2 n_{s,a}}}-o\left(\cdot\right) \right]\geq p\cdot (1-\delta).
		\]
	\end{lemma}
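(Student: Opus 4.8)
The plan is to combine two building blocks that are already at our disposal: the conditional minimax lower bound for $\ell_1$ density estimation (Lemma~\ref{lem:l1_lower}) and the multiplicative Chernoff control on the visitation count (Lemma~\ref{lem:chernoff_multiplicative}). First I would fix an arbitrary pair $(s,a)$ and treat the random count $n_{s,a}$ as the quantity to condition on. Recalling that in the $H=2$ regime $n_{s,a}$ is $\mathrm{Binomial}(n,\sum_{h=1}^2 d^\mu_h(s,a))$, the learned row $\widehat{P}(\cdot|s,a)$ is a function of the $n_{s,a}$ i.i.d.\ next-state samples drawn from $P(\cdot|s,a)$, so conditional on the value of $n_{s,a}$ the estimation problem reduces exactly to recovering a discrete distribution on $S$ atoms from $n_{s,a}$ observations, which is the setting of Lemma~\ref{lem:l1_lower}.

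Next I would invoke Lemma~\ref{lem:l1_lower} in its conditional form: restricted to the event $\{n_{s,a}\geq \frac{e}{32}S\}$, for every estimator $\widehat{P}$ there is a worst-case $P(\cdot|s,a)\in\mathcal{M}_S$ on which $\|\widehat{P}(\cdot|s,a)-P(\cdot|s,a)\|_1\geq \frac{1}{8}\sqrt{eS/(2n_{s,a})}-o(\cdot)$ holds with probability at least some universal constant $p$. The remaining task is to show that this conditioning event occurs with probability close to one under the sample-size hypothesis, and then to stitch the conditional statement back into an unconditional one.

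For the count, I would apply Lemma~\ref{lem:chernoff_multiplicative} with $\theta=\tfrac{1}{2}$: since $n\geq \frac{e}{32}S/d_m\cdot\log(SA/\delta)$, with probability at least $1-\delta$ one has $n_{s,a}\geq \tfrac12 n\sum_{h=1}^2 d^\mu_h(s,a)\geq n\,d_m\geq \frac{e}{32}S\log(SA/\delta)\geq \frac{e}{32}S$, hence $\P[n_{s,a}\geq \frac{e}{32}S]\geq 1-\delta$. Finally I would decompose the target probability by the law of total probability over the pair of events $\{n_{s,a}\geq \frac{e}{32}S\}$ and its complement, discard the nonnegative complementary contribution, and multiply the conditional lower bound $p$ by $\P[n_{s,a}\geq \frac{e}{32}S]\geq 1-\delta$, which yields the claimed $p\cdot(1-\delta)$.

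The step I expect to be most delicate is the decoupling in the second and fourth paragraphs: because $\widehat{P}$ is itself data-dependent, it depends on $n_{s,a}$, so one cannot naively pull the conditional minimax bound out of an expectation over the random count. The total-probability decomposition is precisely what legitimizes the argument—the $\inf_{\widehat{P}}\sup_{P}$ interchange is preserved on the high-probability event $\{n_{s,a}\geq \frac{e}{32}S\}$, where the conditional bound of Lemma~\ref{lem:l1_lower} applies verbatim, while the low-count event only removes probability mass and never introduces an opposing term.
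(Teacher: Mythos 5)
Your proposal is correct and follows essentially the same route as the paper's own proof: conditioning on $n_{s,a}$, invoking Lemma~\ref{lem:l1_lower} on the event $\{n_{s,a}\geq \frac{e}{32}S\}$, applying Lemma~\ref{lem:chernoff_multiplicative} with $\theta=1/2$ to get $\P[n_{s,a}\geq \frac{e}{32}S]\geq 1-\delta$, and then using the law of total probability while discarding the low-count term. Your closing remark about why the total-probability decomposition handles the data-dependence of $\widehat{P}$ matches the paper's reasoning as well.
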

	Now we finish the proof for the case where $H=2$. First note by \eqref{eqn:lower_b}, 
	\[
	\sup_{\pi\in\Pi_g}\norm{\widehat{Q}^\pi_{H-1}-{Q}^\pi_{H-1}}_\infty\geq \sup_{s,a}\frac{1}{2}\norm{\widehat{P}(\cdot|s,a)-{P}(\cdot|s,a)}_1
	\]
	with probability $1$, therefore by \eqref{eqn:lower_b}, Lemma~\ref{lem:a_1}, Lemma~\ref{lem:a_2} we have 
	\[
	\inf _{\widehat{P}} \sup _{P \in \mathcal{M}_{S}}\mathbb{P}\left[\sup_{\pi\in\Pi_g}\norm{\widehat{Q}^\pi_{H-1}-{Q}^\pi_{H-1}}_\infty\geq C\cdot\sqrt{\frac{S}{nd_m}}\right]\geq p(1-\delta)-\delta
	\]
	when $n\geq c\cdot S/d_m\log(SA/\delta)$ for some $c\geq \frac{e}{32}$. Above holds for any $\delta$. 
	
	It is easy to check $\frac{3}{2}\frac{p}{1+p}\leq 1$, therefore, in particular we set $\delta=\frac{3}{2}\frac{p}{1+p}$, direct calculation shows
	\[
	p(1-\delta)-\delta=\frac{p}{2},
	\]
	which completes the proof for $H=2$.
	
	\textbf{Extend to the general $H\geq 3$.}  
	
	\textbf{Step 1.} Similar to the decomposition in section~\ref{sec:dec}, we also have: 
	\[\hat{Q}^\pi_t-Q_t^\pi=\sum_{h=t}^{H} \hat{\Gamma}_{t+1: h}^{{\pi}}(\widehat{P}-P) {V}_{h+1}^{{\pi}}
	\]
	\textbf{Step 2.} Now choosing rewards recursively from back (with $||r_H||_\infty= c$ sufficiently small) such that $1\geq r_h\geq (||r_{h+1}||_\infty+\ldots+||r_H||_\infty)$ element-wisely $\forall h$, and $\max_{s,a}r_h(s,a)=3\min_{s,a}r_h(s,a)$. We denote $r_{h,max}:=\max_{s,a}r_h(s,a)$ and $r_{h,min}:=\min_{s,a}r_h(s,a)$. This choice guarantees: 
	\[
	r_{h,min}:=\min_{s,a}r_h(s,a)>||P^{\pi_{h+1}} r_{h+1}+..+P^{\pi_{h+1:H}}r_H||_\infty
	\]
	since $P^{\pi_h}$ is row-stochastic.

	\textbf{Step 3.} Next note ${V}_{h}^{{\pi}}= r_h+P^{\pi_{h+1}} r_{h+1}+..+P^{\pi_{h+1:H}}r_H$, so set $\left(r_{h}\left(s, a_{1}\right), r_{h}\left(s, a_{2}\right), \ldots\right):=(\max_{s,a}r_h(s,a),\min_{s,a}r_h(s,a), \ldots) $, then choose $\pi_h$ similar to the $H=2$ case and use \textbf{Step 1} and \textbf{Step 2}  we have
	 \begin{align*}
	 |(\hat{P}_{s,a}-P_{s,a}){V}_{h}^{{\pi}}|\geq &\frac{1}{2}||\hat{P}_{s,a}-P_{s,a}||_1\cdot(r_{h,max}-r_{h,min}-(P^{\pi_{h+1}} r_{h+1}+..+P^{\pi_{h+1:H}}r_H))\\
	 \geq &\frac{1}{2}||\hat{P}_{s,a}-P_{s,a}||_1 \cdot r_{h,min}\geq \frac{1}{2}||\hat{P}_{s,a}-P_{s,a}||_1\cdot c
	\end{align*}
	
	where the reasoning of the first inequality is similar to the case of $H=2$. Next use $\hat{\Gamma}_{t+1: h}^{{\pi}}$ is row-stochastic then from \textbf{Step 1} and take the sum we have 
	\[
	||\hat{Q}^\pi_1-Q_1^\pi||_\infty\geq\frac{1}{2} c\cdot H \min_{s,a}||\hat{P}_{s,a}-P_{s,a}||_1.
	\]
	for such choice of rewards and $\pi$.
	
	\textbf{Step 4.} However, in the above construction $c$ actually depends on $H$ due to the design $1\geq r_h\geq (||r_{h+1}||_\infty+\ldots+||r_H||_\infty)$. To get a universal constant $c$ we could use the bound $||\hat{Q}^\pi_1-Q_1^\pi||_\infty\gtrsim  r_{\frac{H}{2},min}\cdot\frac{H}{2} \min_{s,a}||\hat{P}_{s,a}-P_{s,a}||_1$ instead, where
	$r_{\frac{H}{2},min}$ in {Step 2} is universally lower bounded. Then we apply $||\hat{P}_{s,a}-P_{s,a}||_1\gtrsim \Omega(\sqrt{S/nd_m})$ to obtain the lower bound $\Omega(\sqrt{H^2S/nd_m})$.

\end{proof}

\begin{remark}
	We point out while our lower bound of $\Omega(H^2S/d_m\epsilon^2)$ for uniform OPE appears to be qualitatively similar to the lower bound of $\Omega(H^2S^2A/ \epsilon^2)$ derived for the online reward-free RL setting \citep{jin2020reward}, our result is not implied by theirs and cannot be proven by directly adapting their construction. 
Those two results are in principle different since: the result in \citep{jin2020reward} is learning-oriented where they define the problem class on $O(S)$ states and forcing $\Omega(SA/\epsilon^2)$ episodes in each state and end up with $O(S^2A/\epsilon^2)$ complexity; our result is evaluation-oriented where we need reduce the uniform evaluation problem to estimating probability distribution in $\ell_1$-error. The global uniform OPE and the reward-free setting are also different tasks (one cannot imply the other): the former deals with uniform convergence over all policies but with a fixed reward while the latter aims at learning simultaneously over all rewards.
\end{remark}

\section{Proof for optimal offline learning (Corollary~\ref{cor:opt_offline})}\label{sec:proof_offline}

\begin{proof}
This is a corollary of Theorem~\ref{thm:optimal_upper_bound}. Indeed, by taking $\widehat{\pi}=\widehat{\pi}^\star$, we first have 

\[
\norm{\widehat{V}_1^{\widehat{\pi}^\star}-V_1^{\widehat{\pi}^\star}}_\infty\leq \norm{\widehat{Q}_1^{\widehat{\pi}^\star}-Q_1^{\widehat{\pi}^\star}}_\infty\leq C\left[\sqrt{\frac{H^2 \iota}{n d_m}}+\frac{H^{2.5}S^{0.5}\iota}{nd_m}\right].
\]
Similar to the second result in Lemma~\ref{lem:final}, we also have

\[
\norm{\widehat{V}_1^{{\pi}^\star}-V_1^{{\pi}^\star}}_\infty\leq \norm{\widehat{Q}_1^{{\pi}^\star}-Q_1^{{\pi}^\star}}_\infty\leq C\left[\sqrt{\frac{H^2 \iota}{n d_m}}+\frac{H^{2.5}S^{0.5}\iota}{nd_m}\right].
\]
Next, recall the definition of $\widehat{\pi}\in\Pi_l$ that
\[
\norm{\widehat{V}_1^{\widehat{\pi}^\star}-\widehat{V}_1^{\widehat{\pi}}}_\infty\leq\epsilon_{\text {opt }},
\]
and Theorem~\ref{thm:optimal_upper_bound} again that
\[
\norm{\widehat{V}_1^{\widehat{\pi}}-V_1^{\widehat{\pi}}}_\infty\leq \norm{\widehat{Q}_1^{\widehat{\pi}}-Q_1^{\widehat{\pi}}}_\infty\leq C\left[\sqrt{\frac{H^2 \iota}{n d_m}}+\frac{H^{2.5}S^{0.5}\iota}{nd_m}\right].
\]
Therefore
\begin{align*}
{V}_1^{{\pi}^\star}-V_1^{\widehat{\pi}}&={V}_1^{{\pi}^\star}-V_1^{\widehat{\pi}^\star}+V_1^{\widehat{\pi}^\star}-V_1^{\widehat{\pi}}\\
&\leq \max_{\widehat{\pi}^\star,{\pi}^\star}\norm{\widehat{V}^\pi_1-V^\pi_1}_\infty+V_1^{\widehat{\pi}^\star}-V_1^{\widehat{\pi}}\\
&=\max_{\widehat{\pi}^\star,{\pi}^\star}\norm{\widehat{V}^\pi_1-V^\pi_1}_\infty+\left(V_1^{\widehat{\pi}^\star}-\widehat{V}_1^{\widehat{\pi}^\star}\right)+\left(\widehat{V}_1^{\widehat{\pi}^\star}-\widehat{V}_1^{\widehat{\pi}}\right)+\left(\widehat{V}_1^{\widehat{\pi}}-{V}_1^{\widehat{\pi}}\right)\\
&\leq 3C\left[\sqrt{\frac{H^2 \iota}{n d_m}}+\frac{H^{2.5}S^{0.5}\iota}{nd_m}\right] + \norm{\widehat{V}_1^{\widehat{\pi}^\star}-\widehat{V}_1^{\widehat{\pi}}}_\infty\cdot\mathbf{1}\\
&\leq 3C\left[\sqrt{\frac{H^2 \iota}{n d_m}}+\frac{H^{2.5}S^{0.5}\iota}{nd_m}\right] + \epsilon_{\text {opt }}\cdot\mathbf{1}.\\
\end{align*}
This completes the proof.

\end{proof}

\section{Proof for optimal offline Task-agnostic learning (Theorem~\ref{thm:offline_ta})}\label{sec:proof_ta}
\begin{proof}
	Recall the definition of offline task-agnostic setting, where $K$ tasks corresponds to $K$ MDPs $M_k=(\mathcal{S}, \mathcal{A}, P, r_k, H, d_1)$ with different mean reward functions $r_k$'s. Since the incremental number of rewards do not incur randomness, therefore by Corollary~\ref{cor:opt_offline}, choose $\widehat{\pi}_k=\widehat{\pi}_k^\star$ and apply a union bound we obtain with probability $1-\delta$,
	
		\begin{align*}
		\sup_{k\in[K]}||V_{1,M_k}^\star-V_{1,M_k}^{\widehat{\pi}_k^{\star}}||_\infty&\leq {O}\left[\sqrt{\frac{H^2 \log(HSAK/\delta)}{n d_m}}+\frac{H^{2.5}S^{0.5}\log(HSAK/\delta)}{nd_m}\right] \\
		&={O}\left[\sqrt{\frac{H^2 (\iota+\log(K))}{n d_m}}+\frac{H^{2.5}S^{0.5}(\iota+\log(K))}{nd_m}\right],
		\end{align*}
	which completes the proof.
	
\end{proof}

\begin{remark}
	We stress that Section~3 of \cite{zhang2020task} claims the definition of task-agnostic RL setting embraces one challenge that $r_k^{(i)}$'s are the observed random realizations and the need to accurately estimate mean rewards $r_k$'s causes the additional $\log(K)$ dependence. However, for offline case, this is not essential since, by straightforward calculation, estimating $r_k^{(i)}$'s accurately only requires $\tilde{O}(\log(K)/d_m\epsilon^2)$ samples, which is of lower order comparing to $\tilde{O}(H^2\log(K)/d_m\epsilon^2)$ learning bound. Therefore, in Definition~\ref{def:offline_ta} we do not incorporate the random version statement for reward $r_k$. 
\end{remark}

\subsection{Offline Learning in the Constrained MDPs (CMDP)}

Recently, there is a line of studies in the Constrained Markov Decision Processes (CMDP) (\emph{e.g.} \cite{ding2021provably}), where the MDP $M =(\mathcal{S}, \mathcal{A}, P, H, d_1)$. When the reward is set to be $r$, it defines the objective
function $V^\pi_r$ and there is another utility function $g$ that defines the constraint. To be concrete, the
objective formualted as:
\begin{equation}\label{eqn:CMDP}
\underset{\pi \in \Delta(\mathcal{A} \mid \mathcal{S}, H)}{\operatorname{maximize}} V_{r, 1}^{\pi}\left(x_{1}\right) \text { subject to } V_{g, 1}^{\pi}\left(x_{1}\right) \geq b
\end{equation}
where $b\in(0,H]$ is some constraint threshold. In addition, the formulation needs a Slater condition that: there
exists $\gamma>0$ and $\bar{\pi}\in\Delta(\mathcal{A}|\mathcal{S},H)$ such that $V^{\bar{\pi}}_{g,1}(x_1)\geq b+\gamma$.

Let $\pi^\star$ be the optimal solution that is compatible with the programming \eqref{eqn:CMDP} (note this is \textbf{different} from the optimal policy that maximizes $V_{r, 1}^{\pi}$ only), then by feasibility it satisfies $V^{\pi^\star}_{g,1}\geq b$. 

Now let $\hat{\pi}^\star$ be the solution of the empirical program:
\begin{equation}\label{eqn:empirical_CMDP}
\underset{\pi \in \Delta(\mathcal{A} \mid \mathcal{S}, H)}{\operatorname{maximize}} \widehat{V}_{r, 1}^{\pi}\left(x_{1}\right) \text { subject to } \widehat{V}_{g, 1}^{\pi}\left(x_{1}\right) \geq b
\end{equation}
then we can show $\hat{\pi}^\star$ is a near-optimal solution for \eqref{eqn:CMDP} via the local uniform convergence guarantee (Theorem~\ref{thm:optimal_upper_bound}).

Indeed, define a surrogate program:
\begin{equation}\label{eqn:surrogate_CMDP}
\underset{\pi \in \Delta(\mathcal{A} \mid \mathcal{S}, H)}{\operatorname{maximize}} \widehat{V}_{r, 1}^{\pi}\left(x_{1}\right) \text { subject to } {V}_{g, 1}^{\pi}\left(x_{1}\right) \geq b
\end{equation}
and let $\bar{\pi}^\star$ be the solution for \eqref{eqn:surrogate_CMDP}. Then apparently $\bar{\pi}^\star$ satisfies ${V}_{g, 1}^{\bar{\pi}^\star}\left(x_{1}\right) \geq b$. Moreover, we have 
\begin{align*}
V_{r,1}^{\pi^\star}-V_{r,1}^{\bar{\pi}^\star}=&V_{r,1}^{\pi^\star}-\widehat{V}_{r,1}^{\pi^\star}+\widehat{V}_{r,1}^{\pi^\star}-\widehat{V}_{r,1}^{\bar{\pi}^\star}+\widehat{V}_{r,1}^{\bar{\pi}^\star}-V_{r,1}^{\bar{\pi}^\star}\\
\leq&V_{r,1}^{\pi^\star}-\widehat{V}_{r,1}^{\pi^\star}+0+\widehat{V}_{r,1}^{\bar{\pi}^\star}-V_{r,1}^{\bar{\pi}^\star}\\
\leq& 2\sup_\pi|V_{r,1}^{\pi}-\widehat{V}_{r,1}^{\pi}|
\end{align*}

On the other hand, by local uniform convergence guarantee, $|V_{g,1}^{\pi}-\widehat{V}_{g,1}^{\pi}|\leq \tilde{O}(\sqrt{H^2/nd_m})$ for all $\pi$ in the $\sqrt{H/S}$-neighborhood of $\hat{\pi}^\star$ (w.r.t $g$). This implies
\[
V_{r,1}^{\pi^\star}-V_{r,1}^{\hat{\pi}^\star}\leq 2\sup_\pi|V_{r,1}^{\pi}-\widehat{V}_{r,1}^{\pi}|+\tilde{O}(\sqrt{H^2/nd_m})
\] 
and the violation of the constraint is bounded by $\tilde{O}(\sqrt{H^2/nd_m})$. This means any approach that solves \eqref{eqn:empirical_CMDP} is near-optimal for the original constrained MDP task given the uniform convergence guarantee.

\section{Proof for optimal offline Reward-free learning (Theorem~\ref{thm:offline_rf})}\label{sec:proof_rf}

Similar to before, recall $n_{s,a}=\sum_{h=1}^H\sum_{i=1}^n\mathbf{1}[s^{(i)}_h=s,a^{(i)}_h=a]$. We first prove two lemmas which essentially provide a version of \emph{``Maximal Bernstein inequality''}. We first fix a pair $(s,a)$ and then conditional on $n_{s,a}$.

\begin{lemma}\label{lem:error_bound}
We define $\epsilon_1=\sqrt{\frac{1}{HS^2}}$. Let  $\mathcal{G}=\{[i_1\epsilon_1,i_2\epsilon_1,\ldots,i_S\epsilon_1]^\top|i_1,i_2,\ldots,i_S\in\mathbb{Z}\}\cap [0,H]^S$ be the $S$-dimensional grid. Next define $\iota_1=\log[(\sqrt{H^3S^2})^S /\delta]$. Then with probability $1-\delta$,
\[
\left|(P_{s,a}-\widehat{P}_{s,a})w\right|\leq \sqrt{\frac{2\Var_{s,a}(w)\iota_1}{n_{s,a}}}+\frac{2H\iota_1}{3n_{s,a}},\quad \forall w\in\mathcal{G}.
\]
\end{lemma}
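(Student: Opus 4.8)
The plan is to prove this as a standard maximal Bernstein inequality, where the decisive structural feature is that the grid $\mathcal{G}$ is \emph{deterministic} and therefore independent of the data. First I would fix the pair $(s,a)$ and condition on $n_{s,a}$, so that $\widehat{P}_{s,a}$ is the empirical distribution formed from $n_{s,a}$ i.i.d.\ draws $s_1',\ldots,s_{n_{s,a}}' \sim P(\cdot|s,a)$. For a single fixed $w \in \mathcal{G}$, the quantity
\[
(P_{s,a}-\widehat{P}_{s,a})w = \frac{1}{n_{s,a}}\sum_{i=1}^{n_{s,a}}\Big(\mathbb{E}_{s'\sim P(\cdot|s,a)}[w(s')]-w(s_i')\Big)
\]
is an average of centered i.i.d.\ terms taking values in $[-H,H]$ (since $w\in[0,H]^S$) with per-sample variance $\Var_{s,a}(w)$. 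Applying the Bernstein inequality (Lemma~\ref{lem:bernstein_ineq}) then gives, for each fixed $w$ and each confidence level $\delta'$, with probability at least $1-\delta'$,
\[
\left|(P_{s,a}-\widehat{P}_{s,a})w\right| \leq \sqrt{\frac{2\Var_{s,a}(w)\log(2/\delta')}{n_{s,a}}}+\frac{2H\log(2/\delta')}{3n_{s,a}}.
\]

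Next I would upgrade this pointwise bound to a uniform one over $\mathcal{G}$ by a union bound, which is legitimate precisely because $\mathcal{G}$ is fixed in advance and does not depend on the samples. The only real computation is counting $|\mathcal{G}|$. Since $1/\epsilon_1 = \sqrt{HS^2}=S\sqrt{H}$, each of the $S$ coordinates ranges over $\{0,\epsilon_1,2\epsilon_1,\ldots\}\cap[0,H]$, which has at most $H/\epsilon_1+1 = SH^{3/2}+1 = \sqrt{H^3S^2}+1$ values, so that $|\mathcal{G}|\leq(\sqrt{H^3S^2}+1)^S$. Choosing $\delta'=\delta/|\mathcal{G}|$ makes $\log(2/\delta')=\log(2|\mathcal{G}|/\delta)$ match the definition $\iota_1=\log[(\sqrt{H^3S^2})^S/\delta]$ up to the absorbed constant factor of $2$ and the harmless $+1$ in the per-coordinate count. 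Taking the union bound over all $w\in\mathcal{G}$ and substituting $\iota_1$ delivers exactly the stated inequality.

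There is essentially no deep obstacle here; the content is entirely in the two design choices already baked into the statement. The conceptual point—worth emphasizing in the write-up—is \emph{why} a deterministic grid is used at all: in the reward-free setting the relevant test vectors are the data-dependent value functions $\widehat{V}$, for which $\widehat{P}_{s,a}$ and $\widehat{V}$ are coupled and Bernstein cannot be applied directly. Replacing $\widehat{V}$ by the nearest grid point $w\in\mathcal{G}$ restores independence at the cost of the covering number $|\mathcal{G}|$, and the mild spacing $\epsilon_1=\sqrt{1/(HS^2)}$ is calibrated so that the resulting discretization error (handled in the subsequent lemma) is of higher order while $\iota_1$ stays $\tilde O(S)$. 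The only place to be careful is bookkeeping: verifying the range $[-H,H]$ and variance $\Var_{s,a}(w)$ in the Bernstein step, and the grid-size arithmetic $H/\epsilon_1=\sqrt{H^3S^2}$, both of which are routine.
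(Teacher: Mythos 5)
Your proposal is correct and matches the paper's own argument exactly: the paper's proof is precisely a pointwise Bernstein bound for each fixed $w\in\mathcal{G}$ followed by a union bound over the grid, whose cardinality $(H/\epsilon_1)^S=(\sqrt{H^3S^2})^S$ yields $\iota_1$. Your write-up is just a more careful rendering of the same steps (the two-sided Bernstein constant and the $+1$ in the per-coordinate count are, as you note, absorbed).
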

This is by the direct application of Bernstein inequality with a union bound, where the cardinality of $\mathcal{G}$ is
\[
\left(\frac{H}{\epsilon_1}\right)^S=\left(\sqrt{H^3S^2}\right)^S.
\]

\begin{lemma}\label{lem:maximal_bern}
	Let the $S$-dimensional grid be $\mathcal{G}=\{[i_1\epsilon_1,i_2\epsilon_1,\ldots,i_S\epsilon_1]^\top|i_1,i_2,\ldots,i_S\in\mathbb{Z}\}\cap [0,H]^S$ and define $\iota_1=\log[(\sqrt{H^3S^2})^S /\delta]$. It holds with probability $1-\delta$, 
	\[
	\left|(P_{s,a}-\widehat{P}_{s,a})v\right|\leq \sqrt{\frac{2\Var_{s,a}(v)\iota_1}{n_{s,a}}}+C\sqrt{\frac{\iota_1}{n_{s,a}HS}}+\frac{2H\iota_1}{3n_{s,a}},\quad \forall\; v\in[0,H]^S.
	\]
\end{lemma}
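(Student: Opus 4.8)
The plan is to prove Lemma~\ref{lem:maximal_bern} by a covering argument that upgrades the grid-only bound of Lemma~\ref{lem:error_bound} to a bound that is uniform over the entire cube $[0,H]^S$. I fix the pair $(s,a)$ and condition on $n_{s,a}$ throughout. For an arbitrary $v\in[0,H]^S$, I would first select its nearest grid point $w\in\mathcal{G}$, which by construction of the lattice with spacing $\epsilon_1$ satisfies $\norm{v-w}_\infty\leq\epsilon_1=\sqrt{1/(HS^2)}$. The natural decomposition is then
\[(P_{s,a}-\widehat{P}_{s,a})v=(P_{s,a}-\widehat{P}_{s,a})w+(P_{s,a}-\widehat{P}_{s,a})(v-w),\]
where the first (``anchor'') term is governed by Lemma~\ref{lem:error_bound} and the second (``residual'') term is the discretization error that I must control separately.

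For the residual term I would use the crude estimate $|(P_{s,a}-\widehat{P}_{s,a})(v-w)|\leq\norm{P_{s,a}-\widehat{P}_{s,a}}_1\,\norm{v-w}_\infty$ together with the $\ell_1$ concentration $\norm{P_{s,a}-\widehat{P}_{s,a}}_1\leq C\sqrt{S\iota_1/n_{s,a}}$ supplied by Lemma~\ref{lem:l1_upper} (absorbing its logarithmic factor into $\iota_1$). Since $\norm{v-w}_\infty\leq\epsilon_1$, the product is of order $\sqrt{S\iota_1/n_{s,a}}\cdot\sqrt{1/(HS^2)}=\sqrt{\iota_1/(n_{s,a}HS)}$, which is exactly the extra additive term appearing in the statement; this is precisely why the grid spacing was tuned to $\epsilon_1=\sqrt{1/(HS^2)}$ rather than something coarser.

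The delicate point---and the step I expect to be the main obstacle---is that Lemma~\ref{lem:error_bound} produces the factor $\Var_{s,a}(w)$, whereas the target bound must feature $\Var_{s,a}(v)$ for the \emph{actual} vector $v$ (this matters because the downstream total-variance argument needs the variance of the true value function, not of a lattice surrogate). I would resolve this by exploiting that $\sqrt{\Var_{s,a}(\cdot)}$ is a seminorm (the standard deviation under the distribution $P(\cdot|s,a)$), so the triangle inequality yields $\sqrt{\Var_{s,a}(w)}\leq\sqrt{\Var_{s,a}(v)}+\sqrt{\Var_{s,a}(w-v)}$, and the last term is at most $\norm{w-v}_\infty\leq\epsilon_1$ since a variance is dominated by the squared sup-norm of its argument. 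Propagating this through the square-root Bernstein term converts $\Var_{s,a}(w)$ into $\Var_{s,a}(v)$ at the cost of an additional $\epsilon_1\sqrt{2\iota_1/n_{s,a}}=\sqrt{2\iota_1/(n_{s,a}HS^2)}$, which is of strictly smaller order than the residual term and hence folds into $C\sqrt{\iota_1/(n_{s,a}HS)}$. Finally I would invoke Lemma~\ref{lem:error_bound} and the $\ell_1$ bound each at confidence $1-\delta/2$, take a union bound, and collect the two discretization errors into the single constant $C$, obtaining the stated inequality simultaneously for every $v\in[0,H]^S$.
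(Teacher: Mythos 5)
Your proposal is correct and follows essentially the same route as the paper's own proof: the identical decomposition into a grid anchor plus a discretization residual, the residual controlled by the $\ell_1$ concentration of Lemma~\ref{lem:l1_upper} times $\epsilon_1$, and the conversion of $\Var_{s,a}(w)$ to $\Var_{s,a}(v)$ via the triangle inequality for $\sqrt{\Var_{s,a}(\cdot)}$ with the surplus term absorbed into the constant. The only (immaterial) difference is that you allocate $\delta/2$ to each of the two high-probability events explicitly, which the paper leaves implicit.
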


\begin{proof}
	Let $z:=\text{Proj}_\mathcal{G}(v)$. Then by design of $\mathcal{G}$ we have 
	\[
	\norm{z-v}_\infty\leq \epsilon_1=\sqrt{\frac{1}{HS^2}}.
	\]
	Therefore we obtain $\forall v\in[0,H]^S$,
	\begin{align*}
	\left|(P_{s,a}-\widehat{P}_{s,a})v\right|&\leq \left|(P_{s,a}-\widehat{P}_{s,a})(v-z)\right|+\left|(P_{s,a}-\widehat{P}_{s,a})z\right|\\
	&\leq \norm{P_{s,a}-\widehat{P}_{s,a}}_1\norm{z-v}_\infty+\left|(P_{s,a}-\widehat{P}_{s,a})z\right|\\
	&\leq c\sqrt{\frac{S}{n_{s,a}}}\norm{z-v}_\infty+\sqrt{\frac{2\Var_{s,a}(z)\iota_1}{n_{s,a}}}+\frac{2H\iota_1}{3n_{s,a}}\\
	&\leq c\sqrt{\frac{S}{n_{s,a}}}\norm{z-v}_\infty+\sqrt{\frac{2\norm{z-v}^2_\infty\iota_1}{n_{s,a}}}+\sqrt{\frac{2\Var_{s,a}(v)\iota_1}{n_{s,a}}}+\frac{2H\iota_1}{3n_{s,a}}\\
	&\leq C\sqrt{\frac{S\iota_1}{n_{s,a}}}\norm{z-v}_\infty+\sqrt{\frac{2\Var_{s,a}(v)\iota_1}{n_{s,a}}}+\frac{2H\iota_1}{3n_{s,a}}\\
	&\leq C\sqrt{\frac{\iota_1}{n_{s,a}HS}}+\sqrt{\frac{2\Var_{s,a}(v)\iota_1}{n_{s,a}}}+\frac{2H\iota_1}{3n_{s,a}}.\\
	\end{align*}
	where the third inequality uses Lemma~\ref{lem:error_bound} and Lemma~\ref{lem:l1_upper}.
\end{proof}
Then recall $N:=\min_{s,a}n_{s,a}$, by Lemma~\ref{lem:maximal_bern} and a union bound we obtain with probability $1-\delta$, element-wisely,
\begin{align}\label{eqn:maximal_bern}
\left|(P-\widehat{P})v\right|\leq C\cdot\left(\sqrt{\frac{2\Var_{s,a}(v)\iota_2}{N}}+2\sqrt{\frac{\iota_2}{N\cdot HS}}+\frac{2H\iota_2}{3N}\right)\cdot\mathbf{1},\quad \forall\; v\in[0,H]^S,
\end{align}
where $\iota_2=S\log(HSA/\delta)$.

\begin{remark}
	Equation~\ref{eqn:maximal_bern} is a form of maximal Bernstein inequality as it keeps validity for all $v\in[0,H]^S$. The price for this stronger result is the extra $S$ factor (coming from $\iota_2$) in the dominate term. 
	
\end{remark}

Now, for \emph{any} reward $r$, by (empirical) Bellman equation we have element-wisely:
\begin{align*}
\widehat{Q}_h^{\widehat{\pi}^\star}- {Q}_h^{\widehat{\pi}^\star}&=r_h+\widehat{P}^{\widehat{\pi}^\star_{h+1}} \widehat{Q}_{h+1}^{\widehat{\pi}^\star}-r_h-{P}^{\widehat{\pi}^\star_{h+1}} {Q}_{h+1}^{\widehat{\pi}^\star}\\
&=\left(\widehat{P}^{\widehat{\pi}^\star_{h+1}}-{P}^{\widehat{\pi}^\star_{h+1}}\right)\widehat{Q}_{h+1}^{\widehat{\pi}^\star} + {P}^{\widehat{\pi}^\star_{h+1}} \left(\widehat{Q}_{h+1}^{\widehat{\pi}^\star}-{Q}_{h+1}^{\widehat{\pi}^\star}\right)\\
&=\left(\widehat{P}-{P}\right)\widehat{V}_{h+1}^{\widehat{\pi}^\star}+ {P}^{\widehat{\pi}^\star_{h+1}} \left(\widehat{Q}_{h+1}^{\widehat{\pi}^\star}-{Q}_{h+1}^{\widehat{\pi}^\star}\right)\\
&=\ldots=\sum_{t=h}^H\Gamma_{h+1:t}^{\widehat{\pi}^\star}\left(\widehat{P}-{P}\right)\widehat{V}_{t+1}^{\widehat{\pi}^\star}\\
\end{align*}

where $\Gamma_{h+1:t}^\pi=\prod_{i=h+1}^t P^{\pi_i}$ is multi-step state-action transition and $\Gamma_{h+1:h}:=I$.

\textbf{Concentration on $\left(\widehat{P}-P\right)\hoV_h$.} Now by \eqref{eqn:maximal_bern}, we have the following:
\begin{equation}
\begin{aligned}
&\left( \hP_{s,a}-P_{s,a} \right)\hoV_h\\
&\leq C\cdot\left(\sqrt{\frac{2\Var_{s,a}(\hoV_h)\iota_2}{N}}+2\sqrt{\frac{\iota_2}{N\cdot HS}}+\frac{2H\iota_2}{3N}\right)\\
&\leq C\cdot\left(\sqrt{\frac{2\Var_{s,a}(V^{\widehat{\pi}^\star}_h)\iota_2}{N}}+2\sqrt{\frac{\iota_2}{N\cdot HS}}+\sqrt{\frac{2\iota_2}{N}}\cdot \norm{\widehat{V}_h^{\widehat{\pi}^\star}-{V}_h^{\widehat{\pi}^\star}}_\infty+\frac{2H\iota_2}{3N}\right)\\
&\leq C\cdot\left(\sqrt{\frac{2\Var_{s,a}(V^{\widehat{\pi}^\star}_h)\iota_2}{N}}+2\sqrt{\frac{\iota_2}{N\cdot HS}}+\sqrt{\frac{2\iota_2}{N}}\cdot H^2\sqrt{\frac{S}{N}}+\frac{2H\iota_2}{3N}\right)\\
&\leq C'\cdot\left(\sqrt{\frac{2\Var_{s,a}(V^{\widehat{\pi}^\star}_h)\iota_2}{N}}+2\sqrt{\frac{\iota_2}{N\cdot HS}}+{\frac{2H^2S\log(HSA/\delta)}{N}} \right),\\
\end{aligned}
\end{equation}

where the third inequality uses Lemma~\ref{lem:crude_u_b}\footnote{Note the use of Lemma~\ref{lem:crude_u_b} also works for any rewards since the only high probability result they used is for $||P-\hat{P}||_1$. Therefore conditional on the concentration for $||P-\hat{P}||_1$, the argument follows for any arbitrary reward as well.}. Then above implies

\begin{align*}
&\widehat{Q}_h^{\widehat{\pi}^\star}- {Q}_h^{\widehat{\pi}^\star}\\
\leq &C'\sum_{t=h}^H \Gamma_{h+1:t}^{\widehat{\pi}^\star}\cdot\left(\sqrt{\frac{2\Var_{s,a}(V^{\widehat{\pi}^\star}_h)\iota_2}{N}}+2\sqrt{\frac{\iota_2}{N\cdot HS}}+{\frac{2H^2S\log(HSA/\delta)}{N}} \right)\\
\leq& C'\left[\sum_{t=h}^H \Gamma_{h+1:t}^{\widehat{\pi}^\star}\cdot\sqrt{\frac{2\Var_{s,a}(V^{\widehat{\pi}^\star}_h)\iota_2}{N}}+2\sqrt{\frac{H\log(HSA/\delta)}{N}}+{\frac{2H^3S\log(HSA/\delta)}{N}}\right] \\
\leq& C'\left[\sqrt{\frac{2H^3S\log(HSA/\delta)}{N}}+2\sqrt{\frac{H\log(HSA/\delta)}{N}}+{\frac{2H^3S\log(HSA/\delta)}{N}}\right] \\
\leq &C^{\prime\prime}\left[\sqrt{\frac{H^3S\log(HSA/\delta)}{N}}+{\frac{H^3S\log(HSA/\delta)}{N}}\right] \\
\leq &O\left[\sqrt{\frac{H^2S\log(HSA/\delta)}{nd_m}}+{\frac{H^2S\log(HSA/\delta)}{nd_m}}\right], \\
\end{align*}
where the third inequality uses Lemma~\ref{lem:H3toH2} and the last one uses $N\geq \frac{1}{2} nd_m$ with high probability. Similar result holds for $\widehat{Q}_h^{{\pi}^\star}- {Q}_h^{{\pi}^\star}$. Combing those results we have reward-free bound (for any reward simultaneously)
\[
O\left[\sqrt{\frac{H^2S\log(HSA/\delta)}{nd_m}}+{\frac{H^2S\log(HSA/\delta)}{nd_m}}\right],
\]
which finishes the proof of Theorem~\ref{thm:offline_rf}. 

\begin{remark}
	Note above result is tight in both the dominate term AND the higher order term. Therefore this result cannot be further improved even in the higher order term.
\end{remark}

\section{Discussion of Section~\ref{sec:application}}\label{sec:app_application}

In this section we explain why Theorem~\ref{thm:offline_ta} and Theorem~\ref{thm:offline_rf} are optimal in the offline RL. 

We begin with the offline task-agnostic setting. For the exquisite readers who check the proof of Theorem~5 of \cite{zhang2020task}, the proving procedure of their lower bound follows the standard reduction to best-arm identification in multi-armed bandit problems. More specifically, to incorporate the dependence of $\log(K)$, they rely on the Theorem~10 of \cite{zhang2020task} (which is originated from \cite{mannor2004sample}) to show in order to be $(\epsilon,\delta)$-correct for a problem with $A$ arms and with $K$ tasks, it need at least $\Omega(\frac{A}{\epsilon^2}\log(\frac{K}{\delta}))$ samples. Such a result updates the Lemma~G.1. in \cite{yin2021nearoptimal} by the extra factor $\log(K)$ for the bandit problem with $K$ tasks. With no modification, the rest of the proof in Section~E of \cite{yin2021nearoptimal} follows though and one can end up with the lower bound $\Omega(H^2\log(K)/d_m\epsilon^2)$ over the problem class $\mathcal{M}_{d_{m}}:=\left\{(\mu, M) \mid \min _{t, s_{t}, a_{t}} d_{t}^{\mu}\left(s_{t}, a_{t}\right) \geq d_{m}\right\}$. The case for the offline reward-free setting is also similar. Indeed, the $\Omega(SA/\epsilon^2)$ trajectories in Lemma~4.2 in \cite{jin2020reward} could be replaced by $\Omega(1/d_m\epsilon^2)$ by choosing some hard \emph{near-uniform} behavior policy instances (see Section~E.2 in \cite{yin2021nearoptimal}) and the rest follows since by forcing $S$ such instances (Section~4.2 of \cite{jin2020reward}) to obtain $\Omega(S/d_m\epsilon^2)$ and create a chain of $\Omega(H)$ rewards for $\Omega(H^2S/d_m\epsilon^2)$.


\section{Proof of the linear MDP with anchor representations (Section~\ref{sec:discussion})}\label{sec:app_anchor}

Recall that we assume a generative oracle here. Sometimes we abuse the notation $\mathcal{K}$ for either anchor point set or the anchor point indices set. The meaning should be clear in each context.

\subsection{Model-based Plug-in Estimator for Anchor Representations}\label{subsec:method_anchor}

\textbf{Step 1:} For each $(s_k,a_k)$ where index $k\in\mathcal{K}$, collect $N$ samples from $P(\cdot|s_k,a_k)$; compute 
\[
\widehat{P}_\mathcal{K}(s'|s_k,a_k)=\frac{count(s,a,s')}{N};
\]
\textbf{Step 2:} Compute the linear combination coefficients $\lambda_k^{s,a}$ satisfies $\phi(s,a)=\sum_{k\in\mathcal{K}}\lambda_k^{s,a}\phi(s_k,a_k)$;

\textbf{Step 3:} Estimate transition distribution 
\[
\widehat{P}(s'|s,a)=\sum_{k\in\mathcal{K}} \lambda_k^{s,a}\cdot \widehat{P}_\mathcal{K}(s'|s_k,a_k).
\]

We need to check such $\widehat{P}(s'|s,a)$ is a valid distribution.
This is due to:

\begin{align*}
\sum_{k\in\mathcal{K}}{\lambda_k^{s,a}}=&\sum_{k\in\mathcal{K}}\sum_{s'}{\lambda_k^{s,a}}P(s'|s_k,a_k)=\sum_{s'}\sum_{k\in\mathcal{K}}{\lambda_k^{s,a}}P(s'|s_k,a_k)\\
=&\sum_{s'}\sum_{k\in\mathcal{K}}{\lambda_k^{s,a}}\langle \phi(s_k,a_k),\psi(s')\rangle=\sum_{s'}\langle \phi(s,a),\psi(s')\rangle=\sum_{s'}P(s'|s,a)=1\\
\end{align*}
and 
\begin{align*}
\sum_{s'}\widehat{P}(s'|s,a)=&\sum_{s'}\sum_{k \in \mathcal{K}} \lambda_{k}^{s, a} \widehat{P}_{\mathcal{K}}\left(s^{\prime} \mid s_{k}, a_{k}\right)=\sum_{k \in \mathcal{K}}\sum_{s'} \lambda_{k}^{s, a} \widehat{P}_{\mathcal{K}}\left(s^{\prime} \mid s_{k}, a_{k}\right)\\
=&\sum_{k \in \mathcal{K}}\lambda_{k}^{s, a} \frac{N}{N}=1.\\
\end{align*}

\textbf{Step 4:} construct empirical model $\widehat{M}=(\mathcal{S},\mathcal{A},\widehat{P},r,H)$ and output $\widehat{\pi}^\star=\argmax_\pi \widehat{V}_1^\pi$.

Similarly, Bellman (optimality) equations hold\footnote{We use the integral only to denote $\mathcal{S}$ could be exponentially large. }

\begin{align*}
V^\star_t(s)&=\max_a \left\{r(s,a)+\int_{s'}V^\star_{t+1}(s') dP(s'|s,a)\right\},\quad \forall s\in\mathcal{S}.\\
\widehat{V}^\star_t(s)&=\max_a \left\{r(s,a)+\int_{s'}\widehat{V}^\star_{t+1}(s')d\widehat{P}(s'|s,a)\right\},\quad \forall s\in\mathcal{S}.
\end{align*}

\subsection{General absorbing MDP} 

The definition of the general absorbing MDP remains the same: \emph{i.e.} for a fixed state $s$ and a sequence $\{u_t\}_{t=1}^H$, MDP $M_{s,\{u_t\}_{t=1}^H}$ is identical to $M$ for all states except $s$, and state $s$ is absorbing in the sense $P_{M_{s,\{u_t\}_{t=1}^H}}(s|s,a)=1$ for all $a$, and the instantaneous reward at time $t$ is $r_t(s,a)=u_t$ for all $a\in\mathcal{A}$. Also, we use the shorthand notation $V^\pi_{\{s,u_t\}}$ for $V^\pi_{s,{M_{s,\{u_t\}_{t=1}^H}}}$ and similarly for $Q_{\{s,u_t\}}$ and transition $P_{\{s,u_t\}}$. Then the following properties mirroring the Lemma~\ref{lem:absorb_value}  and Lemma~\ref{lem:q_diff} with nearly identical proof but for the integral version (which we skip):
\vspace{1em}
\begin{lemma}\label{lem:absorb_value_anchor}
	\[
	V^\star_{h,\{s,u_t\}}(s)=\sum_{t=h}^H u_t.
	\]
\end{lemma}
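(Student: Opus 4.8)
The plan is to prove the identity by backward induction on $h$, following the same route as the proof of Lemma~\ref{lem:absorb_value} but replacing the finite sums over next states with integrals against $P_{\{s,u_t\}}(\cdot|s,a)$, since under Definition~\ref{def:anchor} the state space $\cS$ may be exponentially large (or continuous). The only structural fact I rely on is that $s$ is absorbing, \emph{i.e.} $P_{\{s,u_t\}}(s|s,a)=1$ for every action $a$, so that the transition out of $s$ is a point mass at $s$ irrespective of the anchor/linear representation of the underlying $M$.

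First I would treat the base case $h=H$. By the convention $V^\star_{H+1,\{s,u_t\}}\equiv 0$ and the Bellman optimality equation evaluated at the absorbing state,
\[
V^\star_{H,\{s,u_t\}}(s)=\max_a\Big\{r_{H,\{s,u_t\}}(s,a)+\int_{s'}V^\star_{H+1,\{s,u_t\}}(s')\,dP_{\{s,u_t\}}(s'|s,a)\Big\}=\max_a r_{H,\{s,u_t\}}(s,a)=u_H,
\]
where the final equality uses $r_{H,\{s,u_t\}}(s,a)=u_H$ for all $a\in\cA$. For the inductive step at a general $h$, I would again write the Bellman optimality equation at $s$ and exploit the point mass: since $P_{\{s,u_t\}}(\cdot|s,a)$ is concentrated at $s$, the next-state integral collapses, $\int_{s'}V^\star_{h+1,\{s,u_t\}}(s')\,dP_{\{s,u_t\}}(s'|s,a)=V^\star_{h+1,\{s,u_t\}}(s)$ for every $a$. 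Invoking the induction hypothesis $V^\star_{h+1,\{s,u_t\}}(s)=\sum_{t=h+1}^H u_t$ together with $r_{h,\{s,u_t\}}(s,a)=u_h$ then yields $V^\star_{h,\{s,u_t\}}(s)=u_h+\sum_{t=h+1}^H u_t=\sum_{t=h}^H u_t$, which closes the induction.

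I do not anticipate a genuine obstacle: the argument is a verbatim translation of the tabular proof of Lemma~\ref{lem:absorb_value}, and the absorbing property renders the next-state integral trivial, so the linear/anchor structure of the non-absorbing part of $M$ never enters the computation at state $s$. The only point requiring a line of care is notational---confirming that the integral against the Dirac-type transition is handled correctly---but this is immediate once one observes that $P_{\{s,u_t\}}(s|s,a)=1$ forces all probability mass onto $s$, so that no integrability or measurability issue arises.
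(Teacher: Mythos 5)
Your proof is correct and is exactly what the paper intends: the paper skips this proof, stating it is ``nearly identical'' to the tabular Lemma~\ref{lem:absorb_value} but in integral form, and your backward induction with the absorbing point-mass transition collapsing the next-state integral is precisely that translation.
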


\begin{lemma}\label{lem:q_diff_anchor}
	Fix state $s$. For two different sequences $\{u_t\}_{t=1}^H$ and $\{u_t^\prime\}_{t=1}^H$, we have 
	\[
	\max_h\norm{Q^\star_{h,\{s,u_t\}}-Q^\star_{h,\{s,u_t^\prime\}}}_\infty\leq H\cdot\max_{t\in[H]}\left|u_t-u^\prime_t\right|.
	\]
	
\end{lemma}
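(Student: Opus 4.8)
The plan is to mirror the discrete argument of Lemma~\ref{lem:q_diff} essentially verbatim, replacing every sum over next-states by the corresponding integral against $P(\cdot\mid s,a)$ and reading each one-step transition $P^{\pi_a}_{\{s,u_t\}}$ as a positive linear (Markov) operator on bounded measurable functions. The single structural fact that makes everything go through is that the absorbing transition kernel $P_{\{s,u_t\}}$ depends only on the absorbing state $s$ and \emph{not} on the reward sequence $\{u_t\}_{t=1}^H$; consequently $M_{s,\{u_t\}}$ and $M_{s,\{u'_t\}}$ share the same dynamics and differ only in their instantaneous rewards.

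First I would let $\pi^\star_{\{s,u_t\}}$ be an optimal policy for $M_{s,\{u_t\}}$ and write, via the value representation of $Q^\star$,
\[
Q^\star_{h,\{s,u_t\}}-Q^\star_{h,\{s,u'_t\}}
= Q^\star_{h,\{s,u_t\}}-\max_{\pi}\sum_{i=h}^H\Big(\prod_{a=h+1}^{i}P^{\pi_a}_{\{s,u'_t\}}\Big)r_{i,\{s,u'_t\}}.
\]
Plugging the suboptimal comparator $\pi=\pi^\star_{\{s,u_t\}}$ into the maximization can only decrease it, and because the two MDPs share dynamics both $Q^\star$ terms unroll along the \emph{identical} operator product $\prod_{a=h+1}^{i}P^{\pi^\star_{a,\{s,u_t\}}}$. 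The transition factors therefore cancel and the difference collapses to
\[
\sum_{i=h}^H\Big(\prod_{a=h+1}^{i}P^{\pi^\star_{a,\{s,u_t\}}}_{\{s,u'_t\}}\Big)\big(r_{i,\{s,u_t\}}-r_{i,\{s,u'_t\}}\big).
\]

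Next I would bound each summand. The reward-difference function $r_{i,\{s,u_t\}}-r_{i,\{s,u'_t\}}$ vanishes at every state $\tilde s\neq s$ (there both MDPs coincide with $M$) and equals $u_i-u'_i$ at the absorbing state $s$ for every action; hence its sup-norm is at most $\max_t|u_t-u'_t|$. Each finite product $\prod_{a=h+1}^{i}P^{\pi^\star_a}_{\{s,u'_t\}}$ is a composition of Markov operators, which are positive and map the constant function $\mathbf{1}$ to itself (the integral-version analog of row-stochasticity), hence non-expansive in $\norm{\cdot}_\infty$, so each term is bounded element-wise by $\max_t|u_t-u'_t|\cdot\mathbf{1}$. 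Summing the $H-h+1\le H$ terms gives $Q^\star_{h,\{s,u_t\}}-Q^\star_{h,\{s,u'_t\}}\le H\max_t|u_t-u'_t|\cdot\mathbf{1}$, and exchanging the roles of $\{u_t\}$ and $\{u'_t\}$ yields the matching lower bound, establishing the claimed $\ell_\infty$ inequality after taking $\max_h$.

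The only point needing extra care relative to the tabular proof is that here $\mathcal{S}$ may be uncountable, so ``supported at the single state $s$'' must be handled in sup-norm rather than through an $\ell_1$ count; this causes no difficulty because the bound is stated in $\norm{\cdot}_\infty$ and the Markov operators are $\norm{\cdot}_\infty$-non-expansive regardless of whether $\{s\}$ is a null set. I expect this measure-theoretic bookkeeping --- confirming that $P_{\{s,u_t\}}$ is a well-defined Markov operator on bounded measurable functions and is independent of the reward sequence $\{u_t\}$ --- to be the only (mild) obstacle; everything else is a direct transcription of the discrete argument.
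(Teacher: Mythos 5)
Your proposal is correct and follows essentially the same route as the paper: the paper proves this lemma by noting it mirrors the tabular Lemma~\ref{lem:q_diff} with a "nearly identical proof but for the integral version," and your argument is precisely that proof --- plug the optimal policy of one absorbing MDP into the other, use that the absorbing transition kernel is independent of the reward sequence so the operator products cancel, bound the reward difference in sup-norm, and invoke non-expansiveness of the Markov operators plus symmetry.
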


\subsection{Singleton-absorbing MDP}

The well-definedness of singleton-absorbing MDP for linear MDP with anchor points depends on the following two lemmas whose proofs are still nearly identical to Lemma~\ref{lem:pos_diff} and Lemma~\ref{lem:smdp_prop} which we skip.
\begin{lemma}\label{lem:pos_diff_anchor}
	$ V^\star_t(s)-V^\star_{t+1}(s)\geq 0$, for all state $s\in\mathcal{S}$ and all $t\in[H]$.	
\end{lemma}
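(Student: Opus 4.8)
The plan is to reproduce verbatim the argument of Lemma~\ref{lem:pos_diff}, since the two facts that proof exploited---stationarity (time-invariance) of the transition kernel $P$ and nonnegativity of the reward $r$---both continue to hold under Definition~\ref{def:anchor}. The anchor representation supplies only a parametric linear form $P(s'|s,a)=\sum_{k}\phi_k(s,a)\psi_k(s')$ for the transition, but this kernel is still time-homogeneous, so nothing in the original reasoning breaks. The only cosmetic adjustment is to replace sums over next states by integrals $\int_{s'}\cdots\,dP(s'|s,a)$, to accommodate the possibly exponentially large $\mathcal{S}$.

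First I would fix $s\in\mathcal{S}$ and $t\in[H]$ and let $\pi^\star_{t+1:H}$ be an optimal tail policy achieving $V^\star_{t+1}$, so that $V^\star_{t+1}=V^{\pi^\star_{t+1:H}}_{t+1}$. I would then build an auxiliary policy $\pi_{t:H}$ that copies this optimal tail one step earlier, setting $\pi_{t:H-1}:=\pi^\star_{t+1:H}$ with $\pi_H$ arbitrary. Since any feasible policy lower-bounds the optimal value, $V^\star_t(s)\geq V^{\pi_{t:H}}_t(s)$.

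The crucial step is evaluating $V^{\pi_{t:H}}_t(s)$: I would split its return into the first $H-t$ increments and the final increment, and then invoke stationarity. Because $P$ is time-invariant, rolling out $\pi^\star_{t+1:H}$ from $s$ starting at time $t$ induces exactly the same law over the next $H-t$ rewards as rolling it out from $s$ starting at time $t+1$. Concretely,
\[
\begin{aligned}
V^{\pi_{t:H}}_t(s)&=\E^{\pi_{t:H-1}}\Big[\textstyle\sum_{i=t}^{H-1} r(s_i,a_i)\,\Big|\,s_t=s\Big]+\E^{\pi_{t:H}}\big[r(s_H,a_H)\,\big|\,s_t=s\big]\\
&=V^\star_{t+1}(s)+\E^{\pi_{t:H}}\big[r(s_H,a_H)\,\big|\,s_t=s\big].
\end{aligned}
\]
Discarding the final expectation, which is nonnegative because $r\geq 0$, yields $V^\star_t(s)\geq V^\star_{t+1}(s)$, as desired.

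I do not anticipate any genuine obstacle: the structure is identical to the tabular proof, and the distinctive anchor hypotheses (convex combination of anchor pairs, the factorization into $\phi$ and $\psi$) play no role, as they never enter the stationarity-plus-nonnegativity argument. The single line requiring care is the stationarity-based identification of the two rollouts---precisely the point the remark following Lemma~\ref{lem:pos_diff} flags as failing for non-stationary transitions---and I would simply reuse that observation in the integral setting.
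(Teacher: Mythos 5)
Your proposal is correct and matches the paper's own treatment: the paper explicitly states that the proof of Lemma~\ref{lem:pos_diff_anchor} is nearly identical to that of Lemma~\ref{lem:pos_diff}, which is exactly the tail-policy construction plus stationarity and reward nonnegativity that you reproduce. Your observation that the anchor structure ($\phi$, $\psi$, convex combinations) plays no role and that only the sum-to-integral adjustment is needed is precisely the intended reading.
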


\begin{lemma}\label{lem:smdp_prop_anchor}
	Fix a state $s$. If we choose $u_t^\star:= V^\star_t(s)-V^\star_{t+1}(s)$ $\forall t\in[H]$, then we have the following vector form equation 
	\[
	V^\star_{h,\{s,u_t^\star\}}=V^\star_{h,M}\quad \forall h\in[H].
	\]
	Similarly, if we choose $\widehat{u}_t^\star:= \widehat{V}^\star_t(s)-\widehat{V}^\star_{t+1}(s)$, then $\widehat{V}^\star_{h,\{s,\hat{u}_t^\star\}}=\widehat{V}^\star_{h,M}$, $\forall h\in[H]$.
\end{lemma}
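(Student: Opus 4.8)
The plan is to prove both identities by backward induction on $h$, transcribing the tabular argument of Lemma~\ref{lem:smdp_prop} into the integral (anchor-representable) setting of Definition~\ref{def:anchor}; nothing about the linear/anchor structure will be needed beyond the fact that $P(\cdot|s,a)$ and $\widehat{P}(\cdot|s,a)$ are bona fide (possibly continuous) probability measures. First I would record that the choice $u_t^\star:=V^\star_t(s)-V^\star_{t+1}(s)$ is an admissible (nonnegative) reward assignment by Lemma~\ref{lem:pos_diff_anchor}, so the singleton-absorbing MDP $M_{s,\{u_t^\star\}_{t=1}^H}$ is well-defined. This is the \emph{only} place the stationarity of the kernel genuinely enters the argument.

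For the base case $h=H$, I would use the convention $V^\star_{H+1}\equiv 0$, which gives $u_H^\star=V^\star_H(s)$. At the absorbing state $s$, Lemma~\ref{lem:absorb_value_anchor} yields $V^\star_{H,\{s,u_t^\star\}}(s)=u_H^\star=V^\star_H(s)$; at every other state $\tilde{s}\neq s$ the absorbing MDP coincides with $M$ in both reward and transition kernel, so the integral Bellman optimality equation returns $V^\star_{H,\{s,u_t^\star\}}(\tilde{s})=V^\star_H(\tilde{s})$ immediately. For the inductive step, assuming $V^\star_{h+1,\{s,u_t^\star\}}=V^\star_{h+1,M}$ pointwise, I would split on the two types of states. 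At $s$, Lemma~\ref{lem:absorb_value_anchor} telescopes: $V^\star_{h,\{s,u_t^\star\}}(s)=\sum_{t=h}^H u_t^\star=\sum_{t=h}^H\bigl(V^\star_t(s)-V^\star_{t+1}(s)\bigr)=V^\star_h(s)$. At $\tilde{s}\neq s$, I would write the integral Bellman optimality equation for the absorbing MDP, replace its reward and kernel by those of $M$ (valid since the two MDPs agree off $s$), and then substitute the inductive hypothesis $V^\star_{h+1,\{s,u_t^\star\}}=V^\star_{h+1}$ inside the integral $\int_{s'}V^\star_{h+1,\{s,u_t^\star\}}(s')\,dP(s'|\tilde{s},a)$ to recover $V^\star_h(\tilde{s})$.

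The $\widehat{u}^\star$ claim is word-for-word identical after replacing $(P,V^\star,u^\star)$ with $(\widehat{P},\widehat{V}^\star,\widehat{u}^\star)$ and using the integral empirical Bellman optimality equation from Section~\ref{subsec:method_anchor}; one only needs the nonnegativity $\widehat{V}^\star_t(s)-\widehat{V}^\star_{t+1}(s)\geq 0$, which follows from the very same argument since the empirical kernel $\widehat{P}$ is stationary as well, together with the empirical halves of Lemma~\ref{lem:absorb_value_anchor}.

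I do not expect a genuine obstacle here. The argument never touches the anchor decomposition beyond $\int_{s'}dP(s'|s,a)=1$, so the move from the tabular $\sum_{s'}$ to the integral $\int_{s'}$ is harmless and no measurability or dominated-convergence subtlety arises (all value functions are bounded in $[0,H]$). The only care required is notational bookkeeping — keeping the absorbing-state telescoping separate from the off-state Bellman recursion in the induction — which is precisely why the paper defers the proof as being ``nearly identical'' to Lemma~\ref{lem:smdp_prop}.
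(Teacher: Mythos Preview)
Your proposal is correct and follows exactly the route the paper intends: backward induction on $h$, splitting on whether the state equals the absorbing state $s$ (telescoping via Lemma~\ref{lem:absorb_value_anchor}) or not (Bellman optimality plus the inductive hypothesis), with the only change being $\sum_{s'}\mapsto\int_{s'}$. The paper itself skips the proof as ``nearly identical'' to Lemma~\ref{lem:smdp_prop}, and your write-up is precisely that transcription.
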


The singleton MDP we used is exactly $M_{s,\{u^\star_t\}_{t=1}^H}$ (or $\widehat{M}_{s,\{u^\star_t\}_{t=1}^H}$). 

\subsection{Proof for the optimal sample complexity}\label{sec:dec_anchor}

For $\widehat{\pi}^\star$, by (empirical) Bellman equation we have element-wisely:
\begin{align*}
\widehat{Q}_h^{\widehat{\pi}^\star}- {Q}_h^{\widehat{\pi}^\star}&=r_h+\widehat{P}^{\widehat{\pi}^\star_{h+1}} \widehat{Q}_{h+1}^{\widehat{\pi}^\star}-r_h-{P}^{\widehat{\pi}^\star_{h+1}} {Q}_{h+1}^{\widehat{\pi}^\star}\\
&=\left(\widehat{P}^{\widehat{\pi}^\star_{h+1}}-{P}^{\widehat{\pi}^\star_{h+1}}\right)\widehat{Q}_{h+1}^{\widehat{\pi}^\star} + {P}^{\widehat{\pi}^\star_{h+1}} \left(\widehat{Q}_{h+1}^{\widehat{\pi}^\star}-{Q}_{h+1}^{\widehat{\pi}^\star}\right)\\
&=\left(\widehat{P}-{P}\right)\widehat{V}_{h+1}^{\widehat{\pi}^\star}+ {P}^{\widehat{\pi}^\star_{h+1}} \left(\widehat{Q}_{h+1}^{\widehat{\pi}^\star}-{Q}_{h+1}^{\widehat{\pi}^\star}\right)\\
&=\ldots=\sum_{t=h}^H\Gamma_{h+1:t}^{\widehat{\pi}^\star}\left(\widehat{P}-{P}\right)\widehat{V}_{t+1}^{\widehat{\pi}^\star}\leq \underbrace{\sum_{t=h}^H\Gamma_{h+1:t}^{\widehat{\pi}^\star}\left|\left(\widehat{P}-{P}\right)\widehat{V}_{t+1}^{\widehat{\pi}^\star}\right|}_{(\text{$\star$})}\\
\end{align*}

where $\Gamma_{h+1:t}^{\pi^\star}=\prod_{i=h+1}^t P^{\pi^\star_i}$ is multi-step state-action transition and $\Gamma_{h+1:h}:=I$.

\subsection{Analyzing (\text{$\star$})} 
\textbf{Concentration on $\left(\widehat{P}-P\right)\hoV_h$.} Since $\hP$ aggregates all data from different step so that $\hP$ and $\hoV_h$ are on longer independent. We use the singleton-absorbing MDP ${M_{s,\{u_t^\star\}_{t=1}^H}}$ to handle the case (recall $u_t^\star:= V^\star_t(s)-V^\star_{t+1}(s)$ $\forall t\in[H]$).  \textbf{Here, we fix the state action $(s,a)\in\mathcal{K}$.} Then we have:
\begin{equation}\label{eqn:bern_absorb_anchor}
\begin{aligned}
&\left( \hP_{s,a}-P_{s,a} \right)\hoV_h=\left( \hP_{s,a}-P_{s,a} \right)\left(\hoV_h-\hoV_{h,\{s,u^\star_t\}}+\hoV_{h,\{s,u^\star_t\}}\right)\\
=& \left( \hP_{s,a}-P_{s,a} \right)\left(\hoV_h-\hoV_{h,\{s,u^\star_t\}}\right)+ \left( \hP_{s,a}-P_{s,a} \right)\hoV_{h,\{s,u^\star_t\}}\\
\leq& \norm{\hP_{s,a}-P_{s,a}}_1\norm{\hoV_h-\hoV_{h,\{s,u^\star_t\}}}_\infty+\sqrt{\frac{2\log(4/\delta)}{N}}\sqrt{\Var_{s,a}(\hoV_{h,\{s,u^\star_t\}})}+\frac{2H\log(1/\delta)}{3N}\\
\leq& \norm{\hP_{s,a}-P_{s,a}}_1\norm{\hoV_h-\hoV_{h,\{s,u^\star_t\}}}_\infty+\sqrt{\frac{2\log(4/\delta)}{N}}\left(\sqrt{\Var_{s,a}(\hoV_{h})}+\sqrt{\Var_{s,a}(\hoV_{h,\{s,u^\star_t\}}-\hoV_{h})}\right)+\frac{2H\log(1/\delta)}{3N}\\
\leq& \norm{\hP_{s,a}-P_{s,a}}_1\norm{\hoV_h-\hoV_{h,\{s,u^\star_t\}}}_\infty+\sqrt{\frac{2\log(4/\delta)}{N}}\left(\sqrt{\Var_{s,a}(\hoV_{h})}+\sqrt{\norm{\hoV_{h,\{s,u^\star_t\}}-\hoV_{h}}^2_\infty}\right)+\frac{2H\log(1/\delta)}{3N}\\
=& \left(\norm{\hP_{s,a}-P_{s,a}}_1+\sqrt{\frac{2\log(4/\delta)}{N}}\right)\norm{\hoV_h-\hoV_{h,\{s,u^\star_t\}}}_\infty+\sqrt{\frac{2\log(4/\delta)}{N}}\sqrt{\Var_{s,a}(\hoV_{h})}+\frac{2H\log(1/\delta)}{3N}\\
\end{aligned}
\end{equation}

where the first inequality uses Bernstein inequality (Lemma~\ref{lem:bernstein_ineq}) (\textbf{note here $P_{s,a}V=\int_{s'} V(s')dP(s'|s,a)$ since $\mathcal{S}$ could be continuous space, but this does not affect the availability of Bernstein inequality!}), the second inequality uses $\sqrt{\Var(\cdot)}$ is norm (norm triangle inequality). Now we treat $\norm{\hP_{s,a}-P_{s,a}}_1$ and $\norm{\hoV_h-\hoV_{h,\{s,u^\star_t\}}}_\infty$ separately.

\textbf{For $\norm{\hP_{s,a}-P_{s,a}}_1$.} Recall here $(s,a)\in\mathcal{K}$. By Lemma~\ref{lem:l1_upper} we obtain w.p. $1-\delta$
\begin{equation}\label{eqn:l1_anchor}
\norm{\hP_{s,a}-P_{s,a}}_1\leq C\sqrt{\frac{|\mathcal{S}|\log(1/\delta)}{N}}.
\end{equation}
where $C$ absorbs the higher order term and constants.

\textbf{For $\norm{\hoV_h-\hoV_{h,\{s,u^\star_t\}}}_\infty$.} Note if we set $\widehat{u}^\star_t=\widehat{V}^\star_t(s)-\widehat{V}^\star_{t+1}(s)$, then by Lemma~\ref{lem:smdp_prop_anchor}
\[
\hoV_h=\hoV_{h,\{s,\hat{u}^\star_t\}}
\]
Next since $\hoV_{h,\{s,\hat{u}^\star_t\}}(\tilde{s})=\max_a \widehat{Q}^\star_{h,\{s,\hat{u}^\star_t\}}(\tilde{s},a)$ $\forall \tilde{s}\in\mathcal{S}$, by generic inequality $|\max f-\max g|\leq \max|f-g|$, we have $|\hoV_{h,\{s,\hat{u}^\star_t\}}(\tilde{s})-\hoV_{h,\{s,{u}^\star_t\}}(\tilde{s})|\leq \max_a |\widehat{Q}^\star_{h,\{s,\hat{u}^\star_t\}}(\tilde{s},a)-\widehat{Q}^\star_{h,\{s,{u}^\star_t\}}(\tilde{s},a)|$, taking $\max_{\tilde{s}}$ on both sides, we obtain exactly
\[
\norm{\hoV_{h,\{s,\hat{u}^\star_t\}}-\hoV_{h,\{s,{u}^\star_t\}}}_\infty\leq \norm {\widehat{Q}^\star_{h,\{s,\hat{u}^\star_t\}}-\widehat{Q}^\star_{h,\{s,{u}^\star_t\}}}_\infty
\]
then by Lemma~\ref{lem:q_diff_anchor}, 
\begin{equation}\label{eqn:higher_diff_anchor}
\norm{\hoV_h-\hoV_{h,\{s,u^\star_t\}}}_\infty\leq \norm {\widehat{Q}^\star_{h,\{s,\hat{u}^\star_t\}}-\widehat{Q}^\star_{h,\{s,{u}^\star_t\}}}_\infty\leq H\max_t\left|\hat{u}^\star_t-u^\star_t\right|,
\end{equation}
Recall 
\[
\hat{u}^\star_t-u^\star_t=\widehat{V}^\star_t(s)-\widehat{V}^\star_{t+1}(s)-\left(V^\star_t(s)-V^\star_{t+1}(s)\right).
\]
Now we denote 
\[
\Delta_s:= \max_t|\hat{u}^\star_t-u^\star_t|=\max_t\left|\widehat{V}^\star_t(s)-\widehat{V}^\star_{t+1}(s)-\left(V^\star_t(s)-V^\star_{t+1}(s)\right)\right|,
\]
then $\Delta_s$ itself is a scalar and a random variable.

To sum up, by \eqref{eqn:bern_absorb_anchor}, \eqref{eqn:l1} and \eqref{eqn:higher_diff_anchor} and a union bound over all $(s,a)\in\mathcal{K}$ we have 
\begin{lemma}\label{lem:inter_bern_anchor}
	Fix $N>0$. With probability $1-\delta$, element-wisely, for all $h\in[H]$ and all $(s_{k},a_k)\in\mathcal{K}$,
	\begin{align*}
	\left|\left( \hP_{s_{k},a_k}-P_{s_{k},a_k} \right)\hoV_h\right|\leq& C\sqrt{\frac{|\mathcal{S}|\log(HK/\delta)}{N}}\cdot H\max_{s_k}\Delta_{s_k}\\
	+&\sqrt{\frac{2\log(4HK/\delta)}{N}}\sqrt{\Var_{P_{s_{k},a_k}}(\hoV_{h})}+\frac{2H\log(HK/\delta)}{3N}
	\end{align*}
	
\end{lemma}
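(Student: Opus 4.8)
The plan is to read off Lemma~\ref{lem:inter_bern_anchor} by assembling the three ingredients already established for a single fixed pair, \eqref{eqn:bern_absorb_anchor}, \eqref{eqn:l1_anchor} and \eqref{eqn:higher_diff_anchor}, and then paying for a union bound over the finite index set $\mathcal{K}\times[H]$. Concretely, I would fix one anchor pair $(s_k,a_k)\in\mathcal{K}$ and one horizon index $h\in[H]$, and begin from the last line of \eqref{eqn:bern_absorb_anchor}, which already decomposes $(\hP_{s_k,a_k}-P_{s_k,a_k})\hoV_h$ into three pieces: (i) the coefficient $\left(\norm{\hP_{s_k,a_k}-P_{s_k,a_k}}_1+\sqrt{2\log(4/\delta)/N}\right)$ times the approximation gap $\norm{\hoV_h-\hoV_{h,\{s_k,u^\star_t\}}}_\infty$; (ii) the variance term $\sqrt{2\log(4/\delta)/N}\,\sqrt{\Var_{s_k,a_k}(\hoV_h)}$; and (iii) the higher-order remainder $2H\log(1/\delta)/(3N)$.

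Next I would collapse term (i) using the two bounds in hand. By \eqref{eqn:l1_anchor}, $\norm{\hP_{s_k,a_k}-P_{s_k,a_k}}_1\le C\sqrt{|\mathcal{S}|\log(1/\delta)/N}$, and since $\sqrt{2\log(4/\delta)/N}$ is of strictly smaller order in $|\mathcal{S}|$, the whole bracket is $O(\sqrt{|\mathcal{S}|\log(1/\delta)/N})$ after adjusting the absolute constant. For the approximation gap, \eqref{eqn:higher_diff_anchor} gives $\norm{\hoV_h-\hoV_{h,\{s_k,u^\star_t\}}}_\infty\le H\max_t|\hat u^\star_t-u^\star_t|=H\Delta_{s_k}$. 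Multiplying and then bounding $\Delta_{s_k}\le\max_{s_k}\Delta_{s_k}$ turns term (i) into the first summand $C\sqrt{|\mathcal{S}|\log(HK/\delta)/N}\cdot H\max_{s_k}\Delta_{s_k}$ of the claim, while terms (ii) and (iii) survive essentially unchanged.

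The only genuinely probabilistic step is the union bound. The Bernstein step behind \eqref{eqn:bern_absorb_anchor} (Lemma~\ref{lem:bernstein_ineq}) and the $\ell_1$ deviation \eqref{eqn:l1_anchor} each hold for a single triple $(s_k,a_k,h)$ on an event of probability at least $1-\delta$; I would rescale $\delta\mapsto\delta/(HK)$ and union over the at most $|\mathcal{K}|\le K$ anchor pairs and the $H$ horizon indices. This replaces $\log(1/\delta)$ by $\log(HK/\delta)$ and $\log(4/\delta)$ by $\log(4HK/\delta)$ in all three terms, yielding the stated bound simultaneously for every $(s_k,a_k)\in\mathcal{K}$ and every $h\in[H]$.

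The step I expect to matter most is not a calculation but the \emph{scope} of this union bound: the entire advantage of the anchor construction is that one only needs concentration of $\hP$ at the $K$ anchor pairs, never over the possibly continuous or exponentially large space $\mathcal{S}\times\mathcal{A}$, so the logarithmic factor stays at $\log(HK/\delta)$ rather than a prohibitive $\log(|\mathcal{S}||\mathcal{A}|/\delta)$. A minor point I would verify is that Bernstein's inequality remains valid when $\mathcal{S}$ is continuous, since $P_{s_k,a_k}V=\int_{s'}V(s')\,dP(s'|s_k,a_k)$ is still the mean of bounded i.i.d.\ draws $V(s'_i)$; the excerpt already flags this, so it is immediate. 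Everything else --- the triangle inequality for $\sqrt{\Var(\cdot)}$ used inside \eqref{eqn:bern_absorb_anchor} and the $|\max f-\max g|\le\max|f-g|$ step behind \eqref{eqn:higher_diff_anchor} --- has been carried out already and only needs to be invoked.
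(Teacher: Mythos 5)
Your proposal is correct and follows essentially the same route as the paper, which obtains the lemma by combining \eqref{eqn:bern_absorb_anchor}, the anchor $\ell_1$ bound \eqref{eqn:l1_anchor}, and \eqref{eqn:higher_diff_anchor} with a union bound over the $HK$ pairs $(s_k,a_k,h)$, exactly as you describe (the paper's citation of \eqref{eqn:l1} there is evidently a typo for \eqref{eqn:l1_anchor}). Your added remarks --- that the union bound's scope is only the anchor set rather than $\mathcal{S}\times\mathcal{A}$, and that Bernstein's inequality survives continuous $\mathcal{S}$ --- match the paper's own parenthetical observations.
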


Now we extend Lemma~\ref{lem:inter_bern_anchor} to any arbitrary $(s,a)$ by proving the following lemma:

\begin{lemma}[recover lemma]\label{lem:recover}
	For any function $V$ and any state action $(s,a)$, we have 
	\[
	\sum_{k\in\mathcal{K}}\lambda_{k}^{s,a}\sqrt{\Var_{P_{s_{k},a_k}}(V)}\leq \sqrt{\Var_{P_{s,a}}(V)}
	\]
\end{lemma}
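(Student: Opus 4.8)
The plan is to reduce the claimed inequality to two classical facts: the \emph{law of total variance} applied to the anchor mixture representation of $P(\cdot|s,a)$, followed by \emph{Jensen's inequality} for the concave square-root map. First I would fix the function $V$ and the pair $(s,a)$, and recall from Definition~\ref{def:anchor} (and the verification that $\widehat{P}$ is a valid distribution in Appendix~\ref{subsec:method_anchor}) that $P(\cdot|s,a)=\sum_{k\in\mathcal{K}}\lambda_k^{s,a}P(\cdot|s_k,a_k)$ with $\lambda_k^{s,a}\geq 0$ and $\sum_{k\in\mathcal{K}}\lambda_k^{s,a}=1$. Thus the weights $\{\lambda_k^{s,a}\}_{k\in\mathcal{K}}$ form a genuine probability distribution over the anchor indices, which is exactly the structure that makes both steps go through.

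Next I would introduce the shorthand $m_k:=\E_{s'\sim P(\cdot|s_k,a_k)}[V(s')]$ and $v_k:=\Var_{P_{s_k,a_k}}(V)$. Expanding the definition $\Var_{P_{s,a}}(V)=\E_{P_{s,a}}[V^2]-(\E_{P_{s,a}}[V])^2$ over the mixture and using linearity of expectation gives $\E_{P_{s,a}}[V^2]=\sum_k\lambda_k^{s,a}(v_k+m_k^2)$ and $\E_{P_{s,a}}[V]=\sum_k\lambda_k^{s,a}m_k$, so that
\[
\Var_{P_{s,a}}(V)=\sum_{k\in\mathcal{K}}\lambda_k^{s,a}v_k+\Bigl(\sum_{k\in\mathcal{K}}\lambda_k^{s,a}m_k^2-\bigl(\sum_{k\in\mathcal{K}}\lambda_k^{s,a}m_k\bigr)^2\Bigr).
\]
The parenthesized term is the variance of $m_k$ under the distribution $\{\lambda_k^{s,a}\}$ and is therefore nonnegative (it equals $\tfrac12\sum_{k,j}\lambda_k^{s,a}\lambda_j^{s,a}(m_k-m_j)^2\geq 0$), which yields the intermediate bound $\Var_{P_{s,a}}(V)\geq\sum_{k\in\mathcal{K}}\lambda_k^{s,a}v_k$.

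Finally I would apply Jensen's inequality to the concave function $x\mapsto\sqrt{x}$ with respect to the probability weights $\{\lambda_k^{s,a}\}$, giving $\sum_{k\in\mathcal{K}}\lambda_k^{s,a}\sqrt{v_k}\leq\sqrt{\sum_{k\in\mathcal{K}}\lambda_k^{s,a}v_k}$, and then chain this with the previous step and monotonicity of $\sqrt{\cdot}$ to conclude
\[
\sum_{k\in\mathcal{K}}\lambda_k^{s,a}\sqrt{\Var_{P_{s_k,a_k}}(V)}\leq\sqrt{\sum_{k\in\mathcal{K}}\lambda_k^{s,a}v_k}\leq\sqrt{\Var_{P_{s,a}}(V)}.
\]
There is no serious obstacle here; the only point requiring a little care is making sure the convexity/concavity directions are used consistently — the law-of-total-variance step needs the \emph{variance of the means} to be dropped (a lower bound on $\Var_{P_{s,a}}(V)$), while the square-root step needs Jensen in the \emph{concave} direction, and both rely crucially on $\{\lambda_k^{s,a}\}$ being a probability vector (nonnegativity plus summing to one), which is precisely what the convex-combination anchor assumption supplies.
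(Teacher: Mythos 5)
Your proof is correct and is essentially the paper's own argument: both rest on the mixture identity $P_{s,a}=\sum_{k\in\mathcal{K}}\lambda_k^{s,a}P_{s_k,a_k}$ together with Jensen's inequality for the concave square root, and your ``law of total variance'' step (dropping the nonnegative variance-of-the-means term) is exactly the paper's second application of Jensen, i.e.\ $\sum_k\lambda_k^{s,a}(P_{s_k,a_k}V)^2\geq\bigl(\sum_k\lambda_k^{s,a}P_{s_k,a_k}V\bigr)^2$, stated in a different language. The only difference is the order of the two steps, which is immaterial.
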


\begin{proof}[Proof of Lemma~\ref{lem:recover}]
	Since $\lambda_{k}^{s,a}$ are probability distributions, by Jensen's inequality twice
	\begin{align*}
		&\sum_{k\in\mathcal{K}}\lambda_{k}^{s,a}\sqrt{\Var_{P_{s_{k},a_k}}(V)}\leq \sqrt{\sum_{k\in\mathcal{K}}\lambda_{k}^{s,a}\Var_{P_{s_{k},a_k}}(V)}\\
		=&\sqrt{\sum_{k\in\mathcal{K}}\lambda_{k}^{s,a}\Var_{P_{s_{k},a_k}}(V)}=\sqrt{\sum_{k\in\mathcal{K}}\lambda_{k}^{s,a}(P_{s_{k},a_k}V^2-(P_{s_{k},a_k}V)^2)}\\
		\leq &\sqrt{\sum_{k\in\mathcal{K}}\lambda_{k}^{s,a}\cdot P_{s_{k},a_k}V^2-(\sum_{k\in\mathcal{K}}\lambda_{k}^{s,a}P_{s_{k},a_k}V)^2}\\
		=&\sqrt{P_{s,a}V^2-(P_{s,a}V)^2}=\sqrt{\mathrm{Var}_{P_{s,a}}(V)},
	\end{align*}
	where we use $P_{s,a}=\sum_{k\in\mathcal{K}}\lambda_{k}^{s,a}P_{s_k,a_k}$.
\end{proof}

Therefore for all $(s,a)$, using Lemma~\ref{lem:inter_bern_anchor} and Lemma~\ref{lem:recover} we obtain w.p. $1-\delta$,
\begin{align*}
&\left|\left( \hP_{s,a}-P_{s,a} \right)\hoV_h\right|\leq \sum_{k\in\mathcal{K}}\lambda_{k}^{s,a}
\left|\left( \hP_{s_{k},a_k}-P_{s_{k},a_k} \right)\hoV_h\right|\\
&\leq C\sum_{k\in\mathcal{K}}\lambda_{k}^{s,a}\sqrt{\frac{S\log(HK/\delta)}{N}}\cdot H\max_{s_k}\Delta_{s_k}
+\sum_{k\in\mathcal{K}}\lambda_{k}^{s,a}\sqrt{\frac{2\log(4HK/\delta)}{N}}\sqrt{\Var_{P_{s_{k},a_k}}(\hoV_{h})}\\
&+\sum_{k\in\mathcal{K}}\lambda_{k}^{s,a}\frac{2H\log(HK/\delta)}{3N}\\
&=C\sqrt{\frac{S\log(HK/\delta)}{N}}\cdot H\max_{s_k}\Delta_{s_k}
+\sum_{k\in\mathcal{K}}\lambda_{k}^{s,a}\sqrt{\frac{2\log(4HK/\delta)}{N}}\sqrt{\Var_{P_{s_{k},a_k}}(\hoV_{h})}\\
&+\frac{2H\log(HK/\delta)}{3N}\\
&\leq C\sqrt{\frac{S\log(HK/\delta)}{N}}\cdot H\max_{s_k}\Delta_{s_k}
+\sqrt{\frac{2\log(4HK/\delta)}{N}}\sqrt{\Var_{P_{s,a}}(\hoV_{h})}+\frac{2H\log(HK/\delta)}{3N}\\
\end{align*}

Now plug above back into ($\star$), we receive:

\begin{align*}
&\left|\widehat{Q}^{\widehat{\pi}^\star}_h-Q^{\widehat{\pi}^\star}_h\right|\\
\leq& 
\sum_{t=h}^H\Gamma_{h+1:t}^{\widehat{\pi}^\star}\left(C\sqrt{\frac{S\log(HK/\delta)}{N}}\cdot H\max_{s_k}\Delta_{s_k}\cdot \mathbf{1}+\sqrt{\frac{2\log(4HK/\delta)}{N}}\sqrt{\Var_{P}(\hoV_{t+1})}+\frac{2H\log(HK/\delta)}{3N}\cdot\mathbf{1}\right)\\
\leq& \sum_{t=h}^H\Gamma_{h+1:t}^{\widehat{\pi}}\sqrt{\frac{2\log(4HK/\delta)}{N}}\sqrt{\Var_{P}(\hoV_{t+1})} +CH^2\sqrt{\frac{S\log(HK/\delta)}{N}}\cdot \max_s\Delta_s\cdot \mathbf{1}+\frac{2H^2\log(HK/\delta)}{3N}\cdot\mathbf{1}\\
\end{align*}

Similar to before, we get
\begin{equation}\label{eqn:var_anchor}
\begin{aligned}
&\sqrt{\Var_{P}(\hoV_{h})}:=\sqrt{\Var_{P}\left(\widehat{V}^{\widehat{\pi}^\star}_{h}\right)}\leq \sqrt{\Var_{P}\left(V^{\widehat{\pi}^\star}_h\right)}+\norm{\widehat{Q}^{\widehat{\pi}^\star}_{h}-{Q}^{\widehat{\pi}^\star}_h}_\infty\\
\end{aligned}
\end{equation}

Plug \eqref{eqn:var_anchor} back to above we obtain $\forall h\in[H]$,
\begin{equation}\label{eqn:decomp_anchor}
\begin{aligned}
&\left|\widehat{Q}^{\widehat{\pi}^\star}_h-Q^{\widehat{\pi}^\star}_h\right|
\leq \sqrt{\frac{2\log(4HK/\delta)}{N}}\sum_{t=h}^H\Gamma_{h+1:t}^{\widehat{\pi}^\star}\left(\sqrt{\Var_{P}\left(V^{\widehat{\pi}^\star}_{t+1}\right)}+\norm{\widehat{Q}^{\widehat{\pi}^\star}_{t+1}-{Q}^{\widehat{\pi}^\star}_{t+1}}_\infty\right)\\
& +CH^2\sqrt{\frac{S\log(HK/\delta)}{N}}\cdot \max_{s_k}\Delta_{s_k}\cdot \mathbf{1}+\frac{2H^2\log(HK/\delta)}{3N}\cdot\mathbf{1}\\
&\leq \sqrt{\frac{2\log(4HK/\delta)}{N}}\sum_{t=h}^H\Gamma_{h+1:t}^{\widehat{\pi}^\star}\sqrt{\Var_{P}\left(V^{\widehat{\pi}^\star}_{t+1}\right)}+\sqrt{\frac{2\log(4HK/\delta)}{N}}\sum_{t=h}^H\norm{\widehat{Q}^{\widehat{\pi}^\star}_{t+1}-{Q}^{\widehat{\pi}^\star}_{t+1}}_\infty\\
& +CH^2\sqrt{\frac{S\log(HK/\delta)}{N}}\cdot \max_{s_k}\Delta_{s_k}\cdot \mathbf{1}+\frac{2H^2\log(HK/\delta)}{3N}\cdot\mathbf{1}\\
\end{aligned}
\end{equation}
Apply Lemma~\ref{lem:H3toH2} and the (anchor version using recover lemma~\ref{lem:recover}) coarse uniform bound (Lemma~\ref{lem:crude_u_b}) we obtain the following lemma:

\begin{lemma}\label{lem:recursion_anchor}
	With probability $1-\delta$, for all $h\in[H]$, 
	\begin{align*}
	\norm{\widehat{Q}^{\widehat{\pi}^\star}_h-Q^{\widehat{\pi}^\star}_h}_\infty
	\leq \sqrt{\frac{C_0 H^3\log(4HK/\delta)}{N}}+\sqrt{\frac{2\log(4HK/\delta)}{N}}\sum_{t=h}^H\norm{\widehat{Q}^{\widehat{\pi}^\star}_{t+1}-{Q}^{\widehat{\pi}^\star}_{t+1}}_\infty +C'H^4\frac{S\log(HK/\delta)}{N}
	\end{align*}
\end{lemma}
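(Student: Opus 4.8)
The plan is to mirror verbatim the argument used for the tabular Lemma~\ref{lem:recursion}, because inequality~\eqref{eqn:decomp_anchor} is already the precise anchor-analog of \eqref{eqn:decomp}: it has the same leading variance sum, the same recursive $\norm{\widehat{Q}^{\widehat{\pi}^\star}_{t+1}-Q^{\widehat{\pi}^\star}_{t+1}}_\infty$ term, and the same reference-gap cross term, with the state-action count replaced by $N$ and the log-factor by $\log(4HK/\delta)$. Note the anchor version is actually cleaner than \eqref{eqn:decomp}, since we only track the single policy $\widehat{\pi}^\star$ and there is no $\epsilon_{\text{opt}}$ term to carry. Thus the only two ingredients to supply are a high-probability bound on $\max_{s_k}\Delta_{s_k}$ and the law-of-total-variance reduction of the leading term.

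First I would control the reference gap $\max_{s_k}\Delta_{s_k}$. By definition $\Delta_{s_k}=\max_t|\widehat{u}^\star_t-u^\star_t|\le 2\max_t|\widehat{V}^\star_t(s_k)-V^\star_t(s_k)|$, exactly as in \eqref{eqn:delta_cb}. Bounding each $|\widehat{V}^\star_t(s_k)-V^\star_t(s_k)|$ by the worst-case value error over $\Pi_g$ and invoking the anchor version of the coarse uniform bound (Lemma~\ref{lem:crude_u_b}, whose proof only requires the $\ell_1$-concentration of $\widehat{P}_{s_k,a_k}$ together with the recover lemma~\ref{lem:recover} to pass from the anchors to arbitrary $(s,a)$), a union bound over the $K$ anchor pairs yields $\max_{s_k}\Delta_{s_k}\lesssim H^2\sqrt{S\log(HK/\delta)/N}$ with probability $1-\delta/2$. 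Substituting this into the cross term $CH^2\sqrt{S\log(HK/\delta)/N}\cdot\max_{s_k}\Delta_{s_k}$ of \eqref{eqn:decomp_anchor}, the product of the two $\sqrt{1/N}$ factors collapses it into the stated higher-order term $C'H^4 S\log(HK/\delta)/N$, which simultaneously absorbs the lower-order piece $\tfrac{2}{3}H^2\log(HK/\delta)/N$. For the leading term I would apply Lemma~\ref{lem:H3toH2} to $\widehat{\pi}^\star$, namely $\sum_{t=h}^H\Gamma_{h+1:t}^{\widehat{\pi}^\star}\sqrt{\Var_P(V^{\widehat{\pi}^\star}_{t+1})}\le\sqrt{H^3}\,\mathbf{1}$ element-wise, so that the prefactor $\sqrt{2\log(4HK/\delta)/N}$ turns it into $\sqrt{C_0 H^3\log(4HK/\delta)/N}$. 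Taking $\norm{\cdot}_\infty$ on both sides and combining the two failure events by a union bound gives the claim with probability $1-\delta$.

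The main obstacle is confirming that Lemma~\ref{lem:H3toH2} remains valid in this regime: its tabular proof rests on the law of total variance for a fixed policy under a row-stochastic transition, while in the anchor setting $\mathcal{S}$ may be exponentially large or even continuous. I would verify that $\Gamma_{h+1:t}^{\widehat{\pi}^\star}$ is still row-stochastic and that the telescoping identity $\sum_t \E_{\widehat{\pi}^\star}\!\big[\Var(V^{\widehat{\pi}^\star}_{t+1})\big]\le H^2$ is insensitive to $|\mathcal{S}|$ — which it is, since the law of total variance is a property of the induced trajectory distribution rather than of the cardinality of the state space, and the Bernstein step leading to \eqref{eqn:decomp_anchor} has already been justified for the integral (continuous-state) version. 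The rest is routine bookkeeping of the universal constants $C_0,C'$ and of the $\log(HK/\delta)$ versus $\log(4HK/\delta)$ factors.
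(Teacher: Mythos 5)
Your proposal is correct and follows essentially the same route as the paper's own proof: bound $\max_{s_k}\Delta_{s_k}\lesssim H^2\sqrt{|\mathcal{S}|\log(HK/\delta)/N}$ via the anchor version of the crude uniform bound (Lemma~\ref{lem:crude_u_b}) with a union bound over the $K$ anchors, plug this into \eqref{eqn:decomp_anchor} so that the cross term and the Bernstein lower-order term collapse into $C'H^4S\log(HK/\delta)/N$, apply Lemma~\ref{lem:H3toH2} to the variance sum to obtain the $\sqrt{C_0H^3\log(4HK/\delta)/N}$ leading term, and take $\norm{\cdot}_\infty$. One cosmetic remark: the step that extends the $\ell_1$ concentration from the anchor pairs to arbitrary $(s,a)$ inside the crude bound rests on the convexity inequality $\norm{\widehat{P}_{s,a}-P_{s,a}}_1\leq\sum_{k}\lambda_k^{s,a}\norm{\widehat{P}_{s_k,a_k}-P_{s_k,a_k}}_1$ (the paper's footnote), rather than on the variance recover Lemma~\ref{lem:recover} which you cite, but this misattribution of the tool does not affect the validity of your argument.
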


\begin{proof}
	Since
	\begin{equation}\label{eqn:delta_cb_anchor}
	\begin{aligned}
	\Delta_{s_k}&:= \max_t|\hat{u}^\star_t-u^\star_t|=\max_t\left|\widehat{V}^\star_t(s_k)-\widehat{V}^\star_{t+1}(s_k)-\left(V^\star_t(s_k)-V^\star_{t+1}(s_k)\right)\right|\\
	&\leq 2\cdot \max_t \left|\widehat{V}^\star_t(s_k)-V^\star_t(s_k)\right|\\
	&= 2\cdot \max_t \left|\max_\pi \widehat{V}^\pi_t(s_k)-\max_\pi V^\pi_t(s_k)\right|\\
	&\leq 2\cdot \max_{\pi\in\Pi_g,t\in[H]}\norm{\widehat{V}_t^\pi-V_t^\pi}_\infty\leq C\cdot H^2 \sqrt{\frac{|\mathcal{S}|\log(HK/\delta)}{N}}
	\end{aligned}
	\end{equation}
	where the last inequality uses (the anchor version) of Lemma~\ref{lem:crude_u_b}.\footnote{Here the anchor version means for any $(s,a)$ we can apply $||\widehat{P}_{s,a}-P_{s,a}||_1=||\sum_{k}\lambda_{k}^{s,a}(\widehat{P}_{s,a}-P_{s,a})||_1\leq \sum_{k}\lambda_{k}^{s,a}||\widehat{P}_{s,a}-P_{s,a}||_1$.} Then apply union bound w.p. $1-\delta/2$, we obtain $\max_{s_k} \Delta_{s_k}\leq C\cdot H^2 \sqrt{\frac{|\mathcal{S}|\log(HK^2/\delta)}{N}}$. Note \eqref{eqn:decomp_anchor} holds with probability $1-\delta/2$, therefore plug above into \eqref{eqn:decomp_anchor} and uses Lemma~\ref{lem:H3toH2} and take $||\cdot||_\infty$ we obtain w.p. $1-\delta$, the result holds.
\end{proof}

\begin{lemma}\label{lem:inter_final_anchor}
	Given $N>0$. Define $C^{\prime\prime}:=2\cdot\max(\sqrt{C_0},C')$ where $C'$ is the universal constant in Lemma~\ref{lem:recursion_anchor}. When $N\geq 8H^2|\mathcal{S}|\log(4HK/\delta)$, then with probability $1-\delta$, $\forall h\in[H]$,
	\begin{equation}\label{eqn:inter_final_anchor}
	\begin{aligned}
	\norm{\widehat{Q}^{\widehat{\pi}^\star}_h-Q^{\widehat{\pi}^\star}_h}_\infty
	\leq C^{\prime\prime}\sqrt{\frac{H^3\log(4HK/\delta)}{N}}+C^{\prime\prime}\frac{H^4S\log(HK/\delta)}{N}.\\
	\norm{\widehat{Q}^{{\pi}^\star}_h-Q^{{\pi}^\star}_h}_\infty
	\leq C^{\prime\prime}\sqrt{\frac{H^3\log(4HK/\delta)}{N}}+C^{\prime\prime}\frac{H^4S\log(HK/\delta)}{N}.\\
	\end{aligned}
	\end{equation}
	
\end{lemma}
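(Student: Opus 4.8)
The plan is to prove this lemma by backward induction on $h$, running from $h=H$ down to $h=1$, exactly mirroring the template of the tabular Lemma~\ref{lem:inter_final}. The only analytic input I need is the one-step recursion already established in Lemma~\ref{lem:recursion_anchor}; no fresh concentration argument is required, so the whole proof is essentially bookkeeping on top of that recursion. I would first dispose of the base case and then show that the sample-size hypothesis turns the summation term in Lemma~\ref{lem:recursion_anchor} into a genuine contraction.

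For the base case $h=H$, I would apply Lemma~\ref{lem:recursion_anchor} together with the convention $\widehat{Q}^{\widehat{\pi}^\star}_{H+1}=Q^{\widehat{\pi}^\star}_{H+1}=0$, so the summation on its right-hand side is empty and one is left with
\[
\norm{\widehat{Q}^{\widehat{\pi}^\star}_H-Q^{\widehat{\pi}^\star}_H}_\infty\leq \sqrt{\frac{C_0 H^3\log(4HK/\delta)}{N}}+C'\frac{H^4 S\log(HK/\delta)}{N}.
\]
Because $C^{\prime\prime}=2\max(\sqrt{C_0},C')$ dominates both $\sqrt{C_0}$ and $C'$, this is already bounded by the claimed expression.

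For the inductive step at a general $h$, I would assume the claimed bound holds for every index $t+1>h$. The hypothesis $N\geq 8H^2|\mathcal{S}|\log(4HK/\delta)$ in particular gives $N\geq 8H^2\log(4HK/\delta)$, hence the contraction factor $H\sqrt{2\log(4HK/\delta)/N}\leq 1/2$; this lets me bound the sum $\sum_{t=h}^H\norm{\widehat{Q}^{\widehat{\pi}^\star}_{t+1}-Q^{\widehat{\pi}^\star}_{t+1}}_\infty$ in Lemma~\ref{lem:recursion_anchor} by $\tfrac12$ times the maximum of the already-controlled higher-index terms (the $t+1=H+1$ term vanishing by convention). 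Substituting the inductive hypothesis and using $\sqrt{C_0}\leq C^{\prime\prime}/2$ and $C'\leq C^{\prime\prime}/2$, the leading $\sqrt{C_0 H^3\log/N}$ term together with the $\tfrac12 C^{\prime\prime}\sqrt{H^3\log/N}$ contributed by the sum combine into $C^{\prime\prime}\sqrt{H^3\log(4HK/\delta)/N}$, while the two $\tfrac12 C^{\prime\prime} H^4 S\log/N$ pieces combine into $C^{\prime\prime}H^4 S\log(HK/\delta)/N$. This closes the induction and yields the first inequality.

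The second inequality, for the fixed optimal policy $\pi^\star$, follows from the identical induction: $\pi^\star$ is deterministic and \emph{not} data-dependent, so the Bellman decomposition and the anchor-version Bernstein/recovery steps that produced Lemma~\ref{lem:recursion_anchor} apply verbatim to $\widehat{Q}^{\pi^\star}_h-Q^{\pi^\star}_h$, giving the same recursion and hence the same final bound. I do not anticipate any genuine obstacle here; the only thing requiring care is the constant bookkeeping around $C^{\prime\prime}=2\max(\sqrt{C_0},C')$ and the verification of the contraction factor. It is worth noting that the extra $|\mathcal{S}|$ in the sample-size requirement (absent in the tabular Lemma~\ref{lem:inter_final}) is inherited from the anchor $\ell_1$ concentration in \eqref{eqn:l1_anchor} and the crude bound \eqref{eqn:delta_cb_anchor}, and is only needed to guarantee the validity of Lemma~\ref{lem:recursion_anchor}, not for the contraction step itself, which would go through under the weaker condition $N\geq 8H^2\log(4HK/\delta)$.
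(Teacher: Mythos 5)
Your proposal is correct and takes essentially the same approach as the paper: the paper's own proof of this lemma is literally a pointer to the tabular Lemma~\ref{lem:inter_final}, whose backward induction (base case $h=H$ with the vanishing $Q_{H+1}$ terms, the contraction factor $H\sqrt{2\log(4HK/\delta)/N}\leq 1/2$ from the sample-size hypothesis, and the $C^{\prime\prime}=2\max(\sqrt{C_0},C')$ constant bookkeeping) you reproduce faithfully on top of Lemma~\ref{lem:recursion_anchor}. The only cosmetic difference is your claim that the $\pi^\star$ case follows ``verbatim'': the paper instead remarks it is \emph{easier} because $\pi^\star$ is not data-dependent, so one can avoid the absorbing-MDP decoupling entirely (e.g.\ splitting $\widehat{V}^{\pi^\star}$ as $V^{\pi^\star}$ plus a higher-order difference), but since both you and the paper leave this second claim at the same level of detail, this is not a substantive gap.
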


\begin{proof}
	The proof is the same as  Lemma~\ref{lem:inter_final}.
\end{proof}

\begin{remark}
	Note the higher order term has dependence $H^4S$. Use the same \emph{self-bounding} trick, we can reduce it to $H^{3.5}S^{0.5}$. 
\end{remark}

\begin{lemma}\label{lem:final_anchor}
	Given $N>0$. There exists universal constants $C_1,C_2$ such that when $N\geq C_1H^2|\mathcal{S}|\log(HK/\delta)$, then with probability $1-\delta$, $\forall h\in[H]$,
	\begin{equation}\label{eqn:final_anchor}
	\norm{\widehat{Q}^{\widehat{\pi}^\star}_h-Q^{\widehat{\pi}^\star}_h}_\infty
	\leq C_2\sqrt{\frac{H^3\log(HK/\delta)}{N}}+C_2\frac{H^3\sqrt{HS}\log(HK/\delta)}{N}.
	\end{equation}
	and 
	\begin{align*}
	\norm{\widehat{Q}^{{\pi}^\star}_h-Q^{{\pi}^\star}_h}_\infty
	\leq C_2\sqrt{\frac{H^3\log(HK/\delta)}{N}}+C_2\frac{H^3\sqrt{HS}\log(HK/\delta)}{N}.
	\end{align*}
\end{lemma}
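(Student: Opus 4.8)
The plan is to mirror the proof of Lemma~\ref{lem:final} almost verbatim, now invoking the anchor-point concentration machinery (Lemma~\ref{lem:inter_final_anchor} together with the recover lemma, Lemma~\ref{lem:recover}) in place of their tabular counterparts. The entire content is an application of the \emph{self-bounding} (recursion-back) trick already flagged in the preceding remark: Lemma~\ref{lem:inter_final_anchor} furnishes a valid bound whose leading term is $\sqrt{H^3\log(HK/\delta)/N}$ but whose higher-order term carries a suboptimal $H^4 S/N$ factor, and this suboptimality is traced entirely to the crude estimate \eqref{eqn:delta_cb_anchor}, which controlled $\max_{s_k}\Delta_{s_k}\lesssim H^2\sqrt{|\mathcal{S}|\log(HK/\delta)/N}$ via Lemma~\ref{lem:crude_u_b}. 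The idea is to feed the sharper output of Lemma~\ref{lem:inter_final_anchor} back into precisely that step.

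First I would re-derive the sandwich for $\widehat{V}^\star_t-V^\star_t$ exactly as in \eqref{eqn:as}: writing $\widehat{V}^\star_t(s)-V^\star_t(s)=\widehat{V}^{\widehat{\pi}^\star}_t(s)-V^{\pi^\star}_t(s)$ and inserting $V^{\widehat{\pi}^\star}_t(s)$ (optimality of $\pi^\star$ removes the middle term), so that $\widehat{V}^\star_t(s)-V^\star_t(s)\leq |\widehat{V}^{\widehat{\pi}^\star}_t(s)-V^{\widehat{\pi}^\star}_t(s)|$ and symmetrically $V^\star_t(s)-\widehat{V}^\star_t(s)\leq |\widehat{V}^{\pi^\star}_t(s)-V^{\pi^\star}_t(s)|$. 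Since this step is purely about value functions and never touches the transition structure, it transfers to the anchor setting unchanged. Consequently $\Delta_{s_k}\leq 2\max_{\pi\in\{\widehat{\pi}^\star,\pi^\star\}}\sup_t\norm{\widehat{V}^\pi_t-V^\pi_t}_\infty\leq 2\max_{\pi\in\{\widehat{\pi}^\star,\pi^\star\}}\sup_t\norm{\widehat{Q}^\pi_t-Q^\pi_t}_\infty$, and both quantities on the right are governed by Lemma~\ref{lem:inter_final_anchor}, giving $\max_{s_k}\Delta_{s_k}\leq C_2\sqrt{H^3\log(HK/\delta)/N}+C_2 H^4 S\log(HK/\delta)/N$, which is a factor $H$ sharper in the leading term than \eqref{eqn:delta_cb_anchor}.

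Next I would substitute this improved bound into the decomposition \eqref{eqn:decomp_anchor}. The $\Delta_{s_k}$-driven contribution there is $CH^2\sqrt{S\log(HK/\delta)/N}\cdot\max_{s_k}\Delta_{s_k}$; with the sharper $\Delta_{s_k}$ its dominant part becomes $CH^2\sqrt{S\log/N}\cdot\sqrt{H^3\log/N}=CH^{3.5}S^{0.5}\log(HK/\delta)/N=CH^3\sqrt{HS}\log(HK/\delta)/N$, which is exactly the higher-order term asserted in \eqref{eqn:final_anchor}; the cross term $H^2\sqrt{S/N}\cdot H^4 S/N$ is of strictly higher order and is absorbed. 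With this substitution the recursion of Lemma~\ref{lem:recursion_anchor} holds with the improved remainder, so I would re-run the same backward induction on $h$ as in Lemma~\ref{lem:inter_final_anchor} (the condition $N\geq C_1 H^2|\mathcal{S}|\log(HK/\delta)$ ensuring $H\sqrt{2\log(4HK/\delta)/N}\leq 1/2$, so the geometric recursion contracts), combined with the variance-sum bound Lemma~\ref{lem:H3toH2}. This settles the $\widehat{\pi}^\star$ bound, and the $\pi^\star$ bound follows by the identical computation since $\pi^\star$ is deterministic and non-random, so it reduces to a pointwise OPE estimate.

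The step I expect to be the crux is the self-bounding substitution itself: one must verify there is no genuine circularity, i.e.\ that reinjecting the Lemma~\ref{lem:inter_final_anchor} estimate for $\Delta_{s_k}$ does not re-summon the final target on the right-hand side in a way that stalls the recursion. This is safe precisely because Lemma~\ref{lem:inter_final_anchor} is already a closed, non-recursive high-probability bound, so the substitution only tightens a coefficient. The sole additional care needed is bookkeeping on the anchor transfer — from $(s_k,a_k)\in\mathcal{K}$ to arbitrary $(s,a)$ through $\widehat{P}_{s,a}=\sum_k\lambda_k^{s,a}\widehat{P}_{s_k,a_k}$ and Lemma~\ref{lem:recover} — which is applied to the variance term, while $\max_{s_k}\Delta_{s_k}$ is maximized only over anchor states, exactly as \eqref{eqn:delta_cb_anchor} is already phrased.
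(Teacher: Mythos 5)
Your proposal is correct and follows essentially the same route as the paper: the paper's proof of this lemma is literally ``similar to Lemma~\ref{lem:final}'', whose argument is exactly your recursion-back step --- reinject the closed bound of Lemma~\ref{lem:inter_final_anchor} through \eqref{eqn:as} to sharpen $\max_{s_k}\Delta_{s_k}$ from the crude $H^2\sqrt{|\mathcal{S}|\log(HK/\delta)/N}$ of \eqref{eqn:delta_cb_anchor} to $\sqrt{H^3\log(HK/\delta)/N}$ plus lower-order terms, plug this into \eqref{eqn:decomp_anchor}, and re-run the backward induction with Lemma~\ref{lem:H3toH2}, treating $\pi^\star$ by the same (simpler, non-random) computation. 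Your added remarks on non-circularity of the self-bounding substitution and on the anchor bookkeeping (variance handled by Lemma~\ref{lem:recover}, $\Delta$ maximized only over anchor states) are consistent with how the paper's lemmas are phrased, so no gap beyond what the paper itself leaves implicit.
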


\begin{proof}
	The proof is similar to Lemma~\ref{lem:final}.
\end{proof}

\subsection{Proof of Theorem~\ref{thm:anchor}}
\begin{proof}
By the direct computing of the suboptimality, 
\[
Q^\star_1-Q^{\widehat{\pi}^\star}_1=Q^\star_1-\widehat{Q}^{{\pi}^\star}_1+\widehat{Q}^{{\pi}^\star}_1-\widehat{Q}^{\widehat{\pi}^\star}_1+\widehat{Q}^{\widehat{\pi}^\star}_1-Q^{\widehat{\pi}^\star}_1\leq |Q^\star_1-\widehat{Q}^{{\pi}^\star}_1|+|\widehat{Q}^{\widehat{\pi}^\star}_1-Q^{\widehat{\pi}^\star}_1|,
\]
then by Lemma~\ref{lem:final_anchor} we can finish the proof.
\end{proof}

\subsection{Take-away in the linear MDP with anchor setting.}

Under the setting $S$ could be exponential large, $\mathcal{A}$ could be infinite (or even continuous space), with anchor representations ($K\ll |\mathcal{S}|$), our Theorem~\ref{thm:anchor} has order $\widetilde{O}(\sqrt{H^3/N})$ when $N$ is sufficiently large. This translate to $N=\widetilde{O}(H^3/\epsilon^2)$ and the total sample used is $KN=\widetilde{O}(KH^3/\epsilon^2)$. This improves the total complexity $\widetilde{O}(KH^4/\epsilon^2)$ in \cite{cui2020plug} and is optimal.

\section{The computational efficiency for the model-based offline plug-in estimators}

For completeness, we discuss the computational and storage aspect of our model-based method. Its computational cost is $\widetilde{O}(H^4/d_m\epsilon^2)$ for computing $\widehat{P}$, the same as its sample complexity in steps ($H$ steps is an episode), and running value iteration causes $O(HS^2A)$ time (here we assume the bit complexity $L(P,r,H)=1$, see \cite{agarwal2019reinforcement} Section~1.3). The total computational complexity is $\widetilde{O}(H^4/d_m\epsilon^2)+O(HS^2A)$. The memory cost is $O(HS^2A)$.

\section{Assisting lemmas}

\begin{lemma}[Multiplicative Chernoff bound \cite{chernoff1952measure}]\label{lem:chernoff_multiplicative}
	Let $X$ be a Binomial random variable with parameter $p,n$. For any $1\geq\theta>0$, we have that 
	$$
	\mathbb{P}[X<(1-\theta) p n]<e^{-\frac{\theta^{2} p n}{2}} . \quad \text { and } \quad \mathbb{P}[X \geq(1+\theta) p n]<e^{-\frac{\theta^{2} p n}{3}}
	$$
\end{lemma}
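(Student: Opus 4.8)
The plan is to prove both tail bounds via the standard Chernoff (exponential moment) method. I write $X=\sum_{i=1}^n X_i$ as a sum of i.i.d.\ Bernoulli($p$) random variables, set $\mu:=\E[X]=pn$, and reduce each inequality to a one-variable calculus estimate after optimizing the free exponential parameter.

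For the lower tail, for any $t>0$ Markov's inequality applied to $e^{-tX}$ gives
\[
\P[X<(1-\theta)\mu]=\P[e^{-tX}>e^{-t(1-\theta)\mu}]\le e^{t(1-\theta)\mu}\,\E[e^{-tX}].
\]
By independence and $\E[e^{-tX_i}]=1+p(e^{-t}-1)\le e^{p(e^{-t}-1)}$ (using $1+x\le e^x$), one has $\E[e^{-tX}]\le e^{\mu(e^{-t}-1)}$; minimizing the exponent $t(1-\theta)\mu+\mu(e^{-t}-1)$ over $t>0$ gives $e^{-t}=1-\theta$ and hence
\[
\P[X<(1-\theta)\mu]\le\Bigl[\tfrac{e^{-\theta}}{(1-\theta)^{1-\theta}}\Bigr]^{\mu}.
\]
The upper tail is handled symmetrically with $e^{tX}$ and the optimal choice $t=\ln(1+\theta)$, yielding $\P[X\ge(1+\theta)\mu]\le\bigl[e^{\theta}/(1+\theta)^{1+\theta}\bigr]^{\mu}$.

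It then remains to establish the two elementary inequalities
\[
\frac{e^{-\theta}}{(1-\theta)^{1-\theta}}\le e^{-\theta^2/2}
\quad\text{and}\quad
\frac{e^{\theta}}{(1+\theta)^{1+\theta}}\le e^{-\theta^2/3},
\qquad 0<\theta\le1,
\]
from which the stated bounds follow by raising to the power $\mu=pn$. The first reduces to showing $f(\theta):=(1-\theta)\ln(1-\theta)+\theta-\theta^2/2\ge0$, which holds because $f(0)=0$ and $f'(\theta)=-\ln(1-\theta)-\theta\ge0$ (since $-\ln(1-\theta)\ge\theta$). The second, $g(\theta):=(1+\theta)\ln(1+\theta)-\theta-\theta^2/3\ge0$, is the main obstacle: although $g(0)=g'(0)=0$, the second derivative $g''(\theta)=\tfrac{1}{1+\theta}-\tfrac23$ changes sign at $\theta=\tfrac12$, so one cannot simply invoke monotonicity of $g'$; nonnegativity on $(0,1]$ must be verified with more care, and this is exactly where the hypothesis $\theta\le1$ is essential (the constant $1/3$ makes the inequality fail for larger $\theta$). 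Substituting these two estimates back into the optimized bounds completes the proof.
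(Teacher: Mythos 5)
The paper never proves this lemma: it is imported by citation from Chernoff (1952), so there is no in-paper argument to compare against. Your route is the classical one (Markov's inequality applied to $e^{\pm tX}$, the bound $1+x\le e^x$ on the Bernoulli moment generating function, optimization of $t$, and reduction to two scalar inequalities in $\theta$), and your lower-tail half is complete and correct: the optimization $e^{-t}=1-\theta$, the resulting bound $\bigl[e^{-\theta}/(1-\theta)^{1-\theta}\bigr]^{pn}$, and the verification $f(\theta)=(1-\theta)\ln(1-\theta)+\theta-\theta^2/2\ge 0$ via $f(0)=0$, $f'(\theta)=-\ln(1-\theta)-\theta\ge 0$ are all right.

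The gap is in the upper tail, and you flag it yourself but do not close it: you reduce the claim to $g(\theta):=(1+\theta)\ln(1+\theta)-\theta-\theta^2/3\ge 0$ on $(0,1]$, correctly observe that $g''(\theta)=\tfrac{1}{1+\theta}-\tfrac23$ changes sign at $\theta=\tfrac12$ so monotonicity of $g'$ cannot be invoked, and then state that the inequality ``must be verified with more care'' without verifying it. Since this is the only step in the whole argument with real content, the second bound of the lemma is left unproved. The fix is three lines: because $g''>0$ on $[0,\tfrac12)$ and $g''<0$ on $(\tfrac12,1]$, the derivative $g'$ is unimodal on $[0,1]$ (increasing then decreasing), so its minimum over $[0,1]$ is attained at an endpoint; $g'(0)=0$ and $g'(1)=\ln 2-\tfrac23>0$, hence $g'\ge 0$ on $[0,1]$, hence $g$ is nondecreasing and $g\ge g(0)=0$. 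With that inserted, your proof is complete; the strict inequalities in the statement are immaterial (they follow since $1+x<e^x$ strictly for $x\neq 0$, and the case $p=0$ is trivial), and your parenthetical remark that the constant $\tfrac13$ genuinely requires $\theta\le 1$ is accurate, since $g(\theta)\to-\infty$ as $\theta\to\infty$.
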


\begin{lemma}[Hoeffding’s Inequality \cite{sridharan2002gentle}]\label{lem:hoeffding_ineq}
	Let $x_1,...,x_n$ be independent bounded random variables such that $\E[x_i]=0$ and $|x_i|\leq \xi_i$ with probability $1$. Then for any $\epsilon >0$ we have 
	$$
	\P\left( \frac{1}{n}\sum_{i=1}^nx_i\geq \epsilon\right) \leq e^{-\frac{2n^2\epsilon^2}{\sum_{i=1}^n\xi_i^2}}.
	$$
\end{lemma}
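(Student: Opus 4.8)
The plan is to establish this tail bound by the classical Chernoff exponential-moment method. First I would introduce a free parameter $s>0$ and apply Markov's inequality to the positive random variable $e^{s\sum_{i=1}^n x_i}$:
\[
\P\left(\frac{1}{n}\sum_{i=1}^n x_i \geq \epsilon\right) = \P\left(e^{s\sum_{i=1}^n x_i} \geq e^{sn\epsilon}\right) \leq e^{-sn\epsilon}\,\E\!\left[e^{s\sum_{i=1}^n x_i}\right].
\]
Since the $x_i$ are independent, the moment generating function factorizes as $\E[e^{s\sum_{i=1}^n x_i}] = \prod_{i=1}^n \E[e^{s x_i}]$, so the problem reduces to controlling each single-variable MGF $\E[e^{s x_i}]$.

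The heart of the argument is the auxiliary estimate (Hoeffding's lemma): for a mean-zero random variable $x_i$ with $|x_i|\leq \xi_i$ almost surely, one has $\E[e^{s x_i}]\leq e^{s^2\xi_i^2/2}$. I would prove this using the convexity of $t\mapsto e^{st}$ on $[-\xi_i,\xi_i]$: bound $e^{sx}$ above by the secant line through the two endpoints, take expectations, and write the resulting upper bound as $e^{\phi(s)}$ with $\phi(0)=\phi'(0)=0$ and $\phi''(s)\leq \xi_i^2$ (the second derivative being the variance of a tilted two-point distribution, which is at most $\xi_i^2$); a second-order Taylor expansion then gives the claim. Substituting back yields
\[
\P\left(\frac{1}{n}\sum_{i=1}^n x_i \geq \epsilon\right) \leq \exp\!\left(-sn\epsilon + \frac{s^2}{2}\sum_{i=1}^n \xi_i^2\right).
\]

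Finally I would optimize over the free parameter. The exponent is a convex quadratic in $s$ minimized at $s^\star = n\epsilon/\sum_{i=1}^n \xi_i^2$, and substituting this value yields the bound $\exp(-n^2\epsilon^2/(2\sum_{i=1}^n \xi_i^2))$; the one-sided deviation in the opposite direction follows by running the identical argument on $-x_i$. The main subtlety — and the step I would verify most carefully — is tracking the numerical constant: the symmetric range $[-\xi_i,\xi_i]$ has width $2\xi_i$, so the generic form of Hoeffding's lemma (constant $(b-a)^2/8$ for a variable supported on $[a,b]$) must be specialized correctly, and reconciling the resulting factor of four is exactly what fixes whether the final exponent carries coefficient $\tfrac{1}{2}$ or the stated coefficient under the convention chosen for $\xi_i$.
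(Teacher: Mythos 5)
Your Chernoff--MGF route is the standard proof of Hoeffding's inequality, and there is nothing in the paper to compare it against: the lemma is quoted from the cited lecture notes without proof (and, as far as I can tell, is never actually invoked elsewhere in the paper). Your outline is correct as a method — Markov's inequality applied to $e^{s\sum_i x_i}$, factorization of the MGF by independence, Hoeffding's lemma $\E[e^{sx_i}]\leq e^{s^2\xi_i^2/2}$ for mean-zero $x_i$ with $|x_i|\leq\xi_i$, and optimization at $s^\star = n\epsilon/\sum_i\xi_i^2$.

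However, the constant issue you flag at the end is not a bookkeeping subtlety that can be reconciled — it is a genuine mismatch, and your proof cannot close it. Your derivation yields
\[
\P\left(\frac{1}{n}\sum_{i=1}^n x_i \geq \epsilon\right) \leq \exp\left(-\frac{n^2\epsilon^2}{2\sum_{i=1}^n\xi_i^2}\right),
\]
which is weaker by a factor of $4$ in the exponent than the stated bound $\exp\left(-2n^2\epsilon^2/\sum_{i=1}^n\xi_i^2\right)$. In fact the stated bound is false under the symmetric convention $|x_i|\leq\xi_i$: take $n=1$, $x_1$ a Rademacher variable ($\pm1$ with probability $1/2$ each), $\xi_1=1$, $\epsilon=1$; then the left side equals $1/2$ while the claimed bound is $e^{-2}\approx 0.135$. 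The coefficient $2$ in the exponent is correct only when $\xi_i$ denotes the \emph{length} of the interval supporting $x_i$ (i.e. $x_i\in[a_i,b_i]$ with $b_i-a_i=\xi_i$); under the paper's hypothesis $|x_i|\leq\xi_i$ the interval length is $2\xi_i$, and the exponent must be $n^2\epsilon^2/(2\sum_i\xi_i^2)$, exactly what your argument produces. So the right conclusion is that the lemma as stated misquotes the constant, your proposal proves the correct (weaker) form, and since the lemma is not used in any of the paper's arguments, nothing downstream is affected.
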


\begin{lemma}[Bernstein’s Inequality]\label{lem:bernstein_ineq}
	Let $x_1,...,x_n$ be independent bounded random variables such that $\E[x_i]=0$ and $|x_i|\leq \xi$ with probability $1$. Let $\sigma^2 = \frac{1}{n}\sum_{i=1}^n \mathrm{Var}[x_i]$, then with probability $1-\delta$ we have 
	$$
	\frac{1}{n}\sum_{i=1}^n x_i\leq \sqrt{\frac{2\sigma^2\cdot\log(1/\delta)}{n}}+\frac{2\xi}{3n}\log(1/\delta)
	$$
\end{lemma}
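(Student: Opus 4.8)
The plan is to prove this one-sided tail bound via the classical Cram\'er--Chernoff exponential-moment method and then invert the resulting tail into the stated high-probability form. Writing $S_n := \sum_{i=1}^n x_i$, the first step is the Chernoff bound: for any $\lambda>0$, Markov's inequality applied to $e^{\lambda S_n}$ gives
\[
\P(S_n \geq t) \leq e^{-\lambda t}\,\E[e^{\lambda S_n}] = e^{-\lambda t}\prod_{i=1}^n \E[e^{\lambda x_i}],
\]
where the factorization uses independence of the $x_i$.

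The crux is to control each factor $\E[e^{\lambda x_i}]$ using only $\E[x_i]=0$, $|x_i|\leq \xi$, and $\Var(x_i)$. I would expand the exponential as $e^{\lambda x_i}=1+\lambda x_i+\sum_{k\geq 2}(\lambda x_i)^k/k!$, take expectations so the linear term drops, and bound the higher moments by $|\E[x_i^k]|\leq \E[|x_i|^k]\leq \xi^{k-2}\Var(x_i)$ for $k\geq 2$. Summing the resulting series with the elementary inequality $k!\geq 2\cdot 3^{k-2}$ yields, for every $0<\lambda<3/\xi$,
\[
\E[e^{\lambda x_i}] \leq 1+\frac{\lambda^2\Var(x_i)/2}{1-\lambda\xi/3} \leq \exp\!\left(\frac{\lambda^2\Var(x_i)/2}{1-\lambda\xi/3}\right).
\]
Multiplying over $i$ and using $\sum_i \Var(x_i)=n\sigma^2$, the Chernoff bound becomes $\P(S_n\geq t)\leq \exp\big(-\lambda t+\tfrac{\lambda^2 n\sigma^2/2}{1-\lambda\xi/3}\big)$; optimizing the exponent over $\lambda$ gives the standard Bernstein tail
\[
\P(S_n\geq t)\leq \exp\!\left(-\frac{t^2}{2(n\sigma^2+\xi t/3)}\right).
\]

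The final step is inversion. Setting the right-hand side equal to $\delta$ produces the quadratic $t^2-\tfrac{2\xi\log(1/\delta)}{3}\,t-2n\sigma^2\log(1/\delta)=0$ in $t$; I would take its positive root and bound it with $\sqrt{a+b}\leq\sqrt{a}+\sqrt{b}$ to obtain $t\leq \sqrt{2n\sigma^2\log(1/\delta)}+\tfrac{2\xi}{3}\log(1/\delta)$, and dividing by $n$ yields exactly the claimed bound on $\tfrac1n\sum_i x_i$. The main obstacle is the moment-generating-function estimate: the clean denominator $1-\lambda\xi/3$, which is precisely what produces the $\tfrac{2\xi}{3n}\log(1/\delta)$ constant in the higher-order term, relies on the sharp factorial bound $k!\geq 2\cdot 3^{k-2}$ rather than a cruder estimate; the Chernoff setup and the final algebraic inversion are routine by comparison.
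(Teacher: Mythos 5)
Your proof is correct. The paper itself states this lemma without proof, treating it as a classical result (it is the standard one-sided Bernstein inequality); your argument — the Cram\'er--Chernoff method with the moment bound $\E[|x_i|^k]\leq \xi^{k-2}\Var(x_i)$, the factorial estimate $k!\geq 2\cdot 3^{k-2}$ giving the $1-\lambda\xi/3$ denominator, and the quadratic-root inversion yielding exactly the $\sqrt{2\sigma^2\log(1/\delta)/n}+\tfrac{2\xi}{3n}\log(1/\delta)$ form — is precisely the textbook derivation the paper implicitly relies on, with all steps (including the constant $2\xi/3$ coming from bounding the positive root via $\sqrt{a+b}\leq\sqrt{a}+\sqrt{b}$) carried out correctly.
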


\begin{lemma}[Freedman's inequality \cite{tropp2011freedman}]\label{lem:freedman}
	Let $X$ be the martingale associated with a filter $\mathcal{F}$ (\textit{i.e.} $X_i=\E[X|\mathcal{F}_i]$) satisfying $|X_i-X_{i-1}|\leq M$ for $i=1,...,n$. Denote $W:=\sum_{i=1}^n\Var(X_i|\mathcal{F}_{i-1})$  then we have 
	\[
	\P(|X-\E[X]|\geq\epsilon,W\leq \sigma^2)\leq 2 e^{-\frac{\epsilon^2}{2(\sigma^2+M\epsilon/3)}}.
	\]
	Or in other words, with probability $1-\delta$,
	\[
	|X-\E[X]|\leq \sqrt{{8\sigma^2\cdot\log(1/\delta)}}+\frac{2M}{3}\cdot\log(1/\delta), \quad\text{Or} \quad W\geq \sigma^2.
	\]
\end{lemma}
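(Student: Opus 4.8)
The plan is to prove the tail bound by the classical exponential-supermartingale (Bernstein-for-martingales) method, and then obtain the ``in other words'' restatement by algebraically inverting it. Write $D_i := X_i - X_{i-1}$ for the martingale differences, so that $\E[D_i \mid \mathcal{F}_{i-1}] = 0$, $|D_i| \le M$, and let $V_i := \E[D_i^2 \mid \mathcal{F}_{i-1}]$ be the predictable conditional variances, so $W = \sum_{i=1}^n V_i$. Centering so that $X_0 = \E[X]$, I would bound the one-sided tail $\P(X - \E[X] \ge \epsilon,\, W \le \sigma^2)$; the two-sided statement then follows by running the identical argument on $-X$ and taking a union bound, which is precisely where the prefactor $2$ comes from.

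First I would establish the per-step moment generating function estimate: for any $\lambda > 0$, expanding $\E[e^{\lambda D_i}\mid\mathcal{F}_{i-1}]$ as a power series, using $\E[D_i\mid\mathcal{F}_{i-1}]=0$ together with $\E[D_i^k\mid\mathcal{F}_{i-1}] \le V_i M^{k-2}$ for $k\ge 2$, and then $1+u\le e^u$, one obtains
\[
\E\!\left[e^{\lambda D_i} \mid \mathcal{F}_{i-1}\right] \le \exp\!\left(V_i\, g(\lambda)\right), \qquad g(\lambda) := \frac{e^{\lambda M} - 1 - \lambda M}{M^2}.
\]
The key object is then $Z_i := \exp\!\big(\lambda(X_i - X_0) - g(\lambda)\sum_{j=1}^{i} V_j\big)$, which the estimate above shows is a supermartingale with $Z_0 = 1$, since $\E[Z_i\mid\mathcal{F}_{i-1}] = Z_{i-1}\, e^{-g(\lambda)V_i}\,\E[e^{\lambda D_i}\mid\mathcal{F}_{i-1}] \le Z_{i-1}$.

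The main obstacle, and the step deserving the most care, is that the cumulative variance $\sum_{j\le i} V_j$ is itself random, so I cannot simply substitute the deterministic level $\sigma^2$. To handle the event $\{W \le \sigma^2\}$ I would introduce the stopping time $\tau := \min\{i : \sum_{j=1}^{i} V_j > \sigma^2\}$ (predictable, since the $V_j$ are $\mathcal{F}_{j-1}$-measurable), with $\tau > n$ whenever the threshold is never crossed. On the good event $\{W \le \sigma^2\}$ one has $\tau > n$, hence $n\wedge\tau = n$ and $\sum_{j\le n} V_j \le \sigma^2$; optional stopping gives $\E[Z_{n\wedge\tau}] \le 1$. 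Combining this with Markov's inequality and the bound $g(\lambda)\sum_{j\le n}V_j \le g(\lambda)\sigma^2$ on that event yields
\[
\P\!\left(X - \E[X] \ge \epsilon,\ W \le \sigma^2\right) \le \exp\!\big(-\lambda \epsilon + g(\lambda)\,\sigma^2\big).
\]
Optimizing over $\lambda > 0$ via the standard bound $g(\lambda) \le \tfrac{\lambda^2/2}{1 - M\lambda/3}$ and the choice $\lambda = \epsilon/(\sigma^2 + M\epsilon/3)$ produces the exponent $-\epsilon^2/\big(2(\sigma^2 + M\epsilon/3)\big)$, which is exactly the stated bound after the factor-$2$ union over the two sides.

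Finally, for the ``in other words'' form I would simply invert the tail bound: setting the right-hand side to $\delta$ reduces to solving the quadratic $\epsilon^2 = 2(\sigma^2 + M\epsilon/3)\log(1/\delta)$ in $\epsilon$, and then applying $\sqrt{a+b}\le\sqrt{a}+\sqrt{b}$ and absorbing constants yields the displayed threshold $\sqrt{8\sigma^2 \log(1/\delta)} + \tfrac{2M}{3}\log(1/\delta)$, valid on the complement of $\{W \ge \sigma^2\}$. This last step is purely algebraic, the stated constants being a deliberately loose but convenient relaxation, so I would not grind through them.
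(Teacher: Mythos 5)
Your proposal is correct, but it should be compared against the fact that the paper does not prove this lemma at all: Lemma~\ref{lem:freedman} is imported as a known result with a citation to \cite{tropp2011freedman} (the martingale version of Freedman's 1975 inequality), and indeed it is never invoked in the paper's own arguments. What you have written is essentially the standard proof from that literature, and every step checks out: the series expansion giving $\E[e^{\lambda D_i}\mid\mathcal{F}_{i-1}]\le \exp\left(V_i\,g(\lambda)\right)$ is valid because $\E[D_i^k\mid\mathcal{F}_{i-1}]\le M^{k-2}V_i$ for $k\ge 2$; the process $Z_i$ is a supermartingale because $\sum_{j\le i}V_j$ is $\mathcal{F}_{i-1}$-measurable; optional stopping at the bounded time $n\wedge\tau$ is legitimate; and your choice $\lambda=\epsilon/(\sigma^2+M\epsilon/3)$ together with $g(\lambda)\le \frac{\lambda^2/2}{1-M\lambda/3}$ reproduces the exponent $-\epsilon^2/\left(2(\sigma^2+M\epsilon/3)\right)$ exactly, with the factor $2$ from the two-sided union bound. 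What your route buys is self-containedness; what the paper's route buys is brevity, since the result is off-the-shelf. Two minor remarks. First, the stopping time $\tau$ is not actually needed for this fixed-$n$ statement: since $Z_n=\exp\left(\lambda(X_n-X_0)-g(\lambda)W\right)$ already involves the random $W$, one can apply Markov's inequality directly to $Z_n$ on the event $\{X_n-X_0\ge\epsilon,\ W\le\sigma^2\}$, where $Z_n\ge e^{\lambda\epsilon-g(\lambda)\sigma^2}$, and use $\E[Z_n]\le 1$; the stopped process becomes essential only for versions uniform in time. Second, in the final inversion the two-sided factor $2$ forces $\log(2/\delta)$, so the exact threshold is $\sqrt{2\sigma^2\log(2/\delta)}+\frac{2M}{3}\log(2/\delta)$; the paper's display replaces $\log(2/\delta)$ by $\log(1/\delta)$ while inflating only the variance term to $\sqrt{8\sigma^2\log(1/\delta)}$, which does not dominate the exact bound in the regime $\sigma\sqrt{\log(1/\delta)}\ll M$. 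So your caveat about loose constants is warranted, and the imprecision sits in the paper's statement rather than in your derivation.
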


\begin{lemma}[Sum of expectation of conditional variance of value; Lemma~F.3 of \cite{yin2021near}]\label{lem:H3toH2}

	\begin{align*}
	&\operatorname{Var}_{\pi}\left[\sum_{t=h}^{H} r_{t}^{(1)} \mid s_{h}^{(1)}=s_{h}, a_{h}^{(1)}=a_{h}\right]\\
	=&\sum_{t=h}^{H}\Bigg(\mathbb{E}_{\pi}\left[\operatorname{Var}\left[r_{t}^{(1)}+V_{t+1}^{\pi}\left(s_{t+1}^{(1)}\right) \mid s_{t}^{(1)}, a_{t}^{(1)}\right] \mid s_{h}^{(1)}=s_{h}, a_{h}^{(1)}=a_{h}\right]\\
	+&\mathbb{E}_{\pi}\left[\operatorname{Var}\left[\mathbb{E}\left[r_{t}^{(1)}+V_{t+1}^{\pi}\left(s_{t+1}^{(1)}\right) \mid s_{t}^{(1)}, a_{t}^{(1)}\right] \mid s_{t}^{(1)}\right] \mid s_{h}^{(1)}=s_{h}, a_{h}^{(1)}=a_{h}\right]\Bigg)
	\end{align*}
	
	By apply above, one can show
	\[
	\sum_{t=h}^H\Gamma_{h+1:t}^{{\pi}}\sqrt{\Var_{P}\left(V^{{\pi}}_{t+1}\right)}\leq \sqrt{(H-h)^3}\cdot\mathbf{1}.
	\]
\end{lemma}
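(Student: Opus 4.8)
The plan is to establish the variance identity first and then extract the stated bound as a corollary via Cauchy--Schwarz and conditional Jensen. Rather than a raw backward induction, I would build a telescoping martingale decomposition. Writing $X_h := \sum_{t=h}^H r_t$ for the return and using the convention $V_{H+1}^\pi = 0$, the Bellman recursion lets me telescope
\[
X_h - V_h^\pi(s_h) = \sum_{t=h}^H \big(r_t + V_{t+1}^\pi(s_{t+1}) - V_t^\pi(s_t)\big) = \sum_{t=h}^H (Z_t + W_t),
\]
where I split each summand into a transition fluctuation $Z_t := r_t + V_{t+1}^\pi(s_{t+1}) - Q_t^\pi(s_t,a_t)$ and an action fluctuation $W_t := Q_t^\pi(s_t,a_t) - V_t^\pi(s_t)$.

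First I would verify, using the two Bellman equations $Q_t^\pi = r + P V_{t+1}^\pi$ and $V_t^\pi = \E_{a\sim\pi}[Q_t^\pi]$, that $\E[Z_t \mid s_t,a_t] = 0$ and $\E[W_t \mid s_t]=0$. With respect to the natural nested filtration generated by $s_h,a_h,s_{h+1},a_{h+1},\dots$, the $Z_t$ and $W_t$ are then martingale differences, hence pairwise uncorrelated given $(s_h,a_h)$. Expanding $\Var[X_h\mid s_h,a_h] = \E[(\sum_t (Z_t+W_t))^2\mid s_h,a_h]$ and discarding all cross terms gives $\sum_t (\E[Z_t^2\mid s_h,a_h] + \E[W_t^2\mid s_h,a_h])$; a tower-property step then rewrites $\E[Z_t^2\mid s_h,a_h] = \E[\Var[r_t+V_{t+1}^\pi(s_{t+1})\mid s_t,a_t]\mid s_h,a_h]$ (the first summand, Term A) and $\E[W_t^2\mid s_h,a_h] = \E[\Var[Q_t^\pi(s_t,a_t)\mid s_t]\mid s_h,a_h]$ (the second summand, Term B), which is exactly the claimed identity.

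For the consequence, fix $(s_h,a_h)$ and let $\sigma_t$ denote the corresponding entry of $\Gamma_{h+1:t}^\pi \sqrt{\Var_P(V_{t+1}^\pi)}$, i.e. $\sigma_t = \E[\sqrt{\Var[V_{t+1}^\pi(s_{t+1})\mid s_t,a_t]}\mid s_h,a_h]$. Cauchy--Schwarz over the $H-h+1$ time indices gives $\sum_{t=h}^H \sigma_t \le \sqrt{H-h+1}\,\sqrt{\sum_t \sigma_t^2}$, and conditional Jensen gives $\sigma_t^2 \le \E[\Var[V_{t+1}^\pi(s_{t+1})\mid s_t,a_t]\mid s_h,a_h]$. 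Since $r_t$ is deterministic given $(s_t,a_t)$, the right-hand side is precisely Term A at step $t$, so $\sum_t \sigma_t^2$ is bounded by the full variance $\Var[X_h\mid s_h,a_h]$ from the identity (dropping the nonnegative Term B). Finally $X_h \in [0,H-h+1]$ forces $\Var[X_h\mid s_h,a_h] \le (H-h+1)^2/4$, and combining yields $\sum_t \sigma_t \le \tfrac{1}{2}(H-h+1)^{3/2} \le \sqrt{(H-h)^3}$ up to the absorbed constant, uniformly over $(s_h,a_h)$ and hence in the vector sense against $\mathbf{1}$.

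The main obstacle I anticipate is the clean justification that all cross terms vanish: this hinges on choosing the correct interleaved state/state--action filtration so that $Z_t$ lives on the $(s_t,a_t)\to s_{t+1}$ layer and $W_t$ on the $s_t \to a_t$ layer, and on checking that in any pair the earlier term is measurable while the later term has zero conditional mean. Everything else---the telescoping, the two Bellman-equation checks, and the Cauchy--Schwarz/Jensen chain---is routine once that martingale structure is in place.
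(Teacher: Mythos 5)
The paper itself never proves this lemma: it is imported verbatim from Lemma~F.3 of \cite{yin2021near}, and the displayed consequence is stated without derivation, so your proposal must stand on its own. Your route---the interleaved martingale-difference decomposition into $Z_t$ (transition fluctuation) and $W_t$ (action fluctuation) for the identity, then Cauchy--Schwarz plus conditional Jensen plus a crude range-based variance bound for the consequence---is exactly the standard and correct way to prove both parts. However, there are two concrete slips, both fixable.

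First, conditioned on $(s_h,a_h)$ the time-$h$ action fluctuation $W_h = Q_h^\pi(s_h,a_h)-V_h^\pi(s_h)$ is a (generally nonzero) \emph{constant}, not a zero-mean random variable: $\E[W_h\mid s_h,a_h]=W_h$, so $\E\left[\sum_t (Z_t+W_t)\mid s_h,a_h\right]=W_h$ and your equality $\Var[X_h\mid s_h,a_h]=\E\left[\left(\sum_t(Z_t+W_t)\right)^2\mid s_h,a_h\right]$ is off by $W_h^2$. Relatedly, $\E[W_h^2\mid s_h,a_h]=W_h^2$ is \emph{not} the $t=h$ instance of your Term B; under the conditioning on $a_h$ that term is $0$. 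These two discrepancies cancel, and the correct conclusion of your own computation is $\Var[X_h\mid s_h,a_h]=\sum_{t=h}^H\E[Z_t^2\mid s_h,a_h]+\sum_{t=h+1}^H\E[W_t^2\mid s_h,a_h]$, i.e.\ the second sum effectively starts at $t=h+1$---which is also the only reading under which the lemma's displayed identity is true. You should either subtract the conditional mean $W_h$ before squaring, or note explicitly that the $t=h$ term of the second type vanishes because $a_h$ is conditioned on.

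Second, your final bound $\tfrac12(H-h+1)^{3/2}$ does not imply the stated bound $\sqrt{(H-h)^3}$ when $H-h\in\{0,1\}$, and there is no constant in the lemma available to absorb the gap (the inequality is invoked in the paper exactly as written, constant-free). The clean fix: the $t=H$ summand is identically zero since $V^\pi_{H+1}\equiv 0$, so Cauchy--Schwarz runs over only $H-h$ indices; moreover $r_h$ is deterministic given $(s_h,a_h)$, so the conditional variance of $X_h$ equals that of $\sum_{t=h+1}^H r_t\in[0,H-h]$ and is at most $(H-h)^2/4$. Then $\sum_t\sigma_t\le \sqrt{H-h}\cdot\tfrac{H-h}{2}=\tfrac12(H-h)^{3/2}\le\sqrt{(H-h)^3}$ uniformly over $(s_h,a_h)$, with the $h=H$ case reading $0\le 0$.
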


\begin{remark}
	The infinite horizon discounted setting counterpart result is $(I-\gamma P^\pi)^{-1}\sigma_{V^\pi}\leq (1-\gamma)^{-3/2}$.
\end{remark}

\subsection{Minimax rate of discrete distributions under $l_1$ loss.}

This Section provides the minimax rate for $\norm{\widehat{P}-P}_1$ for any model-based algorithms and is based on \cite{han2015minimax}. Let $P$ be $S$ dimensional distribution. 

\begin{lemma}[Minimax lower bound for $\norm{\widehat{P}-P}_1$]\label{lem:l1_lower}
	Let $n$ be the number of data-points sampled from $P$. If $n>\frac{e}{32}S$, then there exists a constant $p>0$, such that
	\[
	\inf_{\widehat{P}}\sup_{P\in\mathcal{M}_S} \P\left[\norm{\widehat{P}-P}_1\geq \frac{1}{8}\sqrt{\frac{eS}{2n}}-o(e^{-n})-o(e^{-S})\right]\geq p,
	\]
	where $\mathcal{M}_S$ denotes the set of distributions with support size $S$ and the infimum is taken over \textbf{ALL} estimators.
\end{lemma}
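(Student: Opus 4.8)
This is a minimax lower bound for estimating an $S$-point distribution in $\ell_1$, so the plan is to build a rich hypercube family of distributions on which every estimator is forced to err, and then upgrade the standard (expectation-level) Assouad argument into the stated \emph{in-probability} bound. Assume $S$ is even and set $m:=S/2$. For $\tau\in\{\pm1\}^m$ I would pair the coordinates and perturb the uniform distribution, defining $P_\tau(2i-1)=\tfrac1S(1+\tau_i\delta)$ and $P_\tau(2i)=\tfrac1S(1-\tau_i\delta)$ for a level $\delta\in(0,1)$ to be optimized; each $P_\tau\in\cM_S$, and a direct computation gives $\norm{P_\tau-P_{\tau'}}_1=\tfrac{4\delta}{S}d_H(\tau,\tau')$ with $d_H$ the Hamming distance. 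The point of this construction is that the loss is \emph{linear} in the combinatorial distance, which is exactly the hypothesis under which Assouad's device operates and under which the $\sqrt{S/n}$ rate (rather than the $1/\sqrt n$ of a two-point bound) emerges.

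First I would record a loss-to-decoding inequality: for any estimator $\widehat P$, define $\hat\tau_i:=\sign(\widehat P(2i-1)-\widehat P(2i))$; a short triangle-inequality argument shows each miscoded pair costs at least $\tfrac{2\delta}{S}$ in $\ell_1$, so $\norm{\widehat P-P_\tau}_1\ge \tfrac{2\delta}{S}\,d_H(\hat\tau,\tau)$. It therefore suffices to prove that, under the uniform prior on $\tau$, a constant fraction of coordinates is miscoded with at least constant probability. Recovering $\tau_i$ is a sign-detection problem based on the $K_i$ samples landing in pair $i$: conditionally $K_i\sim\mathrm{Bin}(n,2/S)$ and each such sample falls left with probability $\tfrac12(1+\tau_i\delta)$. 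Since $\mathrm{KL}(P_\tau\Vert P_{\tau'})\asymp\delta^2/S$ for neighbours, the per-coordinate information is $\Theta(K_i\delta^2)=\Theta(\tfrac{n}{S}\delta^2)$, so choosing $\delta\asymp\sqrt{S/n}$ makes each coordinate individually undecodable, i.e.\ the Bayes error of guessing $\tau_i$ is bounded below by a universal constant $\rho$.

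The crux I expect is turning this per-coordinate hardness into a high-probability statement at the target radius, because the induced decoder $\hat\tau_i$ may depend on all of the data, so the error indicators $\mathbf 1[\hat\tau_i\ne\tau_i]$ are not manifestly independent. I would handle this by Poissonizing the sample size, which renders the $m$ per-pair data blocks independent; a parallel data-processing/Fano argument then bounds the total information $I(\tau;\mathrm{data})\le\sum_i I(\tau_i;\mathrm{block}_i)=\Theta(m)$, which forces $d_H(\hat\tau,\tau)\ge \Omega(m)$ except with probability $e^{-\Omega(m)}=e^{-\Omega(S)}$, the source of the $o(e^{-S})$ correction, while de-Poissonization (controlling the total count around $n$) contributes the $o(e^{-n})$ term. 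Combining $\norm{\widehat P-P_\tau}_1\ge\tfrac{2\delta}{S}d_H(\hat\tau,\tau)$ with $d_H\gtrsim m=S/2$ and $\delta\asymp\sqrt{S/n}$ yields a loss of order $\sqrt{S/n}$ with probability bounded below by a universal $p$; the admissibility constraint $\delta<1$ together with needing $K_i\gtrsim 2n/S$ large enough is exactly where the hypothesis $n>\tfrac{e}{32}S$ enters, and optimizing $\delta$ against the per-coin testing error is where the sharp constant $\tfrac18\sqrt{e/2}$ appears. The remaining work is the careful constant bookkeeping of this extremal choice and of the two exponential corrections, which is precisely the refined analysis of \citep{han2015minimax}.
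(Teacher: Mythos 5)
Your proposal is correct in its architecture and takes a genuinely different route from the paper. The paper's proof is essentially two lines: it quotes Theorem~2 of \cite{han2015minimax} (with $\zeta=1$), which is an \emph{expectation-level} minimax bound $\inf_{\widehat P}\sup_{P\in\mathcal{M}_S}\E\|\widehat P-P\|_1\geq \frac18\sqrt{eS/2n}-o(e^{-n})-o(e^{-S})$, and converts it into the in-probability statement by contradiction: if every estimator could keep the loss below that radius with probability one, its expected loss would undercut Han et al.'s bound. You instead rebuild the lower bound from scratch via Assouad's hypercube: paired perturbations of the uniform distribution, the decoding inequality $\norm{\widehat P-P_\tau}_1\geq \frac{2\delta}{S}d_H(\hat\tau,\tau)$, per-coordinate indistinguishability at $\delta\asymp\sqrt{S/n}$ (exactly where the hypothesis $n>\frac{e}{32}S$ enters, to keep $\delta<1$), and a lower bound on the Hamming error of the induced decoder. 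Your route is longer but buys something real: it delivers a genuinely \emph{universal} constant $p$ with no extra input, because $d_H\leq m$ while the Assouad step gives $\E[d_H]\geq\rho m$ under the uniform prior on $\tau$, so a reverse-Markov argument yields $\P[d_H\geq\rho m/2]\geq\rho/2$, hence loss $\gtrsim\sqrt{S/n}$ with probability at least $\rho/2$. The paper's conversion is delicate on precisely this point: an expectation bound of order $\sqrt{S/n}$ for a loss bounded by $2$ does not, by itself, give a constant-probability bound at essentially the same radius (the quantitative gap one extracts is of the order of the exponentially small corrections), so the constant really leans on the hypercube structure that your argument makes explicit. What the paper's route buys is brevity and the sharp constant $\frac18\sqrt{e/2}$ with the exact $o(e^{-n})$, $o(e^{-S})$ corrections, which you rightly defer to the bookkeeping in \cite{han2015minimax}.

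One step of yours is misattributed, though the lemma does not need it: a data-processing/Fano bound of the form $I(\tau;\text{data})\leq cm$ forces misdecoding of a constant fraction of coordinates with at least \emph{constant} probability; it does not by itself give failure probability $e^{-\Omega(m)}$. To obtain the exponential tail you assert, you must use what your Poissonization actually provides: with independent per-pair blocks and a uniform prior, the coordinates $\tau_i$ are independent \emph{conditionally on the data}, each miscoded with conditional probability bounded below in aggregate, so a Chernoff bound on the sum of conditionally independent error indicators gives $d_H\geq\Omega(m)$ outside an event of probability $e^{-\Omega(m)}$. Alternatively, simply drop the exponential claim: the constant-probability conclusion from the reverse-Markov step is all that the stated lemma requires.
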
 

\begin{remark}
	Note the $\widehat{P}$ in above carries over all estimators but not just empirical estimator. This provides the minimax result.
\end{remark}

\begin{proof}
	The proof comes from Theorem~2 of \cite{han2015minimax}, where we pick $\zeta=1$. Note they establish the minimax result for $\mathbb{E}_{P}\|\hat{P}-P\|_{1}$. However, by a simple contradiction we can get the above. Indeed, suppose 
	\[
	\inf_{\widehat{P}}\sup_{P\in\mathcal{M}_S}\P\left[\norm{\widehat{P}-P}_1< \frac{1}{8}\sqrt{\frac{eS}{2n}}-o(e^{-n})-o(e^{-S})\right]=1,
	\]
	then this implies $\inf_{\widehat{P}}\sup_{P\in\mathcal{M}_S}\mathbb{E}_{P}\|\hat{P}-P\|_{1}<\frac{1}{8}\sqrt{\frac{eS}{2n}}-o(e^{-n})-o(e^{-S})$ which contradicts Theorem~2 of \cite{han2015minimax}.
\end{proof}

\begin{lemma}[Upper bound for $\norm{\widehat{P}-P}_1$]\label{lem:l1_upper}
	Let $n$ be the number of data-points sampled from $P$. Then with probability $1-\delta$
	\[
	 \norm{\widehat{P}-P}_1\leq C\left(\sqrt{\frac{S\log(S/\delta)}{n}}+\frac{S\log(S/\delta)}{n}\right)
	\]
	for any $P\in\mathcal{M}_S$. Here $\widehat{P}$ is the empirical (MLE) estimator.
\end{lemma}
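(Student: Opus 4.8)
The plan is to prove the bound coordinate by coordinate and then recombine via Cauchy--Schwarz, which is precisely the maneuver that keeps the dominant term at $\sqrt{S\log(S/\delta)/n}$ rather than the weaker $S\sqrt{\log(S/\delta)/n}$. Writing $\mathcal{S}=\{s_1,\dots,s_S\}$, I would first decompose $\norm{\widehat{P}-P}_1=\sum_{i=1}^S|\widehat{P}(s_i)-P(s_i)|$. For each fixed $i$, the empirical mass $\widehat{P}(s_i)$ is the average over the $n$ draws $X_1,\dots,X_n\sim P$ of the i.i.d. indicators $\mathbf{1}[X_j=s_i]$; the centered variables $x_j:=\mathbf{1}[X_j=s_i]-P(s_i)$ satisfy $|x_j|\leq 1$ and $\Var(x_j)=P(s_i)(1-P(s_i))\leq P(s_i)$, so that $\frac{1}{n}\sum_j x_j=\widehat{P}(s_i)-P(s_i)$.

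Next I would invoke Bernstein's inequality (Lemma~\ref{lem:bernstein_ineq}) on $\{x_j\}$ and on $\{-x_j\}$, so that with probability at least $1-\delta'$ the two-sided coordinate estimate
\[
|\widehat{P}(s_i)-P(s_i)|\leq \sqrt{\frac{2P(s_i)\log(2/\delta')}{n}}+\frac{2\log(2/\delta')}{3n}
\]
holds. Choosing $\delta'=\delta/S$ and union-bounding over the $S$ coordinates makes all of these valid simultaneously with probability $1-\delta$, at the cost of replacing $\log(2/\delta')$ by $\log(2S/\delta)$. Summing over $i$ and applying Cauchy--Schwarz to the leading term, $\sum_{i=1}^S\sqrt{P(s_i)}\leq\sqrt{S\sum_i P(s_i)}=\sqrt{S}$ (using $\sum_i P(s_i)=1$), converts $\sum_i\sqrt{2P(s_i)\log(2S/\delta)/n}$ into $\sqrt{2S\log(2S/\delta)/n}$, while the second-order terms simply aggregate to $\tfrac{2S}{3n}\log(2S/\delta)$. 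This yields
\[
\norm{\widehat{P}-P}_1\leq \sqrt{\frac{2S\log(2S/\delta)}{n}}+\frac{2S\log(2S/\delta)}{3n},
\]
which is of the claimed form after absorbing absolute constants into $C$.

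The step to watch is the choice of concentration inequality. A naive Hoeffding bound per coordinate would give $|\widehat{P}(s_i)-P(s_i)|\lesssim\sqrt{\log(S/\delta)/n}$ uniformly, and summing over $S$ coordinates then produces $S\sqrt{\log(S/\delta)/n}$, which is off by a factor $\sqrt{S}$ in the dominant term. The variance-aware Bernstein bound is essential precisely because its leading term carries the factor $\sqrt{P(s_i)}$, whose interplay with the $\ell_1$ sum via Cauchy--Schwarz collapses the state dependence from $S$ down to $\sqrt{S}$. An alternative route through the variational identity $\norm{\widehat{P}-P}_1=2\max_{A\subseteq[S]}(\widehat{P}(A)-P(A))$ with a union bound over the $2^S$ subsets, or through a bounded-differences (McDiarmid) argument centered at $\E\norm{\widehat{P}-P}_1\leq\sqrt{(S-1)/n}$, would also succeed, but the coordinate-wise argument most directly reproduces the exact $\sqrt{S\log(S/\delta)/n}+S\log(S/\delta)/n$ form stated here.
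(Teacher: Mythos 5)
Your proposal is correct and follows essentially the same route as the paper's proof: coordinate-wise Bernstein with variance $p_s(1-p_s)\le p_s$, a union bound over the $S$ states, and then collapsing $\sum_s\sqrt{p_s}$ to $\sqrt{S}$ (you via Cauchy--Schwarz, the paper via Jensen applied to the concave map $x\mapsto\sqrt{x(1-x)}$, an interchangeable one-line step). The resulting bound matches the stated form, so nothing further is needed.
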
 

\begin{proof}
	First fix a state $s$. Let $X_i= \mathbf{1}[s_i=s]$, then $X_i\sim Bern(p_s(1-p_s))$ and $X_s=\sum_{i=1}^n X_i\sim Binomial(n,p_i)$. By Bernstein inequality,
	\[
	\left|\frac{X_s}{n}-P_s\right|\leq \sqrt{\frac{2p_s(1-p_s)\log(1/\delta)}{n}}+\frac{3}{n}\log(1/\delta)
	\]
	Apply a union bound we obtain w.p. $1-\delta$
	\[
	\left|\frac{X_s}{n}-P_s\right|\leq \sqrt{\frac{2p_s(1-p_s)\log(S/\delta)}{n}}+\frac{3}{n}\log(S/\delta)\quad \forall s \in\mathcal{S} 
	\]
	which implies
	\begin{align*}
	\norm{\widehat{P}-P}_1&=\sum_{s\in\mathcal{S}}\left|\frac{X_s}{n}-P_s\right|\\
	&\leq \sum_{s\in\mathcal{S}}\sqrt{\frac{2p_s(1-p_s)\log(S/\delta)}{n}}+\frac{3S}{n}\log(S/\delta)\\
	&=\sqrt{\frac{1}{n}}\sum_{s\in\mathcal{S}}\frac{1}{S}\cdot \sqrt{2S^2p_s(1-p_s)\log(S/\delta)}+\frac{3S}{n}\log(S/\delta)\\
	&\leq \sqrt{\frac{1}{n}}\sqrt{2S^2\cdot\frac{\sum_{s\in\mathcal{S}}p_s}{S}\left(1-\frac{\sum_{s\in\mathcal{S}}p_s}{S}\right)\log(S/\delta)}+\frac{3S}{n}\log(S/\delta)\\
	&=\sqrt{\frac{2(S-1)\log(S/\delta)}{n}}+\frac{3S}{n}\log(S/\delta).\\
	\end{align*}
	where the last inequality uses the concavity of $\sqrt{x(1-x)}$.
	
	Finally, we can absorb the higher order term using the mild condition $n>c\cdot S\log(S/\delta)$.
\end{proof}

\subsection{A crude uniform convergence bound}
Here we provide a crude bound for $\sup_{\pi\in\Pi_g}\norm{\widehat{V}^\pi_1-V^\pi_1}_\infty$, which is the finite horizon counterpart of Section~2.2 of \cite{jiang2018notes} and is a form of simulation lemma.

\begin{lemma}[Crude bound by Simulation Lemma]\label{lem:crude_u_b}
	Fix $N>0$ to be number of samples for each coordinates. Recall $\Pi_g$ is the global policy class. Then w.p. $1-\delta$,
	\[
	\sup_{\pi\in\Pi_g,h\in[H]}\norm{\widehat{Q}_h^{{\pi}}- {Q}_h^{{\pi}}}_\infty\leq C\cdot H^2 \sqrt{\frac{S\log(SA/\delta)}{N}},
	\]
	which further implies 
	\[
	\sup_{\pi\in\Pi_g,h\in[H]}\norm{\widehat{V}_h^{{\pi}}- {V}_h^{{\pi}}}_\infty\leq C\cdot H^2 \sqrt{\frac{S\log(SA/\delta)}{N}},
	\]
\end{lemma}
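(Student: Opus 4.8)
\textbf{Proof plan for Lemma~\ref{lem:crude_u_b} (crude uniform convergence via simulation).}
The plan is to prove the bound on $\|\widehat{Q}_h^\pi - Q_h^\pi\|_\infty$ by a backward induction on $h$ that unrolls the Bellman recursion, and then to derive the $V$-bound as an immediate corollary. First I would fix any deterministic $\pi\in\Pi_g$ and write the standard one-step telescoping identity that already appears in Section~\ref{sec:dec}: by the (empirical and population) Bellman equations,
\[
\widehat{Q}_h^\pi - Q_h^\pi = \sum_{t=h}^H \Gamma_{h+1:t}^\pi\,(\widehat{P}-P)\,\widehat{V}_{t+1}^\pi,
\]
where $\Gamma_{h+1:t}^\pi=\prod_{i=h+1}^t P^{\pi_i}$ is row-stochastic and $\Gamma_{h+1:h}=I$. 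Since each $\Gamma_{h+1:t}^\pi$ is row-stochastic, taking $\|\cdot\|_\infty$ and using $\|Ax\|_\infty\le\|A\|_\infty\|x\|_\infty$ collapses the sum into at most $H$ copies of the single-term bound $\sup_{s,a}|(\widehat{P}-P)(\cdot|s,a)\widehat{V}_{t+1}^\pi|$.

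The key step is to control $|(\widehat{P}(\cdot|s,a)-P(\cdot|s,a))\widehat{V}_{t+1}^\pi|$. Here I would bound by H\"older's inequality, $|(\widehat{P}_{s,a}-P_{s,a})\widehat{V}_{t+1}^\pi|\le \|\widehat{P}_{s,a}-P_{s,a}\|_1\,\|\widehat{V}_{t+1}^\pi\|_\infty \le \|\widehat{P}_{s,a}-P_{s,a}\|_1\cdot H$, using the crude value range $0\le \widehat{V}_{t+1}^\pi\le H$. Crucially, this step uses the $\ell_1$ deviation of the estimated transition rather than a Bernstein/variance argument, which is why the resulting bound carries the extra $\sqrt{S}$ that makes it ``crude''. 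Applying Lemma~\ref{lem:l1_upper} together with a union bound over all $(s,a)$ pairs gives, with probability $1-\delta$,
\[
\sup_{s,a}\|\widehat{P}_{s,a}-P_{s,a}\|_1\le C\sqrt{\frac{S\log(SA/\delta)}{N}},
\]
where the higher-order term is absorbed under the mild sample-size condition. Note this high-probability event is defined \emph{entirely} in terms of $\widehat P$ versus $P$ and does not depend on $\pi$; this is the reason the bound holds uniformly over the whole global class $\Pi_g$ simultaneously, with no additional union bound over policies.

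Combining these pieces, each of the $H$ summands is at most $H\cdot C\sqrt{S\log(SA/\delta)/N}$, so $\sup_{\pi\in\Pi_g,h}\|\widehat{Q}_h^\pi-Q_h^\pi\|_\infty\le C H^2\sqrt{S\log(SA/\delta)/N}$, which is the claimed bound. The $V$-version then follows immediately since $\widehat{V}_h^\pi-V_h^\pi=\mathbb{E}_{a\sim\pi_h}[\widehat{Q}_h^\pi-Q_h^\pi]$, so $\|\widehat{V}_h^\pi-V_h^\pi\|_\infty\le\|\widehat{Q}_h^\pi-Q_h^\pi\|_\infty$. The main (and only real) obstacle is making precise that the single high-probability concentration event for $\|\widehat{P}_{s,a}-P_{s,a}\|_1$ is what delivers uniformity over all of $\Pi_g$ at once: because the policy enters only through the row-stochastic matrices $\Gamma^\pi$ and the bounded vector $\widehat{V}_{t+1}^\pi$, both of which are controlled deterministically given the transition-estimation event, no policy-dependent stochasticity remains and hence no exponential-in-policies union bound is incurred.
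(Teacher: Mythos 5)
Your proposal is correct and follows essentially the same route as the paper's proof: the telescoped Bellman decomposition $\widehat{Q}_h^\pi-Q_h^\pi=\sum_{t=h}^H\Gamma_{h+1:t}^{\pi}(\widehat{P}-P)\widehat{V}_{t+1}^{\pi}$, H\"older's inequality with $\|\widehat{V}_{t+1}^\pi\|_\infty\leq H$, Lemma~\ref{lem:l1_upper} plus a union bound over $(s,a)$, and the observation that the concentration event concerns only $\|\widehat{P}_{s,a}-P_{s,a}\|_1$ and is therefore policy-independent, yielding uniformity over $\Pi_g$ for free. Your explicit derivation of the $V$-bound via $\widehat{V}_h^\pi-V_h^\pi=\mathbb{E}_{a\sim\pi_h}[\widehat{Q}_h^\pi-Q_h^\pi]$ is a small addition the paper leaves implicit, but it changes nothing substantive.
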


\begin{proof}
	\begin{align*}
	\widehat{Q}_h^{{\pi}}- {Q}_h^{{\pi}}&=r_h+\widehat{P}^{{\pi}_{h+1}} \widehat{Q}_{h+1}^{{\pi}}-r_h-{P}^{{\pi}_{h+1}} {Q}_{h+1}^{{\pi}}\\
	&=\left(\widehat{P}^{{\pi}_{h+1}}-{P}^{{\pi}_{h+1}}\right)\widehat{Q}_{h+1}^{{\pi}} + {P}^{{\pi}_{h+1}} \left({Q}_{h+1}^{{\pi}}-{Q}_{h+1}^{{\pi}}\right)\\
	&=\left(\widehat{P}-{P}\right)\widehat{V}_{h+1}^{{\pi}}+ {P}^{{\pi}_{h+1}} \left(\widehat{Q}_{h+1}^{{\pi}}-{Q}_{h+1}^{{\pi}}\right)\\
	&=\ldots=\sum_{t=h}^H\Gamma_{h+1:t}^{{\pi}}\left(\widehat{P}-{P}\right)\widehat{V}_{t+1}^{{\pi}}\\
	&\leq \sum_{t=h}^H\Gamma_{h+1:t}^{{\pi}}\left|\left(\widehat{P}-{P}\right)\widehat{V}_{t+1}^{{\pi}}\right|\\
	&\leq \sum_{t=h}^H 1\cdot\max_{s,a}\norm{(\widehat{P}-P)(\cdot|s,a)}_1\cdot  \norm{\widehat{V}_{t+1}^{{\pi}}}_\infty\cdot\mathbf{1}\\
	&\leq H^2\cdot \max_{s,a}\norm{(\widehat{P}-P)(\cdot|s,a)}_1\cdot\mathbf{1}\leq C\cdot H^2 \sqrt{\frac{S\log(SA/\delta)}{N}}\mathbf{1}\\
	\end{align*}
	with probability $1-\delta$, where the last inequality is by Lemma~\ref{lem:l1_upper}. By symmetry and taking the $\norm{\cdot}_\infty$, we obtain w.p. $1-\delta$
	\[
	\sup_{\pi\in\Pi_g,h\in[H]}\norm{\widehat{Q}_h^{{\pi}}- {Q}_h^{{\pi}}}_\infty\leq C\cdot H^2 \sqrt{\frac{S\log(SA/\delta)}{N}}.
	\] 
	The above holds for $\forall \pi\in\Pi_g$ since Lemma~\ref{lem:l1_upper} acts on $\norm{\widehat{P}-P}_1$ and is irrelevant to $\pi$. 
\end{proof}

\end{document}